\documentclass{article}

\usepackage[in]{fullpage}
\usepackage{natbib}
\usepackage{mathpazo}
\usepackage[table]{xcolor}
\usepackage[utf8]{inputenc} %
\usepackage[T1]{fontenc}    %
\usepackage{hyperref}       %
\usepackage{url}            %
\usepackage{booktabs}       %
\usepackage{amsfonts}       %
\usepackage{nicefrac}       %
\usepackage{microtype}      %
\usepackage{lipsum}         %
\usepackage{todonotes}
\usepackage{subcaption}
\usepackage{caption}

\usepackage{colortbl}

\usepackage{tcolorbox}
\usepackage{xcolor}
\usepackage{framed}
\colorlet{shadecolor}{orange!15}

\usepackage{url}            %
\usepackage{booktabs}       %
\usepackage{multirow}    
\usepackage{amsfonts}       %
\usepackage{nicefrac}       %
\usepackage{microtype}      %
\usepackage{natbib}
\usepackage{enumerate}
\usepackage{hhline}
\usepackage{makecell}
\usepackage{pifont}

\usepackage{graphicx} %
\usepackage{caption}
\usepackage{subcaption}
\usepackage{amsmath}
\usepackage{amsthm}
\usepackage{amssymb}
\usepackage{tikz}
\usepackage{xcolor}
\usetikzlibrary{arrows}

\allowdisplaybreaks

\usepackage{mathrsfs}

\usepackage{algorithm}
\usepackage{algorithmic}
\usepackage{hyperref}
\usepackage{bm}

\allowdisplaybreaks

\newtheorem{thm}{Theorem}
\newtheorem{lem}{Lemma}
\newtheorem{cor}{Corollary}
\newtheorem{prop}{Proposition}
\newtheorem{asmp}{Assumption}

\newtheorem{rem}{Remark}

\usepackage[capitalize,noabbrev]{cleveref}
\crefname{thm}{Theorem}{Theorems}
\crefname{lem}{Lemma}{Lemmas}
\crefname{cor}{Corollary}{Corollaries}
\crefname{prop}{Proposition}{Propositions}
\crefname{asmp}{Assumption}{Assumptions}
\crefname{defn}{Definition}{Definitions}
\crefname{oracle}{Oracle}{Oracles}
\crefname{fact}{Fact}{Facts}
\crefname{conj}{Conjecture}{Conjectures}
\crefname{rem}{Remark}{Remarks}
\crefname{example}{Example}{Examples}
\crefname{condition}{Condition}{Conditions}
\crefname{exercise}{Exercise}{Exercises}
\crefname{algorithm}{Algorithm}{Algorithms}
\crefname{table}{Table}{Tables}
\crefname{figure}{Figure}{Figures}
\crefname{section}{Section}{Sections}
\crefname{subsection}{Section}{Sections}
\crefname{appendix}{Appendix}{Appendices}
\crefname{message}{Message}{Messages}

\definecolor{red}{rgb}{1, 0, 0}

\definecolor{green}{rgb}{0, 1, 0}

\definecolor{blue}{rgb}{0, 0, 1}

\definecolor{orange}{rgb}{1, 0.4, 0.0}

\input{packages/math_commands}

\usepackage{pifont}

\usepackage{soul}

\usepackage{wrapfig}

\AtBeginDocument{\setlength\abovedisplayskip{5pt}}
\AtBeginDocument{\setlength\belowdisplayskip{5pt}}

\hypersetup{
    colorlinks=true,
    citecolor=blue,
    linkcolor=blue,
}

\usepackage{listings} %

\definecolor{codegreen}{rgb}{0,0.6,0}
\definecolor{codegray}{rgb}{0.5,0.5,0.5}
\definecolor{codepurple}{rgb}{0.58,0,0.82}
\definecolor{codeblue}{rgb}{0,0,1}
\definecolor{backcolour}{rgb}{0.95,0.95,0.92}
\definecolor{key-color}{rgb}{0.8, 0.47, 0.196}

\lstdefinestyle{mystyle}{
    backgroundcolor=\color{backcolour},
    commentstyle=\color{codegreen},
    numberstyle=\tiny\color{codegray},
    stringstyle=\color{codepurple},
    basicstyle=\ttfamily\footnotesize,
    breakatwhitespace=false,
    breaklines=true,
    captionpos=b,
    keepspaces=true,
    numbers=left,
    numbersep=5pt,
    showspaces=false,
    showstringspaces=false,
    showtabs=false,
    tabsize=2,
    language=Python,
    emph={lm},
    emphstyle={\color{blue}},
    classoffset=1, %
    otherkeywords={sum},
    morekeywords={rm, mean},
    keywordstyle=\color{codegreen},
    classoffset=0,
}
\title{PC Layer: Polynomial Weight Preconditioning for Improving LLM Pre-Training}

\makeatletter
\def\@fnsymbol#1{\ensuremath{\ifcase#1\or *\or \dagger\or \ddagger\or
   \mathsection\or \sharp\or \Diamond\or \mathparagraph\or \|\or
   \or \ddagger\ddagger \else\@ctrerr\fi}}
\makeatother

\usepackage{authblk}

\author[1]{Senmiao Wang\thanks{Equal contribution.}}  %
\author[2]{Tiantian Fang{$^*$}}
\author[1]{Haoran Zhang{$^*$}}
\author[1,4]{Yushun Zhang}
\author[1]{Kunxiang Zhao}
\author[3]{Alex Schwing} %
\author[1,4]{Ruoyu Sun\thanks{Corresponding author.}} %
\affil[1]{The Chinese University of Hong Kong, Shenzhen, China}
\affil[2]{Google LLC, Mountain View, CA, United States}
\affil[3]{University of Illinois at Urbana-Champaign, Urbana, IL, United States}
\affil[4]{Shenzhen International Center for Industrial and Applied Mathematics,  Shenzhen Research Institute of Big Data}
\affil[ ]{\texttt{\{senmiaowang1, haoranzhang2, yushunzhang, kunxiangzhao\}@link.cuhk.edu.cn}}
\affil[ ]{\texttt{\{tf6, aschwing\}@illinois.edu}}
\affil[ ]{\texttt{sunruoyu@cuhk.edu.cn}}

\date{}

\begin{document}

\maketitle
\begin{abstract}

We propose a preconditioning (PC) layer — a weight parameterization via polynomial preconditioner that ensures stable weight conditioning throughout LLM training.
The PC module reshapes the singular-value spectrum of weight matrices via low-degree polynomial preconditioning.
After training, the preconditioned weights can be merged back into the original architecture, incurring no inference overhead.
We demonstrate the advantage of the proposed PC layer
over standard transformers in Llama-1B pre-training, for both the AdamW and Muon optimizers.
Theoretically, we justify this spectrum-control principle by proving that uniformly bounding each layer's singular values ensures geometric convergence of gradient descent to global minima, for certain deep linear networks.
Our code is available at \url{https://github.com/Empath-aln/PC-layer}.

\end{abstract}

\section{Introduction}
\label{sec:intro}

Training modern neural networks, especially at scale, relies on normalization techniques.
These methods stabilize optimization by inserting layers that normalize intermediate quantities—such as features or parameters—within the network \citep{huang2023normalization}.
Representative examples include batch normalization \citep{ioffe2015batch}, layer normalization \citep{ba2016layer}, instance normalization \citep{ulyanov2016instance}, group normalization \citep{wu2018group}, etc.
In the era of large language models (LLMs), normalization methods remain critical. For instance, RMSNorm \citep{zhang2019root}, a variant of LayerNorm \citep{ba2016layer}, has become a de facto standard in most prevalent LLM architectures, and  Query–Key Normalization (QK-Norm) \citep{henry2020query, loshchilov2024ngpt} has been widely used to control attention logits and stabilize training.

RMSNorm and QK-Norm are designed to control the layer outputs of neural networks. Another family of normalization methods acts directly in \textit{weight space}: classical examples include weight normalization (WN) \citep{salimans2016weight} and spectral normalization (SN) \citep{miyato2018spectral}.
While such weight control methods have not become standard components of mainstream Transformer-based LLM pre-training, recent works have revived this line by explicitly normalizing or constraining weights, including row/column-wise normalizations on weights \citep{loshchilov2024ngpt, franke2025learning, fu2025nemotron}, hyperball/hypersphere constraints on weights
\citep{wen2025hyperball, ren2026rethinking}, spectral constraints
on weights \citep{newhouse2025training, xie2026controlled}, etc. These
developments suggest that weight geometry is becoming an increasingly
relevant design axis for stable LLM training.
This motivates two questions:

\begin{center}
\textit{
(i) Is there a theoretical principle relating weight properties
to the performance of the algorithm?}

\textit{
(ii) If so, can such a principle inform the design of weight-based control techniques?}
\end{center}

\begin{figure}[t]
    \centering
    \includegraphics[width=\linewidth]{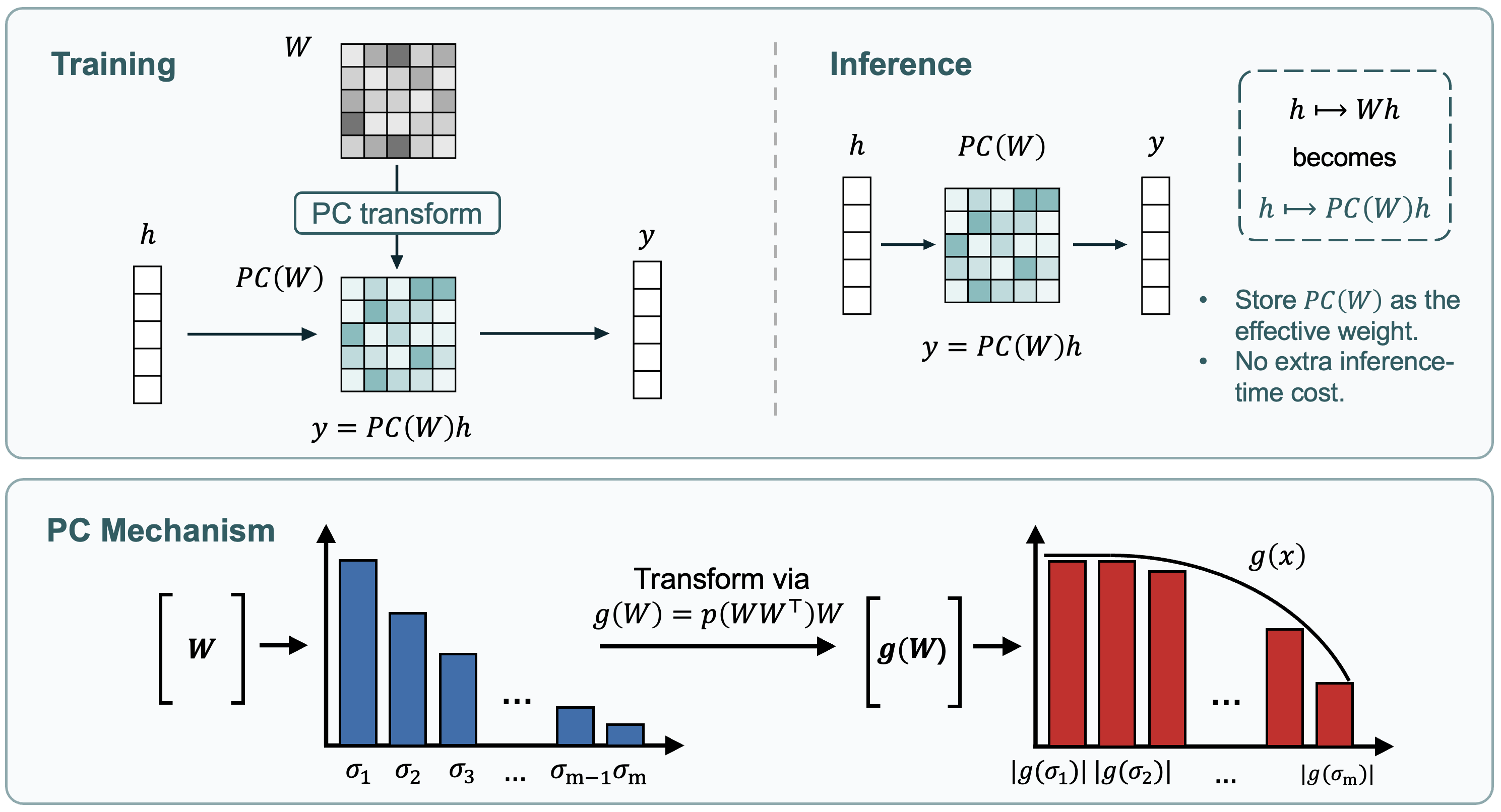}
    \caption{Illustration of the PC layer: a low-degree matrix polynomial reshapes the singular-value spectrum of a weight matrix, amplifying small singular values and saturating large ones, without explicit SVD. Unlike sign/polar-style maps (e.g., those used in Muon) that drive nearly all nonzero singular values toward one, PC performs \emph{soft} spectrum shaping rather than near-orthogonalization.}
    \label{fig:pc-intro}
\end{figure}

Regarding the first question, we note that the spectrum of weight matrices can be understood from the perspective of signal propagation. 
For a linear layer, the largest and smallest singular values control the upper and lower bounds of the propagated signal magnitude, so their ratio characterizes the conditioning of signal propagation. 
A collapsed lower tail of singular values can therefore degrade signal propagation across depth. This suggests that controlling the weight spectrum can promote better signal propagation, which in turn makes the network easier to optimize. 
Moreover, we prove that under certain assumptions, for deep linear networks, if all weights of a multi-layer fully connected network have upper bounded condition number, then gradient descent converges to global minimizer at a rate dependent on the weight condition numbers.
This gives a partial answer to the first question.

Regarding the second question on the algorithm design,
an immediate challenge is how to make the spectrum control
method compatible with the original backpropagation method\footnote{Background: Existing normalization layers
are embedded into the network so that auto-differentiation applies to normalization layers.}.
A possible way to reshape the spectrum of a weight matrix $W$ is to compute an explicit factorization (e.g., a singular value decomposition (SVD) $W=U\,\mathrm{diag}(\sigma_1,\ldots,\sigma_r)\, V^\top$) and then modify the singular values $\sigma_1,\ldots,\sigma_r$. However, such matrix-factorization–style approaches may be rather costly.

We propose to add \textbf{polynomial preconditioning} into the neural network.
The key insight is that matrix polynomials provide a convenient mechanism for acting \emph{directly} on the spectrum.
Concretely, let \(W = U \,\mathrm{diag}(\sigma_1,\ldots,\sigma_r)\, V^\top\) be the SVD of \(W\), and let \(g\) be a certain feasible preconditioning polynomial. Then the transformed matrix \(g(W)\) has singular values \(\{|g(\sigma_1)|,\ldots,|g(\sigma_r)|\}\), i.e., each singular value is reshaped by the same scalar function \(g\).
This yields an explicit and controllable modification of the weight spectrum \emph{without} resorting to computationally heavy matrix decompositions.
We choose the polynomial \(g\) to approximate a desired shaping function that amplifies small singular values relative to large ones, thereby narrowing the spread of the spectrum and promoting well-conditioned weights throughout training.

Our main contributions are threefold:

\begin{itemize}
    
    \item \textbf{Algorithm design}. We propose a built-in-layer module, termed \textit{preconditioning (PC) layer}, which encourages well-conditioned weight matrices via polynomial preconditioning. This method is lightweight during training and causes no additional inference cost after training.

    \item \textbf{Empirical validation in LLM pre-training}.
    We evaluate PC layers on Llama-271M and Llama-1B models with both the
    AdamW \citep{kingma2014adam, loshchilov2017decoupled} and Muon
    \citep{jordan2024muon} optimizers. On Llama-1B, PC delivers consistent
    token-efficiency speedups under both optimizers—i.e., it reaches the same
    loss with fewer training tokens (2$\times$ with AdamW
    and 1.13$\times$ with Muon)—and improves zero-shot downstream accuracy
    in both cases. We further verify that PC improves weight-spectrum
    conditioning.

    \item \textbf{Theory for controlling weight spectrum}. Our intuition is that a “good weight spectrum” helps training. To support this intuition, we prove that a bounded weight spectrum implies geometric convergence of the loss to global minimum on deep linear networks, providing theoretical backing for the proposed method.

\end{itemize}

A preliminary version of this work, focusing on generative adversarial nets (GANs), is available online \citep{fang2021precondition}. Compared to that version, the present version targets large language models and introduces several critical LLM-specific modifications to the method, including (i) magnitude adjustment operations (norm recovery and learnable scaling)
and (ii) a specific choice of layers to apply preconditioning. 
These changes are essential for achieving strong performance in the LLM setting, which presents distinct optimization challenges not found in the earlier GAN-based experiments.

\section{Motivation: From Signal Propagation to Weight-Spectrum Control}
\label{sec:motivation}

\paragraph{Signal propagation as a guiding lens.}
Signal propagation provides a classical lens for understanding trainability: it studies how activations and gradients are transformed as they pass through depth. Historically, this lens has been most influential in the design of weight initialization schemes. Xavier \citep{glorot2010understanding} and Kaiming \citep{he2015delving} initializations aim to keep forward and backward signal magnitudes stable at the start of training. The line of work on orthogonal initialization further refines this principle by providing a more fine-grained analysis of how depth affects signal propagation \citep{saxe2013exact, pennington2017resurrecting, pennington2018emergence, xiao2018dynamical, hu2020provable}. Recent LLM training methods have also begun to control weights or their updates beyond initialization, through normalized Transformers \citep{loshchilov2024ngpt}, hyperspherical or hyperball constraints \citep{wen2025hyperball, ren2026rethinking}, and spectral-sphere constraints \citep{xie2026controlled}. This viewpoint is also compatible with $\mu$P-style spectral analyses, which suggest that the spectral scale of weights and updates is central to stable feature learning under width scaling \citep{yang2021tuning, yang2023spectral}. Together, these observations motivate treating the weight spectrum itself as a direct control knob for signal propagation.

\paragraph{Weight spectrum as a propagation control knob.}
Consider a linear transformation
\[
    y = Wx .
\]
The singular values of $W$ determine how much the layer can amplify or suppress different input directions:
\[
    \sigma_{\min}(W)\|x\|_2
    \leq
    \|Wx\|_2
    \leq
    \sigma_{\max}(W)\|x\|_2 .
\]
The same singular values also govern the adjoint transformation $W^\top$ that appears in backpropagation. Thus, the spectrum of $W$ gives a layer-local proxy for forward and backward signal propagation. In the ideal square case, if $W$ is orthogonal, then $\|Wx\|_2=\|x\|_2$ for every input $x$: the layer is exactly norm-preserving. This connects weight-spectrum control to the classical dynamical-isometry and orthogonal-initialization literature, where near-isometric transformations are used to maintain well-conditioned propagation through depth \citep{saxe2013exact, pennington2017resurrecting, pennington2018emergence, xiao2018dynamical, hu2020provable}.

\paragraph{Why exact orthogonality may not be the right target.}
Although orthogonal weights provide a clean signal-propagation ideal, enforcing exact orthogonality throughout training is too restrictive. To see this, consider a depth-$L$ linear network
\[
    f_\theta(x) = W_L W_{L-1}\cdots W_1 x,
\]
where each $W_\ell \in \mathbb{R}^{d\times d}$ is orthogonal. Then the end-to-end map $W_L\cdots W_1$ is also orthogonal. Even if we add a global scalar $\alpha$ and consider $\alpha W_L\cdots W_1$, all singular values of the end-to-end map are still identical. Hence such a model cannot represent anisotropic maps that require direction-dependent amplification, such as
\[
    x \mapsto \mathrm{diag}(1,2)x .
\]
This example highlights a basic tension. Perfectly flat spectra are favorable for norm preservation, but they can remove useful spectral anisotropy and weaken representation power. Therefore, the goal should not be to collapse all singular values to one.

\paragraph{Soft spectrum conditioning.}
The preceding discussion suggests the following design principle: the weight spectrum should be well-conditioned, but not perfectly flat. Equivalently, the condition number
\[
    \kappa(W) = \frac{\sigma_{\max}(W)}{\sigma_{\min}(W)}
\]
should be kept moderate, while still allowing nontrivial variation among singular values. This principle balances two desiderata. On the optimization side, reducing spectral spread mitigates excessive amplification or attenuation of signals. On the representation side, preserving controlled spectral variation allows the model to implement anisotropic transformations.

\paragraph{Design implication.}
The discussion above leads to a training-time weight-spectrum control problem. We do not merely want an initialization whose spectrum is well-behaved at step zero. Instead, we want a mechanism that acts on the effective weight matrices used during training. Such a mechanism should reduce the relative spread between small and large singular values, while avoiding the over-restrictive regime where all singular values are forced to be identical. The next section develops a polynomial weight-preconditioning mechanism that satisfies these requirements.

\section{Polynomial Weight Preconditioning}
\label{sec:weight_prec}
We now instantiate the training-time weight-spectrum control desideratum from Section~\ref{sec:motivation}. In analogy to normalization methods, we embed the control mechanism---preconditioning (PC) layer---directly into the network.

\paragraph{Background on polynomial preconditioning.}
Polynomial preconditioning is a classical technique in numerical linear algebra, originally developed to accelerate iterative solvers for symmetric linear systems $Qx=b$ \citep{johnson1983polynomial}. The idea is to replace $Q$ by $g(Q) := p(Q)Q$, where $p$ is a low-degree polynomial chosen so that the spectrum of $g(Q)$ concentrates near $1$, dramatically reducing its condition number and speeding up iterative methods such as conjugate gradient. \citet{johnson1983polynomial} formulate the design of $p$ as a scalar approximation problem (minimax or least-squares) on the spectral interval, and observe that least-squares polynomials tend to yield better empirical convergence by improving the entire spectrum rather than just the extreme condition number. We will adapt this perspective to the deep-learning setting, where the matrices of interest are \emph{rectangular} weight matrices and the goal is to control their singular-value spectrum during training. See Appendix~\ref{subapp:poly-preconditioner-johnson} for a self-contained review of polynomial preconditioning and Appendix~\ref{subapp:precond-spectrum} for a discussion of the relation between preconditioning, the condition number, and the full spectrum.

\paragraph{Outline.}
We first extend polynomial preconditioning to rectangular matrices in deep nets (\S\ref{sec:rect}), describe how to \emph{determine} the preconditioning polynomials (\S\ref{sec:find-poly}), and finally propose the preconditioning (PC) layer method (\S\ref{sec:pc-layer-alg}).

\subsection{Preconditioning Rectangular Matrices in Deep Nets}
\label{sec:rect}
In deep networks, the objects we wish to control are typically \emph{rectangular} weight matrices $W\in\mathbb{R}^{n\times m}$.

\paragraph{Polynomial preconditioning via a Gram matrix.}
Polynomials are not directly defined on a rectangular $W$, so we work through a symmetric Gram matrix---either $WW^\top\in\mathbb{R}^{n\times n}$ or $W^\top W\in\mathbb{R}^{m\times m}$---whose nonzero eigenvalues are the squared singular values of $W$. Concretely, we embed a preconditioning polynomial by left-multiplying $W$ with a polynomial $p$ in \(WW^\top\):
\[
g(W) \triangleq p(WW^\top)W \in \mathbb{R}^{n\times m},
\]
which preserves the shape of $W$ and can be implemented efficiently using only repeated matrix multiplications.
(Equivalently, one can use the right-preconditioned form $g(W)=W p(W^\top W)$; in practice one can pick the smaller Gram matrix for efficiency.) The following proposition shows that controlling the conditioning of $g(W)$ reduces to shaping a one-dimensional map $g(\sigma) = p(\sigma^2) \sigma$ on the singular-value interval of $W$.\footnote{We slightly abuse notation by using \(g\) to denote both the matrix map \(W \mapsto p(WW^\top)W\) and the induced scalar mapping \( \sigma \mapsto p(\sigma^2)\sigma\).} See the proof in Appendix~\ref{app:proof-clm-svd}.

\begin{prop}[Singular-value mapping]
\label{clm:svd}
Let $W$ have singular values $\sigma_1\ge\cdots\ge\sigma_m\ge 0$ and define the scalar map
$g(\sigma)=p(\sigma^2)\,\sigma$.
Then the singular values of $g(W)=p(WW^\top)W$ are $\{|g(\sigma_i)|\}_{i=1}^m$.
\end{prop}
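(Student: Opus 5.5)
The approach is to write down the SVD of $W$ and push the matrix polynomial through it. Take $W\in\mathbb{R}^{n\times m}$ with singular values $\sigma_1\ge\cdots\ge\sigma_m\ge 0$; the case $n<m$ is handled identically, with $\min(n,m)$ singular values. Write the full SVD as $W=U\Sigma V^\top$, where $U\in\mathbb{R}^{n\times n}$ and $V\in\mathbb{R}^{m\times m}$ are orthogonal and $\Sigma\in\mathbb{R}^{n\times m}$ has $\sigma_1,\ldots,\sigma_m$ on its diagonal and zeros elsewhere. Then
\[
WW^\top=U(\Sigma\Sigma^\top)U^\top ,
\]
and $\Sigma\Sigma^\top=\mathrm{diag}(\sigma_1^2,\ldots,\sigma_m^2,0,\ldots,0)\in\mathbb{R}^{n\times n}$.

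\textbf{Step 1: the polynomial acts diagonally.} For every integer $k\ge 0$, orthogonality of $U$ gives $(WW^\top)^k=U(\Sigma\Sigma^\top)^kU^\top$; the case $k=0$ is $I=UU^\top$. By linearity, for $p(t)=\sum_k c_kt^k$ we get
\[
p(WW^\top)=U\,p(\Sigma\Sigma^\top)\,U^\top ,\qquad p(\Sigma\Sigma^\top)=\mathrm{diag}\bigl(p(\sigma_1^2),\ldots,p(\sigma_m^2),p(0),\ldots,p(0)\bigr).
\]

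\textbf{Step 2: compose with $W$.} Multiplying on the right by $W$ gives
\[
g(W)=U\,p(\Sigma\Sigma^\top)\,U^\top U\Sigma V^\top=U\bigl(p(\Sigma\Sigma^\top)\Sigma\bigr)V^\top .
\]
The rectangular diagonal matrix $p(\Sigma\Sigma^\top)\Sigma$ has $i$-th diagonal entry $p(\sigma_i^2)\sigma_i=g(\sigma_i)$ for $i\le m$. All of its other rows are zero, because those rows of $\Sigma$ are zero. So the $p(0)$ entries from the padded block are multiplied by zero rows and never appear.

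\textbf{Step 3: fix the signs.} The diagonal entries $g(\sigma_i)$ may be negative, so this is not yet an SVD. Let $D=\mathrm{diag}(\mathrm{sgn}\,g(\sigma_1),\ldots,\mathrm{sgn}\,g(\sigma_m))$, taking the sign to be $+1$ when $g(\sigma_i)=0$, and replace $V$ by $V\,\mathrm{diag}(D,I)$. This is still orthogonal, and the diagonal entries become $|g(\sigma_i)|$. After permuting to nonincreasing order, which is absorbed into $U$ and $V$, this is a valid SVD. Uniqueness of singular values then gives the claim.

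The main point of care is the bookkeeping in Step 2 for the rectangular shapes: one has to check that the zero padding in $\Sigma\Sigma^\top$ is killed by $\Sigma$, so that only $m$ singular values appear and no spurious $|p(0)|$ values are introduced. The sign correction in Step 3 is the only other non-mechanical step.
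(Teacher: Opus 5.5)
Your proof is correct and follows the same route as the paper's: diagonalize $WW^\top$ via the SVD, let $p$ act entrywise on the squared singular values, multiply by $W$, and absorb a diagonal sign matrix into an orthogonal factor (you use $V$, the paper uses $U$). Your full SVD with square $U$ is actually more careful than the paper's write-up. There, the identity $p(WW^\top)=U\,p(\Sigma^2)\,U^\top$ with an $m\times m$ matrix $\Sigma$ holds literally only when $U$ is square or $p(0)=0$, whereas you check explicitly that the padded $p(0)$ block is killed by the zero rows of $\Sigma$.
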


Proposition~\ref{clm:svd} shows that, despite $W$ being rectangular, polynomial preconditioning still reduces spectral control to designing a \emph{scalar} map $g(\sigma) = p(\sigma^2)\sigma$ acting on individual singular values. In particular, the spread (and hence the conditioning) of the singular values of $g(W)$ is determined entirely by how $g$ deforms the singular values of $W$: by choosing $g$ to expand small $\sigma_i$ relative to large ones, we directly reduce the relative gap between $\sigma_{\min}$ and $\sigma_{\max}$ and thereby improve conditioning.
This is exactly the mechanism we will exploit in Section~\ref{sec:find-poly} to \emph{determine} the polynomial $p$ and in Section~\ref{sec:pc-layer-alg} to build our preconditioning layer.

\subsection{Finding Preconditioning Polynomials}
\label{sec:find-poly}

Building on the singular-value mapping in Section~\ref{sec:rect},
this subsection describes the criteria and procedure for choosing the
preconditioning map \(g(\sigma)=p(\sigma^2)\sigma\).

\paragraph{Optimization formulation.}
Suppose we are given a domain $[\gamma_L,\gamma_U]$, a target function $f$, and an integer $k$, we seek the best approximation to $f(\sigma)$ on $[\gamma_L,\gamma_U]$
within the polynomial class
\[
G_k =\{ g(\sigma)=p(\sigma^2)\sigma \mid \deg(p)\le k \}.
\]
A standard approach in polynomial preconditioning is to determine $g$ by minimizing a \emph{continuous}
weighted least-squares objective \citep{johnson1983polynomial}. Concretely, writing
\[
g(\sigma)=\Big(a_0+a_1\sigma^2+\cdots+a_k\sigma^{2k}\Big)\sigma,
\qquad a=(a_0,\ldots,a_k)\in\mathbb{R}^{k+1},
\]
the continuous weighted least-squares objective is
\begin{equation}
\label{eq:wls_continuous}
\min_{a\in\mathbb{R}^{k+1}}\;\int_{\gamma_L}^{\gamma_U}\bigl|g(\sigma)-f(\sigma)\bigr|^2\,w(\sigma)\,d\sigma,
\end{equation}
where $w(\sigma)=\sigma^{\alpha}$ is the weight function used by \citet{johnson1983polynomial} and $\alpha$ is a numerical constant.
In practice, we approximate the objective by finite-sample approximation
\begin{equation}
\label{eq:wls_discrete}
\min_{a\in\mathbb{R}^{k+1}}\;\sum_{i=1}^{n}\bigl|g(\sigma_i)-f(\sigma_i)\bigr|^2\,w(\sigma_i),
\end{equation}
where $\sigma_1, \ldots, \sigma_n \in [\gamma_L,\gamma_U]$ are the sample points. Note that $g(\sigma_i)$ is linear in the coefficient vector $a$: $g(\sigma_i)=\phi(\sigma_i)^\top a$, where $\phi(\sigma_i)=(\sigma_i,\sigma_i^3,\ldots,\sigma_i^{2k+1})^\top$. Therefore, (\ref{eq:wls_discrete}) is a standard weighted linear least-squares problem in $a$ and can be solved efficiently. See Appendix~\ref{subapp: poly-fit-alg} for the formal polynomial fitting algorithm.

\paragraph{Choosing the fitting interval $[\gamma_L,\gamma_U]$ via spectral normalization.}
The singular spectrum of \(W\) can vary substantially across layers and
training steps. To make a single fitted polynomial applicable to matrices
with different spectral scales, we design it for normalized weights
\(\widetilde W = W/s(W)\), where \(s(W)\) is intended to approximate
\(\|W\|_2\). Ideally, taking \(s(W)=\|W\|_2\) would map the singular values
of \(\widetilde W\) into \([0,1]\).
In practice, during training, our method normalizes each selected weight
matrix by an estimate \(s(W)\approx\|W\|_2\), computed via streaming power
iteration; this PC-layer implementation is described in
Section~\ref{sec:pc-layer-alg}, and the estimator is detailed in
Appendix~\ref{app:streaming-pi}.
To make the
polynomial fit robust to the small error of this estimate, we use the
slightly enlarged fitting interval
\([\gamma_L,\gamma_U]=[0,1.1]\) instead of \([0,1]\). 
Empirically, the streaming power-iteration estimator is accurate enough for this margin to cover the observed approximation error during training, keeping the polynomial evaluation within its design domain; see Appendix~\ref{app:power_iter_quality} for the estimator validation.

\paragraph{A feasible desired mapping.}
Recall that driving \emph{all} singular values to a common value would yield a
perfectly conditioned matrix.
However, any map of the form $g(\sigma)=p(\sigma^2)\sigma$ necessarily satisfies $g(0)=0$, so it cannot send arbitrarily small singular values to a fixed positive constant.
We therefore adopt a more realistic target: \emph{amplify small} singular values
(to improve conditioning) while \emph{saturating large} ones near $1$.
This motivates the piecewise-linear target
\[
\mathrm{PL}_b(\sigma) \;=\;
\begin{cases}
\sigma/b, & \sigma<b,\\[2pt]
1, & \sigma\ge b,
\end{cases}
\]
with cutoff $b > 0$.
Intuitively, $\mathrm{PL}_b$ enlarges the low end of the spectrum by a factor
$1/b$ and caps the high end at $1$, thereby reducing the spread of singular values.

\paragraph{Choosing the cutoff: optimization vs.\ expressiveness.}
The cutoff $b$ operationalizes the optimization--expressiveness trade-off introduced in Section~\ref{sec:motivation} by controlling how aggressively singular values are driven toward $1$.
If $b\approx 1$, then $\mathrm{PL}_b(\sigma)\approx \sigma$ over most of $[0,1]$,
so the spectrum is barely changed and conditioning is not improved.
At the other extreme, if $b$ is so small that $b<\sigma_{\min}(W)$,
then $\mathrm{PL}_b$ maps \emph{all} singular values to $1$; this is undesirable
because it can substantially reduce the expressive capacity of a deep network.
To balance these effects, we consider a \emph{set} of targets with different strengths,
e.g., $b\in\{0.8,0.6,0.4,0.3\}$ (see Figure~\ref{fig:pc_target_poly}).
Additionally, Appendix \ref{app:overflatten} provides a complementary over-flattening stress test, showing that near-perfect spectrum flattening can hurt performance and supporting the use of moderate rather than overly aggressive targets.

\paragraph{Fitted polynomials.}
Solving the weighted least-squares problem above yields a sequence of preconditioning polynomials with increasing strength. Concretely, we associate each \texttt{pc\_level} with a cutoff \(b\) in the target \(\mathrm{PL}_b\), using
\(b\in\{0.8,\,0.6,\,0.4,\,0.3\}\) for \(k\in\{1,2,3,4\}\),
where smaller \(b\) corresponds to a more aggressive push of singular values toward \(1\).
Using the polynomial-fitting algorithm detailed in Appendix~\ref{subapp: poly-fit-alg}, we obtain one set of fitted polynomials \(g_k(\sigma)=p_k(\sigma^2)\sigma\) (with overall degrees \(3,5,7,9\)) are:
\begin{equation*}
\label{eq:fitted-polys}
\begin{aligned}
g_1(\sigma) &= 1.507\sigma - 0.507\sigma^{3},\\
g_2(\sigma) &= 2.083\sigma - 1.643\sigma^{3} + 0.560\sigma^{5},\\
g_3(\sigma) &= 2.909\sigma - 4.649\sigma^{3} + 4.023\sigma^{5} - 1.283\sigma^{7},\\
g_4(\sigma) &= 3.625\sigma - 9.261\sigma^{3} + 14.097\sigma^{5} - 10.351\sigma^{7} + 2.890\sigma^{9}.
\end{aligned}
\end{equation*}
Figure~\ref{fig:pc_fitted_poly} plots these four fitted mappings. Other valid polynomial sets can be obtained by re-solving \eqref{eq:wls_discrete}.

We refer to the index \(k\) as the \emph{PC level} (\texttt{pc\_level}). Concretely, \(\texttt{pc\_level}=k\) means that \(p_k\) has degree \(k\) in \(g_k(\sigma)=p_k(\sigma^2)\sigma\), so the induced scalar map \(g_k\) has overall degree \(2k+1\). A larger \texttt{pc\_level} corresponds to a smaller cutoff \(b\) in the target \(\mathrm{PL}_b\), and therefore a stronger spectrum-shaping operation: it more aggressively lifts small singular values and saturates large ones, while still avoiding exact spectrum flattening. Table~\ref{tab:pc_level} summarizes this correspondence.

\begin{table}[htbp]
\centering
\setlength{\tabcolsep}{10pt}
\renewcommand{\arraystretch}{1.15}
\begin{tabular}{ccccc}
\toprule
\texttt{pc\_level} \(k\) & cutoff \(b\) & \(\deg p_k\) & \(\deg g_k\) & intuition \\
\midrule
1 & 0.8 & 1 & 3 & mild shaping \\
2 & 0.6 & 2 & 5 & moderate shaping \\
3 & 0.4 & 3 & 7 & strong shaping \\
4 & 0.3 & 4 & 9 & strongest in this work \\
\bottomrule
\end{tabular}
\caption{The PC level \(k\) as a spectrum-shaping strength knob: larger \(k\) uses a higher-degree polynomial fitted to a smaller cutoff \(b\), giving a more aggressive map.}
\label{tab:pc_level}
\end{table}

\begin{figure}[htbp]
\centering
\begin{subfigure}[t]{0.46\textwidth}
\centering
\includegraphics[width=\linewidth]{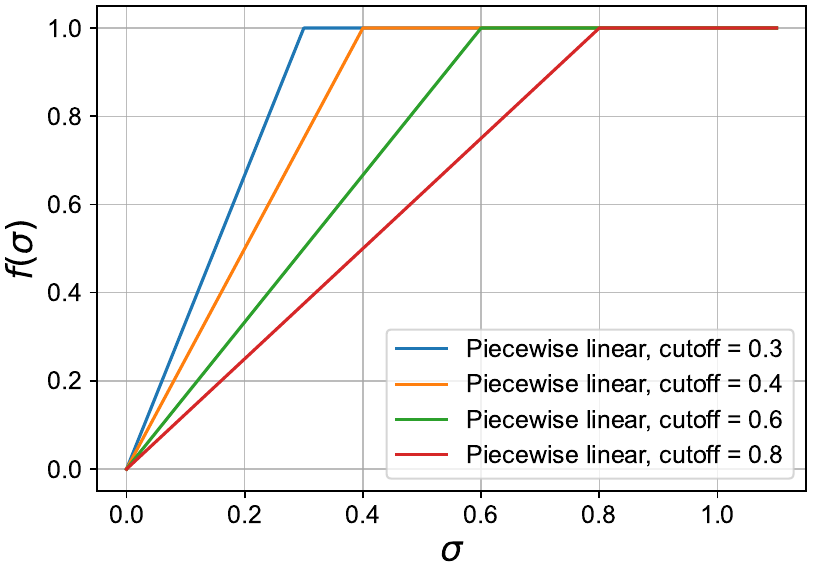}
\caption{\textbf{Piecewise-linear target mappings.} We show $\mathrm{PL}_b(\sigma)=\min(\sigma/b,1)$ for several cutoffs $b\in\{0.3,0.4,0.6,0.8\}$ in the fitting interval $[0,1.1]$. Smaller $b$ more aggressively enlarges small singular values and saturates large ones to $1$.}
\label{fig:pc_target_poly}
\end{subfigure}
\hfill
\begin{subfigure}[t]{0.46\textwidth}
\centering
\includegraphics[width=\linewidth]{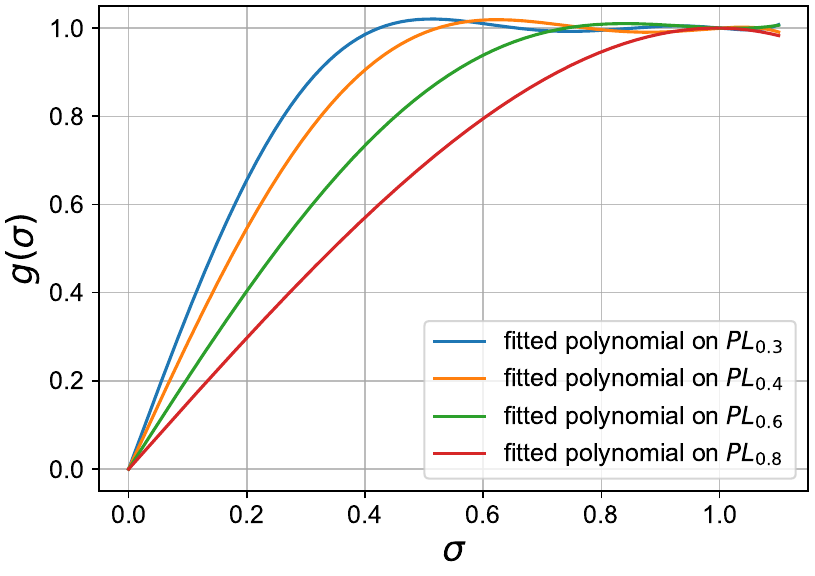}
\caption{\textbf{Fitted polynomial preconditioners.} Weighted least-squares solutions in $G_k=\{g(\sigma)=p(\sigma^2)\sigma:\deg p\le k\}$ for $k\in\{1,2,3,4\}$ (overall degrees $3,5,7,9$), each fitted to the corresponding $\mathrm{PL}_b$ target in (a) by Algorithm \ref{alg:poly-fitting} in Appendix \ref{subapp: poly-fit-alg}. The fitted map $g$ closely tracks the target while respecting the structural constraint $g(0)=0$.}
\label{fig:pc_fitted_poly}
\end{subfigure}
\caption{\textbf{Finding preconditioning polynomials on a fixed spectral domain.} (a) Target piecewise-linear mappings. 
(b) Low-degree polynomial preconditioners obtained by weighted least squares.
}
\label{fig:pc_poly}
\end{figure}

\subsection{Methodology: Preconditioning (PC) Layer}
\label{sec:pc-layer-alg}

In this subsection, we formally propose the Preconditioning (PC) layer,
a built-in weight-space module that reshapes the singular-value spectrum
of selected weight matrices. 
PC operates on a configurable set of blocks, acting inside the computation graph during training.
Algorithm~\ref{alg:pc} summarizes the procedure.

\begin{algorithm}[htbp]
\caption{PC Layer for Transformers}
\label{alg:pc}
\begin{algorithmic}[1]
\STATE{\textbf{Input:} Transformer architecture $\mathcal{A}$; subset of weight blocks \texttt{PC\_blocks} $\subseteq$ \texttt{param\_blocks}; preconditioning polynomial $g_k$ selected by \texttt{pc\_level}$=k$; a per-block learnable scalar $\gamma$ (initialized to $1$) for each $W \in$ \texttt{PC\_blocks}.}
\FOR{each weight matrix $W$ in \texttt{PC\_blocks}}
    \STATE{$s(W) \leftarrow \text{ estimation of } \Vert W \Vert_2$}
    \STATE{$\widetilde{W} = W / s(W)$ \hfill \textit{\# weight normalization}}
    \STATE{$g(\widetilde{W}) = p(\widetilde{W} \widetilde{W}^\top) \widetilde{W}$ \hfill \textit{\# polynomial preconditioning on singular values}}
    \STATE{${\mathrm{PC}}(W) \gets \gamma \cdot \left[s(W)\right]_{\rm stop\text{-}grad} \cdot g(\widetilde{W})$ \hfill \textit{\# norm recovery and learnable scaling}}
\ENDFOR
\STATE{\textbf{Return:} Transformer architecture $\mathcal{A}_{\rm PC} = \mathcal{A}[W \to {\mathrm{PC}}(W)]_{W \in \texttt{PC\_blocks}}$.}
\end{algorithmic}
\end{algorithm}

In general, our weight preconditioning method (Algorithm \ref{alg:pc}) replaces each weight matrix $W$ in selected blocks with its preconditioned form ${\mathrm{PC}}(W)$. Importantly, this replacement is \emph{architectural rather than numerical} — it modifies the computation graph itself, rather than simply altering the parameter values.

In this work, we apply PC to the attention output projection matrices
and the FFN projection matrices in Llama 2 architecture. We use $W_{\rm O}$ to denote
the attention output projection in each layer, and define
\[
    \texttt{ffn} := \{W_{\rm gate}, W_{\rm up}, W_{\rm down}\},
\]
where the three FFN matrices correspond to \texttt{gate\_proj},
\texttt{up\_proj}, and \texttt{down\_proj}, respectively. Thus, unless
otherwise specified,
\[
    \texttt{PC\_blocks} = \{W_{\rm O}, \texttt{ffn}\}.
\]

The following explains the key components of the weight preconditioning method:
\begin{itemize}

\item \textbf{Weight normalization (Line 4)}:
We scale the original weight matrix by
\(\widetilde{W} = W / s(W)\), where \(s(W)\approx\lVert W\rVert_{2}\) is an estimate of the spectral norm.
The goal is to confine the singular values of \(\widetilde{W}\) to (approximately) \([0,1]\), the interval over which the polynomial \(g\) is designed to be effective in preconditioning.
We compute \(s(W)\) by a streaming power-iteration procedure that avoids exact SVD; see Appendix~\ref{app:streaming-pi} for details.

\item \textbf{Polynomial preconditioning (Line 5):} As detailed in Section~\ref{sec:rect}, the odd polynomial of degree \(2k{+}1\),
\( g(\widetilde{W}) = p(\widetilde{W}\widetilde{W}^{\top})\,\widetilde{W} \),
maps each singular value \(\sigma\) of \(\widetilde{W}\) to \(g(\sigma)=\sigma\,p(\sigma^{2})\), thereby reshaping the spectrum for better conditioning. We implement this step using
compute-efficient techniques; see Appendix~\ref{app:pc_tricks} for details.

\item \textbf{Norm recovery and learnable scaling (Line 6)}:
After the polynomial preconditioning, 
we rescale the preconditioned matrix by the spectral-norm estimate 
$\left[s(W)\right]_{\rm stop\text{-}grad}$
(\emph{norm recovery})\footnote{Here \([\cdot]_{\mathrm{stop\text{-}grad}}\) indicates that the scalar
recovery factor is detached from the computation graph.} and by a learnable scalar $\gamma$ (initialized to $1$).
Intuitively, multiplying $s(W)$ back restores a comparable overall magnitude
after normalization and polynomial spectrum shaping.
This rescaling is crucial for the loss (see Section \ref{subsec:ab_study-rcv}).
The learnable $\gamma$ then allows the model 
to adaptively adjust this magnitude during training, 
leaving the final loss nearly unchanged but stabilizing signal-propagation metrics
(see Section \ref{subsec:ab_study-gamma}). 
Because both are \emph{scalar} multipliers,
they only rescale the singular values uniformly and do not alter their relative distribution.
Hence the spectrum shape tailored by the PC polynomial $g$ is preserved.

\end{itemize}

\begin{rem}
\label{rem:no-inference-overhead}
   Weight preconditioning is implemented as a reparameterization during training, thus it introduces \textbf{no} inference-time overhead.
    To be more specific, after training \(\mathcal{A}_{\mathrm{PC}}\), we simply store \(\mathrm{PC}(W)\) as the new weight and use it as the parameter of the original architecture \(\mathcal{A}\) during inference.
    Thus, there is no additional PC-polynomial computation at inference, and the FLOPs remain the same.
\end{rem}

\section{Experimental Results}

In this section, we present an empirical evaluation of the proposed PC layer. We first describe the experimental settings (\S\ref{subsec:exp_setting}), then report pre-training results (\S\ref{subsec:main_res}), and finally evaluate the computation and memory cost (\S\ref{subsec:compute-memory-cost}). 

\subsection{Experimental Settings}
\label{subsec:exp_setting}

\paragraph{Training setup.}
We evaluate PC by pre-training Llama-2-style \citep{touvron2023llama} models at two scales, 271M and 1B parameters, on the FineWeb dataset \citep{penedo2024the} using the TorchTitan codebase \citep{liang2025torchtitan}.\footnote{\url{https://github.com/pytorch/torchtitan}. Our implementation is built on an earlier revision of TorchTitan.}
Our main results are reported under two optimizers: AdamW \citep{kingma2014adam, loshchilov2017decoupled} and Muon \citep{jordan2024muon}. Detailed optimizer configurations are deferred to Appendix~\ref{app:exp_detail}.

Guided by recent pre-training practice in data-rich regimes \citep{gadre2024language,sardana2023beyond,wen2025fantastic}, we use token-to-parameter ratios well above the Chinchilla compute-optimal guideline ($\approx$20 tokens/parameter) \citep{hoffmann2022training}: the 271M model is trained on 57B tokens ($\approx$210 tokens/parameter, 10.5$\times$Chinchilla), and the 1B model on 160B tokens ($\approx$160 tokens/parameter, 8$\times$Chinchilla).
Both scales use sequence length 8192 and a fixed global batch of $2.62$M tokens per optimization step.
All runs are conducted on 8$\times$NVIDIA H100 GPUs.

\paragraph{Learning-rate schedule and grid search.}
We use a cosine learning-rate schedule with linear warmup: the LR is linearly warmed up over the first $1\%$ of training steps to the peak value, and then decayed to $10\%$ of the peak by the end of training following a cosine schedule.

We tune the \emph{peak LR only on the baseline configuration}. For each model size and optimizer, we sweep the peak LR over a $\sqrt{2}$-spaced geometric grid (e.g., $a/2,\, a/\sqrt{2},\, a,\, a\sqrt{2},\, 2a$). Among the runs that train stably (no spikes in the training loss curve), we pick the peak LR with the lowest final validation loss (see Table~\ref{tab:peak_lr_grid}).
The PC run then directly adopts this \emph{baseline-tuned} peak LR (together with all other non-PC hyperparameters), without any further tuning.
This procedure is conservative for PC, since the LR is tuned only on the baseline and is not re-optimized after adding PC.

\begin{table}[htbp]
\centering
\setlength{\tabcolsep}{10pt}
\renewcommand{\arraystretch}{1.15}
\begin{tabular}{llcc}
\toprule
\textbf{Model} & \textbf{Optimizer} & \textbf{Sweep range} & \textbf{Selected peak LR} \\
\midrule
\textbf{Llama-271M} & AdamW & $[3.0\times10^{-4},\, 3.394\times10^{-3}]$ & $8.486\times10^{-4}$ \\
             & Muon  & $[1.061\times10^{-3},\, 1.2\times10^{-2}]$   & $4.243\times10^{-3}$ \\
\midrule
\textbf{Llama-1B}   & AdamW & $[5.303\times10^{-5},\, 4.243\times10^{-4}]$ & $3\times10^{-4}$ \\
             & Muon  & $[7.5\times10^{-4},\, 4.243\times10^{-3}]$    & $1.5\times10^{-3}$ \\
\bottomrule
\end{tabular}
\caption{Peak learning-rate grid search on the baseline configuration.}
\label{tab:peak_lr_grid}
\end{table}

\paragraph{PC configuration.}
Unless otherwise specified, PC is applied to the FFN projections and the attention output projection in each transformer layer, i.e., $\texttt{PC\_blocks}=\{\texttt{ffn}, W_{\rm O}\}$ with $\texttt{ffn}=\{W_{\rm gate}, W_{\rm up}, W_{\rm down}\}$. For spectral normalization, we estimate $\|W\|_2$ using a 10-step streaming power iteration during each training step. All PC runs use norm recovery and a per-block learnable scalar $\gamma$, as described in Algorithm~\ref{alg:pc}. The only optimizer-dependent PC hyperparameter in our main experiments is \texttt{pc\_level}, the degree \(k\) of \(p_k\) in \(g_k(\sigma)=p_k(\sigma^2)\sigma\) and hence a knob controlling the strength of spectrum shaping (Table~\ref{tab:pc_level}); a larger \texttt{pc\_level} applies a higher-degree, more aggressive PC map. We use $\texttt{pc\_level}=4$ with AdamW and $\texttt{pc\_level}=2$ with Muon, following the ablation in Section~\ref{subsec:ab_study}.

Additional hyperparameters, model configurations, and implementation details are provided in Appendix~\ref{app:exp_detail}.

\subsection{Main Results}
\label{subsec:main_res}
We present pre-training results of PC under two optimizers, AdamW and Muon, on Llama-271M and Llama-1B models. For each optimizer, we compare a transformer equipped with PC layers against an optimizer-matched standard Llama-style transformer baseline without PC.

\paragraph{PC improves optimization under AdamW.}
We first evaluate PC under AdamW on Llama-271M and Llama-1B. On Llama-271M (Figure~\ref{fig:271m-adamw-loss-vs-token}), under the same token budget, the Transformer with PC layer reaches a 0.055 lower final validation loss than the standard baseline. It attains the baseline loss with about 39\% fewer training tokens, a $1.63\times$ speedup. The advantage carries over to the 1B scale (Figure~\ref{fig:1b-adamw-loss-vs-token}). Here PC lowers the final validation loss by 0.070 and reaches the baseline loss with 50\% fewer tokens, a $2\times$ speedup. The optimization gain does not diminish at 1B scale and is even slightly larger.

\begin{figure}[htbp]
  \centering
  \begin{subfigure}[t]{0.48\textwidth}
    \centering
    \includegraphics[width=\linewidth]{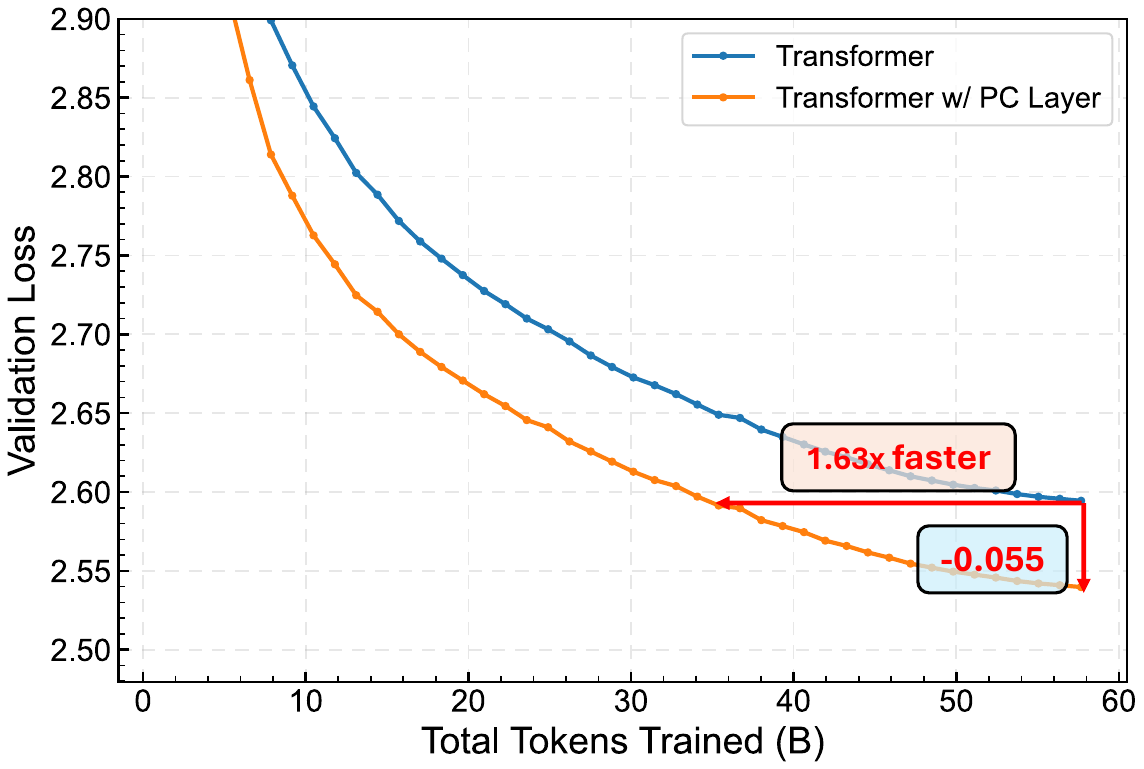}
    \caption{\textbf{Llama-271M.} Validation loss vs.\ training tokens.}
    \label{fig:271m-adamw-loss-vs-token}
  \end{subfigure}\hfill
  \begin{subfigure}[t]{0.48\textwidth}
    \centering
    \includegraphics[width=\linewidth]{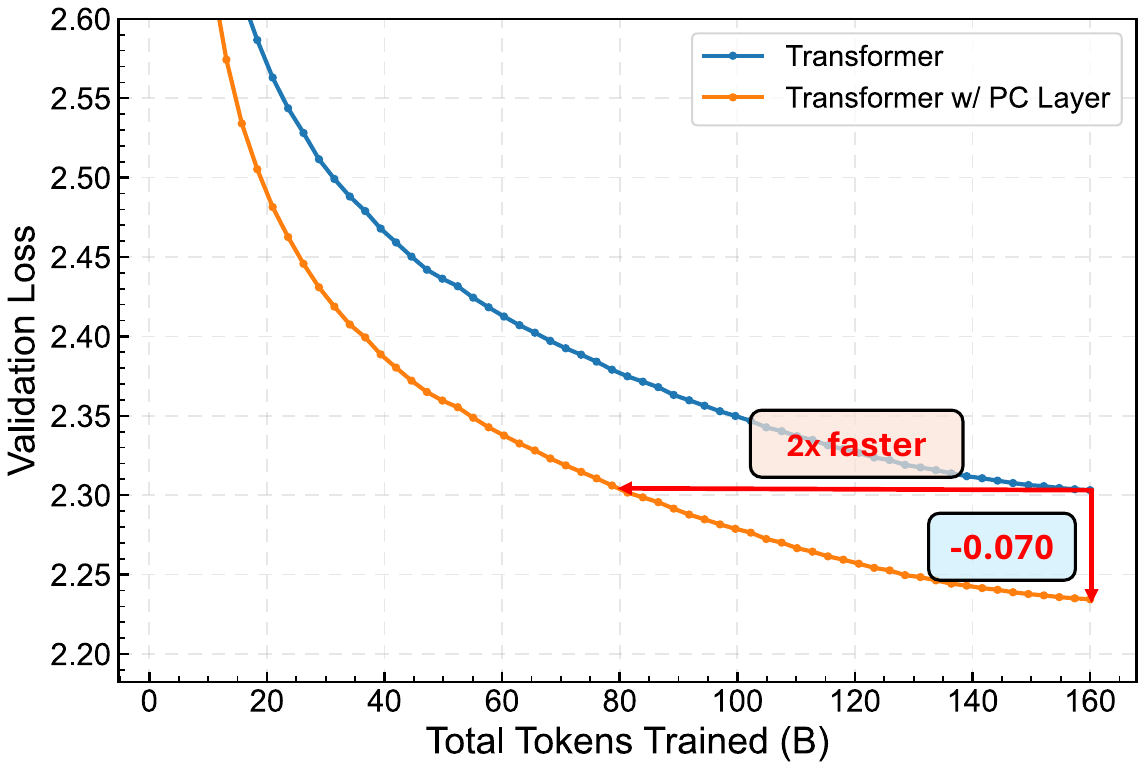}
    \caption{\textbf{Llama-1B.} Validation loss vs.\ training tokens.}
    \label{fig:1b-adamw-loss-vs-token}
  \end{subfigure}
    \caption{\textbf{PC performance under AdamW.}
    Validation loss vs.\ training tokens on (a) Llama-271M and (b) Llama-1B.
    Under the same total token budget, adding the PC layer reduces the final validation loss by \emph{0.055} on 271M (a $1.63\times$ token-efficiency speedup) and by \emph{0.070} on 1B (a $2\times$ speedup). The gain does not diminish at the larger scale.}
  \label{fig:pc-scaling-adamw}
\end{figure}

\paragraph{PC also helps under Muon.}
We further evaluate PC under Muon, a second widely used optimizer for LLM pre-training. Adding the PC layer again lowers the final validation loss on both scales (Figure~\ref{fig:271m-muon-loss-vs-token},~\ref{fig:1b-muon-loss-vs-token}). The reduction is 0.006 on Llama-271M, a $1.07\times$ speedup, and 0.012 on Llama-1B, a $1.13\times$ speedup. These suggest that the improvement brought by PC layer stays consistent across scales under Muon.

\begin{figure}[htbp]
  \centering
  \begin{subfigure}[t]{0.48\textwidth}
    \centering
    \includegraphics[width=\linewidth]{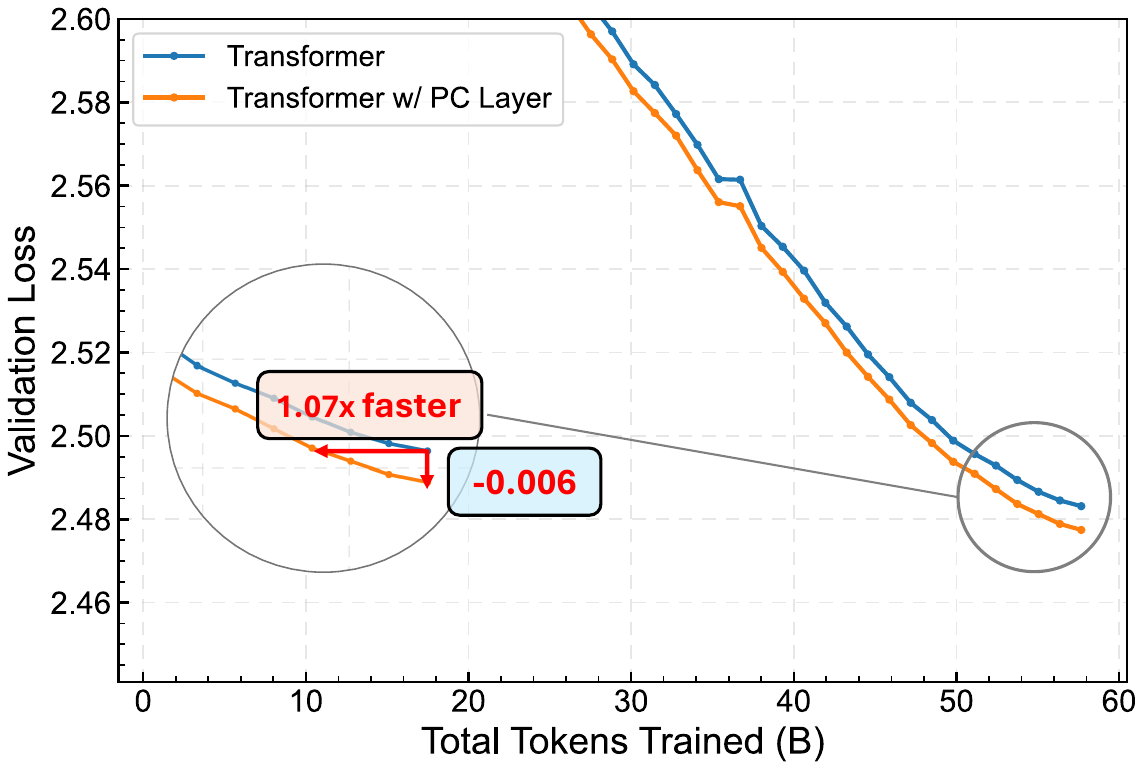}
    \caption{\textbf{Llama-271M.} Validation loss vs.\ training tokens.}
    \label{fig:271m-muon-loss-vs-token}
  \end{subfigure}\hfill
  \begin{subfigure}[t]{0.48\textwidth}
    \centering
    \includegraphics[width=\linewidth]{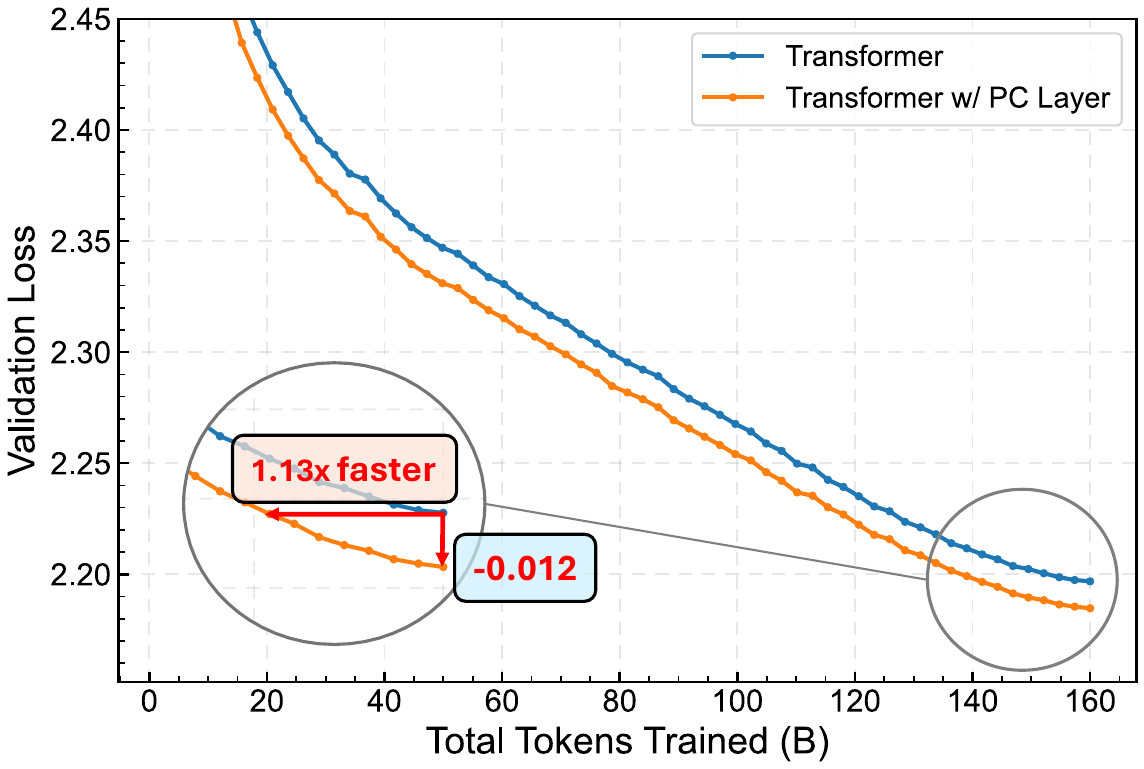}
    \caption{\textbf{Llama-1B.} Validation loss vs.\ training tokens.}
    \label{fig:1b-muon-loss-vs-token}
  \end{subfigure}
    \caption{\textbf{PC performance under Muon.}
    Validation loss vs.\ training tokens on (a) Llama-271M and (b) Llama-1B.
    Adding the PC layer consistently reduces the final validation loss across scales. The reduction is \emph{0.006} on 271M (a $1.07\times$ token-efficiency speedup) and \emph{0.012} on 1B (a $1.13\times$ speedup).
    }
  \label{fig:pc-scaling-muon}
\end{figure}

\paragraph{Downstream task evaluation.}
\label{para:downstream-main}
We evaluate the pre-trained 1B-parameter model on a suite of standard zero-shot downstream tasks using the Language Model Evaluation Harness \citep{eval-harness}. Specifically, we report results on LAMBADA \citep{paperno2016lambada},
HellaSwag \citep{zellers2019hellaswag}, WinoGrande \citep{sakaguchi2021winogrande}, PIQA \citep{bisk2020piqa}, BoolQ \citep{clark2019boolq}, ARC-Easy/ARC-Challenge (ARC-E/ARC-C) \citep{clark2018think}, CommonsenseQA (CSQA) \citep{talmor2019commonsenseqa}, and OpenBookQA (OBQA) \citep{mihaylov2018can}. Table~\ref{tab:downstream-1b} shows that adding the PC layer improves average downstream accuracy under both optimizers: by 0.0206 points under AdamW ($0.4539\to0.4745$) and by 0.0125 points under Muon ($0.4880\to0.5005$), winning on 8 out of 9 tasks in each case.

\begin{table}[htbp]
\centering
\renewcommand{\arraystretch}{1.15}
\setlength{\tabcolsep}{6pt}
\footnotesize
\begin{tabular}{l|ccccccccc|c}
\toprule
\textbf{Method} & \textbf{LMB.} & \textbf{Hella.} & \textbf{Wino.} & \textbf{PIQA} & \textbf{BoolQ} & \textbf{ARC-E} & \textbf{ARC-C} & \textbf{CSQA} & \textbf{OBQA} & \textbf{Avg.} \\
& {\scriptsize(OpenAI)} & & & & & & & & & \\
& Acc $\uparrow$ & Acc\textsubscript{n} $\uparrow$ & Acc $\uparrow$ & Acc $\uparrow$ & Acc $\uparrow$ & Acc\textsubscript{n} $\uparrow$ & Acc\textsubscript{n} $\uparrow$ & Acc $\uparrow$ & Acc\textsubscript{n} $\uparrow$ & $\uparrow$ \\
\midrule
AdamW baseline & 0.5026 & 0.5137 & 0.5438 & 0.7122 & \textbf{0.5645} & 0.4769 & 0.2645 & 0.2088 & 0.2980 & 0.4539 \\
AdamW + PC     & \textbf{0.5366} & \textbf{0.5642} & \textbf{0.5627} & \textbf{0.7274} & 0.5554 & \textbf{0.5034} & \textbf{0.2807} & \textbf{0.2105} & \textbf{0.3300} & \textbf{0.4745} \\
\midrule
Muon baseline  & 0.5717 & 0.6004 & 0.5809 & 0.7470 & 0.5550 & 0.5076 & 0.2892 & 0.1859 & \textbf{0.3540} & 0.4880 \\
Muon + PC      & \textbf{0.5812} & \textbf{0.6071} & \textbf{0.5967} & \textbf{0.7486} & \textbf{0.5847} & \textbf{0.5370} & \textbf{0.2986} & \textbf{0.2048} & 0.3460 & \textbf{0.5005} \\
\bottomrule
\end{tabular}
\caption{\textbf{Zero-shot downstream evaluation on Llama-1B.} Adding the PC layer improves average accuracy under both optimizers: from $0.4539$ to $0.4745$ under AdamW and from $0.4880$ to $0.5005$ under Muon, with consistent per-task improvements on 8 out of 9 tasks in each case. Best results within each optimizer block are in \textbf{bold}. To reduce length bias in candidate scoring, we report length-normalized accuracy (Acc\textsubscript{n}, i.e.\ \texttt{acc\_norm}) when available, namely for HellaSwag, ARC-E, ARC-C, and OBQA; the remaining tasks use plain accuracy (Acc). The average is computed over all nine reported task metrics.}
\label{tab:downstream-1b}
\end{table}

\subsection{Computational and Memory Cost Analysis}
\label{subsec:compute-memory-cost}

The polynomial preconditioner can be computed efficiently via Horner’s method (see Appendix~\ref{app:pc_tricks} for the implementation details). We provide a theoretical FLOPs analysis in Appendix~\ref{subapp:pc_cost} based on matrix-multiplication counts. For Llama-1B training, the estimated relative FLOPs overhead is at most 0.39\% under the AdamW default configuration ($\texttt{pc\_level}=4$) and 0.24\% when PC is combined with Muon ($\texttt{pc\_level}=2$).
In terms of memory, under the Llama-1B setting, the PC layer increases the per-GPU peak active memory by approximately 9.56\% under AdamW and 8.73\% under Muon (see Appendix~\ref{subapp:pc_memory}).

As discussed in Remark~\ref{rem:no-inference-overhead}, the learned preconditioned weights can be absorbed into the original architecture after training; hence inference incurs no additional computation or memory overhead.

\section{PC Improves the Weight Spectrum}
\label{sec:pc_improve_spec}

In this section, we quantify how PC improves the spectral conditioning of transformer weight matrices. We first propose a robust spectral metric, the modified condition number, and then extend it to a global model-level measure that we track during training.

\paragraph{Modified condition number.}
First, we introduce a \emph{Modified Condition Number} metric, denoted as $\tilde{\kappa}(W)$, to robustly evaluate the conditioning of a weight matrix $W$. The traditional condition number $\sigma_{\text{max}} / \sigma_{\text{min}}$ is numerically fragile, because the smallest singular value of a trained network is often extremely close to zero, which makes the ratio blow up.

To mitigate this and capture the effective spectral spread, we define $\tilde{\kappa}(W)$ as the ratio of the top singular value to the average of the bottom $10\%$ singular values. Let $n = \min(m, d)$ be the number of singular values of $W \in \mathbb{R}^{m \times d}$, ordered such that $\sigma_1 \geq \sigma_2 \geq \dots \geq \sigma_n$. The modified condition number is formally defined as:

\begin{equation*}
\tilde{\kappa}(W) = \frac{\sigma_1}{\bar{\sigma}_{\text{bottom-10\%}}}
\end{equation*}
where $\sigma_1$ denotes the largest singular value and $\bar{\sigma}_{\text{bottom-10\%}}$ denotes the average of the smallest $\lceil 0.1n \rceil$ singular values. Averaging over the bottom $10\%$ rather than taking the single $\sigma_{\min}$ smooths out the near-zero tail, giving a stable measure of how widely the spectrum is spread.

\paragraph{\emph{Global} modified condition number.}
To quantify the overall spectral health of the model, we further aggregate $\tilde{\kappa}(W)$ computed on each critical weight block into a \emph{Global Modified Condition Number} (GMCN, $\tilde{\kappa}$). We employ the geometric mean of the modified condition numbers across all critical weight matrices. The geometric mean is preferred over a simple product for numerical stability when aggregating across multiple blocks:
\begin{equation*}
\tilde{\kappa} = \left( \prod_{l \in L} \prod_{b \in B_l} \tilde{\kappa}(W'_{l, b}) \right)^{1/N}
\end{equation*}
where $L$ represents the set of transformer layers, $B_l = \{W_{\rm Q}, W_{\rm K}, W_{\rm V}, W_{\rm O}, W_{\rm gate}, W_{\rm up}, W_{\rm down}\}$ denotes the set of key weight blocks in layer $l$, and $N = |L| \cdot |B_l|$ is the total number of blocks.\footnote{In the Llama~2 architecture, $W_{\rm gate}$, $W_{\rm up}$, and $W_{\rm down}$ refer to the three linear projections in the feed-forward network (FFN): the gate (\texttt{gate\_proj}), up (\texttt{up\_proj}), and down (\texttt{down\_proj}) matrices, respectively.}
Crucially, the evaluated matrix $W'_{l, b}$ depends on the optimization strategy: for blocks targeted by our preconditioner ($W_{\rm O}, W_{\rm gate}, W_{\rm up}, W_{\rm down}$), we compute $\tilde{\kappa}$ on the preconditioned matrix $W'_{l, b} = \text{PC}(W_{l, b})$, which we refer to throughout as the \emph{effective weight}; for non-preconditioned blocks ($W_{\rm Q}, W_{\rm K}, W_{\rm V}$), we use the original weights, $W'_{l, b} = W_{l, b}$.

\paragraph{Experimental analysis of $\tilde{\kappa}$.}
The same geometric mean can be restricted to any subset of blocks: we report it over the PC-targeted blocks (\texttt{ffn} and $W_{\rm O}$), over the non-preconditioned attention blocks ($W_{\rm Q}, W_{\rm K}, W_{\rm V}$), and over all blocks (the global GMCN).
We monitor the modified condition number throughout training (Figure~\ref{fig:1b-adamw-mcn}). Here the baseline curve uses the original weights from the baseline run, while the PC curve uses the effective weights $\mathrm{PC}(W)$ from a separate PC run.
On the blocks directly targeted by PC, namely \texttt{ffn} and $W_{\rm O}$, PC shows a short early transient and then steadily lowers $\tilde{\kappa}$. At the final checkpoint, this aggregate decreases from about $17.3$ for the baseline to about $7.7$ for PC. We also examine $W_{\rm Q}$, $W_{\rm K}$, and $W_{\rm V}$, which are not directly preconditioned. Even though these blocks are never preconditioned, in the PC run their aggregate $\tilde{\kappa}$ ends up clearly below the baseline near the end of training. This indicates that applying PC to \texttt{ffn} and $W_{\rm O}$ not only avoids harming the untargeted weights but also improves their conditioning indirectly.

For the global aggregate, the baseline GMCN grows to about $42.4$, while PC stabilizes around $25.0$, giving a roughly $41\%$ reduction. This verifies that PC substantially improves the effective conditioning of the model weights during training.

\begin{figure}[htbp]
    \centering
    \begin{subfigure}[t]{0.32\textwidth}
        \centering
        \includegraphics[width=\linewidth]{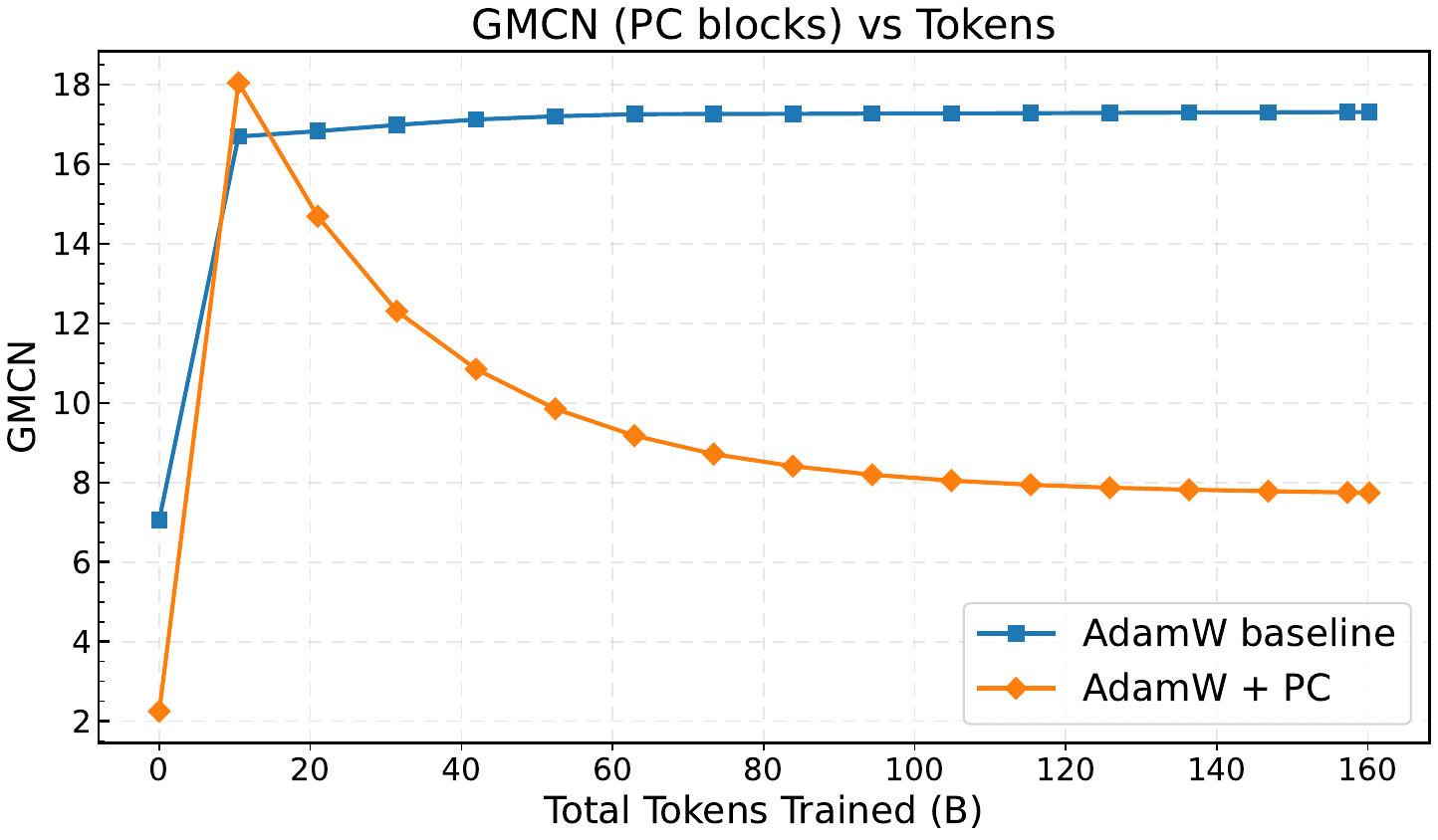}
        \caption{FFN + $W_{\rm O}$}
        \label{fig:1b-adamw-mcn-o-ffn}
    \end{subfigure}\hfill
    \begin{subfigure}[t]{0.32\textwidth}
        \centering
        \includegraphics[width=\linewidth]{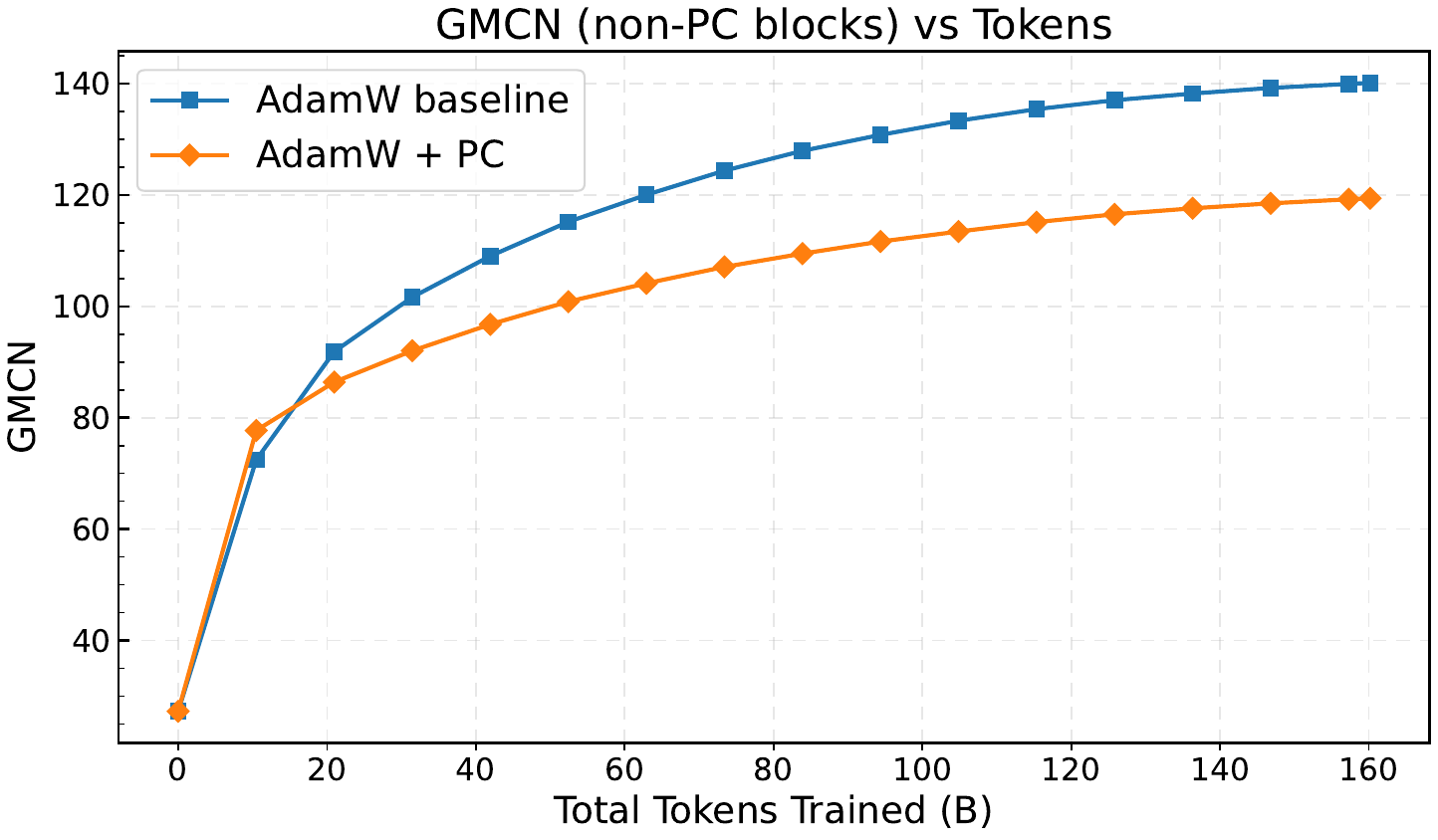}
        \caption{$W_{\rm Q}, W_{\rm K}, W_{\rm V}$}
        \label{fig:1b-adamw-mcn-qkv}
    \end{subfigure}\hfill
    \begin{subfigure}[t]{0.32\textwidth}
        \centering
        \includegraphics[width=\linewidth]{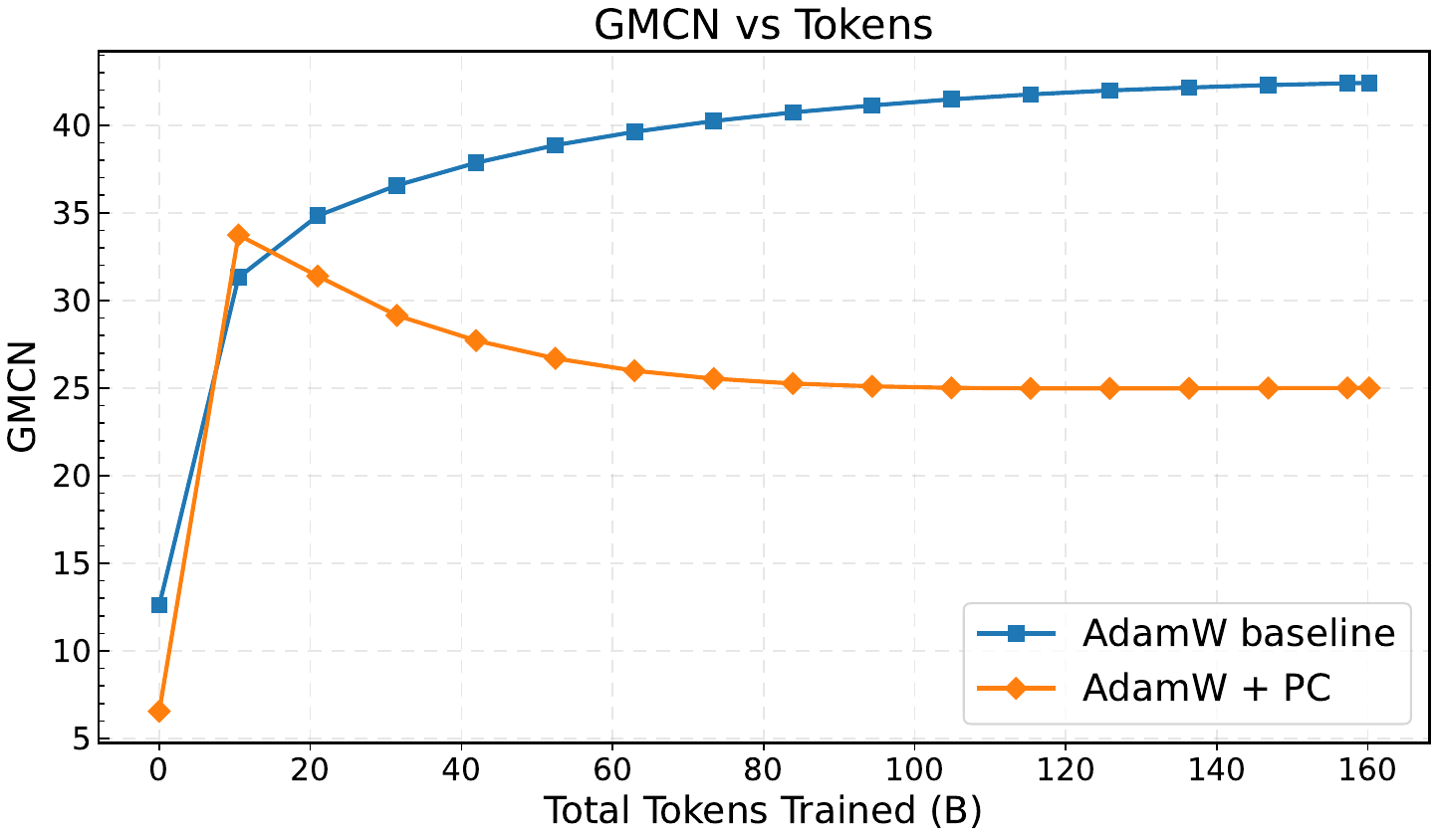}
        \caption{Global}
        \label{fig:1b-adamw-mcn-global}
    \end{subfigure}
    \caption{\textbf{Evolution of modified condition number under AdamW.}
    We report $\tilde{\kappa}$ for the preconditioned blocks (\texttt{ffn} and $W_{\rm O}$), the non-preconditioned attention-input blocks ($W_{\rm Q}, W_{\rm K}, W_{\rm V}$), and their global aggregate. The baseline curves are computed on the original weights, while the PC curves are computed on the effective weights used during the PC trajectory. PC improves the conditioning of the blocks to which it is applied, while the non-preconditioned blocks also show a clear reduction in $\tilde{\kappa}$ rather than any deterioration. Together, these reductions give a lower model-level GMCN.}
    \label{fig:1b-adamw-mcn}
\end{figure}

\begin{figure}[t!]
    \centering
    \begin{subfigure}[t]{0.24\textwidth}
        \centering
        \includegraphics[width=\linewidth]{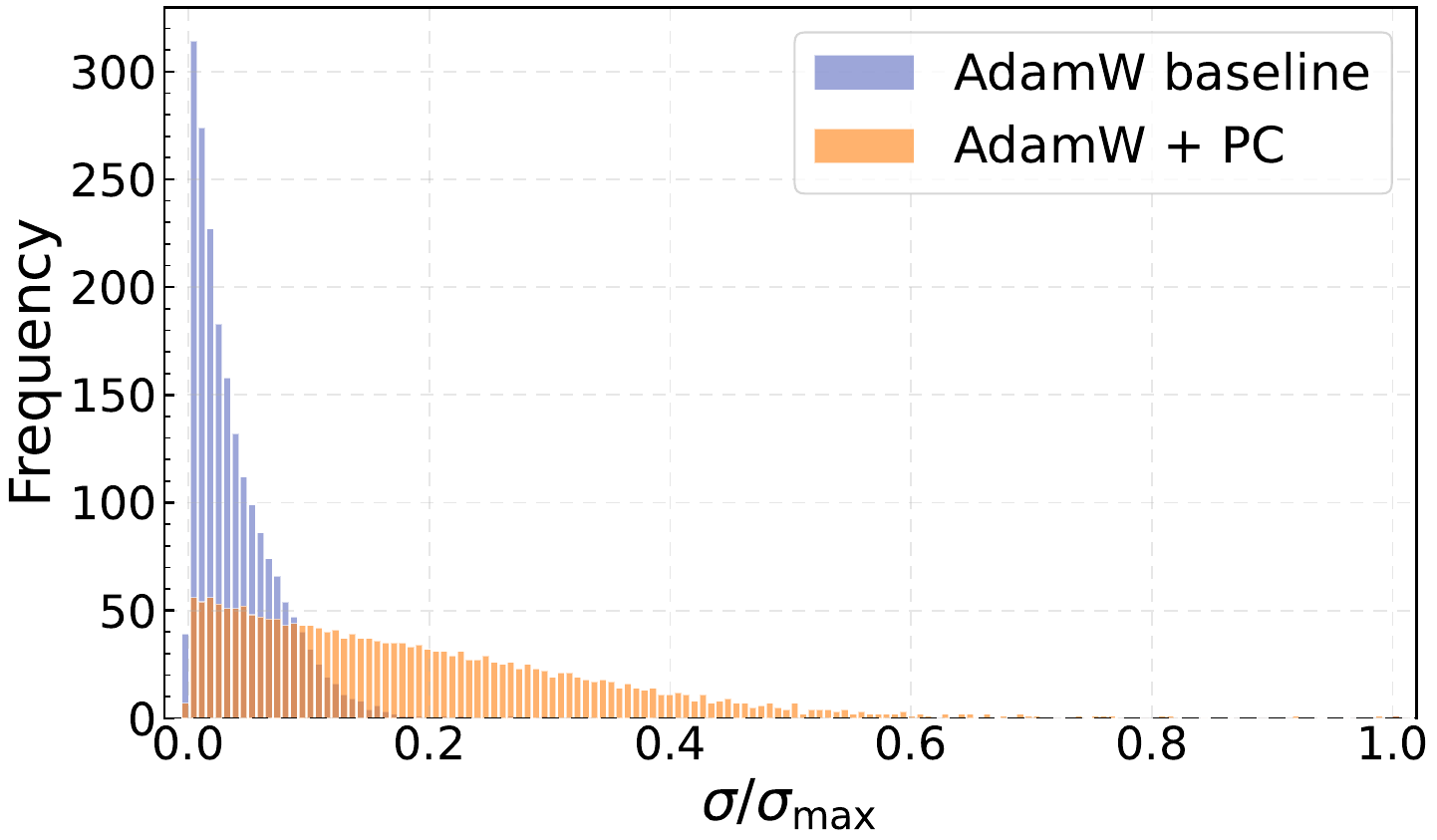}
        \caption{Layer 2: $W_{\rm O}$}
        \label{fig:spec-l2-o}
    \end{subfigure}\hfill
    \begin{subfigure}[t]{0.24\textwidth}
        \centering
        \includegraphics[width=\linewidth]{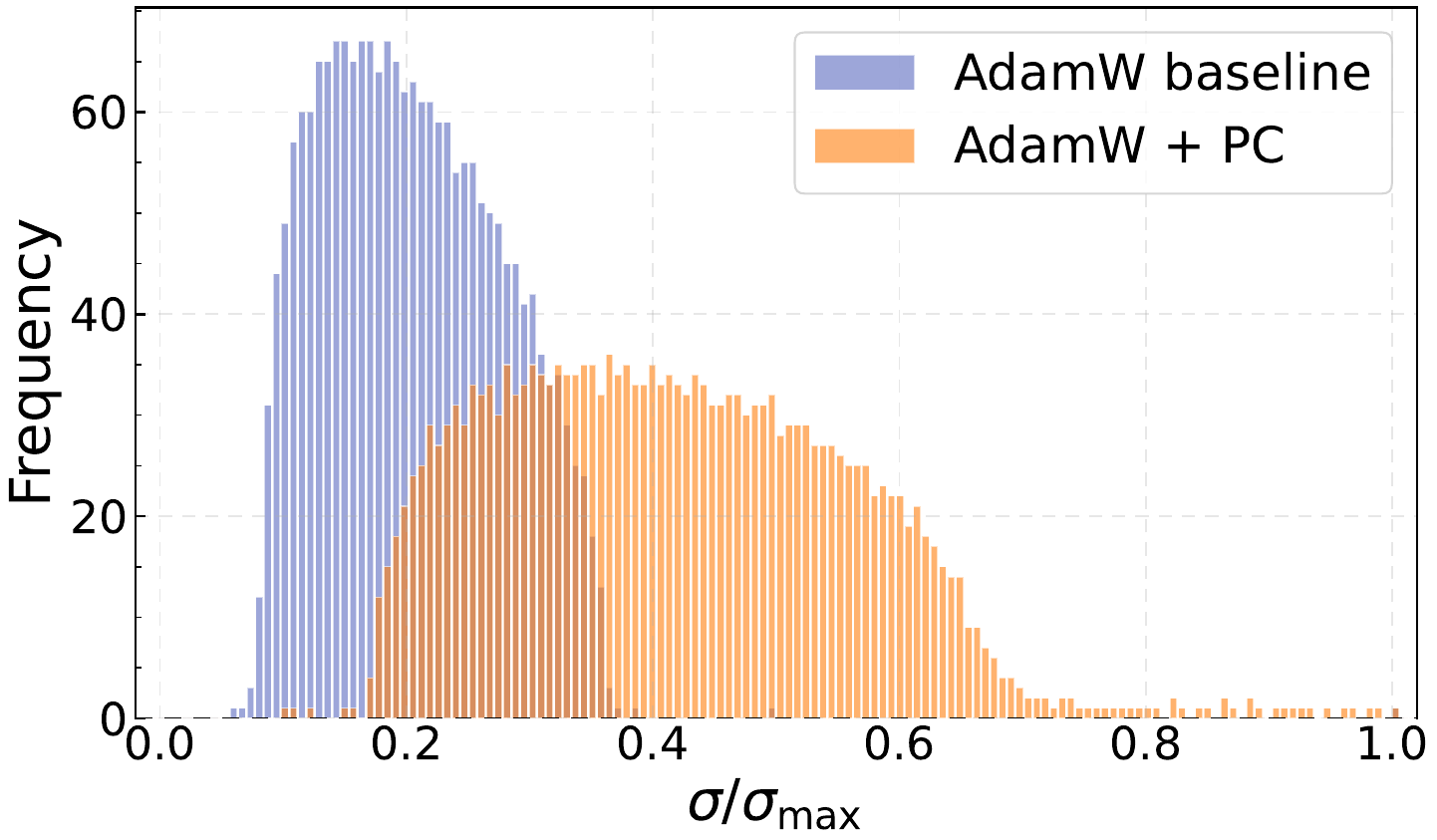}
        \caption{Layer 2: $W_{\rm gate}$}
        \label{fig:spec-l2-w1}
    \end{subfigure}\hfill
    \begin{subfigure}[t]{0.24\textwidth}
        \centering
        \includegraphics[width=\linewidth]{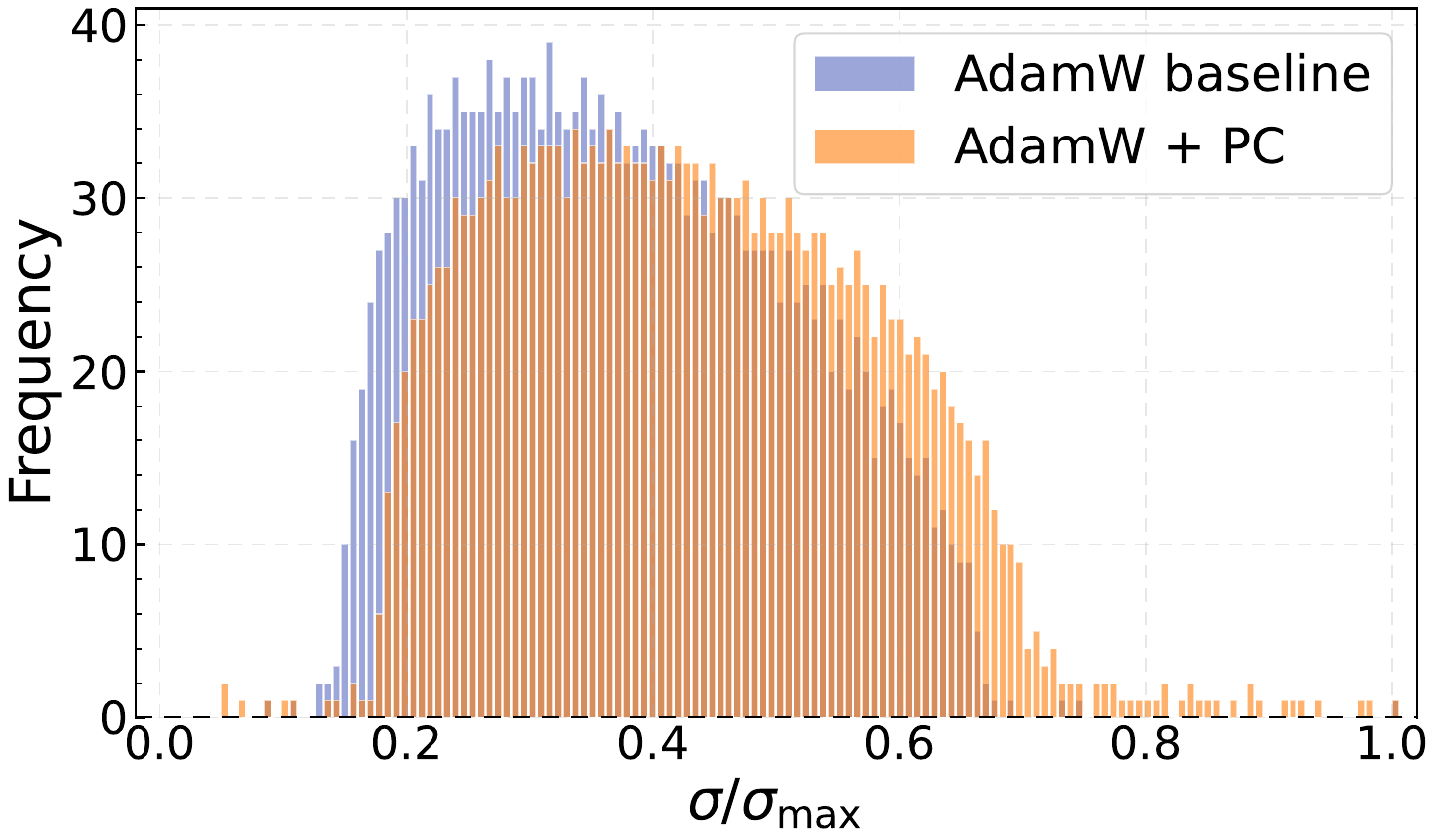}
        \caption{Layer 2: $W_{\rm up}$}
        \label{fig:spec-l2-w3}
    \end{subfigure}\hfill
    \begin{subfigure}[t]{0.24\textwidth}
        \centering
        \includegraphics[width=\linewidth]{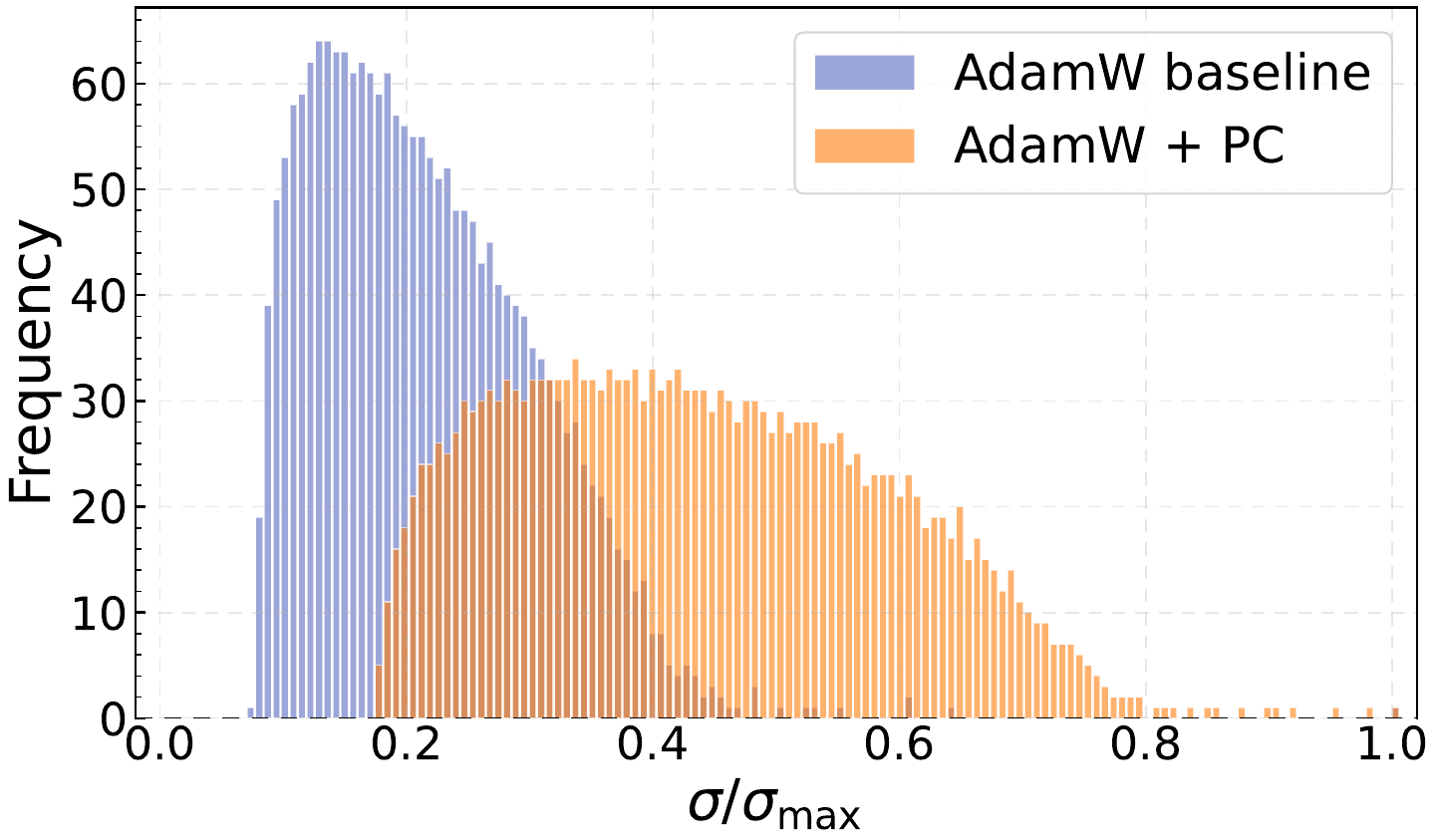}
        \caption{Layer 2: $W_{\rm down}$}
        \label{fig:spec-l2-w2}
    \end{subfigure}

    \vspace{0.6em}

    \begin{subfigure}[t]{0.24\textwidth}
        \centering
        \includegraphics[width=\linewidth]{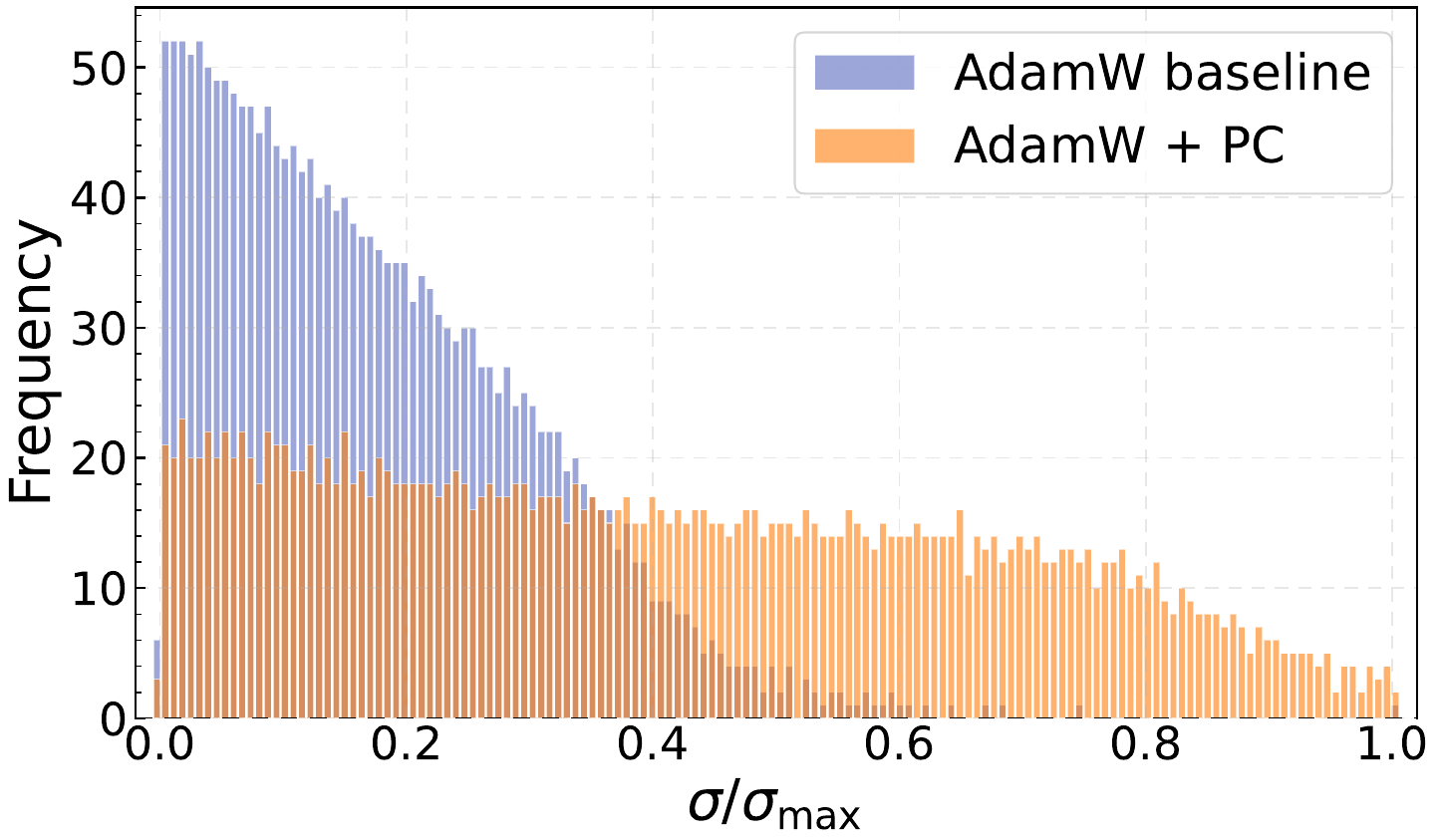}
        \caption{Layer 10: $W_{\rm O}$}
        \label{fig:spec-l10-o}
    \end{subfigure}\hfill
    \begin{subfigure}[t]{0.24\textwidth}
        \centering
        \includegraphics[width=\linewidth]{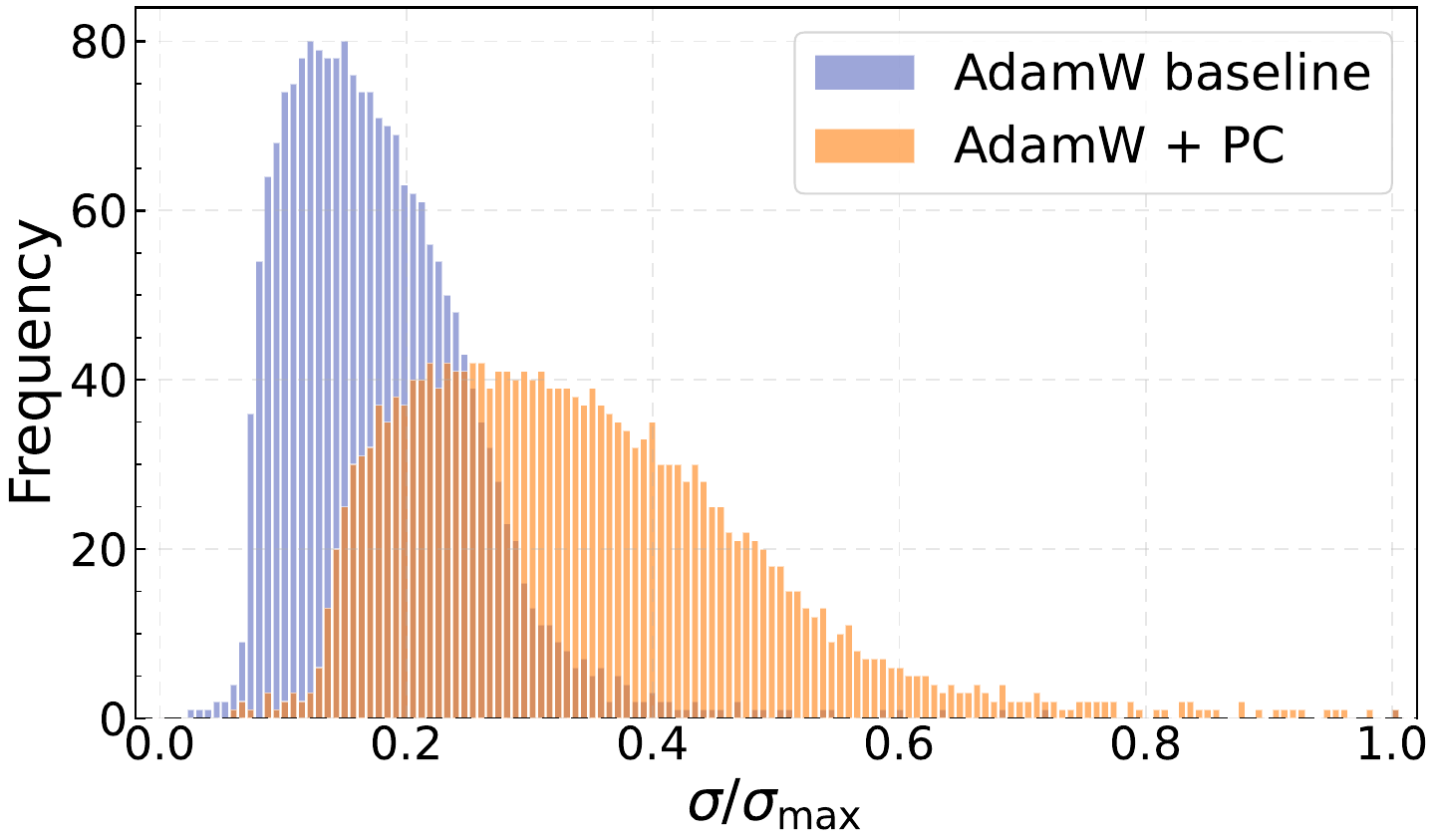}
        \caption{Layer 10: $W_{\rm gate}$}
        \label{fig:spec-l10-w1}
    \end{subfigure}\hfill
    \begin{subfigure}[t]{0.24\textwidth}
        \centering
        \includegraphics[width=\linewidth]{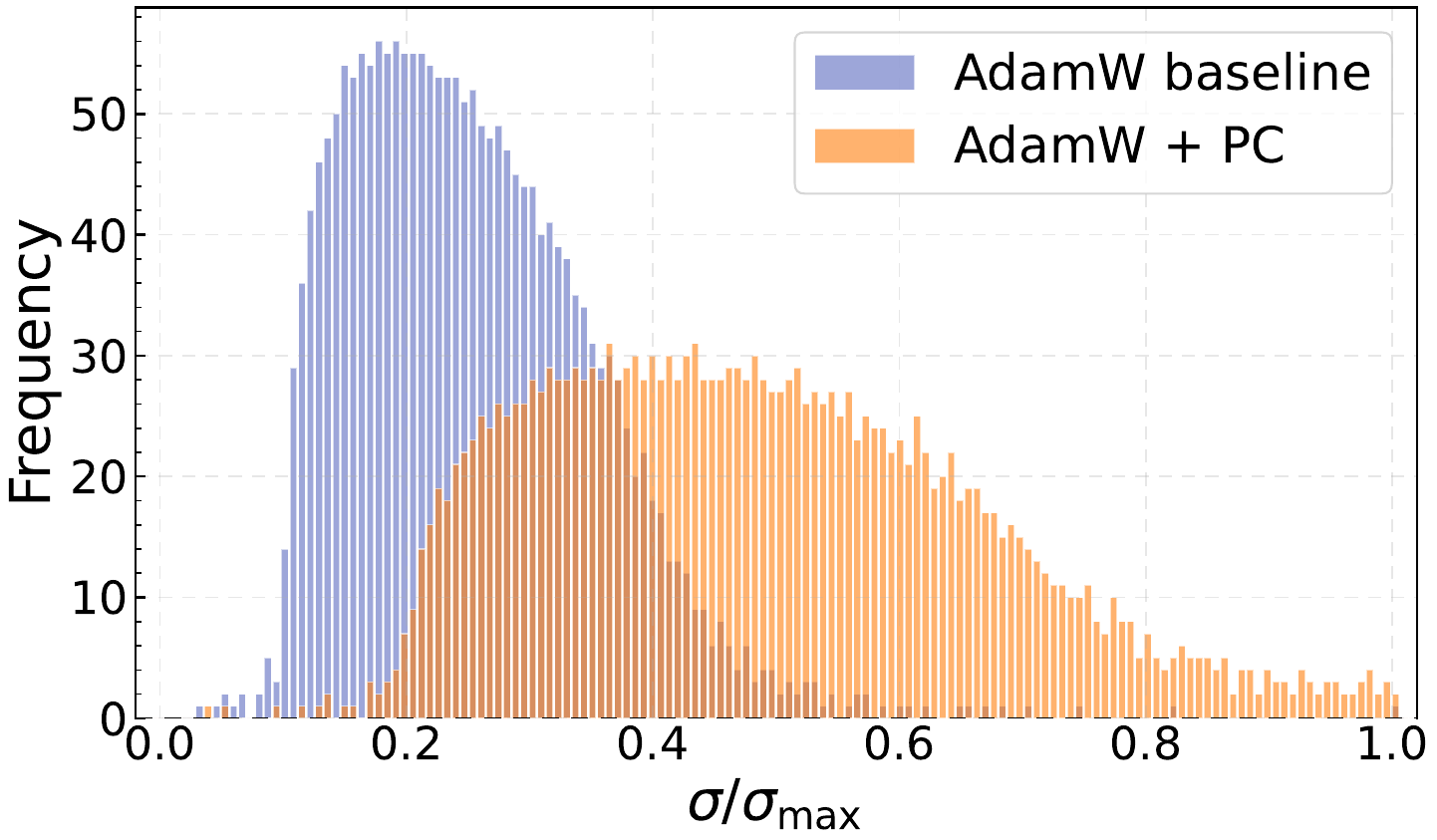}
        \caption{Layer 10: $W_{\rm up}$}
        \label{fig:spec-l10-w3}
    \end{subfigure}\hfill
    \begin{subfigure}[t]{0.24\textwidth}
        \centering
        \includegraphics[width=\linewidth]{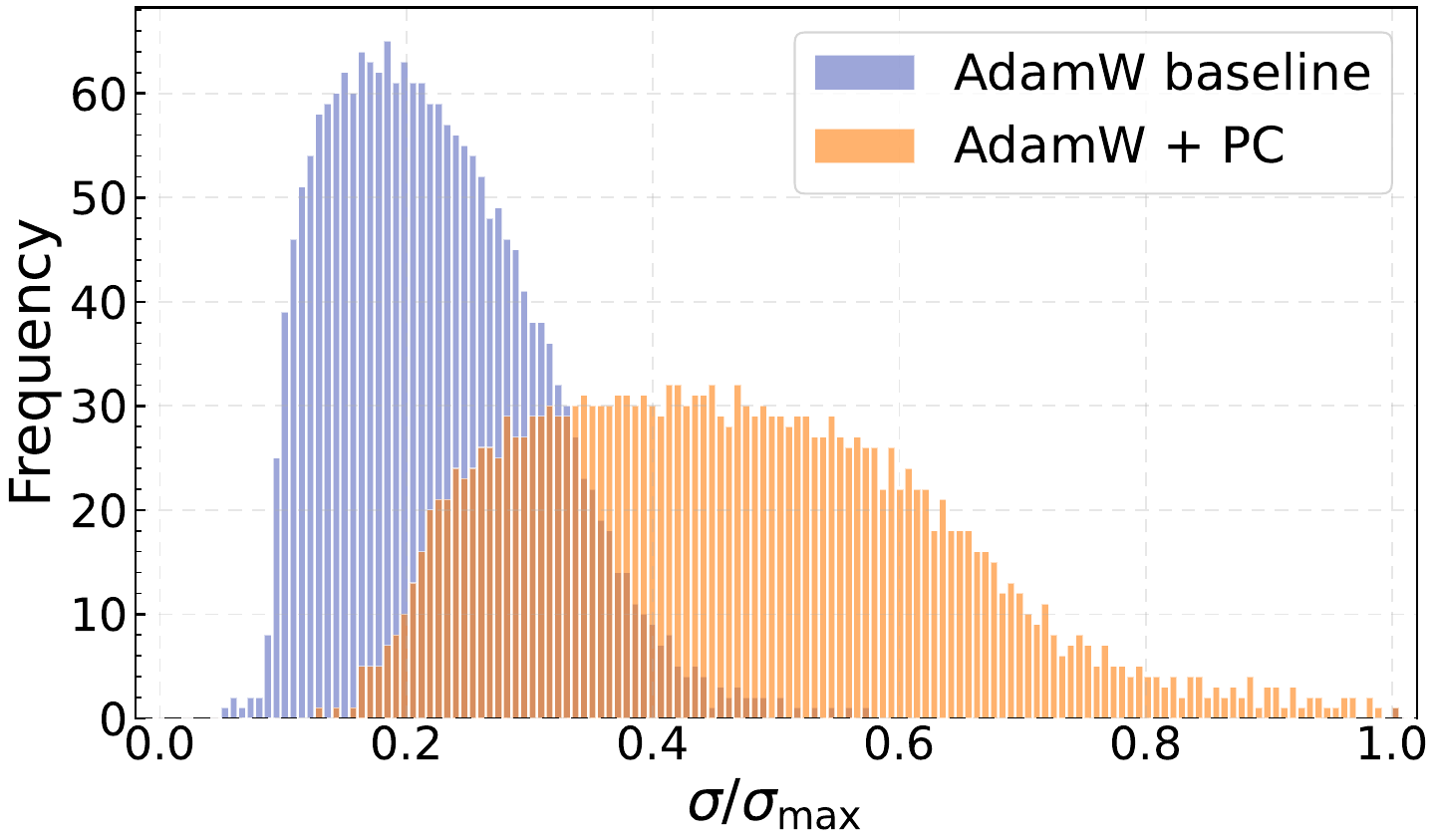}
        \caption{Layer 10: $W_{\rm down}$}
        \label{fig:spec-l10-w2}
    \end{subfigure}

    \vspace{0.6em}

    \begin{subfigure}[t]{0.24\textwidth}
        \centering
        \includegraphics[width=\linewidth]{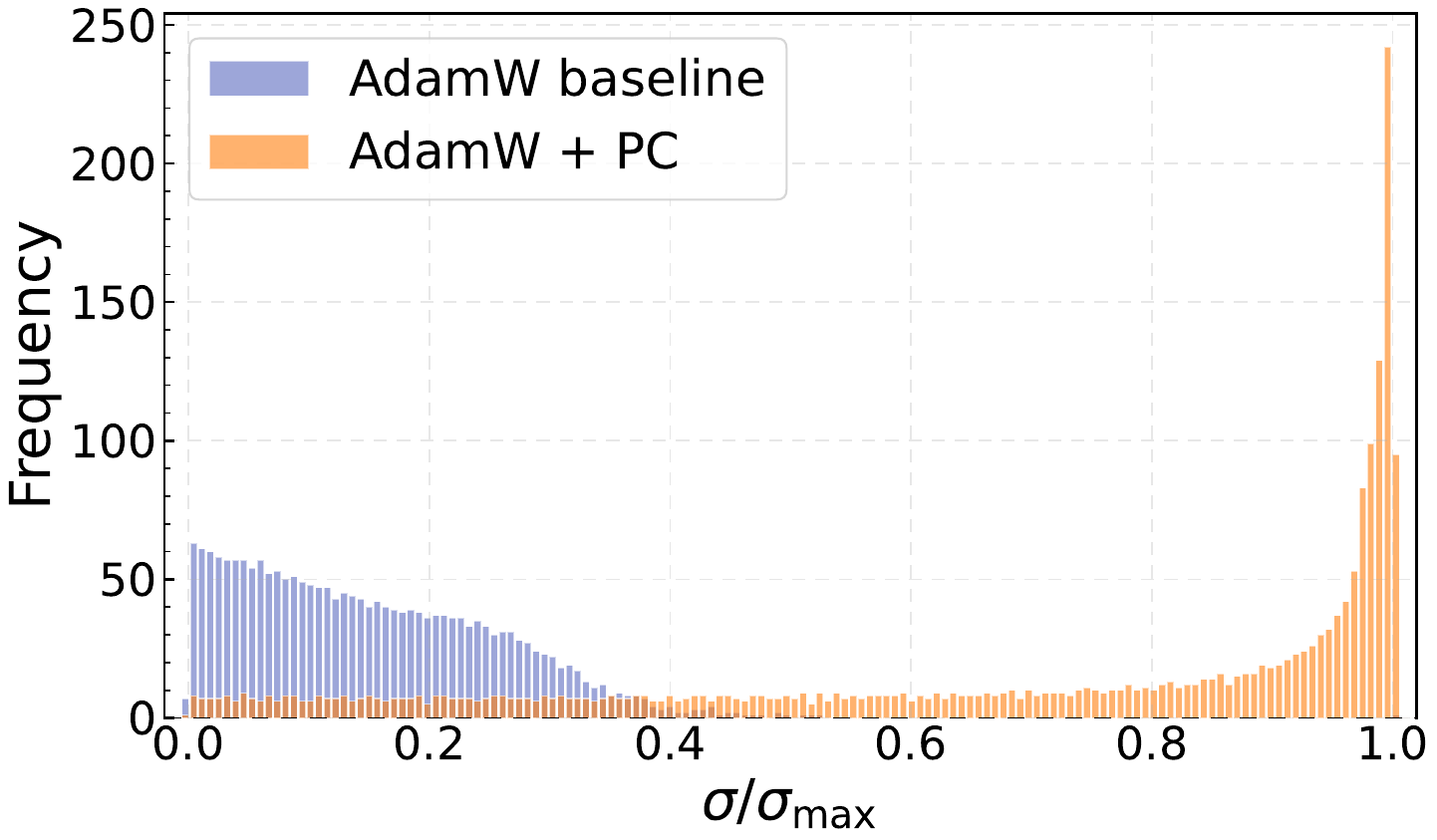}
        \caption{Layer 18: $W_{\rm O}$}
        \label{fig:spec-l18-o}
    \end{subfigure}\hfill
    \begin{subfigure}[t]{0.24\textwidth}
        \centering
        \includegraphics[width=\linewidth]{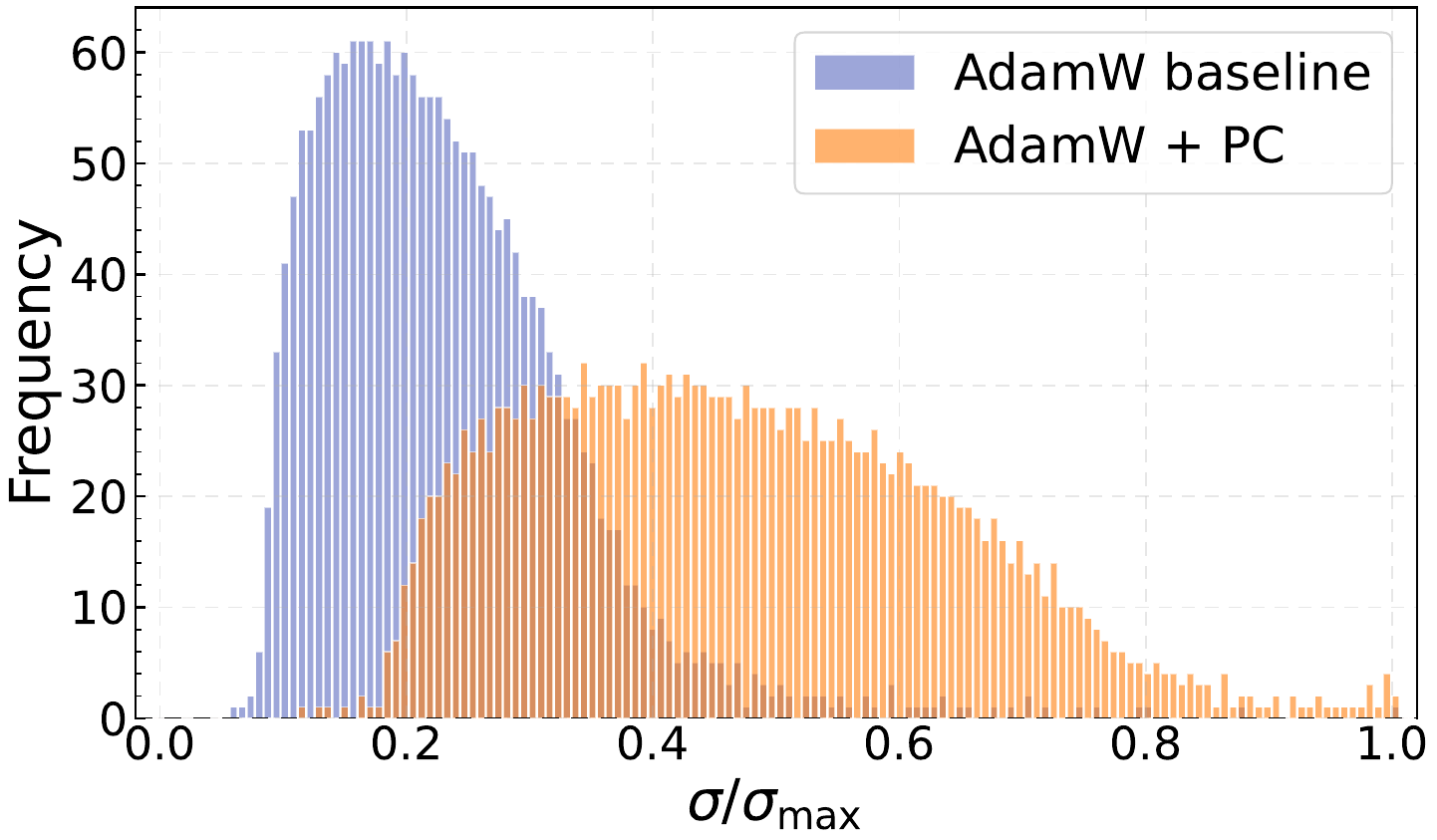}
        \caption{Layer 18: $W_{\rm gate}$}
        \label{fig:spec-l18-w1}
    \end{subfigure}\hfill
    \begin{subfigure}[t]{0.24\textwidth}
        \centering
        \includegraphics[width=\linewidth]{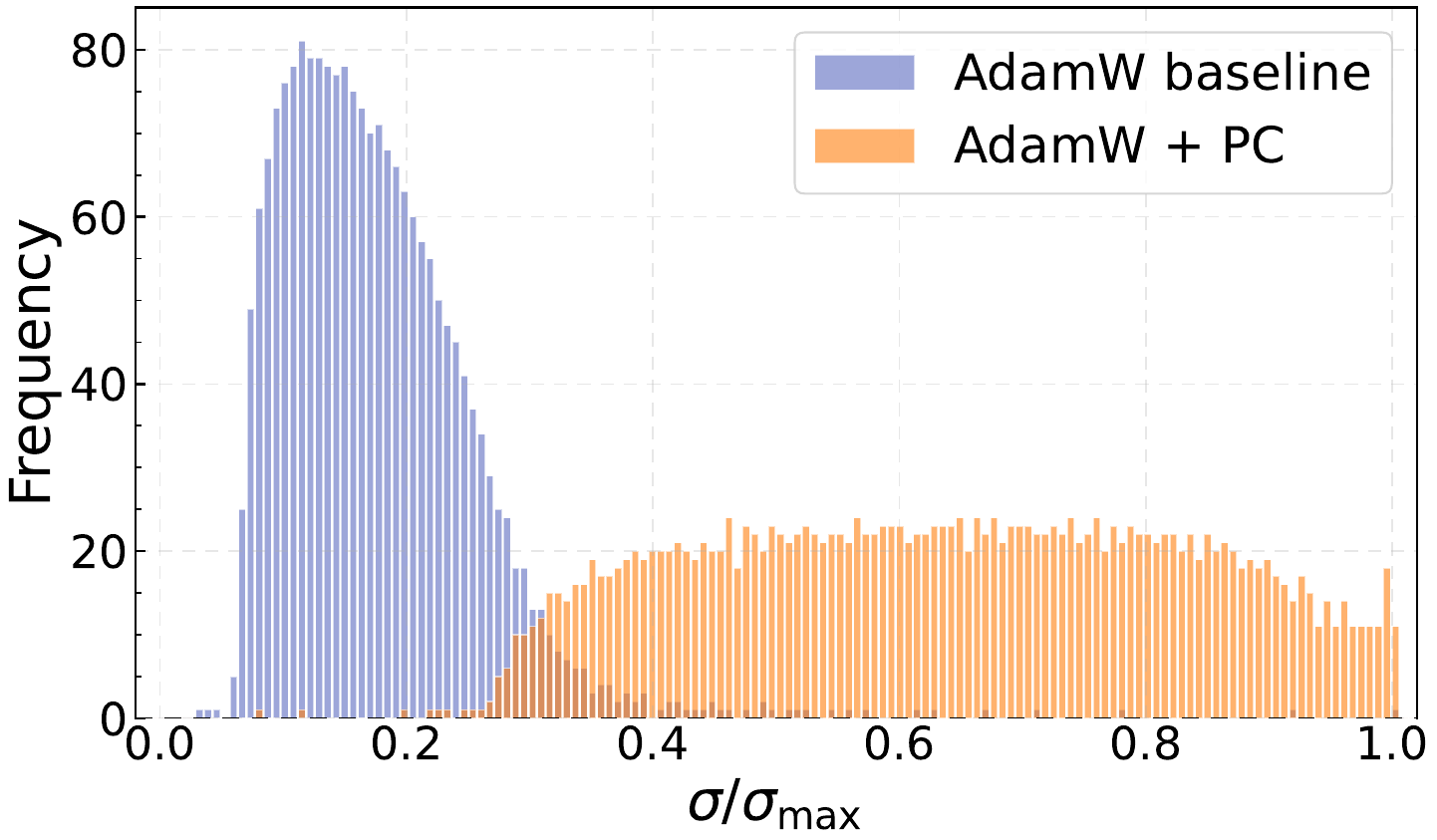}
        \caption{Layer 18: $W_{\rm up}$}
        \label{fig:spec-l18-w3}
    \end{subfigure}\hfill
    \begin{subfigure}[t]{0.24\textwidth}
        \centering
        \includegraphics[width=\linewidth]{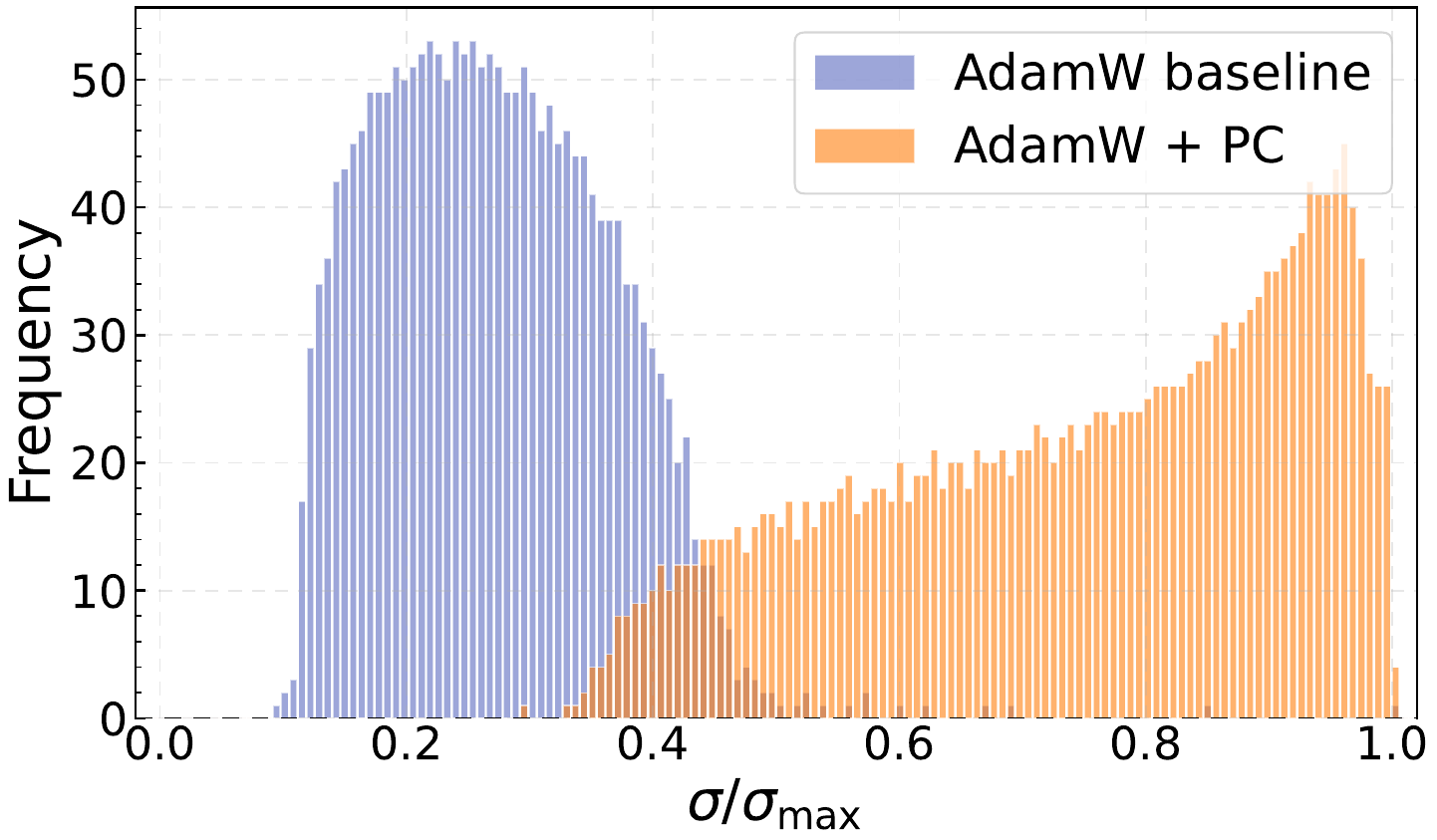}
        \caption{Layer 18: $W_{\rm down}$}
        \label{fig:spec-l18-w2}
    \end{subfigure}

    \caption{\textbf{Singular-value histograms at the final-step checkpoint (AdamW, Llama-1B).}
    We visualize the singular-value spectra for representative layers (2, 10, and 18 out of 18) and \texttt{PC\_blocks} ($W_{\rm O}$, $W_{\rm gate}$, $W_{\rm up}$, $W_{\rm down}$).
    For the baseline, spectra are computed on the original weights $W$ of the baseline-trained model; for PC, spectra are computed on the effective preconditioned matrices $\mathrm{PC}(W)$ from the PC-trained model.
    Within each subplot, singular values are rescaled by the largest singular value of that matrix so that all spectra lie in $[0, 1]$, which makes the spectral \emph{shape} directly comparable across blocks and depths; the $y$-axis reports the bin frequency (raw counts per bin).
    Across depths and blocks, PC moves more singular-value mass away from the lower end of the normalized spectrum and into a more regular middle range.}
    \label{fig:pc-sv-hist-adamw}
\end{figure}

\paragraph{Singular-value spectrum comparison.}
To further investigate how preconditioning reshapes the full singular-value spectrum across individual blocks and depths, we examine singular-value distributions at the final checkpoint. For the transformer baseline, we compute singular values from the original weights $W$; for the PC run, we compute singular values of the effective (preconditioned) weight matrices,
i.e., $\mathrm{PC}(W)$. To probe depth-wise behavior while avoiding boundary effects, we report three representative layers in our 18-layer Llama-1B model: layer 2 (shallow), layer 10 (middle), and layer 18 (deep). For each selected layer, we plot singular-value histograms (with bin frequencies on the $y$-axis) for all \texttt{PC\_blocks}: $W_{\rm O}$, $W_{\rm gate}$, $W_{\rm up}$ and $W_{\rm down}$, where each block's singular values are first rescaled by their per-matrix maximum so that all spectra share the support $[0,1]$, making the spectral shapes directly comparable (see Figure~\ref{fig:pc-sv-hist-adamw}).

Across layers and blocks, the AdamW baseline places substantial histogram mass near the lower end of the normalized singular-value spectrum. PC moves more of this mass into the middle range. The spectra are therefore less concentrated at the lower end and have a narrower relative spread. The dominant visible effect is that PC lifts the lower part of the normalized spectrum and narrows the relative spread, consistent with the intended soft spectrum-shaping behavior of the PC polynomial and with the lower modified condition numbers reported above.

\section{Ablation Studies}
\label{subsec:ab_study}
To better understand the contributions of individual components within the PC module, we conduct ablation studies. Unless otherwise specified, ablations are conducted on the Llama-271M model. The \texttt{pc\_level} and PC-block ablations report results under both AdamW and Muon, while the norm-recovery and learnable-$\gamma$ ablations focus on the AdamW setting. We examine four aspects in order: the polynomial degree (\texttt{pc\_level}), the choice of PC blocks, the effect of norm recovery, and the use of the learnable scalar $\gamma$.

Unless otherwise specified, each ablation varies only the factor under study and keeps all remaining settings at their default values described in Section~\ref{subsec:exp_setting}. The default PC configuration is $\texttt{PC\_blocks}=\{\texttt{ffn}, W_{\rm O}\}$ and spectral-norm normalization with $10$ power-iteration steps, with $\texttt{pc\_level}=4$ for AdamW and $\texttt{pc\_level}=2$ for Muon. The no-PC baseline has a final validation loss of \textbf{2.5944} under AdamW and \textbf{2.4831} under Muon, which serve as the references in this section.

\subsection{PC Level Selection}
Recall that \texttt{pc\_level} is the degree \(k\) of \(p_k\), equivalently the strength knob of the PC map: larger values use a higher-degree, more aggressive spectral shaping (Table~\ref{tab:pc_level}). We evaluate different PC levels by sweeping $\texttt{pc\_level} \in \{1, 2, 3, 4\}$ under the spectral-norm normalizer using $10$ power-iteration steps. Figure~\ref{fig:ab-pclevel-adamw} reports the final validation loss against the AdamW baseline ($2.5944$). $\texttt{pc\_level}=1$ is slightly worse than the baseline, and increasing the degree consistently improves validation performance, with $\texttt{pc\_level}=4$ achieving the lowest loss ($2.5397$). Within the tested range \(k\in\{1,2,3,4\}\), this monotone trend indicates that stronger polynomial preconditioning is more effective at reshaping the weight spectrum under AdamW.

Under Muon (with the same spectral-norm normalizer), this monotone trend reverses: sweeping the same range, $\texttt{pc\_level}=2$ achieves the lowest final validation loss and higher degrees degrade performance (Figure~\ref{fig:ab-pclevel-muon}). Intuitively, Muon already exerts an implicit form of spectral control on the update matrices, so the residual spectral shaping required from PC is smaller and a lower-degree polynomial suffices; pushing $\texttt{pc\_level}$ higher risks over-constraining the weight spectrum. We therefore adopt $\texttt{pc\_level}=2$ as the default under Muon, and discuss the mechanism behind this difference in Appendix~\ref{subapp:supp_disc}.

\begin{figure}[htbp]
  \centering
  \begin{subfigure}[t]{0.48\textwidth}
    \centering
    \includegraphics[width=0.85\linewidth]{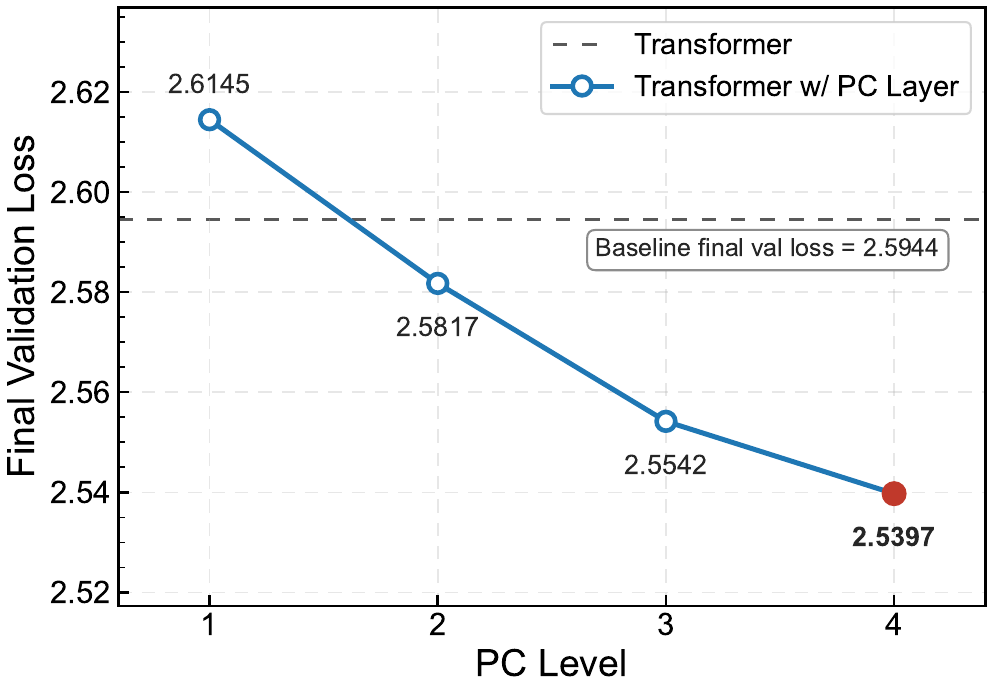}
    \caption{\textbf{AdamW.} $\texttt{pc\_level}=4$ is best.}
    \label{fig:ab-pclevel-adamw}
  \end{subfigure}\hfill
  \begin{subfigure}[t]{0.48\textwidth}
    \centering
    \includegraphics[width=0.85\linewidth]{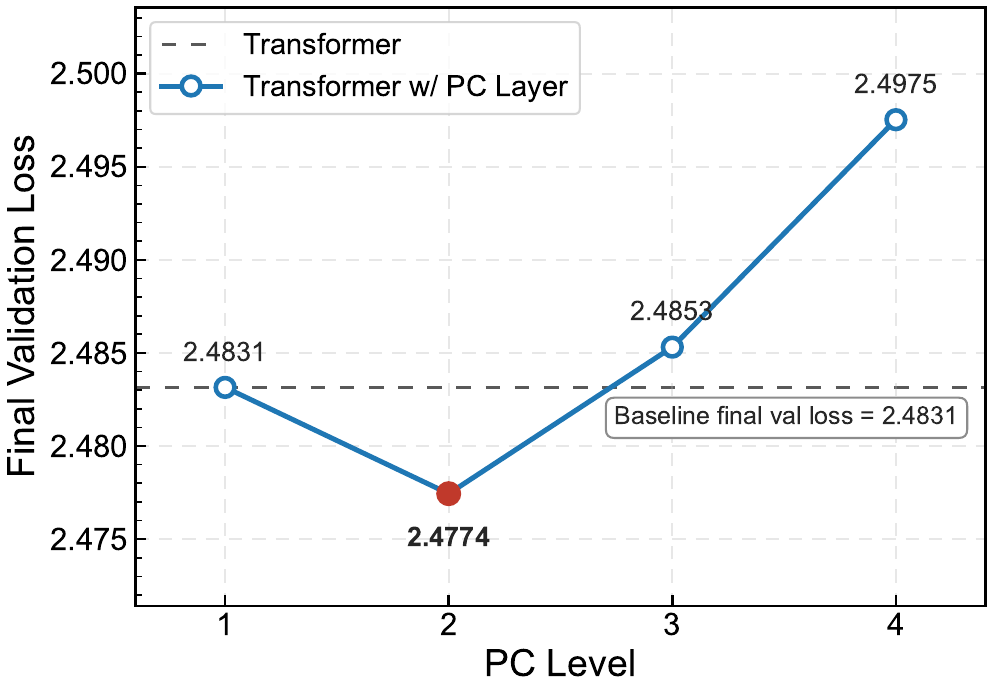}
    \caption{\textbf{Muon.} Loss is non-monotone; $\texttt{pc\_level}=2$ is best.}
    \label{fig:ab-pclevel-muon}
  \end{subfigure}
  \caption{\textbf{\texttt{pc\_level} ablation.}
  Sweeping $\texttt{pc\_level}\in\{1,2,3,4\}$ with $\{\texttt{ffn},\ W_{\rm O}\}$ fixed. (a)~AdamW: larger degrees help. (b)~Muon: non-monotone, $\texttt{pc\_level}=2$ optimal.}
  \label{fig:ab-pclevel}
\end{figure}

\subsection{Choice of PC Blocks}
\label{para:ab_pc_blocks}

Figure~\ref{fig:ab-blocks} studies which transformer weight blocks should be equipped with PC.
We compare configurations obtained by adding different attention-side projections to \texttt{ffn}, with candidates
$\{(W_{\rm Q},W_{\rm K}), W_{\rm O}, W_{\rm V}\}$, where
$(W_{\rm Q},W_{\rm K})$ denotes applying PC jointly to the query and key projections.
We keep \texttt{ffn} enabled in all variants.

Under AdamW, all tested PC variants improve over the no-PC baseline, and our default choice $\texttt{PC\_blocks}=\{\texttt{ffn}, W_{\rm O}\}$ is among the best-performing configurations, reducing the baseline loss by $5.47\times 10^{-2}$.
Although several configurations containing $(W_{\rm Q},W_{\rm K})$ achieve slightly lower validation loss,
their advantage over our default is relatively small, within $6.6 \times 10^{-3}$ in final validation loss, while requiring PC on two
additional attention projections in each transformer block.
Under Muon, the picture is more selective (Figure~\ref{fig:ab-blocks-muon}): several configurations still improve and $\{\texttt{ffn},\ W_{\rm O}\}$ ties for the best-performing block choice, but adding certain attention-side projections can degrade performance relative to the Muon baseline. Since $\{\texttt{ffn},\ W_{\rm O}\}$ is among the best configurations under AdamW and tied for best under Muon, we adopt this simpler and more efficient configuration as the default across optimizers.

\begin{figure}[htbp]
  \centering
  \begin{subfigure}[t]{0.48\textwidth}
    \centering
    \includegraphics[width=\linewidth]{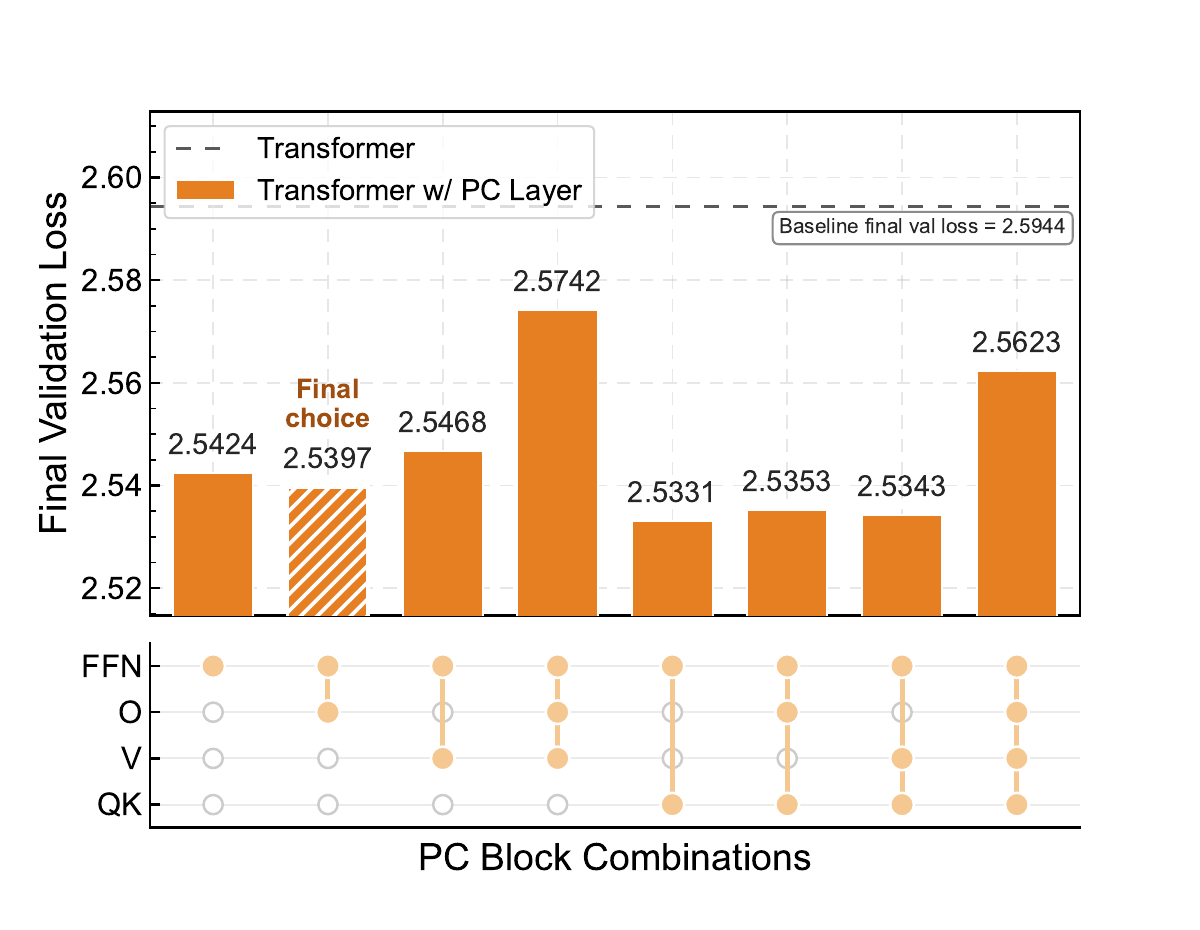}
    \caption{\textbf{AdamW.}}
    \label{fig:ab-blocks-adamw}
  \end{subfigure}\hfill
  \begin{subfigure}[t]{0.48\textwidth}
    \centering
    \includegraphics[width=\linewidth]{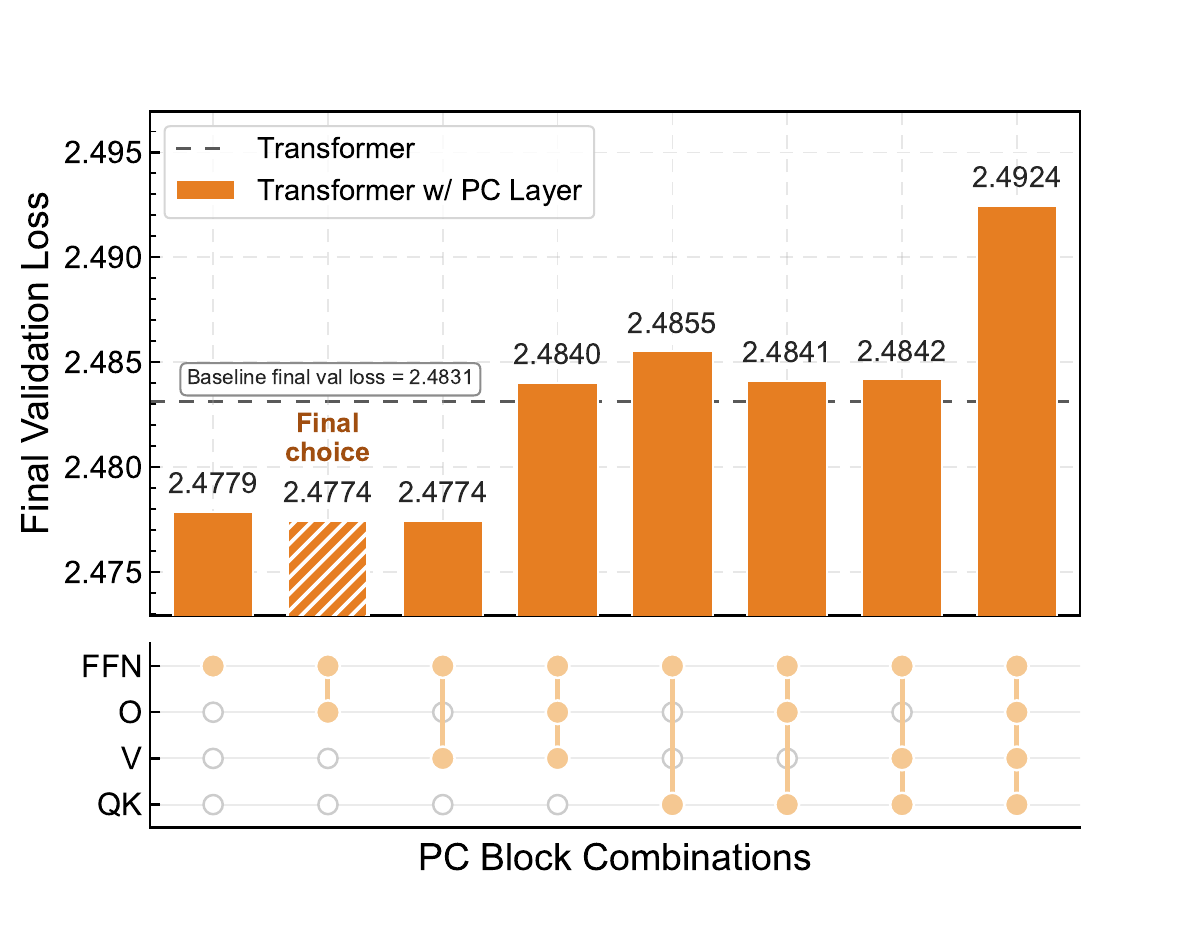}
    \caption{\textbf{Muon.}}
    \label{fig:ab-blocks-muon}
  \end{subfigure}
  \caption{\textbf{PC block selection ablation.}
  All variants include \texttt{ffn}; the hatched bar (``Final choice'') marks our default $\{\texttt{ffn},\ W_{\rm O}\}$, which is among the best under both (a)~AdamW and (b)~Muon.}
  \label{fig:ab-blocks}
\end{figure}

\subsection{Effect of Weight Norm Recovery After Preconditioning}
\label{subsec:ab_study-rcv}
Recall from Algorithm~\ref{alg:pc} that PC first applies the polynomial to a spectrally normalized weight matrix. After this spectrum-shaping step, the PC module can further rescale the result in two ways: by recovering the spectral scale removed during normalization, and by applying a learnable scalar $\gamma$. We ablate these two components jointly, giving four variants with or without norm recovery and with or without $\gamma$.

Table~\ref{tab:ab-rescale} reports the final-loss difference of these variants relative to the baseline, and Figure~\ref{fig:ab-rescale} shows the corresponding validation-loss curves.

The dominant factor is norm recovery. Without norm recovery, both PC variants perform worse than the transformer baseline, regardless of whether $\gamma$ is used. With norm recovery, the same polynomial spectrum shaping consistently improves the final validation loss. Thus, applying the polynomial to the normalized weight alone is not sufficient; restoring the spectral scale after shaping is essential for turning PC into a consistent gain. The learnable scalar $\gamma$ has a smaller effect on the final validation loss in this ablation, and its main role in stabilizing training dynamics is examined in the next subsection.

\begin{figure*}[htbp]
  \centering

  \begin{minipage}[b]{0.48\textwidth}
    \centering
    \renewcommand{\arraystretch}{1.3}
    \setlength{\tabcolsep}{10pt}
    \begin{tabular}{lcc}
      \toprule
      \textbf{Setting} & \textbf{w/o $\gamma$} & \textbf{w/ $\gamma$} \\
      \midrule
      w/o norm recovery & $+0.0447$  \textcolor{red}{$\uparrow$} & $+0.0430$  \textcolor{red}{$\uparrow$} \\
      w/ norm recovery & $-0.0480$  \textcolor{blue}{$\downarrow$} & \bm{$-0.0547$}  \textcolor{blue}{$\downarrow$} \\
      \bottomrule
    \end{tabular}

    \vspace{4.5em}

    \captionof{table}{\textbf{Ablation on norm recovery and learnable $\gamma$.} Loss difference denotes final validation loss minus the baseline final validation loss ($2.5944$). \textcolor{blue}{$\downarrow$} indicates an improvement over the baseline, and \textcolor{red}{$\uparrow$} indicates worse performance than the baseline. The best result is \textbf{bolded}.}
    \label{tab:ab-rescale}
  \end{minipage}
  \hfill
  \begin{minipage}[b]{0.48\textwidth}
    \centering
    \includegraphics[width=\linewidth]{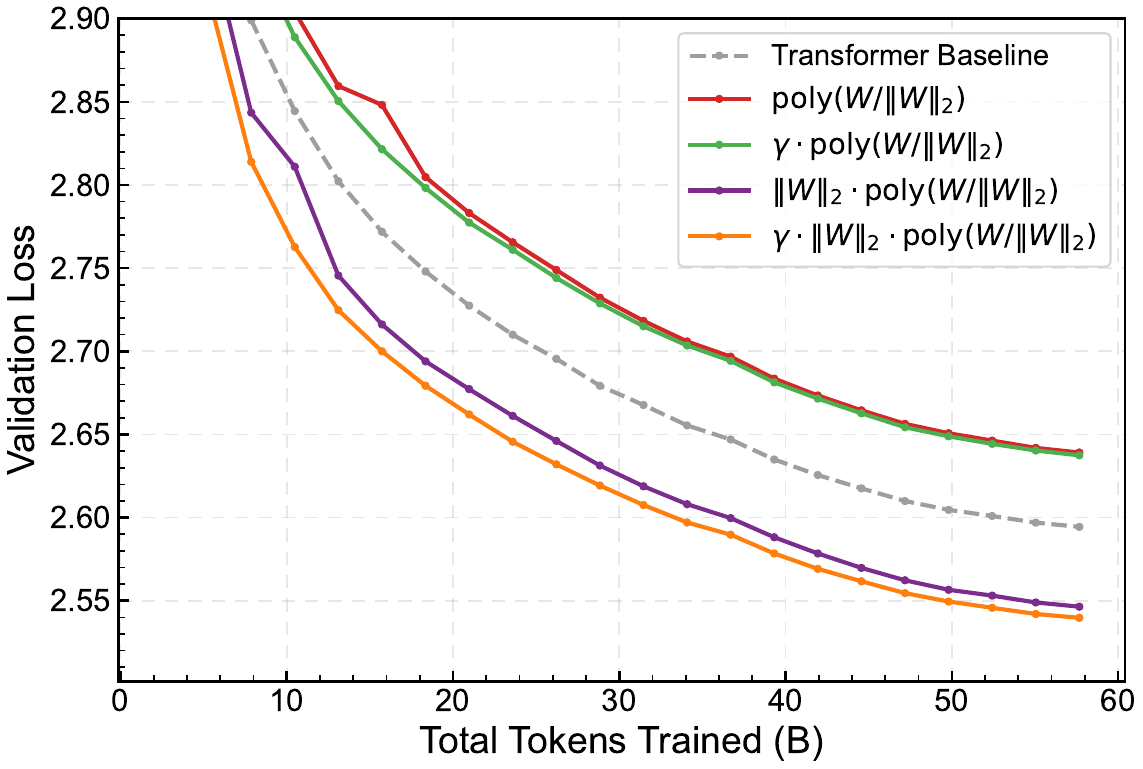}
    \captionof{figure}{\textbf{Validation-loss curves for the norm-recovery ablation.} The curves correspond to the four variants in Table~\ref{tab:ab-rescale}. Norm recovery consistently moves PC from worse-than-baseline performance to clear improvement.}
    \label{fig:ab-rescale}
  \end{minipage}

\end{figure*}

\subsection{The Role of Learnable $\gamma$ in the PC Module}
\label{subsec:ab_study-gamma}

We next isolate the role of the learnable scalar $\gamma$. Switching $\gamma$ on or off changes the final validation loss only mildly compared with the change caused by norm recovery. Instead, its main effect appears in the training dynamics. Here we track the activation root mean square (RMS), computed on the outputs of the full Attention and FFN submodules, after the output projections and before residual addition. As shown by the RMS curves in Figure~\ref{fig:ab-gamma-stable-sp}, $\gamma$ stabilizes signal propagation.

At the 1B scale, as shown in Figure~\ref{fig:ab-gamma-stable-sp}, the run without $\gamma$ exhibits sizable spikes in attention RMS and FFN RMS during training, whereas the $\gamma$-enabled run remains much smoother. We therefore adopt $\gamma$ to stabilize training signal propagation.

\begin{figure*}[htbp]
  \centering
  \begin{subfigure}[t]{0.49\textwidth}
    \centering
    \includegraphics[width=\linewidth]{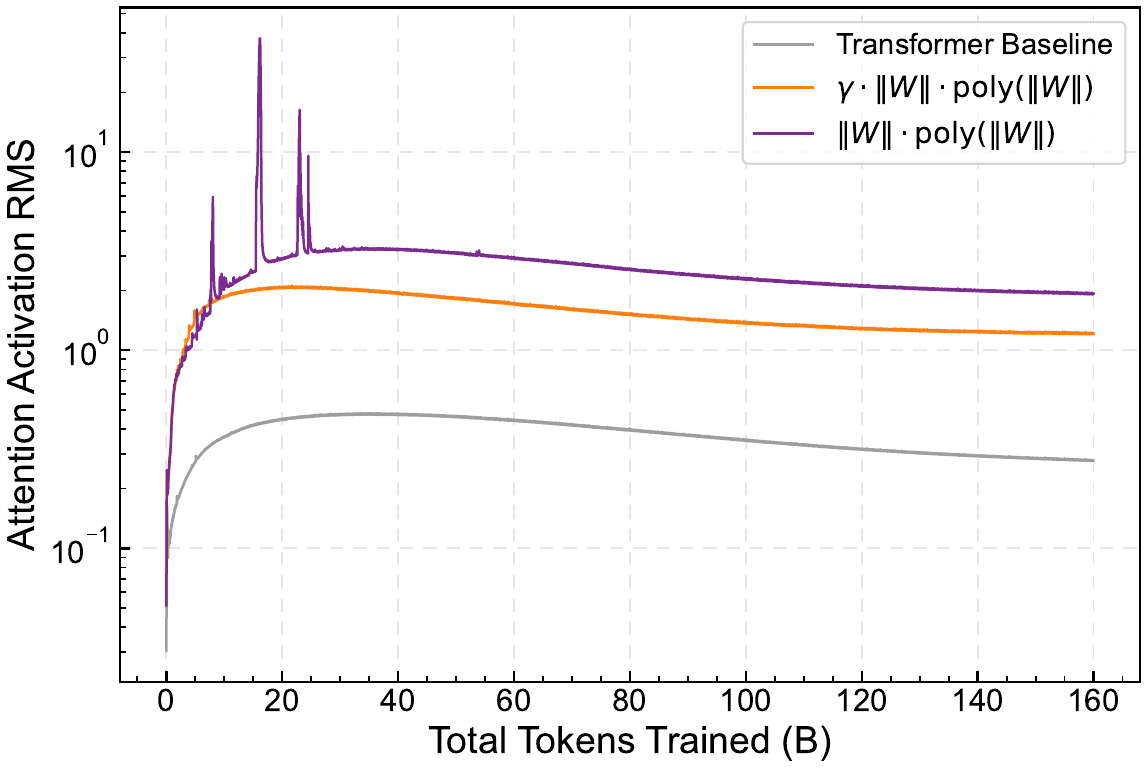}
    \caption{\textbf{Cross-layer average attention activation RMS.}}
    \label{fig:ab-gamma-stable-attnrms}
  \end{subfigure}
  \hfill
  \begin{subfigure}[t]{0.49\textwidth}
    \centering
    \includegraphics[width=\linewidth]{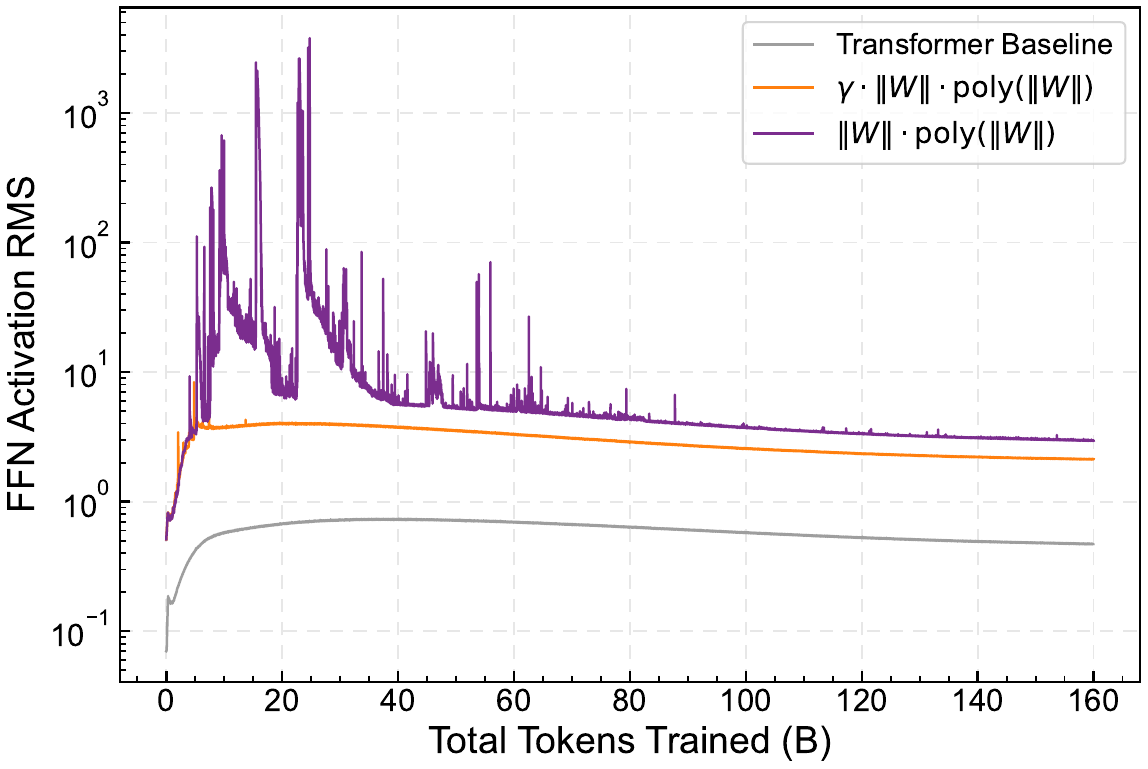}
    \caption{\textbf{Cross-layer average FFN activation RMS.}}
    \label{fig:ab-gamma-stable-ffnrms}
  \end{subfigure}
  \caption{\textbf{Learnable $\gamma$ stabilizes activation RMS at the 1B scale.} The run without $\gamma$ shows many spikes in attention RMS and FFN RMS, while the run with $\gamma$ remains stable.}
  \label{fig:ab-gamma-stable-sp}
\end{figure*}

\section{Theory: Why Controlling the Spectrum?}
\label{sec:theory-spectrum}

We provide a theoretical justification for the spectrum-control principle underlying PC layers by analyzing \textit{deep linear networks}.
Specifically, if all weight matrices maintain uniformly bounded condition numbers during training, then the Jacobian is well-conditioned, leading to geometric convergence of gradient descent.

\paragraph{Settings of theoretical analysis.}
We consider an $L$-layer linear network that maps $x\in\mathbb{R}^{d_x}$ to
\[
F(\theta;x) \;=\; W_L W_{L-1}\cdots W_1 x \in \mathbb{R}^{d_y},
\]
where $\theta=(W_1,\dots,W_L)$, and $W_l\in\mathbb{R}^{d_l\times d_{l-1}}$ for $l \in\{1,\dots,L\}$,
with $d_0=d_x$ and $d_L=d_y$.
We assume a pyramidal architecture as follows.
\begin{asmp}[Pyramidal architecture]
\label{asmp:pyramid}
There exists some $r \in \{1, \dots, L\}$, such that
\[
d_0 \le d_1 \le \cdots \le d_r
\qquad\text{and}\qquad
d_r \ge d_{r+1} \ge \cdots \ge d_L .
\]
\end{asmp}
Suppose that we have $n$ training samples. We denote $X=[x_1,\dots,x_n]\in\mathbb{R}^{d_x\times n}$ as the input, $F(\theta;X)=(F(\theta;x_1);\dots;F(\theta;x_n))\in\mathbb{R}^{n d_y}$ as the stacked model predictions, and $y=(y_1;\dots;y_n)\in\mathbb{R}^{n d_y}$ as the stacked labels.
We further impose the following standard assumption on the input data matrix.
\begin{asmp}[Non-degenerate input data]
\label{asmp:full-col-rank}
The input matrix $X\in\mathbb{R}^{d_x\times n}$ has full column rank, i.e., $\sigma_{\min}(X) > 0$.
\end{asmp}
Assumption~\ref{asmp:full-col-rank} implicitly places us in the over-parameterized regime, i.e., $n \le d_x$.
See Remark \ref{rem:linear-nn} in Appendix~\ref{app:theory} for further discussion of the linear-network setting; see Remark \ref{rem:pyramidal} in Appendix~\ref{app:theory} for the roles of Assumption~\ref{asmp:pyramid} and \ref{asmp:full-col-rank} in our theoretical analysis.

We consider the squared loss of the training data:
\[
\mathcal{L}(\theta) = \frac12 \sum_{i=1}^n \lVert F(\theta; x_i) - y_i \rVert^2 = \frac{1}{2} \lVert F(\theta;X)-y \rVert^2.
\]
We analyze gradient descent applied to the loss \(\mathcal{L}(\theta)\).
Given an initial parameter \(\theta(0)\), the iterates are updated as
\[
\theta(t+1) = \theta(t) - \eta \, \nabla \mathcal{L}(\theta(t)),
\]
where \(\eta > 0\) denotes the learning rate.

For given $\{(\tau_l,\mu_l)\}_{l=1}^L$ with $\tau_l \ge 1 \ge \mu_l>0$, we define the region
\[
\mathcal{R}
:=\Bigl\{\theta=(W_1,\dots,W_L)\; \Big|\;
\sigma_{\max}(W_l)\le \tau_l,\;\; \sigma_{\min}(W_l)\ge \mu_l,\;\; \forall l\in \{ 1,2,\dots,L\}\Bigr\}.
\]
Within $\mathcal{R}$, each layer weight matrix is well-conditioned: its singular values are bounded
away from both $0$ and $\infty$. The next theorem shows that, as long as the training trajectory stays inside
$\mathcal{R}$, gradient descent achieves a geometric decrease.

\begin{thm}[Geometric convergence within $\mathcal{R}$]
\label{thm:geom-cvrg}
Suppose the iterates satisfy $\theta(t)\in\mathcal{R}$ for all $t\in\{0,1,\dots,T\}$.
Define
\[
\beta 
:= \left( \prod_{l=1}^L \tau_l \right)^2 \left( \sqrt{2 \mathcal{L}(\theta(0))} + \| X \|_F  \right) L \sigma_{\max}(X),
\qquad 
\mu 
:= \left( \prod_{l=1}^L \mu_l \right)^2 \sigma_{\min}(X)^2 .
\]
Then for any learning rate $\eta \in \bigl(0, 1/\beta\bigr]$, it holds that
\[
\mathcal{L}(\theta(t+1)) \le \bigl(1-\eta\mu\bigr)\, \mathcal{L}(\theta(t)),
\qquad \forall\, t\in\{0,1,\dots,T\}.
\]
In particular, choosing $\eta=1/\beta$ yields the contraction factor $1-\mu/\beta$:
\[
\mathcal{L}(\theta(t+1)) \le \Bigl(1-\frac{\mu}{\beta}\Bigr)\, \mathcal{L}(\theta(t)),
\qquad \forall\, t\in\{0,1,\dots,T\}.
\]
\end{thm}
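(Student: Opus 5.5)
The plan is to combine a Polyak--{\L}ojasiewicz (PL) type lower bound on the gradient, valid at every point of $\mathcal{R}$, with a descent-lemma (smoothness) upper bound along each GD step. Together these give $\mathcal{L}(\theta(t+1))\le \mathcal{L}(\theta(t))-\frac{\eta}{2}\|\nabla\mathcal{L}(\theta(t))\|^2\le(1-\eta\mu)\mathcal{L}(\theta(t))$. Throughout, write $R(\theta)=W_L\cdots W_1X-Y\in\mathbb{R}^{d_y\times n}$ for the residual matrix, so that $\mathcal{L}=\frac12\|R\|_F^2$ and $\nabla_{W_l}\mathcal{L}=A_l^\top R\,X^\top B_l^\top$, where $A_l=W_L\cdots W_{l+1}$ and $B_l=W_{l-1}\cdots W_1$.

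\textbf{Step 1 (PL inequality).} I will use only the gradient with respect to the widest layer $W_r$ from Assumption~\ref{asmp:pyramid}. Because $d_r\ge d_{r+1}\ge\cdots\ge d_L$, every factor of $A_r$ is wide. Since $\sigma_{\min}(W_l)\ge\mu_l$, the transpose $A_r^\top$ is injective with $\sigma_{\min}(A_r^\top)\ge\prod_{l>r}\mu_l$. Symmetrically, $d_0\le\cdots\le d_{r-1}$ makes $B_r$ a product of tall full-column-rank matrices. Combined with Assumption~\ref{asmp:full-col-rank}, this gives $\sigma_{\min}(B_rX)\ge\prod_{l<r}\mu_l\,\sigma_{\min}(X)$. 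Here $\sigma_{\min}$ means the smallest of the $\min(\text{rows},\text{cols})$ singular values, and the lower bound for products of same-orientation full-rank factors is multiplicative. Hence
\[
\|\nabla\mathcal{L}(\theta)\|^2\ge\|\nabla_{W_r}\mathcal{L}\|_F^2\ge\Bigl(\prod_{l\ne r}\mu_l\Bigr)^2\sigma_{\min}(X)^2\|R\|_F^2\ge 2\mu\,\mathcal{L}(\theta),
\]
where the last inequality uses $\mu_r\le1$.

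\textbf{Step 2 (smoothness on the segment).} I will show $\|\nabla\mathcal{L}(\theta)-\nabla\mathcal{L}(\theta')\|\le\beta\|\theta-\theta'\|$ for $\theta,\theta'$ on the segment $[\theta(t),\theta(t+1)]$. Since $\theta(t),\theta(t+1)\in\mathcal{R}$ and the constraint $\sigma_{\max}(W_l)\le\tau_l$ is convex, every point of the segment satisfies the upper spectral bounds. The lower bounds are not needed in this step. Write $\nabla\mathcal{L}=J^\top(F-y)$ and split the difference as $J(\theta)^\top(F(\theta)-F(\theta'))+(J(\theta)-J(\theta'))^\top(F(\theta')-y)$. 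Then:
\begin{itemize}
\item Each block of $J$ has norm at most $\prod_{l\ne k}\tau_l\,\sigma_{\max}(X)$, and $F$ is Lipschitz with a comparable constant.
\item $J$ is Lipschitz, with constant of order $L\prod\tau_l^2\|X\|$ after using $\tau_l\ge1$ to absorb missing factors.
\item The residual is bounded by $\|F(\theta')-y\|\le\|F(\theta(t))-y\|+\|F(\theta')-F(\theta(t))\|$. For the first term I will use the induction hypothesis $\mathcal{L}(\theta(t))\le\mathcal{L}(\theta(0))$, which gives $\|F(\theta(t))-y\|\le\sqrt{2\mathcal{L}(\theta(0))}$. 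The deviation term, or more crudely $\|F\|\le\prod\tau_l\|X\|_F$, is where the $\|X\|_F$ term in $\beta$ comes from.
\end{itemize}
Collecting these yields a constant dominated by $\beta$.

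\textbf{Step 3 (conclusion and main obstacle).} With $\eta\le1/\beta$, the descent lemma gives $\mathcal{L}(\theta(t+1))\le\mathcal{L}(\theta(t))-\eta(1-\beta\eta/2)\|\nabla\mathcal{L}\|^2\le\mathcal{L}(\theta(t))-\frac{\eta}{2}\|\nabla\mathcal{L}\|^2$. Step 1 then gives the factor $1-\eta\mu$, and induction on $t$ keeps $\mathcal{L}(\theta(t))\le\mathcal{L}(\theta(0))$, which is what Step 2 required. The main obstacle is Step 2: obtaining a Lipschitz constant that is exactly of the stated form $\beta$, with $L$, the square of the product of the $\tau_l$, and $\sqrt{2\mathcal{L}(\theta(0))}+\|X\|_F$. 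I would first try the crude route of bounding the Hessian quadratic form $v^\top\nabla^2\mathcal{L}\,v=\|Jv\|^2+\langle F-y,\nabla^2F[v,v]\rangle$ directly. The second-order term is a sum over pairs of layers, which I would bound by $L\prod\tau_l^2\|v\|^2\sigma_{\max}(X)$ times the residual. I would then reconcile the remaining constants by using $\tau_l\ge1$ and $\|X\|\le\|X\|_F$.
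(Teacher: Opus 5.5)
Your Steps 1 and 3 coincide with the paper's argument. The PL bound through the widest layer $W_r$ is the paper's NTK lower-bound lemma: it lower-bounds $\sigma_{\min}$ of the Kronecker block $G_r=(W_{r-1:1}X)\otimes(W_{L:r+1})^\top$, which is your sandwich estimate for $\nabla_{W_r}\mathcal{L}$. The integral-form descent lemma, the convexity of the spectral-norm ball along the segment, and the induction $\mathcal{L}(\theta(t))\le\mathcal{L}(\theta(0))$ are also the paper's. The genuine gap is in Step 2, which is exactly where the constant $\beta$, and hence the admissible range $\eta\le 1/\beta$, is decided.

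You aim for Lipschitz continuity of $\nabla\mathcal{L}$ between \emph{arbitrary} pairs on the segment. In your split this puts the residual $F(\theta')-y$ at an intermediate point, where the induction hypothesis gives no control. Your remedy also misattributes the $\|X\|_F$ term of $\beta$. In your own split, the term $\|J(\theta)\|\,\mathrm{Lip}(F)\le \sqrt{L}\,\sigma_{\max}(X)\prod_l\tau_l\cdot\sqrt{L}\,\|X\|_F\prod_l\tau_l$ already uses up the entire $\|X\|_F$ part of $\beta$. Whatever the deviation $\|F(\theta')-F(\theta(t))\|$ contributes therefore comes on top. With your crude bound $\|F\|\le\prod_l\tau_l\|X\|_F$, the deviation can be as large as $2\prod_l\tau_l\|X\|_F$, and you end up with roughly
\[
L\,\sigma_{\max}(X)\Bigl(\prod_{l}\tau_l\Bigr)^2\Bigl(\sqrt{2\mathcal{L}(\theta(0))}+3\|X\|_F\Bigr).
\]
This is not dominated by $\beta$ in general; for example, when $\prod_l\tau_l=1$ there is no slack to absorb it. With $\eta=1/\beta$ and this larger constant $\beta'>\beta$, your argument no longer yields $\eta-\beta'\eta^2/2\ge\eta/2$. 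Your Hessian fallback has the same defect, because the term $\langle F-y,\nabla^2F[v,v]\rangle$ must be bounded at every point $\theta(t)+s\delta_t$, not only at $\theta(t)$.

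The missing observation is that the integral-form descent lemma only needs gradient differences anchored at the iterate. With $\delta_t=\theta(t+1)-\theta(t)$, it suffices to show $\|\nabla\mathcal{L}(\theta(t)+s\delta_t)-\nabla\mathcal{L}(\theta(t))\|\le\beta s\|\delta_t\|$. Apply your own split with $\theta'=\theta(t)$ and $\theta=\theta(t)+s\delta_t$. The residual then sits at $\theta(t)$, where induction gives $\|F(\theta(t))-y\|\le\sqrt{2\mathcal{L}(\theta(0))}$, and no deviation term appears. Take $\mathrm{Lip}(J)\le L\,\sigma_{\max}(X)\prod_l\tau_l$; note it must carry $\sigma_{\max}(X)$, not $\|X\|_F$. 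The two pieces then sum to $L\sigma_{\max}(X)\bigl(\prod_l\tau_l\sqrt{2\mathcal{L}(\theta(0))}+(\prod_l\tau_l)^2\|X\|_F\bigr)\le\beta$, using $\prod_l\tau_l\ge 1$. This is the paper's local-smoothness lemma, whose constant $\beta(\theta)$ involves the loss only at the anchor point $\theta(k)$.
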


The geometric decrease in Theorem~\ref{thm:geom-cvrg} is closely related to \textbf{global convergence}. 
If the entire training trajectory satisfies $\theta(t)\in\mathcal{R}$ for all $t\ge 0$, then iterating the one-step contraction yields
\[
\mathcal{L}(\theta(t)) \le (1-\eta\mu)^t\,\mathcal{L}(\theta(0)) \xrightarrow[t\to\infty]{} 0.
\]

\paragraph{Proof sketch.}
The argument follows a direct implication chain:
(i) For $\theta\in\mathcal{R}$, the weight matrices stay well-conditioned (each layer satisfies
$\sigma_{\min}(W_l)\ge \mu_l$ and $\sigma_{\max}(W_l)\le \tau_l$), which implies that the \textit{Jacobian
matrix \(G(\theta)\)} is well-conditioned, with its spectrum bounded as $\mu I \preceq G(\theta)^\top G(\theta) \preceq\beta I$;
(ii) The above spectral bounds imply a Polyak–Łojasiewicz (PL) inequality and (local) smoothness of the loss function; (iii) under the PL inequality and smoothness condition, a standard convergence proof for gradient descent yields the geometric decrease of the squared loss.
See Appendix~\ref{app:theory} for the full proof and a detailed discussion.

\paragraph{Dependence on global conditioning.}
Theorem~\ref{thm:geom-cvrg} reveals a direct relationship between the weight conditioning within $\mathcal{R}$ and the convergence rate $1 - \mu/\beta$.
Let $\kappa(W_l):=\sigma_{\max}(W_l)/\sigma_{\min}(W_l)$ denote the layer-wise condition number, and define the global condition-number bound over $\mathcal{R}$ by
\[
\kappa_{\mathcal{R}}
:=\Bigl(\prod_{l=1}^L \frac{\tau_l}{\mu_l}\Bigr)^{1/L}.
\]
Then $\beta/\mu = C\,\kappa_{\mathcal{R}}^{2L}$, where $C$ depends only on the data $(X,y)$ and the initialization $\theta(0)$, but not on the layer-wise condition-number bounds.
Hence better weight conditioning (smaller $\kappa_{\mathcal{R}}$) directly translates to a faster geometric convergence rate.

\begin{cor}[Iteration complexity]
\label{cor:iter}
Under the assumptions of Theorem~\ref{thm:geom-cvrg}, take $\eta=1/\beta$. Then for all $t\in\{0,1,\dots,T\}$,
\[
\mathcal{L}(\theta(t)) \le \Bigl(1-\frac{\mu}{\beta}\Bigr)^t \mathcal{L}(\theta(0)).
\]
Consequently, reaching $\mathcal{L}(\theta(T)) \le \epsilon$ is guaranteed after
\[
T = O\!\left( \kappa_{\mathcal{R}}^{2L} \log\!\left(\frac{\mathcal{L}(\theta(0))}{\epsilon}\right) \right)
\]
iterations within the region $\mathcal{R}$.
\end{cor}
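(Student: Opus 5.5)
The corollary follows from Theorem~\ref{thm:geom-cvrg} by iterating the one-step contraction and then converting the contraction factor into an iteration count through the identity $\beta/\mu = C\,\kappa_{\mathcal{R}}^{2L}$.

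\textbf{Unrolling the contraction.} Take $\eta = 1/\beta$. Theorem~\ref{thm:geom-cvrg} gives $\mathcal{L}(\theta(s+1)) \le (1-\mu/\beta)\,\mathcal{L}(\theta(s))$ for every $s \in \{0,\dots,T\}$. A simple induction on $t$ then yields $\mathcal{L}(\theta(t)) \le (1-\mu/\beta)^t\,\mathcal{L}(\theta(0))$ for all $t \le T$; the case $t=0$ is trivial. We need $0 \le 1-\mu/\beta < 1$ so that powers are monotone and well defined. Positivity of $\mu/\beta$ is immediate. For $\mu \le \beta$, use $\mu_l \le 1 \le \tau_l$ and $\sigma_{\min}(X)^2 \le \sigma_{\max}(X)\|X\|_F$ to get
\[
\mu \le \sigma_{\min}(X)^2 \le L\,\sigma_{\max}(X)\,\|X\|_F \le \beta .
\]
Alternatively, this follows from the fact that the contraction bounds a nonnegative quantity.

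\textbf{Converting to an iteration count.} Use $(1-x)^t \le e^{-xt}$. This gives $\mathcal{L}(\theta(T)) \le \epsilon$ as soon as
\[
T \ge \frac{\beta}{\mu}\,\log\!\left(\frac{\mathcal{L}(\theta(0))}{\epsilon}\right).
\]

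\textbf{Identifying $\beta/\mu$.} It remains to check the identity stated after the theorem. Expanding the definitions,
\[
\frac{\beta}{\mu} = \left(\prod_{l=1}^L \frac{\tau_l}{\mu_l}\right)^{2} \cdot \frac{\bigl(\sqrt{2\mathcal{L}(\theta(0))} + \|X\|_F\bigr) L\,\sigma_{\max}(X)}{\sigma_{\min}(X)^2} .
\]
By definition of $\kappa_{\mathcal{R}}$, the first factor equals $\kappa_{\mathcal{R}}^{2L}$. The second factor, $C$, depends only on $X$, $L$ and $\mathcal{L}(\theta(0))$, and not on the bounds $\tau_l,\mu_l$. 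Treating $C$ as a constant, it is absorbed into the $O(\cdot)$, which gives the stated $T = O\bigl(\kappa_{\mathcal{R}}^{2L}\log(\mathcal{L}(\theta(0))/\epsilon)\bigr)$.

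\textbf{Where care is needed.} No step is a real obstacle. The one subtlety is bookkeeping: the guarantee only holds while the trajectory stays in $\mathcal{R}$. The bound must therefore be stated for the horizon $T$ given in the hypothesis, and the chosen $T$ must fall within that horizon.
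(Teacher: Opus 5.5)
Your proposal is correct and follows essentially the same route as the paper. Like the paper, you unroll the one-step contraction of Theorem~\ref{thm:geom-cvrg} by induction, apply $1-x\le e^{-x}$ to get $T\ge(\beta/\mu)\log(\mathcal{L}(\theta(0))/\epsilon)$, and identify $\beta/\mu=C\,\kappa_{\mathcal{R}}^{2L}$ by expanding the definitions. Your explicit check that $\mu\le\beta$ (via $\mu_l\le 1\le\tau_l$ and $\sigma_{\min}(X)^2\le\sigma_{\max}(X)\|X\|_F$) fills a step that the paper only asserts when it writes $\mu/\beta\in(0,1]$.
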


Corollary~\ref{cor:iter} implies that smaller layer-wise upper bounds $\tau_\ell/\mu_\ell$ lead to a smaller global conditioning bound $\kappa_{\mathcal{R}}$, and hence a smaller upper bound on the iteration complexity $T$ required to reach a given error level (see Appendix~\ref{app:cor:iter-proof} for the proof).
This result provides theoretical support for the design principle underlying PC layers: 
\textbf{better weight conditioning during training leads to faster convergence}.
The preceding sections have instantiated this principle via polynomial weight preconditioning.

\section{Related Work}
\label{sec:related}

\paragraph{Weight control during NN training.}

Several prior works improve optimization by controlling the weight parameterization or statistics during training.
One line of work focuses on channel-level weight normalizations to improve signal propagation and training stability.
For example, weight normalization (WN) \citep{salimans2016weight} and related schemes \citep{karras2024analyzing,loshchilov2024ngpt,franke2025learning,fu2025nemotron} reparameterize weights along rows, columns, or filters to control the norms. Relevant variants include but are not limited to centered WN \citep{huang2017centered} and weight standardization \citep{qiao2019micro,kolesnikov2020big}, which further control mean and variance along weight channels. 
Along a conceptually related line, \citep{xie2026mhcmanifoldconstrainedhyperconnections} proposes \textit{Manifold-Constrained Hyper-Connections} (mHC) ,
which projects the residual mixing matrices onto the Birkhoff polytope (i.e., the manifold of non-negative matrices with each row and each column summing to one) to stabilize signal propagation.

Another line of work controls matrix-level properties of weight matrices.
For example, some methods project weights into or onto Frobenius-norm
hyperballs/spheres \citep{wen2025hyperball,ren2026rethinking}, while others
constrain the spectral norm of weight matrices
\citep{yoshida2017spectral,miyato2018spectral,xie2026controlled,newhouse2025training,jiang2026enhancing}.
In contrast, our PC layer targets spectrum conditioning: it regulates the relative structure of the singular-value spectrum, rather than upper-bounding or shrinking the maximum singular value.

\paragraph{Weight spectrum conditioning during NN training.}

There are many methods that target spectrum conditioning during training. Some of them are not ``built-in layer'' solutions: some require explicit or repeated SVD/eigen-decomposition computations \citep{jia2017improving, huang2018orthogonal,jiang2019computation,li2019orthogonal}, while others incorporate constraints via optimizer co-designs \citep{cisse2017parseval, bansal2018can, bernstein2025manifolds}. In contrast, our PC layer is designed as a plug-and-play module that can be inserted into existing architectures without modifying the optimizer or the overall training pipeline.

A few recent works have also explored built-in conditioning mechanisms that operate through weight matrices. \citet{saratchandran2024weight} preconditions weights via diagonal row/column equilibration and \citet{saratchandran2026spectral, saratchandran2026conditioned} modifies the query, key, and value projection weights with additive correction terms or conditioned initialization to improve attention-Jacobian conditioning. These methods provide complementary evidence for the usefulness of weight conditioning, with empirical studies largely in CNNs, vision transformers and BERT-style masked-language-modeling settings. In contrast, PC targets decoder-only LLM pre-training and performs (low-degree) polynomial weight preconditioning. This polynomial formulation provides a flexible mechanism for encouraging the desired reshaping of the effective weight spectrum during training.

Some recent works
maintain the singular-value spectrum of weight matrices during training via
orthogonal-equivalence transformations
\citep{qiu2026reparameterized, qiu2026poet, shi2026pion}; in contrast, our PC
layer actively reshapes the spectrum through polynomial
preconditioning.

From a conceptual standpoint, our PC layer is related to the principle behind orthogonal weight initialization \citep{xiao2018dynamical, hu2020provable}. 
Orthogonal initialization is motivated by the dynamical isometry theory \citep{saxe2013exact,pennington2017resurrecting, pennington2018emergence}, which suggests that (approximately) orthogonal transformations help maintain a well-conditioned input–output Jacobian, thereby improving training dynamics. Subsequent works have extended the orthogonality principle to the entire training process \citep{jia2017improving,cisse2017parseval,bansal2018can,huang2018orthogonal,li2019orthogonal}.
In contrast, PC layer performs soft spectrum conditioning rather than collapsing all singular values to one, thereby preserving controlled spectral variation and model expressivity.

\paragraph{Convergence of neural network optimization.}
The output Jacobian (i.e., the Jacobian of the mapping from network parameters to the output vector) serves as a foundational mathematical object in the theoretical analysis of neural network training \citep{jacot2018neural,lee2019wide,du2019gradient,arora2019exact},  generalization \citep{jacot2018neural,chen2020generalized,simon2021neural} and architecture-dependent inductive biases \citep{bietti2019inductive,yang2021tensor}.
Many global convergence analyzes
\citep{jacot2018neural,lee2019wide,du2019gradient,arora2019exact} utilized the following simple fact:
gradient descent converges to a global minimum if the Jacobian remains non-degenerate along the iterates.
Nevertheless, to rigorously guarantee this condition is nontrivial.
A common route to ensure this condition is to assume an ultra-wide (overparameterized) regime, in which parameters move only mildly from initialization and the Jacobian stays close to its initial value throughout training \citep{lee2019wide,du2019gradient,arora2019exact,allen2019convergence, xiao2020disentangling}.

In contrast, our analysis does not rely on an ultra-width assumption; instead, our convergence analysis is based on the weight spectrum remaining bounded during training.
This permits nontrivial parameter movement, while still implies non-degenerate Jacobian.
From a strict theoretical perspective, the spectral bound is a simplifying assumption. However, it serves as a well-defined and architecturally actionable anchor that allows us to move from abstract convergence analysis to concrete model design.

\paragraph{Matrix preconditioning and matrix polynomials.}
Preconditioning is a fundamental technique in numerical linear algebra for accelerating iterative methods \citep{chen2005matrix}.
In general, it improves convergence by reducing the effective condition number or clustering the spectrum of the underlying operator, and has been widely used for solving large-scale linear systems \citep{saad2003iterative, benzi2002preconditioning}.

Among the preconditioning strategies, \emph{polynomial preconditioning} leverages matrix polynomials as an efficient yet effective mechanism for spectrum shaping.
By applying a suitably chosen low-degree polynomial to a matrix, one can enhance eigenvalue clustering, thereby improving the convergence of iterative solvers such as the conjugate gradient method
\citep{johnson1983polynomial}. Importantly, it can be implemented using only a few matrix multiplications, without explicit matrix inverses or decompositions.

In deep learning, a notable recent success of matrix-polynomial preconditioning is the Muon optimizer
(MomentUm Orthogonalized by Newton--Schulz) \citep{jordan2024muon}. Muon applies a few Newton--Schulz steps to the momentum matrix to approximate its polar (orthogonal) factor, i.e., pushing singular values toward $1$.
While both the Muon and our PC layer leverage matrix polynomials, they differ in three key ways:

\begin{enumerate}[(i)]
\item \textbf{Where the polynomial acts.}
Muon acts on \emph{momentum}, whereas PC acts on \emph{weight matrices}.
Muon is an optimization algorithm, while PC layer is a new architecture component.

\item \textbf{Target spectral transform.}
Muon’s polynomial is designed to (approximately) realize a \emph{matrix sign / polar-decomposition} map that drives singular values toward $1$ \citep{amsel2025polar}.
In contrast, our PC layer approximates a \emph{piecewise-linear} spectral map that stays close to $1$, while allowing a \emph{controlled deviation} on small singular values to preserve model expressivity. See Section~\ref{sec:find-poly} for the details of PC polynomials.

\item \textbf{Theoretical analysis.}
Muon, as a general-purpose optimizer, is analyzed under broad non-convex settings
\citep{bernstein2024old,bernstein2025deriving,shen2025convergence,wang2025muon,li2025note}.
Because the PC layer is explicitly designed for neural networks, our analysis leverages the specific structure of neural networks. This allows us to derive problem-specific guarantees, such as convergence to global minima under certain conditions.

\end{enumerate}

Notably, the PC layer can be \emph{orthogonally combined} with Muon, since the two methods act at different parts of the training process: Muon manipulates the momentum matrices, whereas our PC layer preconditions the weight matrices themselves. Empirically, adding PC on top of Muon yields further improvement (\S\ref{subsec:main_res}), with additional spectral and ablation analysis in Appendix~\ref{app:muon-results}.

\section{Conclusion and Limitations}
\label{sec:conclusion}
In this work, we introduce a \emph{Preconditioning (PC) layer}, a weight-based built-in module that maintains healthy weight conditioning throughout LLM pre-training.
We provide theoretical support in deep linear networks, showing that uniformly bounding each layer’s condition number yields geometric convergence of gradient descent, which motivates training-time control of the weight spectrum.
Guided by this principle, we develop an efficient \emph{polynomial preconditioning} algorithm that reshapes singular-value spectra using only lightweight matrix multiplications, avoids expensive decompositions, and can be merged into the original model after training with no inference overhead. We empirically validate that PC consistently improves optimization in Llama-1B pre-training.

Our empirical evaluation is constrained by available computational resources, so we mainly focus on Llama pre-training at 271M and 1B parameters. We do not yet report results on a broader range of architectures (e.g., non-Llama transformer variants) or substantially larger model scales.
While the observed gains are consistent across these two settings, more extensive studies are needed to characterize the generality of PC across different model families and parameter regimes.
In addition, PC introduces design choices (e.g., \texttt{pc\_level}) that may admit layer-specific optima in transformers. For instance, shallow and deep layers may benefit from different degrees of preconditioning. Systematically exploring such heterogeneous configurations and developing principled or adaptive rules for selecting PC settings are promising directions that we leave to future work.

\newpage

\bibliographystyle{abbrvnat}
\bibliography{reference.bib}

@inproceedings{allen2019convergence,
  title={A convergence theory for deep learning via over-parameterization},
  author={Allen-Zhu, Zeyuan and Li, Yuanzhi and Song, Zhao},
  booktitle={International Conference on Machine Learning},
  pages={242--252},
  year={2019},
  organization={PMLR}
}

@article{amsel2025polar,
  title={The polar express: Optimal matrix sign methods and their application to the {Muon} algorithm},
  author={Amsel, Noah and Persson, David and Musco, Christopher and Gower, Robert M},
  journal={arXiv preprint arXiv:2505.16932},
  year={2025}
}

@inproceedings{arora2018convergence,
  title={A convergence analysis of gradient descent for deep linear neural networks},
  author={Arora, Sanjeev and Cohen, Nadav and Golowich, Noah and Hu, Wei},
  booktitle={International Conference on Learning Representations},
  year={2019}
}

@article{arora2019exact,
  title={On exact computation with an infinitely wide neural net},
  author={Arora, Sanjeev and Du, Simon S and Hu, Wei and Li, Zhiyuan and Salakhutdinov, Russ R and Wang, Ruosong},
  journal={Advances in Neural Information Processing Systems},
  volume={32},
  year={2019}
}

@article{arora2019implicit,
  title={Implicit regularization in deep matrix factorization},
  author={Arora, Sanjeev and Cohen, Nadav and Hu, Wei and Luo, Yuping},
  journal={Advances in Neural Information Processing Systems},
  volume={32},
  year={2019}
}

@article{ba2016layer,
  title={Layer normalization},
  author={Ba, Jimmy Lei and Kiros, Jamie Ryan and Hinton, Geoffrey E},
  journal={arXiv preprint arXiv:1607.06450},
  year={2016}
}

@article{bansal2018can,
  title={Can we gain more from orthogonality regularizations in training deep networks?},
  author={Bansal, Nitin and Chen, Xiaohan and Wang, Zhangyang},
  journal={Advances in Neural Information Processing Systems},
  volume={31},
  year={2018}
}

@inproceedings{bartlett2018gradient,
  title={Gradient descent with identity initialization efficiently learns positive definite linear transformations by deep residual networks},
  author={Bartlett, Peter and Helmbold, Dave and Long, Philip},
  booktitle={International Conference on Machine Learning},
  pages={521--530},
  year={2018},
  organization={PMLR}
}

@article{benzi2002preconditioning,
  title={Preconditioning techniques for large linear systems: a survey},
  author={Benzi, Michele},
  journal={Journal of Computational Physics},
  volume={182},
  number={2},
  pages={418--477},
  year={2002},
  publisher={Elsevier}
}

@inproceedings{bernstein2024old,
  title={Old Optimizer, New Norm: An Anthology},
  author={Bernstein, Jeremy and Newhouse, Laker},
  booktitle={OPT 2024: Optimization for Machine Learning},
  year = {2024}
}

@misc{bernstein2025deriving,
  author = {Jeremy Bernstein},
  title = {Deriving {Muon}},
  url = {https://jeremybernste.in/writing/deriving-muon},
  year = {2025}
}

@article{bernstein2025manifolds,
  author = {Jeremy Bernstein},
  title = {Modular Manifolds},
  journal = {Thinking Machines Lab: Connectionism},
  year = {2025},
  note = {https://thinkingmachines.ai/blog/modular-manifolds/},
  doi = {10.64434/tml.20250926}
}

@article{bietti2019inductive,
  title={On the inductive bias of neural tangent kernels},
  author={Bietti, Alberto and Mairal, Julien},
  journal={Advances in Neural Information Processing Systems},
  volume={32},
  year={2019}
}

@article{brown2020language,
  title={Language models are few-shot learners},
  author={Brown, Tom and Mann, Benjamin and Ryder, Nick and Subbiah, Melanie and Kaplan, Jared D and Dhariwal, Prafulla and Neelakantan, Arvind and Shyam, Pranav and Sastry, Girish and Askell, Amanda and others},
  journal={Advances in Neural Information Processing Systems},
  volume={33},
  pages={1877--1901},
  year={2020}
}

@inproceedings{cisse2017parseval,
  title={Parseval networks: Improving robustness to adversarial examples},
  author={Cisse, Moustapha and Bojanowski, Piotr and Grave, Edouard and Dauphin, Yann and Usunier, Nicolas},
  booktitle={International Conference on Machine Learning},
  pages={854--863},
  year={2017},
  organization={PMLR}
}

@inproceedings{clark2019boolq,
  title={{BoolQ}: Exploring the surprising difficulty of natural yes/no questions},
  author={Clark, Christopher and Lee, Kenton and Chang, Ming-Wei and Kwiatkowski, Tom and Collins, Michael and Toutanova, Kristina},
  booktitle={Proceedings of the 2019 conference of the north American chapter of the association for computational linguistics: Human language technologies, volume 1 (long and short papers)},
  pages={2924--2936},
  year={2019}
}

@article{chen2020generalized,
  title={A generalized neural tangent kernel analysis for two-layer neural networks},
  author={Chen, Zixiang and Cao, Yuan and Gu, Quanquan and Zhang, Tong},
  journal={Advances in Neural Information Processing Systems},
  volume={33},
  pages={13363--13373},
  year={2020}
}

@book{chen2005matrix,
  title={Matrix preconditioning techniques and applications},
  author={Chen, Ke},
  number={19},
  year={2005},
  publisher={Cambridge University Press}
}

@article{chen2025muon,
  title={Muon Optimizes Under Spectral Norm Constraints},
  author={Chen, Lizhang and Li, Jonathan and Liu, Qiang},
  journal={arXiv preprint arXiv:2506.15054},
  year={2025}
}

@inproceedings{du2018gradient,
  title={Gradient descent provably optimizes over-parameterized neural networks},
  author={Du, Simon S and Zhai, Xiyu and Poczos, Barnabas and Singh, Aarti},
  booktitle={International Conference on Learning Representations},
  year={2019}
}

@inproceedings{du2019gradient,
  title={Gradient descent finds global minima of deep neural networks},
  author={Du, Simon and Lee, Jason and Li, Haochuan and Wang, Liwei and Zhai, Xiyu},
  booktitle={International Conference on Machine Learning},
  pages={1675--1685},
  year={2019},
  organization={PMLR}
}

@inproceedings{du2019width,
  title={Width provably matters in optimization for deep linear neural networks},
  author={Du, Simon and Hu, Wei},
  booktitle={International Conference on Machine Learning},
  pages={1655--1664},
  year={2019},
  organization={PMLR}
}

@misc{eval-harness,
  author       = {Gao, Leo and Tow, Jonathan and Abbasi, Baber and Biderman, Stella and Black, Sid and DiPofi, Anthony and Foster, Charles and Golding, Laurence and Hsu, Jeffrey and Le Noac'h, Alain and Li, Haonan and McDonell, Kyle and Muennighoff, Niklas and Ociepa, Chris and Phang, Jason and Reynolds, Laria and Schoelkopf, Hailey and Skowron, Aviya and Sutawika, Lintang and Tang, Eric and Thite, Anish and Wang, Ben and Wang, Kevin and Zou, Andy},
  title        = {The language model evaluation harness},
  month        = 07,
  year         = 2024,
  publisher    = {Zenodo},
  version      = {v0.4.3},
  doi          = {10.5281/zenodo.12608602},
  url          = {https://zenodo.org/records/12608602}
}

@misc{
fang2021precondition,
title={Precondition Layer and Its Use for {GAN}s},
author={Tiantian Fang and Alex Schwing and Ruoyu Sun},
year={2021},
url={https://openreview.net/forum?id=1yXhko8GZEE}
}

@article{franke2025learning,
  title={Learning in Compact Spaces with Approximately Normalized Transformer},
  author={Franke, J{\"o}rg KH and Spiegelhalter, Urs and Nezhurina, Marianna and Jitsev, Jenia and Hutter, Frank and Hefenbrock, Michael},
  journal={arXiv preprint arXiv:2505.22014},
  year={2025}
}

@article{fu2025nemotron,
  title={Nemotron-{F}lash: Towards Latency-Optimal Hybrid Small Language Models},
  author={Fu, Yonggan and Dong, Xin and Diao, Shizhe and Ye, Hanrong and Byeon, Wonmin and Karnati, Yashaswi and Liebenwein, Lucas and Zhang, Hannah and Binder, Nikolaus and Khadkevich, Maksim and others},
  journal={Advances in Neural Information Processing Systems},
  volume={38},
  year={2025}
}

@article{gadre2024language,
  title={Language models scale reliably with over-training and on downstream tasks},
  author={Gadre, Samir Yitzhak and Smyrnis, Georgios and Shankar, Vaishaal and Gururangan, Suchin and Wortsman, Mitchell and Shao, Rulin and Mercat, Jean and Fang, Alex and Li, Jeffrey and Keh, Sedrick and others},
  journal={arXiv preprint arXiv:2403.08540},
  year={2024}
}

@inproceedings{glorot2010understanding,
  title={Understanding the difficulty of training deep feedforward neural networks},
  author={Glorot, Xavier and Bengio, Yoshua},
  booktitle={International Conference on Artificial Intelligence and Statistics},
  pages={249--256},
  year={2010},
  organization={JMLR Workshop and Conference Proceedings}
}

@inproceedings{he2015delving,
  title={Delving deep into rectifiers: Surpassing human-level performance on imagenet classification},
  author={He, Kaiming and Zhang, Xiangyu and Ren, Shaoqing and Sun, Jian},
  booktitle={Proceedings of the IEEE International Conference on Computer Vision},
  pages={1026--1034},
  year={2015}
}

@inproceedings{henry2020query,
  title={Query-key normalization for transformers},
  author={Henry, Alex and Dachapally, Prudhvi Raj and Pawar, Shubham Shantaram and Chen, Yuxuan},
  booktitle={Findings of the Association for Computational Linguistics: EMNLP 2020},
  pages={4246--4253},
  year={2020}
}

@article{hoffmann2022training,
  title={Training compute-optimal large language models},
  author={Hoffmann, Jordan and Borgeaud, Sebastian and Mensch, Arthur and Buchatskaya, Elena and Cai, Trevor and Rutherford, Eliza and Casas, Diego de Las and Hendricks, Lisa Anne and Welbl, Johannes and Clark, Aidan and others},
  journal={arXiv preprint arXiv:2203.15556},
  year={2022}
}

@article{horner1819xxi,
  title={{XXI.} {A} new method of solving numerical equations of all orders, by continuous approximation},
  author={Horner, William George},
  journal={Philosophical Transactions of the Royal Society of London},
  number={109},
  pages={308--335},
  year={1819},
  publisher={The Royal Society London}
}

@inproceedings{hu2020provable,
  title={Provable benefit of orthogonal initialization in optimizing deep linear networks},
  author={Hu, Wei and Xiao, Lechao and Pennington, Jeffrey},
  booktitle={International Conference on Learning Representations},
  year={2020}
}

@inproceedings{huang2017centered,
  title={Centered weight normalization in accelerating training of deep neural networks},
  author={Huang, Lei and Liu, Xianglong and Liu, Yang and Lang, Bo and Tao, Dacheng},
  booktitle={Proceedings of the IEEE International Conference on Computer Vision},
  pages={2803--2811},
  year={2017}
}

@inproceedings{huang2018orthogonal,
  title={Orthogonal weight normalization: Solution to optimization over multiple dependent stiefel manifolds in deep neural networks},
  author={Huang, Lei and Liu, Xianglong and Lang, Bo and Yu, Adams and Wang, Yongliang and Li, Bo},
  booktitle={Proceedings of the AAAI Conference on Artificial Intelligence},
  volume={32},
  number={1},
  year={2018}
}

@article{huang2023normalization,
  title={Normalization techniques in training {DNN}s: Methodology, analysis and application},
  author={Huang, Lei and Qin, Jie and Zhou, Yi and Zhu, Fan and Liu, Li and Shao, Ling},
  journal={IEEE Transactions on Pattern Analysis and Machine Intelligence},
  volume={45},
  number={8},
  pages={10173--10196},
  year={2023},
  publisher={IEEE}
}

@inproceedings{ioffe2015batch,
  title={Batch normalization: Accelerating deep network training by reducing internal covariate shift},
  author={Ioffe, Sergey and Szegedy, Christian},
  booktitle={International Conference on Machine Learning},
  pages={448--456},
  year={2015},
  organization={PMLR}
}

@article{jacot2018neural,
  title={Neural tangent kernel: Convergence and generalization in neural networks},
  author={Jacot, Arthur and Gabriel, Franck and Hongler, Cl{\'e}ment},
  journal={Advances in Neural Information Processing Systems},
  volume={31},
  year={2018}
}

@inproceedings{ji2018gradient,
  title={Gradient descent aligns the layers of deep linear networks},
  author={Ji, Ziwei and Telgarsky, Matus},
  booktitle={International Conference on Learning Representations},
  year={2019}
}

@inproceedings{jia2017improving,
  title={Improving training of deep neural networks via singular value bounding},
  author={Jia, Kui and Tao, Dacheng and Gao, Shenghua and Xu, Xiangmin},
  booktitle={Proceedings of the IEEE Conference on Computer Vision and Pattern Recognition},
  pages={4344--4352},
  year={2017}
}

@inproceedings{jiang2019computation,
  title={On computation and generalization of generative adversarial networks under spectrum control},
  author={Jiang, Haoming and Chen, Zhehui and Chen, Minshuo and Liu, Feng and Wang, Dingding and Zhao, Tuo},
  booktitle={International Conference on Learning Representations},
  year={2019}
}

@article{jiang2026enhancing,
  title={Enhancing LLM Training via Spectral Clipping},
  author={Jiang, Xiaowen and Semenov, Andrei and Stich, Sebastian U},
  journal={arXiv preprint arXiv:2603.14315},
  year={2026}
}

@article{johnson1983polynomial,
  title={Polynomial preconditioners for conjugate gradient calculations},
  author={Johnson, Olin G and Micchelli, Charles A and Paul, George},
  journal={SIAM Journal on Numerical Analysis},
  volume={20},
  number={2},
  pages={362--376},
  year={1983},
  publisher={SIAM}
}

@misc{jordan2024muon,
  author       = {Keller Jordan and Yuchen Jin and Vlado Boza and Jiacheng You and
                  Franz Cesista and Laker Newhouse and Jeremy Bernstein},
  title        = {Muon: An optimizer for hidden layers in neural networks},
  year         = {2024},
  url          = {https://kellerjordan.github.io/posts/muon/}
}

@article{kaplan2020scaling,
  title={Scaling laws for neural language models},
  author={Kaplan, Jared and McCandlish, Sam and Henighan, Tom and Brown, Tom B and Chess, Benjamin and Child, Rewon and Gray, Scott and Radford, Alec and Wu, Jeffrey and Amodei, Dario},
  journal={arXiv preprint arXiv:2001.08361},
  year={2020}
}

@inproceedings{karras2024analyzing,
  title={Analyzing and improving the training dynamics of diffusion models},
  author={Karras, Tero and Aittala, Miika and Lehtinen, Jaakko and Hellsten, Janne and Aila, Timo and Laine, Samuli},
  booktitle={Proceedings of the IEEE/CVF Conference on Computer Vision and Pattern Recognition},
  pages={24174--24184},
  year={2024}
}

@article{kawaguchi2016deep,
  title={Deep learning without poor local minima},
  author={Kawaguchi, Kenji},
  journal={Advances in Neural Information Processing Systems},
  volume={29},
  year={2016}
}

@inproceedings{kingma2014adam,
  title={Adam: A method for stochastic optimization},
  author={Kingma, Diederik P},
  booktitle={International Conference on Learning Representations},
  year={2015}
}

@inproceedings{kolesnikov2020big,
  title={Big transfer (bit): General visual representation learning},
  author={Kolesnikov, Alexander and Beyer, Lucas and Zhai, Xiaohua and Puigcerver, Joan and Yung, Jessica and Gelly, Sylvain and Houlsby, Neil},
  booktitle={European conference on computer vision},
  pages={491--507},
  year={2020},
  organization={Springer}
}

@inproceedings{laurent2018deep,
  title={Deep linear networks with arbitrary loss: All local minima are global},
  author={Laurent, Thomas and Brecht, James},
  booktitle={International Conference on Machine Learning},
  pages={2902--2907},
  year={2018},
  organization={PMLR}
}

@article{lee2019wide,
  title={Wide neural networks of any depth evolve as linear models under gradient descent},
  author={Lee, Jaehoon and Xiao, Lechao and Schoenholz, Samuel and Bahri, Yasaman and Novak, Roman and Sohl-Dickstein, Jascha and Pennington, Jeffrey},
  journal={Advances in Neural Information Processing Systems},
  volume={32},
  year={2019}
}

@article{li2019orthogonal,
  title={Orthogonal deep neural networks},
  author={Li, Shuai and Jia, Kui and Wen, Yuxin and Liu, Tongliang and Tao, Dacheng},
  journal={IEEE transactions on pattern analysis and machine intelligence},
  volume={43},
  number={4},
  pages={1352--1368},
  year={2019},
  publisher={IEEE}
}

@inproceedings{
   liang2025torchtitan,
   title={{TorchTitan}: One-stop {PyTorch} native solution for production ready {LLM} pretraining},
   author={Wanchao Liang and Tianyu Liu and Less Wright and Will Constable and Andrew Gu and Chien-Chin Huang and Iris Zhang and Wei Feng and Howard Huang and Junjie Wang and Sanket Purandare and Gokul Nadathur and Stratos Idreos},
   booktitle={International Conference on Learning Representations},
   year={2025}
}

@article{lin2021faster,
  title={Faster Directional Convergence of Linear Neural Networks under Spherically Symmetric Data},
  author={Lin, Dachao and Sun, Ruoyu and Zhang, Zhihua},
  journal={Advances in Neural Information Processing Systems},
  volume={34},
  year={2021}
}

@article{liu2025muon,
  title={Muon is scalable for {LLM} training},
  author={Liu, Jingyuan and Su, Jianlin and Yao, Xingcheng and Jiang, Zhejun and Lai, Guokun and Du, Yulun and Qin, Yidao and Xu, Weixin and Lu, Enzhe and Yan, Junjie and others},
  journal={arXiv preprint arXiv:2502.16982},
  year={2025}
}

@inproceedings{loshchilov2024ngpt,
  title={n{GPT}: Normalized transformer with representation learning on the hypersphere},
  author={Loshchilov, Ilya and Hsieh, Cheng-Ping and Sun, Simeng and Ginsburg, Boris},
  booktitle={International Conference on Learning Representations},
  year={2025}
}

@inproceedings{loshchilov2017decoupled,
  title={Decoupled weight decay regularization},
  author={Loshchilov, Ilya and Hutter, Frank},
  booktitle={International Conference on Learning Representations},
  year={2019}

}

@inproceedings{miyato2018spectral,
  title={Spectral normalization for generative adversarial networks},
  author={Miyato, Takeru and Kataoka, Toshiki and Koyama, Masanori and Yoshida, Yuichi},
  booktitle={International Conference on Learning Representations},
  year={2018}
}

@article{newhouse2025training,
  title={Training Transformers with Enforced {Lipschitz} Constants},
  author={Newhouse, Laker and Hess, R Preston and Cesista, Franz and Zahorodnii, Andrii and Bernstein, Jeremy and Isola, Phillip},
  journal={arXiv preprint arXiv:2507.13338},
  year={2025}
}

@inproceedings{nguyen2017loss,
  title={The loss surface of deep and wide neural networks},
  author={Nguyen, Quynh and Hein, Matthias},
  booktitle={International Conference on Machine Learning},
  pages={2603--2612},
  year={2017},
  organization={PMLR}
}

@article{nguyen2020global,
  title={Global convergence of deep networks with one wide layer followed by pyramidal topology},
  author={Nguyen, Quynh N and Mondelli, Marco},
  journal={Advances in Neural Information Processing Systems},
  volume={33},
  pages={11961--11972},
  year={2020}
}

@article{penedo2024the,
  title={The {FineWeb} Datasets: Decanting the Web for the Finest Text Data at Scale},
  author={Penedo, Guilherme and Kydl{\'\i}{\v{c}}ek, Hynek and Lozhkov, Anton and Mitchell, Margaret and Raffel, Colin A and Von Werra, Leandro and Wolf, Thomas and others},
  journal={Advances in Neural Information Processing Systems},
  volume={37},
  pages={30811--30849},
  year={2024}
}

@article{pennington2017resurrecting,
  title={Resurrecting the sigmoid in deep learning through dynamical isometry: theory and practice},
  author={Pennington, Jeffrey and Schoenholz, Samuel and Ganguli, Surya},
  journal={Advances in Neural Information Processing Systems},
  volume={30},
  year={2017}
}

@article{qiao2019micro,
  title={Micro-batch training with batch-channel normalization and weight standardization},
  author={Qiao, Siyuan and Wang, Huiyu and Liu, Chenxi and Shen, Wei and Yuille, Alan},
  journal={arXiv preprint arXiv:1903.10520},
  year={2019}
}

@article{qiu2026reparameterized,
  title={Reparameterized {LLM} training via orthogonal equivalence transformation},
  author={Qiu, Zeju and Buchholz, Simon and Xiao, Tim and Dax, Maximilian and Sch{\"o}lkopf, Bernhard and Liu, Weiyang},
  journal={Advances in Neural Information Processing Systems},
  volume={38},
  pages={140775--140821},
  year={2026}
}

@article{qiu2026poet,
  title={{POET-X}: Memory-efficient {LLM} Training by Scaling Orthogonal Transformation},
  author={Qiu, Zeju and Liu, Lixin and Weller, Adrian and Shi, Han and Liu, Weiyang},
  journal={arXiv preprint arXiv:2603.05500},
  year={2026}
}

@inproceedings{pennington2018emergence,
  title={The emergence of spectral universality in deep networks},
  author={Pennington, Jeffrey and Schoenholz, Samuel and Ganguli, Surya},
  booktitle={International Conference on Artificial Intelligence and Statistics},
  pages={1924--1932},
  year={2018},
  organization={PMLR}
}

@article{ren2026rethinking,
  title={Rethinking Language Model Scaling under Transferable Hypersphere Optimization},
  author={Ren, Liliang and Liu, Yang and Shen, Yelong and Chen, Weizhu},
  journal={arXiv preprint arXiv:2603.28743},
  year={2026}
}

@book{saad2003iterative,
  title={Iterative methods for sparse linear systems},
  author={Saad, Yousef},
  year={2003},
  publisher={SIAM}
}

@article{salimans2016weight,
  title={Weight normalization: A simple reparameterization to accelerate training of deep neural networks},
  author={Salimans, Tim and Kingma, Durk P},
  journal={Advances in Neural Information Processing Systems},
  volume={29},
  year={2016}
}

@inproceedings{sardana2023beyond,
  title={Beyond {C}hinchilla-Optimal: Accounting for Inference in Language Model Scaling Laws},
  author={Sardana, Nikhil and Portes, Jacob and Doubov, Sasha and Frankle, Jonathan},
  booktitle={International Conference on Machine Learning},
  pages={43445--43460},
  year={2024},
  organization={PMLR}
}

@inproceedings{saratchandran2024weight,
  title={Weight conditioning for smooth optimization of neural networks},
  author={Saratchandran, Hemanth and Wang, Thomas X and Lucey, Simon},
  booktitle={European Conference on Computer Vision},
  pages={310--325},
  year={2024},
  organization={Springer}
}

@inproceedings{
saratchandran2026spectral,
title={Spectral Conditioning of Attention Improves Transformer Performance},
author={Hemanth Saratchandran and Simon Lucey},
booktitle={The Thirty-ninth Annual Conference on Neural Information Processing Systems},
year={2026},
url={https://openreview.net/forum?id=RBx1AfoL2J}
}

@inproceedings{
saratchandran2026conditioned,
title={Conditioned Initialization for Attention},
author={Hemanth Saratchandran and Simon Lucey},
booktitle={The Fourteenth International Conference on Learning Representations},
year={2026},
url={https://openreview.net/forum?id=cKNOCYPo2W}
}

@inproceedings{saxe2013exact,
  title={Exact solutions to the nonlinear dynamics of learning in deep linear neural networks},
  author={Saxe, Andrew M and McClelland, James L and Ganguli, Surya},
  booktitle={International Conference on Learning Representations},
  year={2014}
}

@article{shi2026pion,
  title={Pion: A Spectrum-Preserving Optimizer via Orthogonal Equivalence Transformation},
  author={Shi, Kexuan and Li, Hanxuan and Qiu, Zeju and Wen, Yandong and Buchholz, Simon and Liu, Weiyang},
  journal={arXiv preprint arXiv:2605.12492},
  year={2026}
}

@misc{
simon2021neural,
title={Neural tangent kernel eigenvalues accurately predict generalization},
author={James B Simon and Madeline Dickens and Michael R DeWeese},
year={2021},
url={https://openreview.net/forum?id=lycl1GD7fVP}
}

@article{touvron2023llama,
  title={Llama 2: Open foundation and fine-tuned chat models},
  author={Touvron, Hugo and Martin, Louis and Stone, Kevin and Albert, Peter and Almahairi, Amjad and Babaei, Yasmine and Bashlykov, Nikolay and Batra, Soumya and Bhargava, Prajjwal and Bhosale, Shruti and others},
  journal={arXiv preprint arXiv:2307.09288},
  year={2023}
}

@article{ulyanov2016instance,
  title={Instance normalization: The missing ingredient for fast stylization},
  author={Ulyanov, Dmitry and Vedaldi, Andrea and Lempitsky, Victor},
  journal={arXiv preprint arXiv:1607.08022},
  year={2016}
}

@article{wang2025muon,
  title={Muon Outperforms {Adam} in Tail-End Associative Memory Learning},
  author={Wang, Shuche and Zhang, Fengzhuo and Li, Jiaxiang and Du, Cunxiao and Du, Chao and Pang, Tianyu and Yang, Zhuoran and Hong, Mingyi and Tan, Vincent YF},
  journal={arXiv preprint arXiv:2509.26030},
  year={2025}
}

@article{wen2025fantastic,
  title={Fantastic pretraining optimizers and where to find them},
  author={Wen, Kaiyue and Hall, David and Ma, Tengyu and Liang, Percy},
  journal={arXiv preprint arXiv:2509.02046},
  year={2025}
}

@online{wen2025hyperball,
  title   = {Fantastic Pretraining Optimizers and Where to Find Them 2.1: Hyperball Optimization},
  author  = {Wen, Kaiyue and Dang, Xingyu and Lyu, Kaifeng and Ma, Tengyu and Liang, Percy},
  year    = {2025},
  month   = {12},
  day     = {15},
  url     = {https://tinyurl.com/muonh},
  urldate = {2025-12-15}
}

@inproceedings{wu2018group,
  title={Group normalization},
  author={Wu, Yuxin and He, Kaiming},
  booktitle={Proceedings of the European Conference on Computer Vision (ECCV)},
  pages={3--19},
  year={2018}
}

@inproceedings{xiao2018dynamical,
  title={Dynamical isometry and a mean field theory of {CNN}s: How to train 10,000-layer vanilla convolutional neural networks},
  author={Xiao, Lechao and Bahri, Yasaman and Sohl-Dickstein, Jascha and Schoenholz, Samuel and Pennington, Jeffrey},
  booktitle={International Conference on Machine Learning},
  pages={5393--5402},
  year={2018},
  organization={PMLR}
}

@inproceedings{xiao2020disentangling,
  title={Disentangling trainability and generalization in deep neural networks},
  author={Xiao, Lechao and Pennington, Jeffrey and Schoenholz, Samuel},
  booktitle={International Conference on Machine Learning},
  pages={10462--10472},
  year={2020},
  organization={PMLR}
}

@article{xie2026controlled,
  title={Controlled llm training on spectral sphere},
  author={Xie, Tian and Luo, Haoming and Tang, Haoyu and Hu, Yiwen and Liu, Jason Klein and Ren, Qingnan and Wang, Yang and Zhao, Wayne Xin and Yan, Rui and Su, Bing and others},
  journal={arXiv preprint arXiv:2601.08393},
  year={2026}
}

@misc{xie2026mhcmanifoldconstrainedhyperconnections,
      title={{mHC}: Manifold-Constrained Hyper-Connections}, 
      author={Zhenda Xie and Yixuan Wei and Huanqi Cao and Chenggang Zhao and Chengqi Deng and Jiashi Li and Damai Dai and Huazuo Gao and Jiang Chang and Kuai Yu and Liang Zhao and Shangyan Zhou and Zhean Xu and Zhengyan Zhang and Wangding Zeng and Shengding Hu and Yuqing Wang and Jingyang Yuan and Lean Wang and Wenfeng Liang},
      year={2026},
      eprint={2512.24880},
      archivePrefix={arXiv},
      primaryClass={cs.CL},
      url={https://arxiv.org/abs/2512.24880}, 
}

@inproceedings{yang2021tensor,
  title={Tensor programs {IIb}: Architectural universality of neural tangent kernel training dynamics},
  author={Yang, Greg and Littwin, Etai},
  booktitle={International Conference on Machine Learning},
  pages={11762--11772},
  year={2021},
  organization={PMLR}
}

@article{yang2021tuning,
  title={Tuning large neural networks via zero-shot hyperparameter transfer},
  author={Yang, Ge and Hu, Edward and Babuschkin, Igor and Sidor, Szymon and Liu, Xiaodong and Farhi, David and Ryder, Nick and Pachocki, Jakub and Chen, Weizhu and Gao, Jianfeng},
  journal={Advances in Neural Information Processing Systems},
  volume={34},
  pages={17084--17097},
  year={2021}
}

@article{yang2023spectral,
  title={A spectral condition for feature learning},
  author={Yang, Greg and Simon, James B and Bernstein, Jeremy},
  journal={arXiv preprint arXiv:2310.17813},
  year={2023}
}

@article{yoshida2017spectral,
  title={Spectral norm regularization for improving the generalizability of deep learning},
  author={Yoshida, Yuichi and Miyato, Takeru},
  journal={arXiv preprint arXiv:1705.10941},
  year={2017}
}

@article{zhang2019root,
  title={Root mean square layer normalization},
  author={Zhang, Biao and Sennrich, Rico},
  journal={Advances in neural information processing systems},
  volume={32},
  year={2019}
}

@inproceedings{zou2020global,
  title={On the global convergence of training deep linear {ResNets}},
  author={Zou, Difan and Long, Philip M and Gu, Quanquan},
  booktitle={International Conference on Learning Representations},
  year={2020}
}

@article{li2025note,
  title={A Note on the Convergence of {Muon}},
  author={Li, Jiaxiang and Hong, Mingyi},
  journal={arXiv preprint arXiv:2502.02900},
  year={2025}
}

@article{shen2025convergence,
  title={On the convergence analysis of {Muon}},
  author={Shen, Wei and Huang, Ruichuan and Huang, Minhui and Shen, Cong and Zhang, Jiawei},
  journal={arXiv preprint arXiv:2505.23737},
  year={2025}
}

@inproceedings{paperno2016lambada,
  title={The {LAMBADA} dataset: Word prediction requiring a broad discourse context},
  author={Paperno, Denis and Kruszewski, Germ{\'a}n and Lazaridou, Angeliki and Pham, Ngoc-Quan and Bernardi, Raffaella and Pezzelle, Sandro and Baroni, Marco and Boleda, Gemma and Fern{\'a}ndez, Raquel},
  booktitle={Proceedings of the 54th annual meeting of the association for computational linguistics (volume 1: Long papers)},
  pages={1525--1534},
  year={2016}
}

@inproceedings{zellers2019hellaswag,
  title={{HellaSwag}: Can a machine really finish your sentence?},
  author={Zellers, Rowan and Holtzman, Ari and Bisk, Yonatan and Farhadi, Ali and Choi, Yejin},
  booktitle={Proceedings of the 57th annual meeting of the association for computational linguistics},
  pages={4791--4800},
  year={2019}
}

@article{sakaguchi2021winogrande,
  title={{WinoGrande}: An adversarial winograd schema challenge at scale},
  author={Sakaguchi, Keisuke and Bras, Ronan Le and Bhagavatula, Chandra and Choi, Yejin},
  journal={Communications of the ACM},
  volume={64},
  number={9},
  pages={99--106},
  year={2021},
  publisher={ACM New York, NY, USA}
}

@inproceedings{bisk2020piqa,
  title={{PIQA}: Reasoning about physical commonsense in natural language},
  author={Bisk, Yonatan and Zellers, Rowan and Gao, Jianfeng and Choi, Yejin and others},
  booktitle={Proceedings of the AAAI conference on artificial intelligence},
  volume={34},
  number={05},
  pages={7432--7439},
  year={2020}
}

@article{clark2018think,
  title={Think you have solved question answering? {Try} {ARC}, the {AI2} reasoning challenge},
  author={Clark, Peter and Cowhey, Isaac and Etzioni, Oren and Khot, Tushar and Sabharwal, Ashish and Schoenick, Carissa and Tafjord, Oyvind},
  journal={arXiv preprint arXiv:1803.05457},
  year={2018}
}

@inproceedings{talmor2019commonsenseqa,
  title={Commonsenseqa: A question answering challenge targeting commonsense knowledge},
  author={Talmor, Alon and Herzig, Jonathan and Lourie, Nicholas and Berant, Jonathan},
  booktitle={Proceedings of the 2019 Conference of the North American Chapter of the Association for Computational Linguistics: Human Language Technologies, Volume 1 (Long and Short Papers)},
  pages={4149--4158},
  year={2019}
}

@inproceedings{mihaylov2018can,
  title={Can a suit of armor conduct electricity? A new dataset for open book question answering},
  author={Mihaylov, Todor and Clark, Peter and Khot, Tushar and Sabharwal, Ashish},
  booktitle={Proceedings of the 2018 conference on empirical methods in natural language processing},
  pages={2381--2391},
  year={2018}
}

\appendix
\newpage
\section{Notation Definition}
\begin{table}[h]
\centering
\begin{tabular}{l|l}
\hline
\multicolumn{2}{l}{\emph{Theory (deep linear network)}} \\
\hline
$L \in \mathbb{N}$ & number of layers \\
$n \in \mathbb{N}$ & number of training samples \\
$W_l \in \mathbb{R}^{d_l \times d_{l-1}}, l \in \{1, \dots, L\}$ & weight matrix of the $l^{\text{th}}$ layer \\
$\theta = (W_1, \dots, W_L)$ & collection of all parameters \\
$x \in \mathbb{R}^{d_x \times 1}$ & input of the neural network \\
$X \in \mathbb{R}^{d_x \times n}$ & collection of inputs \\
$y \in \mathbb{R}^{n d_y \times 1}$ & collection of labels \\
$t \in \{0,1,\dots,T\}$ & gradient descent iteration \\
$\mu_l \in \mathbb{R}$ & lower bound of weight matrix $W_l$ spectrum in region $\mathcal{R}$ \\
$\tau_l \in \mathbb{R}$ & upper bound of weight matrix $W_l$ spectrum in region $\mathcal{R}$ \\
$\mathcal{R}$ & region where every $W_l$ is well-conditioned ($\sigma_{\min}(W_l)\!\ge\!\mu_l,\ \sigma_{\max}(W_l)\!\le\!\tau_l$) \\
\hline
\multicolumn{2}{l}{\emph{Polynomial weight preconditioning}} \\
\hline
$W \in \mathbb{R}^{n \times m}$ & rectangular matrix \\
$\widetilde{W} \in \mathbb{R}^{n \times m}$ & scaled $W$ whose spectrum lies in $[0, 1]$ \\
$\sigma_1 \ge \cdots \ge \sigma_m \ge 0$ & singular values (ordered from largest to smallest) \\
$p,\ a = (a_0, \dots, a_k)$ & polynomial $p$ of degree $k$ and its coefficients \\
$g(\sigma) = p(\sigma^2)\,\sigma$ & induced scalar preconditioning map \\
$\texttt{pc\_level} = k$ & PC level; $g_k(\sigma)=p_k(\sigma^2)\sigma$ has overall degree $2k+1$ \\
$\mathrm{PL}_b(\sigma) = \min(\sigma/b,\,1)$ & piecewise-linear target with cutoff $b>0$ \\
$s(W) \approx \lVert W \rVert_2$ & streaming power-iteration estimate of the spectral norm \\
$\gamma \in \mathbb{R}$ & per-block learnable scalar (initialized to $1$) \\
\hline
\multicolumn{2}{l}{\emph{Spectrum metric}} \\
\hline
$\tilde{\kappa}(W) = \sigma_1 / \bar{\sigma}_{\text{bottom-10\%}}$ & modified condition number \\
\hline
\end{tabular}
\caption{Main symbols used throughout the paper.}
\end{table}

\newpage
\section{Background on Preconditioning}
\label{app:precondition-intro}

\emph{Preconditioning} originated in numerical linear algebra as a technique for accelerating iterative solvers for large sparse linear systems, where the cost of direct methods (e.g., Gaussian elimination) is prohibitive. The central observation is that the convergence speed of iterative methods is governed by the conditioning of the system matrix, and that a problem-specific transformation can substantially improve this conditioning. The canonical setting in which this idea is developed is solving a symmetric linear system $Qx = b$ via the conjugate gradient method; we briefly review this setting below before turning to polynomial preconditioners, which are the variant most relevant to our work.

Consider a linear system of equations
\begin{equation*}
    Qx = b,
\end{equation*}
where $Q \in \mathbb{R}^{n \times n}$ is real symmetric, and $b \in \mathbb{R}^{n \times 1}$. Conjugate gradient (CG) is one of the most popular methods to solve the system of equations. It has iteration complexity $O(\sqrt{\kappa(Q)} \log 1/\epsilon)$, where $\kappa(Q)$ is the condition number of $Q$. For ill-conditioned problems (i.e., large $\kappa(Q)$), the convergence can be slow. Thus, in practice, preconditioned CG is commonly used instead of the original CG.

Suppose there is a certain way to find a preconditioner $M$ that reduces the condition number, i.e., $\kappa(MQ) < \kappa(Q)$. Define $\tilde{Q} = MQ$ and $\tilde{b} = Mb$, then we can solve the alternative problem
\begin{equation*}
    \tilde{Q}x = \tilde{b},
\end{equation*}
for which CG (and other gradient methods) converge faster. One simple example is Jacobi preconditioning (closely related to whitening in machine learning) where $M$ is a diagonal matrix with $M_{ii} = 1/\sqrt{Q_{ii}}$.

\subsection{Polynomial Preconditioner}
\label{subapp:poly-preconditioner-johnson}

We review the polynomial preconditioners proposed by \citet{johnson1983polynomial}. However, we do not directly utilize the polynomials proposed by \citet{johnson1983polynomial} since our setting differs. But we do borrow two lessons, which we will explain at the end of this subsection.

Consider a linear system of equations
\begin{equation*}
    Qx = b,
\end{equation*}
where $Q \in \mathbb{R}^{n \times n}$ is real symmetric, and $b \in \mathbb{R}^{n \times 1}$. To find a polynomial preconditioning $g(Q) = p(Q)Q$ which is well-conditioned, we only need to find a polynomial $g$ such that $g(\lambda) = p(\lambda)\lambda$ maps $[\lambda_1, \lambda_m]$ to $[1 - \epsilon, 1]$. This can be formulated as an approximation theory problem: find a polynomial $g(x)$ that approximates a function $f$ where $f(0) = 0, f([\lambda_1, \lambda_m]) = 1$. Since the exact values of $\lambda_1, \lambda_m$ vary for each $Q$, we loosen the range from $[\lambda_1, \lambda_m]$ to $[\gamma_L, \gamma_U]$ such that $[\gamma_L, \gamma_U] \supseteq [\lambda_1, \lambda_m]$.

Define $P_k$ to be the set of all polynomials with degree no more than $k$, i.e.,
\vspace{-0.5em}
\begin{equation*}
    P_k = \{p(\lambda) \mid p(\lambda) = \sum_{j=0}^k a_j \lambda^j, a_j \in \mathbb{R} \text{ for any } j \in \{0,1,\dots,k\} \}.
\end{equation*}

\citet{johnson1983polynomial} consider two polynomial preconditioners: minimax and least-squares polynomials. Minimax polynomials are the solution to the following problem:
\begin{equation}
    \min_{p \in P_k} \max_{\lambda \in [\gamma_L, \gamma_U]} |1 - \lambda p(\lambda)|.
\end{equation}
Denote $g(\lambda) = \lambda p(\lambda)$, then there is a closed-form solution to the above problem: $g^*(\lambda) = 1 - \frac{T_{k+1}(\mu(\lambda))}{T_{k+1}(\mu(0))}$, where $\mu(\lambda) = \frac{\gamma_U + \gamma_L - 2\lambda}{\gamma_U - \gamma_L}$, and $T_k$ is the (first-kind) Chebyshev polynomial satisfying $T_k(\cos(z))=\cos(kz)$.

\citet{johnson1983polynomial} also consider least-squares polynomials, which are the solutions to the problem
\begin{equation}
    \min_g \int_{\gamma_L}^{\gamma_U} |1 - g(\lambda)| w(\lambda) d\lambda.
\end{equation}
There is a closed-form solution to the above problem since it is a quadratic problem in the coefficients of $g$. For the Jacobian weight function $w(\lambda) = (\gamma_U - \lambda)^\alpha (\lambda - \gamma_L)^\beta$ where $\alpha \ge \beta \ge -1/2$, the optimal solution $g^*$ stays positive in $[\gamma_L, \gamma_U]$.

As we will implement the preconditioning on rectangular weight matrices while $Q$ is a square matrix, we cannot directly borrow the polynomials used by \citet{johnson1983polynomial}. Nevertheless, we borrow two lessons for our design. First, the polynomial preconditioner can be designed by solving an optimization problem (either minimax or least squares). Second, they found that least-squares polynomials perform better than minimax polynomials for iterative algorithms. For this reason, we adopt the least-squares polynomials instead of the minimax polynomials.

\subsection{Relation between Preconditioning and the Spectrum}
\label{subapp:precond-spectrum}

For researchers less familiar with preconditioning, there may appear to be a discrepancy between theoretical results and practical implementation: while theory often relates the condition number to convergence speed, in practice, convergence frequently depends on the behavior of the entire spectrum. This distinction is well-recognized within numerical linear algebra and optimization. We have incorporated several considerations to bridge this conceptual gap, as elaborated below.

\medskip
\noindent \textbf{(1) Spectral metrics and practical impact.}
(a) We utilize a modified condition number, defined as the ratio of the largest singular value to the average of the bottom $10\%$ of singular values (see Sec.~\ref{sec:pc_improve_spec}), to assess the health of the singular value distribution. This metric accounts for the broader spectral distribution rather than being sensitive only to the extreme outliers. While one could further investigate the optimality of this metric following \citet{chen2005matrix} (Sec. 1.5), we found our current approach sufficient for achieving robust empirical results.
(b) Our preconditioner is designed to improve the entire spectrum, not merely the extreme condition number. For example, if an initial spectrum is $[1, 0.1, 0.1, 10^{-8}]$, our method aims to shift it toward a more favorable distribution, such as $[1, 0.2, 0.2, 2 \times 10^{-8}]$, thereby enhancing overall convergence characteristics.

\medskip
\noindent \textbf{(2) Historical context in numerical analysis.} The distinction between the condition number and the full spectrum has long been documented. \citet{johnson1983polynomial} observed that "the optimal polynomial preconditioner $M$ (which achieves the best condition number in a certain set) may map small eigenvalues of $A$ into large ones of $M^{-1}A$," which can degrade convergence. They argued that minimizing the condition number should not be the sole objective; instead, the goal should be the improvement of the whole spectrum. While the ideal metric remains a subject of discussion, we adopt their least-squares preconditioning approach as an effective alternative.

Etymologically, "preconditioning" often evokes the "condition number," yet historically, the term encompasses the broader intent of spectral improvement. As noted in the preface of \citet{chen2005matrix} (page xvi), the two most relevant terms in the field are "condition number and clustering." Section 1.5 of that text further emphasizes the importance of eigenvalues clustering. While a more descriptive name might be "spectral improver," we adhere to the standard terminology of "preconditioner."

\medskip
\noindent \textbf{(3) Theoretical vs. empirical objectives.} There remains an inherent disconnect in the optimization literature: researchers often prove results based on condition numbers (e.g., $O(\kappa \log 1/\epsilon)$ iteration complexity) while designing algorithms that practically manipulate the full spectrum. This pattern is evident in foundational works; for instance, \citet{johnson1983polynomial} aims to improve Conjugate Gradient bounds, and Nesterov’s acceleration is framed around improving the Gradient Descent bound. The community generally acknowledges that while the spectrum ultimately dictates performance, the condition number serves as a tractable, albeit weaker, proxy for theoretical analysis. Bridging this theoretical-practical divide remains an important area for future research.

\medskip
In summary, the standard paradigm, proving theorems on the condition number to justify methods that improve the spectrum, is a well-established practice in numerical analysis \citep{johnson1983polynomial}. We follow this convention and provide this discussion to clarify potential confusion regarding these related concepts.

\newpage
\section{Proof of Theorem \ref{thm:geom-cvrg}}
\label{app:theory}

In this section, we will provide a detailed proof of Theorem \ref{thm:geom-cvrg}. Note that, for the squared loss, the Gram matrix of the Jacobian coincides with the neural tangent kernel (NTK); therefore, conditioning of the Jacobian is equivalent to conditioning of the NTK. Throughout the proofs in this appendix, we will use the NTK-based formulation.
We decompose the argument into three steps.
\begin{itemize}
    \item \textbf{Step 1 (Appendix \ref{subapp:weight-cond--ntk-cond}) establishes weight conditioning → NTK conditioning}: uniform lower and upper bounds on \emph{the singular values of all weight matrices} imply, respectively, a lower and an upper bound on \emph{the eigenvalues of the empirical NTK} along the optimization trajectory.

    \item \textbf{Step 2 (Appendix \ref{subapp:ntk-cond--pl+smooth}) establishes NTK conditioning → optimization properties of the loss function}: the NTK lower-eigenvalue bound implies a Polyak–Łojasiewicz (PL) inequality for the squared loss, and the NTK upper-eigenvalue bound implies (local) smoothness of the loss.

    \item \textbf{Step 3 (Appendix \ref{subapp:std-cvrg-proof}) follows the standard global-convergence proof template for gradient descent}: combining the descent lemma (from local smoothness) with the PL inequality yields a per-iteration contraction of the loss, and hence the usual geometric convergence conclusion.

\end{itemize}

Throughout, we keep the constants explicit so that the final convergence rate can be read off directly from the weight spectral bounds.

For the reader’s convenience, we restate the setting and Theorem \ref{thm:geom-cvrg} here before giving the proof. We consider an $L$-layer linear network that maps $x\in\mathbb{R}^{d_x}$ to
\[
F(\theta;x) \;=\; W_L W_{L-1}\cdots W_1 x \in \mathbb{R}^{d_y},
\]
where $\theta=(W_1,\dots,W_L)$, and $W_l\in\mathbb{R}^{d_l\times d_{l-1}}$ for $l\in\{1,\dots,L\}$,
with $d_0=d_x$ and $d_L=d_y$.

\begin{rem}[Deep linear networks]
\label{rem:linear-nn}
The deep linear network is a widely used setting in theoretical analysis. While the input--output map is linear, the objective is generally nonconvex due to the matrix-factorization parameterization, making the setting non-trivial. Deep linear networks have been extensively studied for optimization landscape characterizations \citep{kawaguchi2016deep,laurent2018deep}, convergence and training dynamics \citep{arora2018convergence,du2019width,hu2020provable,lin2021faster}, implicit bias \citep{arora2019implicit,ji2018gradient}, etc. Even global convergence in this setting is non-trivial and has received extensive study \citep{arora2018convergence, bartlett2018gradient, ji2018gradient, zou2020global}; the closely related theory of dynamical isometry was also pioneered in linear models \citep{saxe2013exact}. Our analysis is in the same spirit as a line of works that aim at narrowing the gap between theory and practice through linear networks \citep{pennington2017resurrecting, pennington2018emergence, xiao2018dynamical, lee2019wide, xiao2020disentangling, hu2020provable,lin2021faster}: we use this setting to extract the actionable insight that a well-behaved weight spectrum facilitates training, which in turn motivates our algorithm design.
\end{rem}

We assume a pyramidal architecture as follows.
\newtheorem*{asmprestatedpyr}{Assumption \ref{asmp:pyramid} (Restated)}
\begin{asmprestatedpyr}[Pyramidal architecture]
There exists some \( r \in \{1, \dots, L\} \), such that
\[
d_0 \le d_1 \le \cdots \le d_r
\qquad\text{and}\qquad
d_r \ge d_{r+1} \ge \cdots \ge d_L.
\]
\end{asmprestatedpyr}

Suppose that we have $n$ training samples. We denote $X=[x_1,\dots,x_n]\in\mathbb{R}^{d_x\times n}$ as the input, $F(\theta;X)=(F(\theta;x_1);\dots;F(\theta;x_n))\in\mathbb{R}^{n d_y}$ as the stacked model predictions, and $y=(y_1;\dots;y_n)\in\mathbb{R}^{n d_y}$ as the stacked labels.
We further impose the following standard assumption on the input data matrix.
\newtheorem*{asmprestatedfcr}{Assumption \ref{asmp:full-col-rank} (Restated)}
\begin{asmprestatedfcr}[Non-degenerate input data]
The input matrix $X\in\mathbb{R}^{d_x\times n}$ has full column rank, i.e., $\sigma_{\min}(X) > 0$.
\end{asmprestatedfcr}
Assumption~\ref{asmp:full-col-rank} implicitly places us in the over-parameterized regime, which implies $n \le d_x$.

\begin{rem}[Pyramidal architecture]
\label{rem:pyramidal}
The pyramidal architecture assumption (Assumption~\ref{asmp:pyramid}) is not particularly restrictive, and it has been adopted in prior theory as a convenient way to exclude rank-bottleneck pathologies while still covering realistic width profiles.
First, the widely used ``equal-width'' hidden-layer assumption is a special case of a pyramidal topology, and it is standard in over-parameterization/NTK-style (neural tangent kernel) analyses for fully-connected networks \citep{hu2020provable, allen2019convergence, du2018gradient}.
Second, pyramidal architecture is a common structural assumption in prior optimization analyses to obtain benign loss landscape and global convergence guarantees \citep{nguyen2017loss,nguyen2020global,laurent2018deep}.

In our theoretical analysis, together with the full-column-rank input assumption (Assumption~\ref{asmp:full-col-rank}), the pyramidal architecture provides a convenient sufficient condition for obtaining a non-degenerate Jacobian,
which is exactly what enables the convergence results presented later.
\end{rem}

We consider the quadratic loss of the training data:
\[
\mathcal{L}(\theta) = \frac12 \sum_{i=1}^n \lVert F(\theta; x_i) - y_i \rVert^2 = \frac{1}{2} \lVert F(\theta;X)-y \rVert^2.
\]
We analyze gradient descent applied to the loss \(\mathcal{L}(\theta)\).
Given an initial parameter \(\theta(0)\), the iterates are updated as
\[
\theta(t+1) = \theta(t) - \eta \, \nabla \mathcal{L}(\theta(t)),
\]
where \(\eta > 0\) denotes the learning rate.

For given $\{(\tau_l,\mu_l)\}_{l=1}^L$ with $\tau_l \ge 1 \ge \mu_l>0$, we define the regions
\[
\begin{aligned}
\mathcal{W}_{\text{low}}(\mu_1, \dots, \mu_L) &:= \{ \theta = (W_1, \dots, W_L) \mid \sigma_{\min}(W_l) \geq \mu_l,\ \forall l \in \{1, \dots, L\} \}, \\
\mathcal{W}_{\text{up}}(\tau_1, \dots, \tau_L) &:= \{ \theta = (W_1, \dots, W_L) \mid \sigma_{\max}(W_l) \leq \tau_l,\ \forall l \in \{1, \dots, L\} \}, \\
\mathcal{R} &:= \mathcal{W}_{\text{up}}(\tau_1, \dots, \tau_L) \cap \mathcal{W}_{\text{low}}(\mu_1, \dots, \mu_L).
\end{aligned}
\]
Within $\mathcal{R}$, each layer weight matrix is well-conditioned: its singular values are bounded
away from both $0$ and $\infty$. The next theorem shows that, as long as the training trajectory stays inside
$\mathcal{R}$, gradient descent achieves a geometric decrease.

\newtheorem*{theoremrestated}{Theorem \ref{thm:geom-cvrg} (Restated)}

\begin{theoremrestated}
[Geometric convergence within $\mathcal{R}$]
Suppose the iterates satisfy $\theta(t)\in\mathcal{R}$ for all $t\in\{0,1,\dots,T\}$.
Define
\[
\beta
:= \left( \prod_{l=1}^L \tau_l \right)^2 \left( \sqrt{2 \mathcal{L}(\theta(0))} + \| X \|_F  \right) L \sigma_{\max}(X),
\qquad
\mu
:= \left( \prod_{l=1}^L \mu_l \right)^2 \sigma_{\min}(X)^2 .
\]
Then for any learning rate $\eta \in \bigl(0, 1/\beta\bigr]$, it holds that
\[
\mathcal{L}(\theta(t+1)) \le \bigl(1-\eta\mu\bigr)\, \mathcal{L}(\theta(t)),
\qquad \forall\, t\in\{0,1,\dots,T\}.
\]
In particular, choosing $\eta=1/\beta$ yields the contraction factor $1-\mu/\beta$:
\[
\mathcal{L}(\theta(t+1)) \le \Bigl(1-\frac{\mu}{\beta}\Bigr)\, \mathcal{L}(\theta(t)),
\qquad \forall\, t\in\{0,1,\dots,T\}.
\]
\end{theoremrestated}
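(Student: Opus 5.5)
The plan follows the three-step template announced at the start of this appendix: weight conditioning gives NTK conditioning, which gives a PL inequality plus local smoothness, which gives contraction. Throughout, write $e(\theta)=F(\theta;X)-y$ and let $J(\theta)=\partial F(\theta;X)/\partial\theta$ be the Jacobian. Then $\nabla\mathcal{L}=J^\top e$ and the empirical NTK is $K=JJ^\top=\sum_{l=1}^L J_lJ_l^\top$, where $J_l$ is the block belonging to $W_l$. Set $A_l:=W_{l-1}\cdots W_1X$ and $B_l:=W_L\cdots W_{l+1}$. Since $F=B_lW_lA_l$, vectorizing gives $J_l=A_l^\top\otimes B_l$, and hence $J_lJ_l^\top=(A_l^\top A_l)\otimes(B_lB_l^\top)$. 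Everything reduces to spectral bounds on these Kronecker factors.

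\emph{Step 1 (NTK bounds).} For the upper bound I will use $\|J_lJ_l^\top\|\le\|A_l\|^2\|B_l\|^2\le\prod_{j\ne l}\tau_j^2\,\sigma_{\max}(X)^2\le(\prod_j\tau_j)^2\sigma_{\max}(X)^2$, where the last step uses $\tau_l\ge1$. Summing over the $L$ layers gives $\|K\|\le L(\prod_j\tau_j)^2\sigma_{\max}(X)^2$.

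For the lower bound I keep only the layer $l=r$ from Assumption~\ref{asmp:pyramid}, since $K\succeq J_rJ_r^\top$. For $j<r$ each $W_j$ is tall ($d_j\ge d_{j-1}$) with $\sigma_{\min}(W_j)\ge\mu_j$. By Assumption~\ref{asmp:full-col-rank}, $X$ is tall with full column rank. For products of tall matrices, $\sigma_{\min}(MN)\ge\sigma_{\min}(M)\sigma_{\min}(N)$ holds, so $\sigma_{\min}(A_r)\ge\prod_{j<r}\mu_j\,\sigma_{\min}(X)$. Symmetrically, for $j>r$ each $W_j$ is wide, so $\sigma_{\min}(B_r^\top)\ge\prod_{j>r}\mu_j$. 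Therefore
\[
\lambda_{\min}(K)\ge\Bigl(\prod_{j\ne r}\mu_j\Bigr)^2\sigma_{\min}(X)^2\ge\mu ,
\]
where the last inequality uses $\mu_r\le1$. This gives the PL inequality $\|\nabla\mathcal{L}\|^2=e^\top Ke\ge2\mu\,\mathcal{L}$ at every iterate in $\mathcal{R}$.

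\emph{Step 2 (local smoothness / one-step expansion).} Rather than bounding a Hessian globally, I will expand $F(\theta-\eta g;X)$ with $g=\nabla\mathcal{L}(\theta)$ directly. Because $F$ is multilinear in $(W_1,\dots,W_L)$,
\[
F(\theta-\eta g)=F(\theta)-\eta Jg+R ,
\]
where $R$ collects all terms containing two or more perturbation factors. This yields
\[
\mathcal{L}(\theta(t+1))=\mathcal{L}-\eta\,e^\top K e+\tfrac{\eta^2}{2}\|Jg\|^2+(\text{terms in }R).
\]
The terms $e^\top K e$ and $\|Jg\|^2$ are controlled by Step 1. By induction the loss is nonincreasing, so $\|e(\theta(t))\|\le\sqrt{2\mathcal{L}(\theta(0))}$. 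This is where the factor $\sqrt{2\mathcal{L}(\theta(0))}$ in $\beta$ comes from. The $\|X\|_F$ factor should arise from bounding the second-order cross terms $\sum_{l\ne k}B\,\delta W_k\cdots\delta W_l\cdots A$ with Frobenius norms. The target is an inequality of the form
\[
\mathcal{L}(\theta(t+1))\le\mathcal{L}-\eta\|g\|^2+\tfrac{\beta\eta^2}{2}\|g\|^2 ,
\]
which is the descent lemma with constant $\beta$.

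\emph{Step 3 (contraction).} For $\eta\le1/\beta$ the descent inequality gives $\mathcal{L}(\theta(t+1))\le\mathcal{L}-\tfrac{\eta}{2}\|g\|^2$. Combined with PL this yields $\mathcal{L}(\theta(t+1))\le(1-\eta\mu)\mathcal{L}(\theta(t))$, and it also closes the induction on the residual bound.

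\textbf{Main obstacle.} I expect Step 2 to be hardest. The segment from $\theta(t)$ to $\theta(t+1)$ need not lie in $\mathcal{R}$, because the hypothesis only constrains the endpoints. A naive Hessian bound on the segment is therefore unavailable, and the higher-order multilinear terms in $R$ must be controlled so that they reproduce exactly the stated $\beta$.

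My first attempt is to bound $\|g\|$ via $\|g\|\le\|J\|\,\|e\|$. I then use $\eta\le1/\beta$ to show that each perturbation $\eta\|\nabla_{W_l}\mathcal{L}\|$ is at most a small fraction of $\tau_l$. With that, Weyl's inequality keeps every point on the segment inside an enlarged box with $\sigma_{\max}\lesssim\tau_l$, and the Hessian bound $\|J\|^2+\|e\|\cdot\|\nabla^2F\|$ can be applied there. If the constants do not match this way, I will instead use the fact that both endpoints lie in $\mathcal{R}$. That lets me bound $\|F(\theta(t+1))-F(\theta(t))+\eta Jg\|$ directly through the telescoping identity for products $\prod W_l'-\prod W_l$, which needs bounds only at the endpoints.
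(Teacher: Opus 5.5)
Your Steps 1 and 3 match the paper's proof: the $r$-th Kronecker block with $\mu_r\le 1$ gives $\lambda_{\min}(K)$, the layerwise sum gives $\lambda_{\max}(K)$, and descent plus PL closes with the induction $\lVert e(\theta(t))\rVert\le\sqrt{2\mathcal{L}(\theta(0))}$.

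The gap is Step 2. It carries all the quantitative content of $\beta$, yet you only state the target inequality, and the obstacle you name is not real. The segment $\theta(t)+s\delta_t$, $s\in[0,1]$, can leave $\mathcal{R}$ only because the constraint $\sigma_{\min}(W_l)\ge\mu_l$ is nonconvex. Smoothness never uses the lower bounds. The upper-bound set $\mathcal{W}_{\text{up}}(\tau_1,\dots,\tau_L)$ is a product of spectral-norm balls, hence convex: $\lVert sW_l^{(t+1)}+(1-s)W_l^{(t)}\rVert_2\le s\tau_l+(1-s)\tau_l=\tau_l$. This one line is what the paper uses, so the whole segment lies in $\mathcal{W}_{\text{up}}$ with the \emph{same} $\tau_l$, and a Hessian or gradient-Lipschitz bound on the segment is available.

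The paper then proves a gradient-Lipschitz bound on $\mathcal{W}_{\text{up}}$ anchored at $\theta(t)$. It splits the gradient difference into two terms:
\begin{itemize}
\item $\lVert G(\theta)-G(\hat\theta)\rVert_2\lVert e(\theta)\rVert$, using Lipschitz constant $L\sigma_{\max}(X)\prod_l\tau_l$ for $G$;
\item $\lVert G\rVert_2\lVert F(\theta)-F(\hat\theta)\rVert$, using $\lVert G\rVert_2\le\sqrt{L}\sigma_{\max}(X)\prod_l\tau_l$ and Lipschitz constant $\sqrt{L}\lVert X\rVert_F\prod_l\tau_l$ for $F$.
\end{itemize}
The integral form of the descent lemma then gives exactly $\beta$. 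So $\lVert X\rVert_F$ enters through the Gauss--Newton term, not through the second-order cross terms as you guessed; those pair with the residual and produce the $\sqrt{2\mathcal{L}(\theta(0))}$ part.

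Your first route does not deliver the stated constant. A one-sided Weyl argument only places the segment in an enlarged box $\sigma_{\max}(W_l)\le\tau_l+\epsilon_l$, which inflates $(\prod_l\tau_l)^2$. In addition, a pointwise Hessian bound $\lVert J\rVert^2+\lVert e\rVert\,\lVert\nabla^2F\rVert$ along the segment involves the residual at intermediate points, not at $\theta(t)$.

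Your second route does work and is a legitimate alternative to the paper's segment argument, if organized correctly. Write $W_l'=W_l^{(t+1)}$ and $\delta_l=W_l'-W_l$. Do not split off $\tfrac{\eta^2}{2}\lVert Jg\rVert^2$, since that leaves cubic and quartic terms in $R$ to control. Instead write $\mathcal{L}(\theta(t+1))=\mathcal{L}(\theta(t))+e^\top\Delta F+\tfrac12\lVert\Delta F\rVert^2$ with the full increment $\Delta F$. Then $e^\top\Delta F=-\eta\lVert g\rVert^2+e^\top\mathrm{vec}(MX)$, where telescoping twice gives the exactly quadratic, endpoint-only remainder $M=\sum_{l<k}W'_{L:k+1}\,\delta_k\,W_{k-1:l+1}\,\delta_l\,W_{l-1:1}$. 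Bound the $\delta_l$ in Frobenius norm and $X$ by $\sigma_{\max}(X)$, both in $M$ and in $\lVert\Delta F\rVert\le\sqrt{L}\,\sigma_{\max}(X)\prod_l\tau_l\,\eta\lVert g\rVert$. This gives a second-order term at most $\tfrac{\eta^2}{2}\lVert g\rVert^2 L\sigma_{\max}(X)\bigl(\lVert e\rVert\prod_l\tau_l+\sigma_{\max}(X)(\prod_l\tau_l)^2\bigr)$. Using $\tau_l\ge 1$ and $\sigma_{\max}(X)\le\lVert X\rVert_F$, this is at most $\tfrac{\beta\eta^2}{2}\lVert g\rVert^2$.

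This route needs only $\theta(t),\theta(t+1)\in\mathcal{W}_{\text{up}}$, the same as the paper's convexity step. Note that for $t=T$ neither argument is covered by the hypothesis as literally stated, since $\theta(T+1)\in\mathcal{R}$ is not assumed.
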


\subsection{From Weight Matrices Conditioning to NTK Conditioning}
\label{subapp:weight-cond--ntk-cond}

In this subsection, we first review some basics of the neural tangent kernel (NTK), then derive the upper bound (Lemma \ref{lem:ntk-upper-bound}) and the lower bound (Lemma \ref{lem:ntk-lower-bound}) of NTK eigenvalues.

The neural tangent kernel (NTK) was introduced by \cite{jacot2018neural} (see also \cite{du2019gradient}). For a sample $x_i$, the parameter Jacobian is $\frac{\partial F(\theta; x_i)}{\partial \theta}\in\mathbb{R}^{P\times d_y}$, where $P$ is the number of parameters.
Stacking the Jacobians for $n$ samples gives the (global) Jacobian matrix
\[
G(\theta)
:= \frac{\partial F(\theta; X)}{\partial \theta} = \left[\frac{\partial F(\theta; x_1)}{\partial \theta},\dots, \frac{\partial F(\theta; x_n)}{\partial \theta} \right]
\in\mathbb{R}^{P\times n d_y}.
\]
The neural tangent kernel (NTK) is then defined as
\[
K(\theta):=G(\theta)^\top G(\theta)\in\mathbb{R}^{n d_y \times n d_y}.
\]

For the $L$-layer linear network with weights $\{W_l\}_{l=1}^L$ and widths $d_0,\dots,d_L$
(where $W_l\in\mathbb{R}^{d_l\times d_{l-1}}$ and $d_L=d_y$), we define the shorthand
\[
W_{i:j}:=W_i W_{i-1}\cdots W_j\qquad (i\ge j).
\]
Then the Jacobian $G(\theta)$ admits the block form
\[
G(\theta)= \begin{bmatrix}
    G_1(\theta) \\
    \vdots \\
    G_l(\theta) \\
    \vdots \\
    G_L(\theta) \\
\end{bmatrix} =
\begin{bmatrix}
X\otimes (W_{L:2})^\top \\
\vdots \\
W_{l-1,1} X\otimes (W_{L:l+1})^\top \\
\vdots \\
W_{L-1:1}X\otimes I_{d_L}
\end{bmatrix},
\]
where $\otimes$ denotes the Kronecker product and $I_{d_L}$ is the $d_L\times d_L$ identity matrix.
Here $G(\theta)$ is formed by stacking the layerwise Jacobian blocks
$G_l(\theta) \in \mathbb{R}^{d_{l-1}d_l \times n d_L}$ for
$l = 1,\dots,L$, so that in total $G(\theta)$ has
$P = \sum_{l=1}^L d_{l-1}d_l$ rows and $n d_L$ columns. Moreover, the NTK has the form
\[
K(\theta) =G(\theta)^\top G(\theta) = \sum_{l=1}^L G_l(\theta)^\top G_l(\theta).
\]

\begin{lem}[Upper bound of NTK eigenvalues]
\label{lem:ntk-upper-bound}
Consider a deep linear network of any shape.
For any $\theta \in \mathcal{W}_{\text{up}}(\tau_1, \dots, \tau_L)$, it holds that
\[
\lambda_{\max}(K(\theta)) \leq L \sigma_{\max}(X)^2 (\tau_1 \dots \tau_L)^2,
\]
or equivalently,
\[
\Vert G(\theta) \Vert_2 \leq \sqrt{L} \sigma_{\max}(X) \tau_1 \dots \tau_L.
\]
\end{lem}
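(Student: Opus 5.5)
We bound each layerwise block $G_l(\theta)$ separately and then sum, using the decomposition $K(\theta)=\sum_{l=1}^L G_l(\theta)^\top G_l(\theta)$ stated just before the lemma. Since each summand is positive semidefinite, Weyl's inequality (subadditivity of the largest eigenvalue over PSD sums) gives
\[
\lambda_{\max}(K(\theta)) \le \sum_{l=1}^L \lambda_{\max}\bigl(G_l(\theta)^\top G_l(\theta)\bigr) = \sum_{l=1}^L \|G_l(\theta)\|_2^2 .
\]
It therefore suffices to show $\|G_l(\theta)\|_2 \le \sigma_{\max}(X)\,\tau_1\cdots\tau_L$ for every $l$. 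The bound on $\|G(\theta)\|_2$ is equivalent because $\|G\|_2^2=\lambda_{\max}(G^\top G)$.

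For a fixed $l$, the block is a Kronecker product, $G_l(\theta)=(W_{l-1:1}X)\otimes (W_{L:l+1})^\top$. The spectral norm is multiplicative over Kronecker products: the singular values of $A\otimes B$ are exactly the products of the singular values of $A$ and of $B$. Hence
\[
\|G_l(\theta)\|_2=\|W_{l-1:1}X\|_2\,\|W_{L:l+1}\|_2 .
\]
Submultiplicativity of the operator norm then bounds each factor:
\[
\|W_{l-1:1}X\|_2\le \Bigl(\prod_{j<l}\tau_j\Bigr)\sigma_{\max}(X), \qquad \|W_{L:l+1}\|_2\le \prod_{j>l}\tau_j ,
\]
using $\sigma_{\max}(W_j)\le\tau_j$ on $\mathcal{W}_{\text{up}}$. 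For the boundary cases $l=1$ and $l=L$, the empty products equal $1$ and the identity factor $I_{d_L}$ has norm $1$.

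Multiplying the two factors gives $\|G_l(\theta)\|_2\le\sigma_{\max}(X)\,\tau_l^{-1}\prod_{j=1}^L\tau_j$. Since $\tau_l\ge 1$, this is at most $\sigma_{\max}(X)\prod_{j=1}^L\tau_j$, which is the needed per-block bound. Summing over the $L$ layers yields the claim.

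No step here is genuinely hard, and the argument needs no assumption on the network shape. The one point needing care is checking that the Kronecker block form of $G_l$ has its factors in the stated order and with the stated transposes. The norm identity for Kronecker products holds regardless of that bookkeeping, so a small misalignment would not affect the bound. The factor $\tau_l^{-1}$ we discard is exactly why the hypothesis $\tau_l\ge1$ appears.
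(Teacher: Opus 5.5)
Your proposal is correct and matches the paper's proof essentially step for step. You bound each block via the Kronecker-product norm identity and submultiplicativity, absorb the missing factor $\tau_l$ using $\tau_l \ge 1$, and then apply subadditivity of $\lambda_{\max}$ to $K(\theta)=\sum_l G_l(\theta)^\top G_l(\theta)$, with your explicit treatment of the boundary cases $l=1$ and $l=L$ being a small clarification the paper leaves implicit.
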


\begin{proof}
By the submultiplicative property of matrix norms, we have
\[
\begin{aligned}
\Vert W_{l-1:1} X \Vert_2 &\leq \Vert W_{l-1} \Vert_2 \dots \Vert W_1 \Vert_2 \Vert X \Vert_2 \leq \sigma_{\max}(X) \tau_1 \dots \tau_{l-1}, \\
\Vert W_{L:l+1} \Vert_2 &\leq \Vert W_{L} \Vert_2 \dots \Vert W_{l+1} \Vert_2 \leq \tau_L \dots \tau_{l+1}.
\end{aligned}
\]
Since $\tau_l \ge 1$ for any $l \in \{1,2,\dots, L\}$, we have
\[
\begin{aligned}
\sigma_{\max} (G_l(\theta)) &= \sigma_{\max} (W_{l-1:1} X\otimes (W_{L:l+1})^\top) = \sigma_{\max} (W_{l-1:1} X) \cdot \sigma_{\max} ((W_{L:l+1})^\top) \\
&\leq (\sigma_{\max}(X) \tau_1 \dots \tau_{l-1}) \cdot (\tau_{l+1} \dots \tau_{L}) \le \sigma_{\max}(X) \tau_1 \dots \tau_{L}.
\end{aligned}
\]
Since this holds for every \( l \), we have
\[
\lambda_{\max}(K(\theta)) \le \sum_{l=1}^L \lambda_{\max} (G_l(\theta)^\top G_l(\theta)) \le \sum_{l=1}^L \sigma_{\max} (G_l(\theta))^2 \leq L \sigma_{\max}(X)^2 (\tau_1 \dots \tau_L)^2.
\]
Thus,
\[
\lVert G(\theta) \rVert_2 = \sqrt{\lambda_{\max} (G(\theta)^\top G(\theta))} = \sqrt{\lambda_{\max}(K(\theta)) } \le \sqrt{L} \sigma_{\max}(X) \tau_1 \dots \tau_L.
\]
\end{proof}

Lemma~\ref{lem:ntk-upper-bound} shows that an upper bound on the spectral norms of the weight matrices (i.e., $\theta\in\mathcal{W}_{\text{up}}(\tau_1,\dots,\tau_L)$) directly implies an upper bound on the largest NTK eigenvalue $\lambda_{\max}(K(\theta))$. Notably, this argument does not depend on any pyramid-like constraint on the network widths and therefore applies to deep linear networks of arbitrary shapes. In contrast, to derive a meaningful lower bound on $\lambda_{\min}(K(\theta))$, the pyramidal-architecture assumption (Assumption~\ref{asmp:pyramid}) becomes essential.

\begin{lem}
\label{lem:sig-min-multiply}
Suppose $A_m, A_{m-1}, \dots, A_1$ are matrices of size $n_m \times n_{m-1}, n_{m-1} \times n_{m-2}, \dots, n_1 \times n_0$, where $n_m \geq n_{m-1} \geq \dots \geq n_0$. Suppose $\sigma_{\min}(A_i) \geq \mu_i, i = 1, \dots, m$. Then the product $M = A_m A_{m-1} \dots A_1$ satisfies $\sigma_{\min}(M) \geq \mu_1 \mu_2 \dots \mu_m$.
\end{lem}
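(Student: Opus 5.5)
The plan is to induct on $m$, using the variational characterization of the smallest singular value of a tall matrix. Because $n_i \ge n_{i-1}$, each $A_i$ is tall or square. Hence $\sigma_{\min}(A_i)$, the $n_{i-1}$-th singular value, satisfies
\[
\sigma_{\min}(A_i)=\min_{\|v\|_2=1}\|A_i v\|_2 ,
\]
where the minimum is over $v\in\mathbb{R}^{n_{i-1}}$. The same holds for $M\in\mathbb{R}^{n_m\times n_0}$, since $n_m\ge n_0$. This identity is where the monotone-width hypothesis enters. For a wide matrix, the smallest of its $\min(n_i,n_{i-1})$ singular values is not a lower bound on $\|A_i v\|_2/\|v\|_2$, because the kernel is nontrivial.

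For the base case $m=1$ there is nothing to prove. For the inductive step, write $M=A_m M'$ with $M'=A_{m-1}\cdots A_1\in\mathbb{R}^{n_{m-1}\times n_0}$. By the induction hypothesis, $\|M'v\|_2\ge \mu_1\cdots\mu_{m-1}\|v\|_2$ for every $v\in\mathbb{R}^{n_0}$. Applying the characterization to $A_m$ with the vector $M'v\in\mathbb{R}^{n_{m-1}}$ gives
\[
\|Mv\|_2=\|A_m(M'v)\|_2\ge \mu_m\|M'v\|_2\ge \mu_1\cdots\mu_m\|v\|_2 .
\]
Taking the minimum over unit $v$ and using the characterization for $M$ yields $\sigma_{\min}(M)\ge\mu_1\cdots\mu_m$.

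The only step needing care is justifying the variational identity for tall matrices. I would do this via the SVD $A=U\Sigma V^\top$ with $V$ square orthogonal and $\Sigma$ having $n_{i-1}$ diagonal entries. Then $\|Av\|_2^2=\sum_j\sigma_j^2 (V^\top v)_j^2\ge\sigma_{\min}^2\|v\|_2^2$, with equality at the corresponding right singular vector. Apart from this, the argument is a routine chaining of inequalities; I do not expect any real obstacle.
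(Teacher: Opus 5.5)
Your proposal is correct and is essentially the paper's argument: the paper proves the two-factor bound $\sigma_{\min}(AB)\ge\sigma_{\min}(A)\sigma_{\min}(B)$ for tall $A,B$ via the Rayleigh-quotient chain $u^\top B^\top A^\top A B u\ge\lambda_{\min}(A^\top A)\|Bu\|^2\ge\lambda_{\min}(A^\top A)\lambda_{\min}(B^\top B)$ and then iterates it, which is your norm-chaining induction written as quadratic forms. Both versions tacitly need $\mu_i\ge 0$ when multiplying the inequalities together, which is the intended setting (in the paper $\mu_l>0$).
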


\begin{proof}
We first prove the following result: for a $k_1 \times k_2$ matrix $A$ and a $k_2 \times k_3$ matrix $B$, where $k_1 \geq k_2 \geq k_3$,
\[
\sigma_{\min}(AB) \geq \sigma_{\min}(A) \sigma_{\min}(B).
\]
This is proved by the following chain of inequalities:
\begin{align*}
    \lambda_{\min}(B^T A^T A B) = \min_{\|u\| = 1} u^T B^T A^T A B u \geq & \lambda_{\min}(A^T A) \|B u\|^2 = \lambda_{\min}(A^T A) u^T B^T B u \\
    \geq & \lambda_{\min}(A^T A) \lambda_{\min}(B^T B).
\end{align*}
Since matrices $A$, $B$, and $AB$ are all tall matrices, we have
\begin{align*}
    \sigma_{\min}(AB) = \sqrt{\lambda_{\min}((AB)^\top AB)} \ge \sqrt{\lambda_{\min}(A^\top A)} \sqrt{\lambda_{\min}(B^\top B)} = \sigma_{\min}(A) \sigma_{\min}(B).
\end{align*}
Applying the result multiple times, we immediately obtain the desired result.
\end{proof}

\begin{lem}
[Lower bound of NTK eigenvalues]
\label{lem:ntk-lower-bound}
Consider an $L$-layer pyramidal linear network satisfying Assumptions~\ref{asmp:pyramid} and~\ref{asmp:full-col-rank}.
For any $\theta \in \mathcal{W}_{\rm low} (\mu_1, \dots, \mu_L)$, it holds that
\[
\lambda_{\min}(K(\theta)) \geq \sigma_{\min}(X)^2 (\mu_1 \dots \mu_L)^2.
\]
\end{lem}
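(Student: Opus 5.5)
Since $K(\theta)=\sum_{l=1}^L G_l(\theta)^\top G_l(\theta)$ is a sum of positive semidefinite matrices, dropping all blocks except one gives $K(\theta)\succeq G_r(\theta)^\top G_r(\theta)$. Here $r$ is the peak index from Assumption~\ref{asmp:pyramid}. It therefore suffices to show
\[
\lambda_{\min}\bigl(G_r(\theta)^\top G_r(\theta)\bigr)=\sigma_{\min}(G_r(\theta))^2\ge \sigma_{\min}(X)^2(\mu_1\cdots\mu_L)^2 .
\]
In this notation, $\sigma_{\min}$ of a tall matrix means its $n d_L$-th singular value. Recall $G_r(\theta)=W_{r-1:1}X\otimes (W_{L:r+1})^\top$, a $d_{r-1}d_r\times n d_L$ matrix. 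The singular values of a Kronecker product are all pairwise products of the singular values of the factors. So once both factors are tall with full column rank, the smallest relevant singular value is $\sigma_{\min}(W_{r-1:1}X)\,\sigma_{\min}((W_{L:r+1})^\top)$.

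\textbf{First factor.} The matrix $W_{r-1:1}X=W_{r-1}\cdots W_1 X$ is a product with sizes $d_{r-1}\times d_{r-2},\dots,d_1\times d_0$, $d_0\times n$. These dimensions are nondecreasing, since $n\le d_0\le d_1\le\cdots\le d_{r-1}$ by Assumption~\ref{asmp:full-col-rank} (full column rank forces $n\le d_x$) and the pyramid assumption. Lemma~\ref{lem:sig-min-multiply} applies directly, with $X$ as the first factor and lower bound $\sigma_{\min}(X)$. It gives $\sigma_{\min}(W_{r-1:1}X)\ge \sigma_{\min}(X)\mu_1\cdots\mu_{r-1}$.

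\textbf{Second factor.} The matrix $(W_{L:r+1})^\top=W_{r+1}^\top\cdots W_L^\top$ has factor sizes $d_r\times d_{r+1},\dots,d_{L-1}\times d_L$. These are nondecreasing, since $d_L\le\cdots\le d_r$. Each $W_l^\top$ for $l>r$ is tall, and its smallest singular value equals that of $W_l$, so it is at least $\mu_l$. Lemma~\ref{lem:sig-min-multiply} again yields $\sigma_{\min}((W_{L:r+1})^\top)\ge \mu_{r+1}\cdots\mu_L$. If $r=L$, this factor is $I_{d_L}$ with value $1$; if $r=1$, the first factor is just $X$.

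\textbf{The layer-$r$ bound.} The product above is missing $\mu_r$. Because $\mu_r\le 1$ by the standing convention $\mu_l\le 1$, we get
\[
\sigma_{\min}(G_r)\ge \sigma_{\min}(X)\prod_{l\ne r}\mu_l\ge \sigma_{\min}(X)\prod_{l=1}^L\mu_l .
\]
Squaring gives the claim.

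\textbf{Main obstacle.} The delicate step is the rectangular bookkeeping. One must check that each product is genuinely tall, so that $\sigma_{\min}$ is the $n$-th (respectively $d_L$-th) singular value and Lemma~\ref{lem:sig-min-multiply} applies. One must also justify the Kronecker singular-value fact in the form needed: the $n d_L$ columns of $G_r$ correspond exactly to pairs of the $n$ and $d_L$ nonzero singular values, so the minimum is the product of the minima. This is where the pyramidal assumption is used: choosing the peak layer $r$ makes both sides of the Kronecker factor monotone.
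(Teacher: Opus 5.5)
Your proposal is correct and follows the paper's proof essentially step for step. You keep only the peak-layer block $G_r(\theta)$, bound the Kronecker factors $W_{r-1:1}X$ and $(W_{L:r+1})^\top$ via Lemma~\ref{lem:sig-min-multiply} as products of tall matrices, multiply the minimal singular values, and absorb the missing $\mu_r$ using $\mu_r\le 1$. Your write-up is slightly cleaner than the paper's, which mislabels $W_{r-1:1}X$ as $W_rW_{r-1}\cdots W_1X$ and writes an equality where only $\ge$ holds; you also treat the edge cases $r=1$ and $r=L$ explicitly.
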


\begin{proof}
Recall the pyramidal-architecture assumption (Assumption \ref{asmp:pyramid}), we have
\[
d_0 \le d_1 \le \dots \le d_r \quad \text{and} \quad d_r \ge d_{r+1} \ge \dots \ge d_L.
\]
We focus on the $r$-th Jacobian block $G_r(\theta) = (W_{r-1:1}X) \otimes (W_{L:r+1})^\top$, which is a $(d_{l-1}d_l \times d_{y}n)$ matrix.
Since the number of training samples satisfies $n \le d_x = d_0$ (a consequence of Assumption~\ref{asmp:full-col-rank}),
\[
W_{r-1:1}X = W_r W_{r-1} \dots W_1 X
\]
is a multiplication of many tall matrices. By Lemma \ref{lem:sig-min-multiply}, we have
\[
\sigma_{\min}(W_{r-1:1} X) \geq \sigma_{\min}(X) \mu_1 \dots \mu_{r-1}.
\]
Similarly, $(W_{L:r+1})^\top = W_{r+1}^\top \dots W_L^\top$ is also a multiplication of many tall matrices, so
\[
\sigma_{\min}((W_{L:r+1})^\top) \geq \mu_{r+1} \dots \mu_L.
\]
and $\sigma_{\min}(W^T_{L:r+1}) \geq \mu_{r+1} \dots \mu_L$.
Since that $G_r(\theta)$ is also a tall matrix, we have
\begin{align*}
    \sigma_{\min}(G_r(\theta)) &= \sigma_{nd_y}(G_r(\theta)) = \sigma_{n} (W_{r-1:1} X) \cdot \sigma_{d_y} ((W_{L:r+1})^\top) = \sigma_{\min} (W_{r-1:1} X) \cdot \sigma_{\min} ((W_{L:r+1})^\top) \\
    &= (\sigma_{\min}(X) \mu_1 \dots \mu_{r-1}) (\mu_{r+1} \dots \mu_L) \\
    &\ge \sigma_{\min}(X) \mu_1 \dots \mu_{r-1} \mu_r \mu_{r+1} \dots \mu_L,
\end{align*}
where the inequality follows from the assumption that $\mu_r \le 1$.

Therefore,
\[
K(\theta) = \sum_{l=1}^L G_l^\top(\theta) G_l(\theta) \succeq G^\top_{r}(\theta) G_{r}(\theta) \succeq (\sigma_{\min}(X) \mu_1 \dots \mu_L)^2 I_n.
\]
\end{proof}

\subsection{From NTK Conditioning to PL Inequality and Smoothness of the Loss}
\label{subapp:ntk-cond--pl+smooth}

In this subsection, we convert the NTK eigenvalue bounds from Appendix \ref{subapp:weight-cond--ntk-cond} into two key analytic properties of the training loss function. More precisely, Lemma \ref{lem:pl-loss} uses the lower bound on the minimum NTK eigenvalue to derive a Polyak–Łojasiewicz (PL) inequality for the squared loss, while Lemma \ref{lem:smoothness} uses the upper bound on the maximum NTK eigenvalue to establish the local smoothness of the loss function. We note that both results hold provided that the parameters remain within the well-conditioned region.

\begin{lem}
[PL inequality]
\label{lem:pl-loss}
Consider an $L$-layer pyramidal linear network satisfying Assumptions~\ref{asmp:pyramid} and~\ref{asmp:full-col-rank}. For any $\theta \in \mathcal{W}_{\rm low} (\mu_1, \dots, \mu_L)$, it holds that
\[
\lVert \nabla \mathcal{L}(\theta;X) \rVert^2 \ge 2 \mu \mathcal{L}(\theta;X),
\]
where $\mu := \sigma_{\min}(X)^2 (\mu_1 \mu_2 \dots \mu_L)^2$.
\end{lem}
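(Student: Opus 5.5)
The argument is the standard one for squared loss: express the gradient through the Jacobian $G(\theta)$ and apply the NTK lower bound of Lemma~\ref{lem:ntk-lower-bound}.

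First, write the residual $e(\theta) := F(\theta;X) - y \in \mathbb{R}^{n d_y}$, so that $\mathcal{L}(\theta;X) = \frac12 \|e(\theta)\|^2$. By the chain rule and the definition of the Jacobian $G(\theta) = \partial F(\theta;X)/\partial\theta \in \mathbb{R}^{P\times n d_y}$, the gradient, viewed as a vector in $\mathbb{R}^P$ under the same vectorization of $\theta$ used to build the Kronecker blocks $G_l(\theta)$, is
\[
\nabla \mathcal{L}(\theta;X) = G(\theta)\, e(\theta).
\]
The only point needing care here is that the vectorization convention for $\operatorname{vec}(W_l)$ matches the block form of $G_l(\theta)$ given earlier. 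Since the Euclidean norm of the gradient equals the Frobenius norm summed over layers regardless of ordering, any consistent convention works.

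Next, compute
\[
\|\nabla \mathcal{L}(\theta;X)\|^2 = e(\theta)^\top G(\theta)^\top G(\theta)\, e(\theta) = e(\theta)^\top K(\theta)\, e(\theta) \ge \lambda_{\min}(K(\theta))\, \|e(\theta)\|^2 .
\]
For $\theta \in \mathcal{W}_{\rm low}(\mu_1,\dots,\mu_L)$, under Assumptions~\ref{asmp:pyramid} and~\ref{asmp:full-col-rank}, Lemma~\ref{lem:ntk-lower-bound} gives $\lambda_{\min}(K(\theta)) \ge \sigma_{\min}(X)^2(\mu_1\cdots\mu_L)^2 = \mu$. Therefore
\[
\|\nabla\mathcal{L}(\theta;X)\|^2 \ge \mu \|e(\theta)\|^2 = 2\mu\,\mathcal{L}(\theta;X),
\]
which is the claim.

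No step is a real obstacle, since all the substantive work (the pyramidal structure and the singular-value products) is already contained in Lemma~\ref{lem:ntk-lower-bound}. The main thing to verify is that $K(\theta)$ is $n d_y \times n d_y$, with $e(\theta)$ living in that same space, so that the Rayleigh-quotient bound applies to an arbitrary residual direction.
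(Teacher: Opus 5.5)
Your proof is correct and is essentially the paper's argument: write $\nabla\mathcal{L}(\theta;X) = G(\theta)e(\theta)$, bound $e^\top K(\theta) e \ge \lambda_{\min}(K(\theta))\|e\|^2$, and invoke Lemma~\ref{lem:ntk-lower-bound} to get $\lambda_{\min}(K(\theta)) \ge \mu$. Your added remarks on the vectorization convention and on $e(\theta)\in\mathbb{R}^{nd_y}$ are accurate bookkeeping that the paper leaves implicit.
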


\begin{proof}
For the quadratic loss function, we have
\[
\nabla \mathcal{L}(\theta;X)=G(\theta)e(\theta),
\qquad \text{where }
e(\theta)= F(\theta;X)-y.
\]
By Lemma \ref{lem:ntk-lower-bound}, we have
\[
\lVert \nabla \mathcal{L}(\theta;X) \rVert^2 = e(\theta)^\top G(\theta)^\top G(\theta)e(\theta) \ge \mu \lVert e(\theta) \rVert^2 = 2 \mu \mathcal{L}(\theta;X).
\]

\end{proof}

\begin{rem}[Global minimum is zero]
To interpret Lemma~\ref{lem:pl-loss} as a standard PL inequality, We note that the model is \emph{realizable} in our setting, i.e., the network can interpolate the data, and hence the global minimum of $\mathcal{L}(\cdot;X)$ is zero.

Under the pyramidal-architecture assumption (Assumption~\ref{asmp:pyramid}), the widths satisfy the no-bottleneck condition
\[
\min_{l=1,\dots,L-1} d_l \;\ge\; \min(d_0,d_L).
\]
It is shown by \citet{laurent2018deep} that, under this condition, the deep linear parametrization \(A=W_L\cdots W_1\) can represent \emph{any} linear map \(A\in\mathbb{R}^{d_y\times d_x}\). Since the input matrix \(X\) has full column rank (Assumption~\ref{asmp:full-col-rank}), there exists a linear map \(A^\star\in\mathbb{R}^{d_y\times d_x}\) that interpolates the data, i.e., \(A^\star X = Y\). Combining these two facts, there exist weights \((W_1^\star,\dots,W_L^\star)\) such that
\[
W_L^\star\cdots W_1^\star X = Y,
\]
and hence \(\mathcal{L}(\theta^\star;X)=0\). Therefore, the global minimum value of \(\mathcal{L}(\cdot;X)\) is zero, and Lemma~\ref{lem:pl-loss} indeed provides the usual PL condition
\[
\lVert \nabla \mathcal{L}(\theta;X) \rVert^2 \ge 2 \mu \left( \mathcal{L}(\theta;X) - \mathcal{L}(\theta^*;X) \right).
\]
\end{rem}

\begin{lem}
[Lipschitz continuity of $F$]
\label{lem:F-lip}
Consider an $L$-layer linear network of any shape.
For any $\theta, \hat{\theta} \in \mathcal{W}_{\rm up}(\tau_1,\dots, \tau_L)$, it holds that
\[
\Vert F(\theta; X) - F(\hat{\theta}; X) \Vert \le \sqrt{L} \Vert X \Vert_{\rm F} \tau_L \dots \tau_1 \cdot \Vert \theta - \hat{\theta} \Vert .
\]
\end{lem}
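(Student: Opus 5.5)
We prove Lemma~\ref{lem:F-lip} by a telescoping argument over the layers, swapping one weight matrix at a time. For $l=0,1,\dots,L$ we define the hybrid parameter $\theta^{(l)}=(\hat W_1,\dots,\hat W_l,W_{l+1},\dots,W_L)$. Then $\theta^{(0)}=\theta$ and $\theta^{(L)}=\hat\theta$. Consecutive hybrids differ only in layer $l$, so
\[
F(\theta;X)-F(\hat\theta;X)=\sum_{l=1}^L \bigl(F(\theta^{(l-1)};X)-F(\theta^{(l)};X)\bigr).
\]
The predictions are stacked as a vector, so $\|F(\theta;X)\|$ equals the Frobenius norm of the $d_y\times n$ matrix $W_{L:1}X$. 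The $l$-th summand is therefore
\[
\bigl\|W_{L:l+1}(W_l-\hat W_l)\hat W_{l-1:1}X\bigr\|_F .
\]

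We bound each summand with $\|ABC\|_F\le\|A\|_2\|B\|_F\|C\|_2$. Applying this with $C=\hat W_{l-1:1}X$ would give the factor $\|C\|_2\le\sigma_{\max}(X)\prod_{j<l}\tau_j$. The statement instead has $\|X\|_F$, so we use the variant $\|ABC\|_F\le \|A\|_2\|B\|_2\|C\|_F$. This gives
\[
\|C\|_F\le \prod_{j<l}\|\hat W_j\|_2\,\|X\|_F\le \|X\|_F\prod_{j<l}\tau_j .
\]
We also bound $\|W_l-\hat W_l\|_2\le\|W_l-\hat W_l\|_F$ and $\|W_{L:l+1}\|_2\le\prod_{j>l}\tau_j$. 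These bounds need only that every $W_j$ and every $\hat W_j$ lies in $\mathcal W_{\rm up}$, which holds because $\theta,\hat\theta\in\mathcal W_{\rm up}$.

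Combining, the $l$-th summand is at most $\|X\|_F\prod_{j\ne l}\tau_j\,\|W_l-\hat W_l\|_F$. Since $\tau_l\ge1$, this is at most $\|X\|_F\,\tau_L\cdots\tau_1\,\|W_l-\hat W_l\|_F$. Summing over $l$ and applying Cauchy--Schwarz,
\[
\sum_{l=1}^L\|W_l-\hat W_l\|_F\le\sqrt L\Bigl(\sum_l\|W_l-\hat W_l\|_F^2\Bigr)^{1/2}=\sqrt L\,\|\theta-\hat\theta\|,
\]
which yields the claimed constant.

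No step is hard. The only care needed is to notice that the hybrid points $\theta^{(l)}$ need not be in any region: every factor is bounded layerwise by $\tau_j$, so convexity of $\mathcal W_{\rm up}$ is never used.
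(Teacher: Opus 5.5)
Your proof is correct and follows essentially the same route as the paper: telescope the product difference one layer at a time, bound each term by submultiplicativity and $\prod_{j\neq l}\tau_j\le\tau_L\cdots\tau_1$ (using $\tau_l\ge 1$), then finish with Cauchy--Schwarz and $\|\cdot\|_2\le\|\cdot\|_F$. The differences are cosmetic. The paper pulls $X$ out first via $\|(W_{L:1}-\hat W_{L:1})X\|_F\le\|W_{L:1}-\hat W_{L:1}\|_2\|X\|_F$ and puts the hatted factors on the left of each telescoping term; and the option you discarded, with $\sigma_{\max}(X)$ in place of $\|X\|_F$, would also have sufficed, since it gives a stronger bound.
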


\begin{proof}
For any $\theta, \hat{\theta} \in \mathcal{W}_{\rm up}(\tau_1,\dots, \tau_L)$, we have
\begin{equation}
\label{01022045}
\begin{aligned}
\|F(\theta;X)-F(\hat{\theta};X)\|
&=\|W_L\cdots W_1X-\hat{W}_L\cdots \hat{W}_1X\|_{\rm F} \\
&\le \|W_L\cdots W_1-\hat{W}_L\cdots \hat{W}_1\|_2\,\|X\|_{\rm F} \\
&= \Big\|\sum_{l=1}^L \hat{W}_L\cdots \hat{W}_{l+1}(W_l-\hat{W}_l)W_{l-1}\cdots W_1\Big\|_2\,\|X\|_F.
\end{aligned}
\end{equation}
By the assumption $\tau_l \ge 1$ and Cauchy's inequality, we obtain
\begin{align*}
\text{RHS of (\ref{01022045})} &\le \|X\|_F \sum_{l=1}^L \lVert \hat{W}_L \rVert_2 \dots \lVert\hat{W}_{l+1} \rVert_2 \lVert W_{l} - \hat{W}_{l} \rVert_2 \lVert \hat{W}_{l-1} \rVert_2 \dots \lVert\hat{W}_{1} \rVert_2  \\
&\le \|X\|_F \sum_{i=1}^L \tau_L\ \dots \tau_{l+1} \|W_i-\hat{W}_i\|_2 \tau_{l-1} \cdots \tau_1 \\
&\le \tau_L\cdots \tau_1\,\|X\|_F
\sum_{i=1}^L \|W_i-\hat{W}_i\|_2 \\
&\le \sqrt{L} \tau_L\cdots \tau_1\,\|X\|_F
\sqrt{\sum_{i=1}^L \|W_i-\hat{W}_i\|_2^2}.
\end{align*}
We can relax
\[
\sqrt{\sum_{i=1}^L \|W_i-\hat{W}_i\|_2^2}
\le
\sqrt{\sum_{i=1}^L \|W_i-\hat{W}_i\|_F^2}
= \|\theta-\hat{\theta}\|,
\]
thus obtaining the desired result.
\end{proof}

\begin{lem}
[Lipschitz continuity of $G$]
\label{lem:G-lip}
Consider a deep linear network of any shape. Assume \( \sigma_{\max}(X) \leq \tau_0 \). For any $\theta, \hat{\theta} \in \mathcal{W}_{\rm up}(\tau_1,\dots, \tau_L)$, it holds that
\[
\Vert G(\theta) - G(\hat{\theta}) \Vert_2 \leq L \sigma_{\max}(X) \tau_L \dots \tau_1  \Vert \theta - \hat{\theta} \Vert.
\]
\end{lem}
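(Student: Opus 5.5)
We bound the difference block by block, reusing the telescoping argument from Lemma~\ref{lem:F-lip}. Since $G(\theta)-G(\hat\theta)$ is the vertical stack of the blocks $G_l(\theta)-G_l(\hat\theta)$, we have
\[
\|G(\theta)-G(\hat\theta)\|_2^2 \le \sum_{l=1}^L \|G_l(\theta)-G_l(\hat\theta)\|_2^2 .
\]
It therefore suffices to show that each block satisfies
\[
\|G_l(\theta)-G_l(\hat\theta)\|_2 \le \sqrt{L}\,\sigma_{\max}(X)\,\tau_L\cdots\tau_1\,\|\theta-\hat\theta\|.
\]
Summing the squares of these $L$ bounds and taking the square root then gives the factor $L$ in the statement.

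For a fixed $l$, we write $A=W_{l-1:1}X$ and $B=(W_{L:l+1})^\top$, with hatted analogues $\hat A,\hat B$. We split the difference as
\[
A\otimes B-\hat A\otimes\hat B=(A-\hat A)\otimes B+\hat A\otimes(B-\hat B).
\]
We then use $\|M\otimes N\|_2=\|M\|_2\|N\|_2$. The factor $\|B\|_2$ is at most $\tau_L\cdots\tau_{l+1}$, and the factor $\|\hat A\|_2$ is at most $\sigma_{\max}(X)\tau_{l-1}\cdots\tau_1$.

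The differences $A-\hat A$ and $B-\hat B$ are handled by the telescoping identity from Lemma~\ref{lem:F-lip}. Writing $W_{l-1:1}-\hat W_{l-1:1}$ as a sum of terms with a single factor $W_j-\hat W_j$ and bounding each remaining factor by its $\tau$ gives
\[
\|A-\hat A\|_2\le \sigma_{\max}(X)\Bigl(\prod_{j<l}\tau_j\Bigr)\sum_{j<l}\|W_j-\hat W_j\|_2 ,
\]
where we used $\tau_j\ge1$ to absorb the missing factor. Similarly,
\[
\|B-\hat B\|_2\le \Bigl(\prod_{j>l}\tau_j\Bigr)\sum_{j>l}\|W_j-\hat W_j\|_2 .
\]
Combining the two pieces, with $\tau_l\ge1$ inserted so that the full product appears, yields
\[
\|G_l(\theta)-G_l(\hat\theta)\|_2\le \sigma_{\max}(X)\,\tau_L\cdots\tau_1\sum_{j\ne l}\|W_j-\hat W_j\|_2 .
\]
By Cauchy--Schwarz and $\|\cdot\|_2\le\|\cdot\|_F$, the sum is at most $\sqrt{L-1}\,\|\theta-\hat\theta\|\le\sqrt{L}\,\|\theta-\hat\theta\|$. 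This is the per-block bound, and the statement follows as described above. The hypothesis $\sigma_{\max}(X)\le\tau_0$ is not needed beyond naming the constant.

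There is no real obstacle; the only steps needing care are the bookkeeping of the constant $L$ and the inequalities used along the way. The first is the stacking inequality $\|\cdot\|_2^2\le\sum_l\|\cdot\|_2^2$, which holds because for a vertical stack $\|[M_1;\dots;M_L]v\|^2=\sum_l\|M_lv\|^2$. The second is the Kronecker norm identity. One could instead bound the stacked norm directly by $\sum_l\|G_l(\theta)-G_l(\hat\theta)\|_2$, but this would cost a factor $L^{3/2}$ rather than $L$, so the sum-of-squares route is the one to use.
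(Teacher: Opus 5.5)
Your proposal is correct and follows essentially the same route as the paper's proof. The shared steps are:
\begin{itemize}
\item the stacking bound $\|G(\theta)-G(\hat\theta)\|_2\le\sqrt{L}\max_l\|G_l(\theta)-G_l(\hat\theta)\|_2$, which is equivalent to your sum-of-squares form;
\item the split $(A-\hat A)\otimes B+\hat A\otimes(B-\hat B)$ together with $\|M\otimes N\|_2=\|M\|_2\|N\|_2$;
\item telescoping bounds on each difference, with $\tau_j\ge 1$ absorbing the missing factors;
\item Cauchy--Schwarz to obtain the per-block factor $\sqrt{L}$.
\end{itemize}
You are also right that the hypothesis $\sigma_{\max}(X)\le\tau_0$ plays no role.
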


\begin{proof}
Recall that
\[
G(\theta)^\top=[G_1(\theta)^\top;G_2(\theta)^\top;\dots;G_L(\theta)^\top],
\qquad
G_l(\theta)=(W_{l-1:1}X)\otimes (W_{L:l+1})^{\top}.
\]
So we have
\begin{equation}
\label{01021954}
\begin{aligned}
\|G(\theta)-G(\hat{\theta})\|_2
&=\sqrt{\lambda_{\max}\!\big((G(\theta)-G(\hat{\theta}))^{\top}(G(\theta)-G(\hat{\theta}))\big)} \\
&=\sqrt{\lambda_{\max}\!\Big(\sum_{l=1}^L (G_l(\theta)-G_l(\hat{\theta}))^{\top}(G_l(\theta)-G_l(\hat{\theta}))\Big)} \\
&\le \sqrt{L}\,\max_{l}\|G_l(\theta)-G_l(\hat{\theta})\|_2.
\end{aligned}
\end{equation}

We denote $G_l(\hat{\theta})=(\hat{W}_{l-1:1}X)\otimes (\hat{W}_{L:l+1})^{\top}$. Then for any $l \in \{1,2,\dots, L\}$, we have
\begin{equation}
\label{01021955}
\begin{aligned}
\|G_l(\theta)-G_l(\hat{\theta})\|_2
&=\|(W_{l-1:1}X)\otimes (W_{L:l+1})^{\top}-(\hat{W}_{l-1:1}X)\otimes (\hat{W}_{L:l+1})^{\top}\|_2 \\
&=\|(W_{l-1:1}X-\hat{W}_{l-1:1}X)\otimes (W_{L:l+1})^{\top}
+(\hat{W}_{l-1:1}X)\otimes (W_{L:l+1}-\hat{W}_{L:l+1})^{\top}\|_2 \\
&\le \|(W_{l-1:1}X-\hat{W}_{l-1:1}X)\otimes (W_{L:l+1})^{\top}\|_2
+ \|(\hat{W}_{l-1:1}X)\otimes (W_{L:l+1}-\hat{W}_{L:l+1})^{\top}\|_2 \\
&= \|W_{l-1:1}X-\hat{W}_{l-1:1}X\|_2\,\|W_{L:l+1}\|_2
+ \|\hat{W}_{l-1:1}X\|_2\,\|W_{L:l+1}-\hat{W}_{L:l+1}\|_2.
\end{aligned}
\end{equation}
Following the same proof strategy as in Lemma \ref{lem:F-lip}, we obtain
\begin{align*}
\|W_{l-1:1}X-\hat{W}_{l-1:1}X\|_2 &\le \tau_{l-1}\cdots \tau_1 \sigma_{\max}(X) \sum_{i=1}^{l-1}\lVert W_i-\hat{W}_i\rVert_2, \\
\|W_{L:l+1}-\hat{W}_{L:l+1}\|_2 &\le \tau_{l+1}\cdots \tau_L  \sum_{i=l+1}^{L}\lVert W_i-\hat{W}_i\rVert_2.
\end{align*}
By the submultiplicative property of matrix norms, we have
\begin{align*}
\|\hat{W}_{l-1:1}X\|_2 \le \tau_{l-1}\cdots \tau_1 \sigma_{\max}(X), \qquad
\|W_{L:l+1}\|_2 \le \tau_{l+1}\cdots \tau_L.
\end{align*}
Since we assume that $\tau_l \geq 1$, we get
\begin{align*}
    &\ \text{RHS of (\ref{01021955})} \\
    &\le \left( \tau_{l-1}\cdots \tau_1 \sigma_{\max}(X) \sum_{i=1}^{l-1}\lVert W_i-\hat{W}_i\rVert_2 \right) (\tau_{l+1}\cdots \tau_L) + \left( \tau_{l-1}\cdots \tau_1 \sigma_{\max}(X) \right) \left( \tau_{l+1}\cdots \tau_L  \sum_{i=l+1}^{L}\lVert W_i-\hat{W}_i\rVert_2 \right) \\
    &\le \sigma_{\max}(X) \tau_1 \cdots \tau_L \sum_{l=1}^L \lVert W_i-\hat{W}_i\rVert_2 \\
    &\le \sqrt{L} \sigma_{\max}(X) \tau_1 \cdots \tau_L \lVert \theta - \hat{\theta} \rVert.
\end{align*}
Here, the last inequality follows the proof strategy of Lemma \ref{lem:F-lip}. Combining it with (\ref{01021954}) and (\ref{01021955}), we get
\begin{equation*}
    \lVert G(\theta)-G(\hat{\theta})\rVert_2 \le \sqrt{L} \max_l \lVert G_l(\theta)-G_l(\hat{\theta})\rVert_2 \le L \sigma_{\max}(X) \tau_1 \cdots \tau_L \lVert \theta - \hat{\theta} \rVert.
\end{equation*}
\end{proof}

\begin{lem}[Local smoothness of loss function]
\label{lem:smoothness}
Consider an $L$-layer linear network \( F(\theta; x) = W_L \dots W_1 x \). Then
\[
\Vert \nabla \mathcal{L}(\theta; X) - \nabla \mathcal{L}(\hat{\theta}; X) \Vert \leq \beta(\theta) \Vert \theta - \hat{\theta} \Vert, \quad \forall \theta, \hat{\theta} \in \mathcal{W}_{\text{up}}(\tau_1, \dots, \tau_L),
\]
where
\[
\beta(\theta) = L \sigma_{\max}(X) \left(\sqrt{2 \mathcal{L}(\theta)} + \lVert X \rVert_{\rm F} \right)  (\tau_1 \cdots \tau_L)^2.
\]
\end{lem}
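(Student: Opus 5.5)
The plan is to write the gradient as $\nabla \mathcal{L}(\theta;X) = G(\theta) e(\theta)$ with $e(\theta) = F(\theta;X) - y$, as in the proof of Lemma~\ref{lem:pl-loss}. Then we split the difference by adding and subtracting $G(\hat\theta)e(\theta)$:
\[
\nabla \mathcal{L}(\theta) - \nabla \mathcal{L}(\hat\theta)
= \bigl(G(\theta) - G(\hat\theta)\bigr) e(\theta) + G(\hat\theta)\bigl(e(\theta) - e(\hat\theta)\bigr).
\]
The triangle inequality then bounds the norm by two terms:
\[
\|G(\theta)-G(\hat\theta)\|_2\,\|e(\theta)\| \quad\text{and}\quad \|G(\hat\theta)\|_2\,\|F(\theta;X)-F(\hat\theta;X)\|.
\]
The split is chosen this way so that the residual that appears is $e(\theta)$, not $e(\hat\theta)$. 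This matches the statement, where $\beta(\theta)$ depends on $\mathcal{L}(\theta)$.

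For the first term, Lemma~\ref{lem:G-lip} gives $\|G(\theta)-G(\hat\theta)\|_2 \le L\sigma_{\max}(X)\tau_1\cdots\tau_L\|\theta-\hat\theta\|$. We also have $\|e(\theta)\| = \sqrt{2\mathcal{L}(\theta)}$. So the first term is at most $L\sigma_{\max}(X)\sqrt{2\mathcal{L}(\theta)}\,\tau_1\cdots\tau_L\|\theta-\hat\theta\|$.

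For the second term, Lemma~\ref{lem:ntk-upper-bound} gives $\|G(\hat\theta)\|_2 \le \sqrt{L}\sigma_{\max}(X)\tau_1\cdots\tau_L$, since $\hat\theta \in \mathcal{W}_{\rm up}$. Lemma~\ref{lem:F-lip} gives $\|F(\theta;X)-F(\hat\theta;X)\| \le \sqrt{L}\|X\|_F\tau_1\cdots\tau_L\|\theta-\hat\theta\|$. Multiplying these, the second term is at most $L\sigma_{\max}(X)\|X\|_F(\tau_1\cdots\tau_L)^2\|\theta-\hat\theta\|$.

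Adding the two bounds gives the coefficient $L\sigma_{\max}(X)\bigl(\sqrt{2\mathcal{L}(\theta)}\,\tau_1\cdots\tau_L + \|X\|_F(\tau_1\cdots\tau_L)^2\bigr)$. Since every $\tau_l \ge 1$, we have $\tau_1\cdots\tau_L \le (\tau_1\cdots\tau_L)^2$. The first product can therefore be relaxed to the square, and the coefficient is at most $\beta(\theta)$, as claimed.

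There is no real obstacle; this is bookkeeping on top of Lemmas~\ref{lem:ntk-upper-bound}, \ref{lem:F-lip} and \ref{lem:G-lip}. The points to handle carefully are the order of the add-and-subtract (so that the residual is evaluated at $\theta$), the identification $\|e(\theta)\|=\sqrt{2\mathcal{L}(\theta)}$, and the use of $\tau_l\ge1$ to merge the two powers of $\prod_l\tau_l$. The hypothesis $\sigma_{\max}(X)\le\tau_0$ in Lemma~\ref{lem:G-lip} plays no role in that lemma's conclusion, so it can be ignored.
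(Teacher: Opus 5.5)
Your proof is correct and follows the same route as the paper. You write $\nabla\mathcal{L}=G(\theta)e(\theta)$, add and subtract a cross term, invoke Lemmas~\ref{lem:ntk-upper-bound}, \ref{lem:F-lip} and \ref{lem:G-lip}, and then merge the powers of $\prod_l\tau_l$ using $\tau_l\ge 1$. Your split (pairing $e(\theta)$ with $G(\theta)-G(\hat\theta)$, and $G(\hat\theta)$ with $e(\theta)-e(\hat\theta)$) is the cleaner one: the paper's displayed chain pairs $\|e(\hat\theta)\|$ with $\|G(\theta)-G(\hat\theta)\|_2$ and then replaces it by $\|e(\theta)\|$ via an unjustified equality. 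You avoid that slip, and it matters because the proof of Theorem~\ref{thm:geom-cvrg} needs $\beta$ evaluated at the base point $\theta(k)$.
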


\begin{proof}
For the quadratic loss function, we have
\[
\nabla \mathcal{L}(\theta;X)=G(\theta)e(\theta),
\qquad \text{where }
e(\theta)= F(\theta;X)-y.
\]
For any $\theta, \hat{\theta}$, the norm of their gradient difference satisfies
\begin{align*}
\|\nabla \mathcal{L}(\theta;X)-\nabla \mathcal{L}(\hat{\theta};X)\|
&=\|G(\theta)e(\theta)-G(\hat{\theta})e(\hat{\theta})\| \\
&\le \|G(\theta)-G(\hat{\theta})\|_2\,\|e(\hat{\theta})\|
+\|G(\theta)\|_2\,\|e(\theta)-e(\hat{\theta})\| \\
&= \|G(\theta)-G(\hat{\theta})\|_2\,\|e(\theta)\|
+\|G(\theta)\|_2\,\|F(\theta;X)-F(\hat{\theta};X)\|.
\end{align*}

Applying Lemma \ref{lem:ntk-upper-bound}, Lemma \ref{lem:F-lip}, and Lemma \ref{lem:G-lip}, since $\tau_l \ge 1$ for any $l \in \{1,2,\dots, L\}$, we get
\begin{align*}
&\quad \|\nabla \mathcal{L}(\theta;X)-\nabla \mathcal{L}(\hat{\theta};X)\| \\
&\le L \sigma_{\max}(X) \tau_1 \cdots \tau_L \lVert \theta - \hat{\theta} \rVert \cdot \lVert e(\theta)\rVert
+ (\sqrt{L} \sigma_{\max}(X) \tau_1 \cdots \tau_L) \cdot (\sqrt{L} \lVert X \rVert_{\rm F} \tau_1 \cdots \tau_L \cdot \lVert \theta - \hat{\theta} \rVert) \\
&\le L \sigma_{\max}(X) \left( \lVert e(\theta)  \rVert \tau_1 \cdots \tau_L + \lVert  X \rVert_{\rm F}  (\tau_1 \cdots \tau_L)^2 \right)  \|\theta-\hat{\theta}\| \\
&\le L \sigma_{\max}(X) \left(\sqrt{2 \mathcal{L}(\theta)} + \lVert X \rVert_{\rm F} \right)  (\tau_1 \cdots \tau_L)^2 \|\theta-\hat{\theta}\|.
\end{align*}
\end{proof}

\subsection{Geometric Convergence within the Well-Conditioned Region $\mathcal{R}$}
\label{subapp:std-cvrg-proof}

In this subsection, we complete the proof of Theorem \ref{thm:geom-cvrg} by applying a standard gradient-descent analysis under the PL inequality and smoothness established in Appendix \ref{subapp:ntk-cond--pl+smooth}. We choose a stepsize satisfying the usual stability condition determined by the smoothness constant (i.e., $\eta \le 1/\beta$), and then combine (i) the descent lemma with (ii) the PL inequality to obtain a one-step contraction of the loss.

\begin{proof}
[Proof of Theorem \ref{thm:geom-cvrg}]
We prove this theorem by induction. Suppose that the result holds for \( 0, 1, \dots, k-1 \). Then we have \( \mathcal{L}(\theta(k)) \leq \mathcal{L} (\theta(0)) \). Recall the definition of $\beta(\theta)$ and $\beta$,
\begin{align*}
    \beta(\theta) &= L \sigma_{\max}(X) \left(\sqrt{2 \mathcal{L}(\theta)} + \lVert X \rVert_{\rm F} \right)  (\tau_1 \cdots \tau_L)^2, \\
    \beta &= L \sigma_{\max}(X) \left(\sqrt{2 \mathcal{L}(\theta(0))} + \lVert X \rVert_{\rm F} \right)  (\tau_1 \cdots \tau_L)^2.
\end{align*}
This implies
\begin{equation}
\label{01022307}
    \beta(\theta(k)) \leq \beta(\theta(0)) = \beta.
\end{equation}

For the notational simplicity, let us denote \( g(\theta) = \nabla \mathcal{L}(\theta;X) \), and \( g_k = \nabla \mathcal{L}(\theta(k); X) \). Denote \( \delta_k = -\eta g_k = \theta(k+1) - \theta(k) \). We follow the proof of the descent lemma. We obtain
\begin{equation}
\label{01022310}
\begin{aligned}
    \mathcal{L}(\theta(k+1)) &= \mathcal{L}(\theta(k)) + g_k^\top \delta_k + \int_0^1 \left[ g(\theta(k) + t \delta_k) - g(\theta(k)) \right]^\top \delta_k dt \\
    &\le \mathcal{L}(\theta(k)) - \eta \lVert g_k \rVert^2 + \int_0^1 \lVert g(\theta(k) + t \delta_k) - g(\theta(k)) \rVert \cdot \lVert \delta_k \lVert dt.
\end{aligned}
\end{equation}

We denote
\begin{align*}
    \theta(k) := (W_1^{(k)}, \dots, W_L^{(k)}), \quad \theta(k+1) := (W_1^{(k+1)}, \dots, W_L^{(k+1)}),
\end{align*}
then
\begin{align*}
    \theta(k) + t\delta_k &= t (\theta(k) + \delta_k) + (1-t) \theta(k) \\
    &= t \theta(k+1) + (1-t) \theta(k) \\
    &= (tW_1^{(k)} + (1-t)W_1^{(k)}, \dots, tW_L^{(k)} + (1-t)W_L^{(k)}).
\end{align*}

Since $\theta(k), \theta(k+1) \in \mathcal{W}_{\rm up}(\tau_1, \dots, \tau_L)$, we have $\lVert W_l^{(k)} \rVert_2, \lVert W_l^{(k+1)} \rVert_2 \le \tau_l$. Since every norm is a convex function, for any $l \in \{1,2,\dots,L\}$, we have
\[
\lVert t W_l^{(k+1)} + (1-t) W_l^{(k)} \rVert_2 \leq t \lVert W_l^{(k+1)} \rVert_2 + (1-t) \lVert  W_l^{(k)} \rVert_2 \le t \tau_l + (1- t) \tau_l = \tau_l.
\]
This implies $\theta(k) + t \delta_k \in \mathcal{W}_{\rm up}(\tau_1, \dots, \tau_L)$.

According to Lemma \ref{lem:smoothness} and (\ref{01022307}), we have
\[
\Vert g(\theta(k) + t \delta_k) - g(\theta(k)) \Vert \leq \beta(\theta(k)) \Vert t \delta_k \Vert \leq \beta t \Vert\delta_k \Vert.
\]
Plugging this into (\ref{01022310}), we get
\[
\mathcal{L}(\theta(k+1)) \leq \mathcal{L}(\theta(k)) - \eta \Vert g_k \Vert^2 + \beta \Vert \delta_k \Vert^2 \int_0^1 t dt = \mathcal{L}(\theta(k)) - \left( \eta - \frac{\beta \eta^2}{2} \right) \Vert g_k \Vert^2.
\]
For any $0 \le \eta \le 1/\beta$, it holds that $\eta - \frac{\beta \eta^2}{2} \geq \frac{\eta}{2}$, so
\[
\mathcal{L}(\theta(k+1)) \leq \mathcal{L}(\theta(k)) - \left( \eta - \frac{\beta \eta^2}{2} \right) \Vert g_k \Vert^2 \le \mathcal{L}(\theta(k)) - \frac{\eta}{2} \Vert g_k \Vert^2.
\]
This completes the proof of the descent lemma.

According to the PL inequality (Lemma \ref{lem:pl-loss}), we get
\[
\lVert g_k\rVert^2 \ge 2\mu \mathcal{L}(\theta(k)).
\]
Combining the two inequalities above, we obtain
\[
\mathcal{L}(\theta(k+1)) \le \mathcal{L}(\theta(k)) - \frac{\eta}{2} \Vert g_k \Vert^2 \le \mathcal{L}(\theta(k)) - \eta \mu \mathcal{L}(\theta(k)) = (1- \eta \mu)\mathcal{L}(\theta(k)),
\]
for $\eta \in (0, 1/\beta]$.
This completes the proof.

\end{proof}

\newpage

\section{Proof of Corollary \ref{cor:iter}}
\label{app:cor:iter-proof}

Before proving Corollary~\ref{cor:iter}, we expand the ratio $\beta/\mu$ to make the data-dependent constant $C$ explicit. Recall from Theorem~\ref{thm:geom-cvrg} that
\[
\beta = \left(\prod_{l=1}^L \tau_l\right)^{2} \left(\sqrt{2 \mathcal{L}(\theta(0))} + \|X\|_F\right) L \sigma_{\max}(X),
\qquad
\mu = \left(\prod_{l=1}^L \mu_l\right)^{2} \sigma_{\min}(X)^2.
\]
Dividing $\beta$ by $\mu$ and grouping data-/initialization-dependent factors separately from the layer-wise condition-number bounds, we obtain
\[
\frac{\beta}{\mu}
= \underbrace{\frac{\bigl(\sqrt{2 \mathcal{L}(\theta(0))} + \|X\|_F\bigr) L \sigma_{\max}(X)}{\sigma_{\min}(X)^2}}_{=:\,C}
\cdot \left(\prod_{l=1}^L \frac{\tau_l}{\mu_l}\right)^{2}
= C\,\kappa_{\mathcal{R}}^{2L}.
\]
Here
\[
C := \frac{\bigl(\sqrt{2 \mathcal{L}(\theta(0))} + \|X\|_F\bigr) L \sigma_{\max}(X)}{\sigma_{\min}(X)^2}
\]
depends only on the data $(X, y)$, the depth $L$, and the initialization $\theta(0)$, and is independent of the layer-wise condition-number bounds $\{\tau_l, \mu_l\}_{l=1}^L$.

\begin{proof}[Proof of Corollary \ref{cor:iter}]
Take $\eta = 1/\beta$. By Theorem~\ref{thm:geom-cvrg}, for every $t\in\{0,1,\dots,T-1\}$,
\[
\mathcal{L}(\theta(t+1)) \le \Bigl(1-\frac{\mu}{\beta}\Bigr)\mathcal{L}(\theta(t)).
\]
Iterating this inequality yields the geometric bound. Formally, by induction on $t$:
for $t=0$ the claim is trivial. If
$\mathcal{L}(\theta(t)) \le (1-\mu/\beta)^t \mathcal{L}(\theta(0))$, then
\[
\mathcal{L}(\theta(t+1))
\le \Bigl(1-\frac{\mu}{\beta}\Bigr)\mathcal{L}(\theta(t))
\le \Bigl(1-\frac{\mu}{\beta}\Bigr)^{t+1}\mathcal{L}(\theta(0)).
\]
Hence, for all $t\in\{0,1,\dots,T\}$,
\[
\mathcal{L}(\theta(t))
\le
\Bigl(1-\frac{\mu}{\beta}\Bigr)^t \mathcal{L}(\theta(0)).
\]

For the iteration complexity, use the standard inequality $1-x \le e^{-x}$ for $x\in[0,1]$.
Since $\mu/\beta\in(0,1]$, we have
\[
\Bigl(1-\frac{\mu}{\beta}\Bigr)^T
\le
\exp\!\Bigl(-\frac{\mu}{\beta}T\Bigr).
\]
Therefore,
\[
\mathcal{L}(\theta(T))
\le
\exp\!\Bigl(-\frac{\mu}{\beta}T\Bigr)\,\mathcal{L}(\theta(0)).
\]
If
\[
T \ge \frac{\beta}{\mu}\log\!\Bigl(\frac{\mathcal{L}(\theta(0))}{\epsilon}\Bigr)
= C\,\kappa_{\mathcal{R}}^{2L}\log\!\Bigl(\frac{\mathcal{L}(\theta(0))}{\epsilon}\Bigr),
\]
then $\exp(-\frac{\mu}{\beta}T)\,\mathcal{L}(\theta(0)) \le \epsilon$, which implies
$\mathcal{L}(\theta(T))\le \epsilon$. This yields the claimed $T = O(\kappa_{\mathcal{R}}^{2L}\log(\mathcal{L}(\theta(0))/\epsilon))$ bound.
\end{proof}

\newpage

\section{Proof of Proposition \ref{clm:svd}}
\label{app:proof-clm-svd}

\begin{proof}
Let $W=U\Sigma V^\top$ be an SVD with $\Sigma=\mathrm{diag}(\sigma_1,\dots,\sigma_m)$ and $\sigma_i\ge 0$.
Then $WW^\top=U\Sigma^2U^\top$, and hence $p(WW^\top)=Up(\Sigma^2)U^\top$ with
$p(\Sigma^2)=\mathrm{diag}(p(\sigma_1^2),\dots,p(\sigma_m^2))$. Therefore
\[
g(W)=p(WW^\top)W
=U\,p(\Sigma^2)\Sigma\,V^\top
=U\,D\,V^\top,
\qquad
D\triangleq \mathrm{diag}\!\big(p(\sigma_i^2)\sigma_i\big)_{i=1}^m.
\]
Note that the diagonal entries of $D$ may be negative, while singular values are by definition nonnegative.
Let $|D|\triangleq \mathrm{diag}(|p(\sigma_i^2)\sigma_i|)_{i=1}^m$. Writing
$D = S\,|D|$ with $S=\mathrm{diag}(\mathrm{sign}(p(\sigma_i^2)\sigma_i))$, we have
\[
g(W)=U\,S\,|D|\,V^\top.
\]
Since $US$ is still orthogonal, this exhibits an SVD of $g(W)$ with singular values given by the diagonal
entries of $|D|$, i.e., $\{|p(\sigma_i^2)\sigma_i|\}_{i=1}^m=\{|g(\sigma_i)|\}_{i=1}^m$.
\end{proof}

\newpage
\section{Details of PC Layer Implementation}
\label{app:pc_alg_iplmt}

\subsection{Polynomial Fitting Algorithm}
\label{subapp: poly-fit-alg}

The detailed procedure of identifying polynomial $g$ is summarized in Algorithm~\ref{alg:poly-fitting}.
Additionally, note that this procedure is “offline”, i.e., it is not needed during deep net training.

\begin{algorithm}[H]
\caption{POLY-FITTING}
\label{alg:poly-fitting}
\begin{algorithmic}[1]
\STATE \textbf{Input:} target map $f:[0,1]\to\mathbb{R}$ (default $f(\sigma)=\mathrm{PL}_b(\sigma)$), fitting interval $[\gamma_L,\gamma_U]$ (default $[0,1.1]$).
\STATE \textbf{Hyper-parameters:} degree $k$ and $\alpha$ appearing in $w(\sigma) = \sigma^\alpha$.
\STATE \textbf{Sampling:} draw $\sigma_1,\ldots,\sigma_{n-1}\stackrel{\text{i.i.d.}}{\sim}\mathrm{Unif}[\gamma_L,\gamma_U]$ and set $\sigma_n=1$.
\STATE \textbf{Solve:} compute $a^\star\in\mathbb{R}^{k+1}$ by minimizing the discrete weighted least-squares objective \eqref{eq:wls_discrete}.
\STATE \textbf{Output:} $g(\sigma)=\sigma\sum_{t=0}^{k} a_t^\star \sigma^{2t}$.
\end{algorithmic}
\end{algorithm}

In Algorithm \ref{alg:poly-fitting}, we include the deterministic sample \(\sigma_n=1\) to anchor the
least-squares fit at the nominal unit scale after spectral normalization.
The enlarged fitting interval \([0,1.1]\) is used only as a robustness
margin for spectral-norm estimation error; \(\sigma=1\) remains the
reference value for the intended normalized top singular scale. Since
our target maps satisfy \(f(1)=1\), including this point encourages
\(g(1)\approx f(1)\) and prevents the finite-sample fit from
unintentionally changing the top normalized scale.

\subsection{Streaming Power Iteration for Spectral Normalization}
\label{app:streaming-pi}

Algorithm~\ref{alg:pc} normalizes each selected weight matrix by a scalar $s(W)$ that approximates the spectral norm $\|W\|_2$.
This subsection describes how $s(W)$ is computed in practice.

During training, each PC block maintains two auxiliary buffers $u$ and $v$ that
estimate the top left and right singular vectors of the current weight matrix.
At training step $t$, instead of running power iteration from scratch, we
initialize the iteration from the buffers saved at the previous step,
$(u_{t-1},v_{t-1})$. After a small number of power-iteration steps, the updated
vectors are written back to the buffers and the Rayleigh quotient is used as the
spectral-norm estimate. We refer to this procedure as \emph{streaming} power
iteration because the singular-vector estimates are carried over across
consecutive training steps as the weights evolve. This design avoids exact SVD computation and exploits the fact that weights change smoothly across consecutive training steps,
so that estimates from the previous step provide a high-quality initialization for the current step.

\begin{algorithm}[htbp]
\caption{Streaming power iteration for spectral-norm normalization}
\label{alg:streaming-pi}
\begin{algorithmic}[1]
\STATE \textbf{Input:} weight matrix $W_t\in\mathbb{R}^{n\times m}$ in \texttt{PC\_blocks} at training step $t$; persistent buffers $u_{t-1}\in\mathbb{R}^n$, $v_{t-1}\in\mathbb{R}^m$ from the previous step; number of power-iteration steps $q$ (default $q=10$); stability constant $\epsilon>0$.
\IF{$u_{t-1}, v_{t-1}$ are uninitialized}
    \STATE Initialize $u_{t-1}$ and $v_{t-1}$ as random unit vectors.
\ENDIF
\STATE $u \gets u_{t-1}$, \quad $v \gets v_{t-1}$ \hfill \textit{\# warm-start from the previous training step}
\FOR{$i=1,\ldots,q$}
    \STATE $v \gets \frac{W_t^{\top} u}{\Vert W_t^{\top} u \Vert_2}, \quad u \gets \frac{W_t\, v}{\Vert W_t\, v \Vert_2}$.
\ENDFOR
\STATE Update buffers: $u_t \gets u, v_t \gets v$.
\STATE Estimate spectral-norm: $s_t(W_t) \gets u^{\top} W_t\, v + \epsilon$.
\STATE \textbf{Return:} $s_t(W_t),\, u_t,\, v_t$.
\end{algorithmic}
\end{algorithm}

In all experiments, we use $q=10$ power-iteration steps per training step. The
streaming power-iteration routine is used only during training to estimate
$s(W_t)$. After training, we materialize the final effective weight ${\rm PC}(W)$ and
store it as the corresponding weight in the original transformer architecture;
the power-iteration buffers are then discarded and no spectral-norm estimation
is needed during inference.

\subsection{Tricks for Matrix Polynomial Calculation}
\label{app:pc_tricks}

We discuss a few computational tricks to implement a PC layer. These tricks can greatly reduce the computation cost.
In the following, suppose we decide to use a polynomial \(g(\sigma)=\sigma\,p(\sigma^{2})\) to implement the PC layer.

\paragraph{Trick 1: choose an efficient equivalent form.}
We use
\[
g(A)=A\,p(A^{\top}A)\quad\text{if \(A\) is \emph{tall}},
\qquad\text{otherwise use}\qquad
g(A)=p(AA^{\top})\,A.
\]
Note that \(g(A)=p(AA^{\top})A=A\,p(A^{\top}A)\). We call \(p(AA^{\top})A\) the \emph{\(AA^{\top}\)-form} and \(A\,p(A^{\top}A)\) the \emph{\(A^{\top}A\)-form}. Although the two forms give the same value, their implementation time can differ a lot. For \(A\in\mathbb{R}^{n\times m}\) with \(n\ge m\), forming \(A^{\top}A\) costs \(O(m^{2}n)\), while \(AA^{\top}\) costs \(O(n^{2}m)\); hence pick the \(A^{\top}A\)-form when \(A\) is tall (\(n\ge m\)), and the \(AA^{\top}\)-form when \(A\) is wide (\(n<m\)).

\paragraph{Trick 2: cache and reuse \(B\).}
Store
\[
B=\begin{cases}
A^{\top}A, & \text{if \(A\) is tall},\\[2pt]
AA^{\top}, & \text{if \(A\) is wide},
\end{cases}
\]
and reuse it. We will compute \(A\,p(B)\) (tall case) or \(p(B)\,A\) (wide case).

\paragraph{Trick 3: evaluate \(p\) via Horner's method \citep{horner1819xxi}.}
Write a degree-\(k\) polynomial as
\[
p(\sigma)=\sum_{i=0}^{k}a_i\sigma^{i}
= a_0+\sigma\bigl(a_1+\sigma\bigl(a_2+\cdots+\sigma(a_{k-1}+a_k\sigma)\bigr)\bigr).
\]
A naive implementation
\[
p(B)=\sum_{i=0}^{k}a_i B^{i}
\]
requires \(\sum_{i=1}^{k}(i-1)=\frac{k(k-1)}{2}\) matrix multiplications and \(k\) additions. Using Horner's method, we only need \(k\) matrix multiplications.
For example, for a degree-4 polynomial \(p\) (corresponding to a degree-9 \(g(\sigma)=\sigma p(\sigma^{2})\)), Horner's method reduces the work from \(6\) matrix multiplications to \(4\). For a degree-15 \(g\) (i.e., \(k=7\)), it reduces \(21\) multiplications to \(7\).

\paragraph{Summary of the three tricks.}
\begin{itemize}
\item Use the \(A^{\top}A\)-form \(A\,p(A^{\top}A)\) if \(A\) is tall, and the \(AA^{\top}\)-form \(p(AA^{\top})A\) if \(A\) is wide.
\item Store \(B=AA^{\top}\) or \(B=A^{\top}A\), and compute \(p(B)A\) or \(A\,p(B)\), respectively.
\item Use Horner's method to evaluate \(p(B)\).
\end{itemize}

\paragraph{Example.}
Suppose we choose \(g(x)=a_0 x+a_1 x^{3}+a_2 x^{5}+a_3 x^{7}\) and want to apply it to a wide matrix \(A\).
There are two steps:
\begin{enumerate}
\item Compute \(B=AA^{\top}\).
\item Compute
\[
g(A)=\bigl(a_0 + B\bigl(a_1 + B(a_2 + a_3 B)\bigr)\bigr)\,A.
\]
\end{enumerate}
If we apply it to a tall matrix \(A\), the two steps are:
\begin{enumerate}
\item Compute \(B=A^{\top}A\).
\item Compute
\[
g(A)=A\,\bigl(a_0 + B\bigl(a_1 + B(a_2 + a_3 B)\bigr)\bigr).
\]
\end{enumerate}

\newpage

\section{Computational and Memory Cost Analysis}
\label{app:pc-cost-compute-memory}

\subsection{Computational Cost Analysis}
\label{subapp:pc_cost}

We estimate the additional FLOPs introduced by the PC layer.
Following standard practice in numerical computing, we count one matrix multiplication of an $(a\times b)$ matrix with a $(b\times c)$ matrix as $\approx 2abc$ FLOPs.
We only account for matmul (matrix multiplication) FLOPs and omit lower-order elementwise/scalar operations.

\vspace{-0.5em}

\paragraph{Per-matrix overhead (forward).}
Consider a normalized weight matrix $\widetilde W\in\mathbb{R}^{n\times m}$. Without loss of generality we assume the \emph{wide-form}
\[
g(\widetilde W) = p( \widetilde W \widetilde W^\top)\, \widetilde W,
\]
where $p$ is a degree-$k$ polynomial evaluated by Horner's method. Note that the tall-form $\widetilde W\,p( \widetilde W^\top \widetilde W)$ is fully analogous; one always chooses the smaller Gram matrix to reduce cost.

Let $B=\widetilde W \widetilde W^\top\in\mathbb{R}^{n\times n}$ be the chosen Gram matrix in this form. In practice, we always choose the smaller Gram; hence the Gram dimension is $s=\min(n,m)$ and the other dimension is $\ell=\max(n,m)$.
The dominant additional matrix multiplications in the forward pass are:
\begin{enumerate}[(i)]
    \item One Gram construction ($\approx 2\ell s^2$ FLOPs);
    \item Horner calculation of a degree-$k$ polynomial $p(B)$, which requires $(k-1)$ multiplications of $s\times s$ matrices ($\approx 2(k-1)s^3$ FLOPs);
    \item One final multiplication to apply $p(B)$ to $\widetilde W$ ($\approx 2\ell s^2$ FLOPs).
\end{enumerate}
Therefore, ignoring lower-order terms, the forward overhead is approximately
\[
4\ell s^2 + 2(k-1)s^3.
\]

\vspace{-1em}

\paragraph{Spectral-norm estimation.}
To approximate the weight spectral norm, we use streaming power iteration with persistent buffers. Each power-iteration step uses two matrix-vector multiplications, $W^\top u$ and $Wv$, and therefore costs approximately $4\ell s$ FLOPs. Including the final
Rayleigh quotient, the spectral-norm estimation cost is approximately
\[
(4q+2)\ell s,
\]
where $q$ is the number of power iterations in each training step.
Combining the polynomial-preconditioning cost and the spectral-norm estimation
cost, the total additional forward FLOPs for one PC-applied matrix in a training
step are approximately
\[
\Delta \mathrm{FLOPs}_{\mathrm{PC,fwd}}
\approx
4\ell s^2 + 2(k-1)s^3 + (4q+2)\ell s .
\]
Since $s^3\le \ell s^2$, this admits the upper bound
\[
\Delta \mathrm{FLOPs}_{\mathrm{PC,fwd}}
\le
2(k+1)\ell s^2 + (4q+2)\ell s.
\]
For the matrix sizes used in our experiments, $s$ is large, so the streaming
power-iteration term is lower-order compared with the matrix-polynomial term.

\vspace{-1em}

\paragraph{Training-step overhead.}
A common rule of thumb is that, in typical deep networks, the backward pass costs about twice the forward pass in FLOPs \citep{brown2020language}, so a full training cost (forward + backward) is about $3\times$ the forward cost.
Thus, for the blocks where the PC layer is applied, the FLOPs overhead in one training step satisfies
\[
\Delta \mathrm{FLOPs}_{\mathrm{PC,train}}
\;\approx\;
3\,\Delta \mathrm{FLOPs}_{\mathrm{PC,fwd}}
\;\le\;
6(k+1)\,\ell s^2 + 6(2q+1)\, \ell s.
\]

For a standard block without PC, the baseline training per-step FLOPs (one forward and backward pass) is approximately $6nmB$, where $B$ is batch size (tokens) \citep{kaplan2020scaling}.
Using $nm=\ell s$, the relative FLOP overhead of the PC layer is bounded by
\begin{equation} \label{eq:overhead}
    \mathrm{Overhead}
    \;\triangleq\;
    \frac{\Delta \mathrm{FLOPs}_{\mathrm{PC,train}}}{6nmB}
    \;\le\;
    \frac{6(k+1)\ell s^2 + 6(2q+1)\, \ell s}{6\ell s\,B}
    \;=\;
    \frac{(k+1)s + 2q+1}{B}.
\end{equation}
Note that this bound is block-wise; the \emph{network-level} overhead is even smaller since PC is implemented only for a subset of blocks.
This overhead is small whenever the effective tokens-per-step $B$ is large compared to the model width $s$ and the PC polynomial degree $k$.

\paragraph{Instantiating with our Llama-1B setting.}
For transformer with PC layer under AdamW, we use pc\_level$=4$, i.e., a degree-$k$ polynomial with $k=4$, and $q=10$ streaming power-iteration steps for spectral-norm estimation (see Appendix~\ref{app:streaming-pi}). Our training processes
$B=2.62\times 10^6$ tokens per step.

For any PC-applied weight $W\in\mathbb{R}^{n\times m}$, let $s=\min(n,m)$ be the Gram dimension.
From Inequality~(\ref{eq:overhead}), the relative FLOPs overhead is bounded by
\[
\mathrm{Overhead}\;\le\;\frac{(k+1)s+2q+1}{B}
=\frac{5s+21}{B}.
\]
In our 1B model, PC is applied to $W_{\rm O}\in\mathbb{R}^{d\times d}$ and the FFN matrices
$W_{\rm gate}\in\mathbb{R}^{m_{\mathrm{ffn}}\times d}$, $W_{\rm up}\in\mathbb{R}^{m_{\mathrm{ffn}}\times d}$,
$W_{\rm down}\in\mathbb{R}^{d\times m_{\mathrm{ffn}}}$, where $d=2048$ and $m_{\mathrm{ffn}}=5632$ (see Appendix~\ref{app:exp_detail} for model configurations).
All these matrices have the same smaller dimension $s=d=2048$, hence
\[
\mathrm{Overhead}\;\le\;\frac{5\cdot 2048+21}{2.62\times 10^6}
\approx 3.92\times 10^{-3}
\approx 0.39\%.
\]
Therefore, the training-FLOP overhead brought by PC layer is acceptable in our setting.

\begin{rem}
When PC is combined with Muon, we empirically find that a lower polynomial degree \texttt{pc\_level}$=2$, i.e., $k=2$, is optimal (see \S\ref{subsec:main_res} for the results and Appendix~\ref{subapp:supp_disc} for an intuitive explanation).
Keeping $q=10$, our overhead bound becomes
\[
\mathrm{Overhead}(W)\;\le\;\frac{(k+1)s+2q+1}{B}=\frac{3s+21}{B}.
\]
Instantiating with $s=d=2048$ and $B=2.62\times 10^6$ tokens/step, this yields
\[
\mathrm{Overhead}(W)\;\le\;\frac{3\cdot 2048+21}{2.62\times 10^6}
\approx 2.35\times 10^{-3}
\approx 0.24\%,
\]
leading to even lower computational cost overhead.
\end{rem}

\vspace{-1em}

\subsection{Memory Cost Analysis}
\label{subapp:pc_memory}

To evaluate the memory efficiency of our proposed approach, we measured the peak memory footprint during training. All Llama-1B runs use 8$\times$NVIDIA H100 GPUs; the reported footprint is the per-GPU peak active memory on a single H100, taken as the maximum across the 8 ranks (in practice the largest value is on rank~0). Under the Llama-1B AdamW configuration, the transformer baseline incurs a per-GPU peak footprint of 65.90~GiB, whereas the PC layer consumes 72.20~GiB, an increase of approximately \textbf{9.56\%}. Under Muon, the corresponding footprints are 64.08~GiB for the baseline and 69.67~GiB with the PC layer, an increase of approximately \textbf{8.73\%}.

The extra peak memory observed with the PC layer likely comes from additional differentiable intermediate tensors that PyTorch autograd caches during the PC forward computation for use in the backward pass, such as the Gram matrix, the streaming power-iteration buffers, and the intermediate matrices produced when evaluating the polynomial $p(\cdot)$ (see Algorithm~\ref{alg:pc}). Our current implementation does not explicitly control which of these intermediates are retained, so the observed overhead may overestimate the minimum amount of state actually required for backpropagation through the PC layer. In principle, this footprint could be reduced by (i) activation recomputation (checkpointing) at the PC-block boundary, which trades additional FLOPs for lower memory, or (ii) a custom fused PC kernel with hand-managed backward, which re-materializes selected intermediates instead of relying on autograd to cache them. A more aggressive kernel-level memory optimization is left as future work.

\vspace{-0.5em}

\subsection{Summary}
\label{subapp:pc-cost-summary}

In summary, Table~\ref{tab:pc_overhead} provides an overview of the computational and memory overheads introduced by the PC layer.

\vspace{-0.5em}

\begin{table}[htbp]
\centering
\begin{tabular}{llcc}
\toprule
Type & Optimizer & Overhead & Note \\
\midrule
FLOPs & AdamW & $\le 0.39\%$ & $k=4$, $q=10$ \\
      & Muon  & $\le 0.24\%$ & $k=2$, $q=10$ \\
\midrule
Peak memory (per GPU) & AdamW & $\approx 9.56\%$ & 65.90 GiB $\to$ 72.20 GiB \\
                      & Muon  & $\approx 8.73\%$ & 64.08 GiB $\to$ 69.67 GiB \\
\bottomrule
\end{tabular}
\caption{Overhead summary of PC layer in Llama-1B experiments}
\label{tab:pc_overhead}
\end{table}

\newpage
\section{Additional Experimental Details}
\label{app:exp_detail}

\paragraph{Model configurations.}
The model configurations of the Llama 2 architecture used in our experiments are summarized in Table~\ref{tab:model_config}.

\begin{table}[htbp]
\centering
\setlength{\tabcolsep}{10pt}
\begin{tabular}{lrrrr}
\toprule
\textbf{Model} & \textbf{d\_{\text{model}}} & \textbf{n\_{\text{layers}}} & \textbf{n\_{\text{heads}}} & \textbf{hidden\_dim} \\
\midrule
\textbf{Llama-271M} & 1024 & 16 & 16 & 2816 \\
\textbf{Llama-1B}   & 2048 & 18 & 16 & 5632 \\
\bottomrule
\end{tabular}
\caption{Model configurations used in our experiments.}
\label{tab:model_config}
\end{table}

\paragraph{Batchsize realization.}
Recall from Section~\ref{subsec:exp_setting} that we use sequence length $8192$ and a fixed global batch of $2.62$M tokens per optimization step, which corresponds to $320$ sequences ($8192\times320\approx 2.62$M).
On 8 GPUs, this global batch is realized by setting the per-GPU micro-batch size and gradient accumulation to $(8, 5)$ for the 271M model and $(4, 10)$ for the 1B model, so that the total number of sequences per step satisfies $\text{micro}\times\text{accum}\times 8=320$.

\paragraph{Optimizers.}
We train all models with either AdamW \citep{kingma2014adam, loshchilov2017decoupled} or the Muon optimizer \citep{jordan2024muon}. For AdamW, we set $(\beta_1,\beta_2)=(0.9,0.95)$ and weight decay to $0.1$.
For Muon, we follow the standard practice of applying Muon only to matrix-shaped (2D) weight tensors, while using AdamW for lower-dimensional parameters; we also adopt the root mean square (RMS) matching trick \citep{liu2025muon}. Muon hyperparameters are \texttt{momentum}$=0.95$ and \texttt{ns\_steps}$=5$. Additionally, to ensure training stability, both optimizers apply gradient-norm clipping at 1.0.
The learning-rate schedule and the peak-LR grid search are described in Section~\ref{subsec:exp_setting}.

\newpage
\section{Accuracy of the Streaming Power-Iteration Spectral-Norm Estimator}
\label{app:power_iter_quality}

We empirically validate the streaming power-iteration estimator $s(W)$ used by the PC layer to approximate $\|W\|_2$ (Algorithm~\ref{alg:streaming-pi}).
On the LLaMA2-271M AdamW run we compare $s(W)$ against the ground-truth $\|W\|_2$ from a full SVD of the same weight, and report the relative error
\begin{equation*}
\mathrm{relerr}(W) \;=\; \frac{|s(W) - \|W\|_2|}{\|W\|_2}.
\end{equation*}
For each of the four preconditioned matrix types ($W_{\rm O}$, $W_{\rm gate}$, $W_{\rm up}$, $W_{\rm down}$) we aggregate $\mathrm{relerr}$ across the $16$ transformer blocks at every step, plotting per-step median and per-step maximum over the full $22{,}000$-step trajectory (Figure~\ref{fig:relerr-powiter}).

\begin{figure}[htbp]
    \centering
    \begin{subfigure}[t]{0.48\textwidth}
        \centering
        \includegraphics[width=\linewidth]{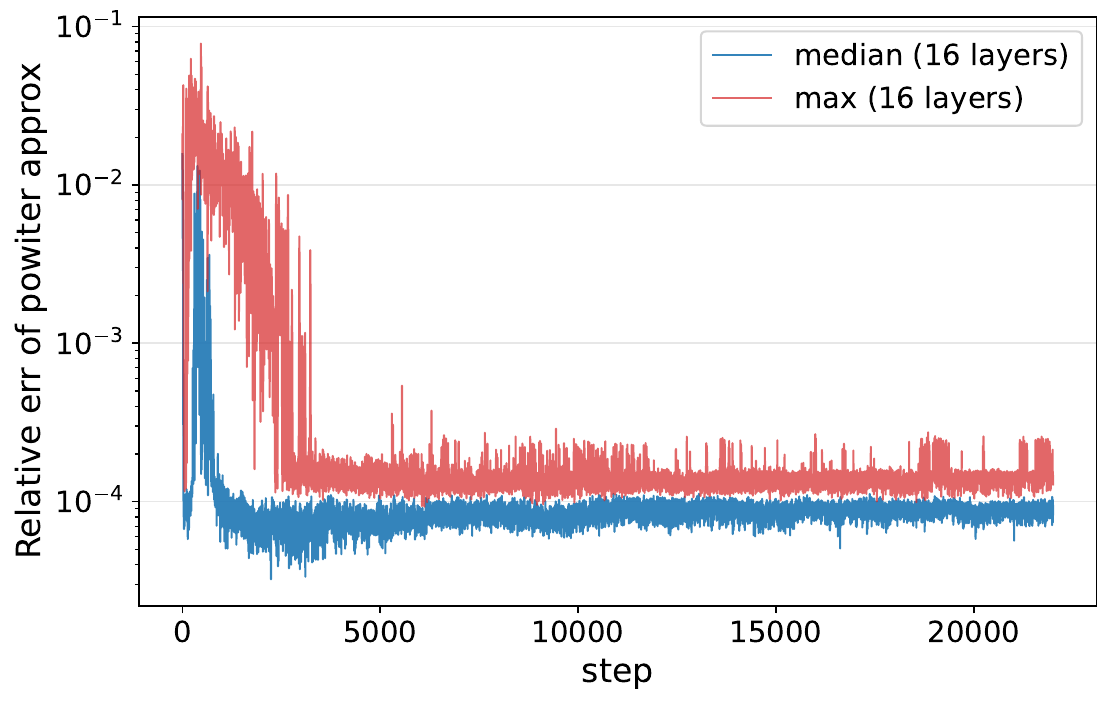}
        \caption{$W_{\rm O}$ (attention output projection)}
        \label{fig:relerr-wo}
    \end{subfigure}\hfill
    \begin{subfigure}[t]{0.48\textwidth}
        \centering
        \includegraphics[width=\linewidth]{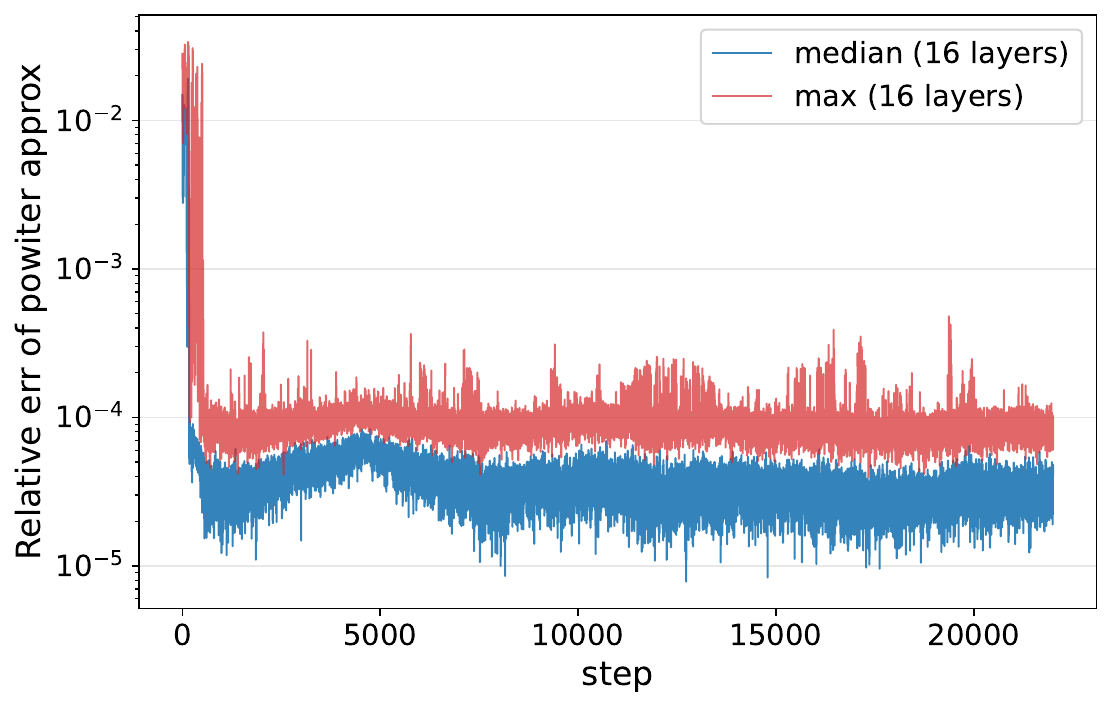}
        \caption{$W_{\rm gate}$ (FFN gate projection)}
        \label{fig:relerr-w1}
    \end{subfigure}

    \vspace{0.6em}

    \begin{subfigure}[t]{0.48\textwidth}
        \centering
        \includegraphics[width=\linewidth]{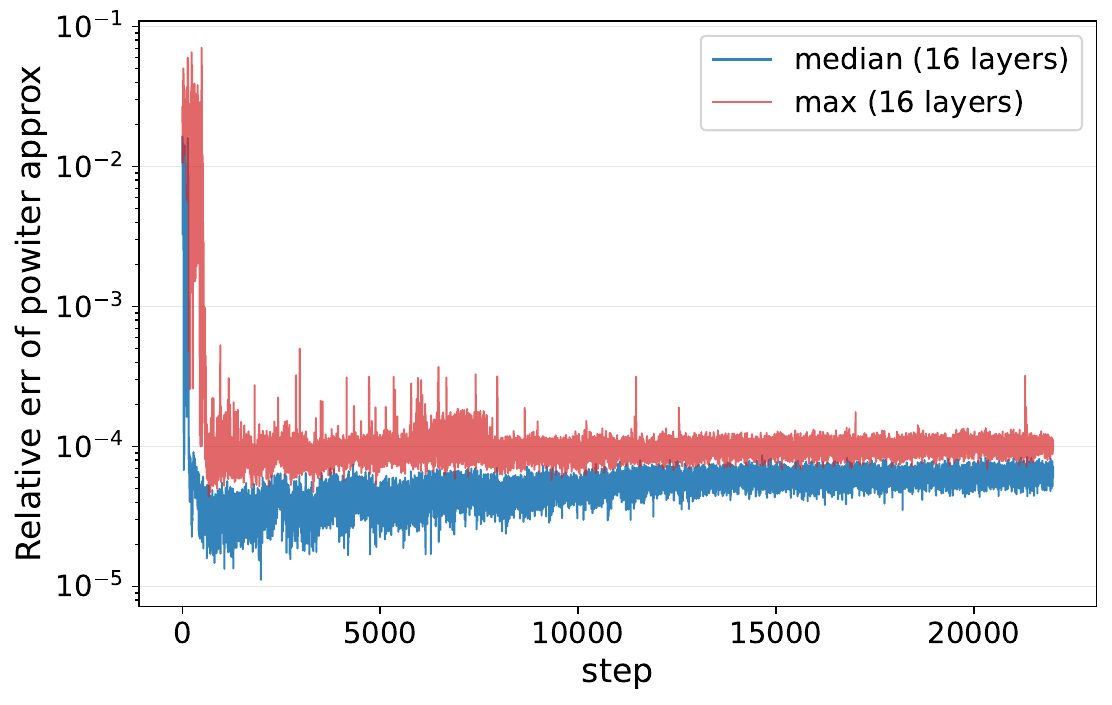}
        \caption{$W_{\rm up}$ (FFN up projection)}
        \label{fig:relerr-w3}
    \end{subfigure}\hfill
    \begin{subfigure}[t]{0.48\textwidth}
        \centering
        \includegraphics[width=\linewidth]{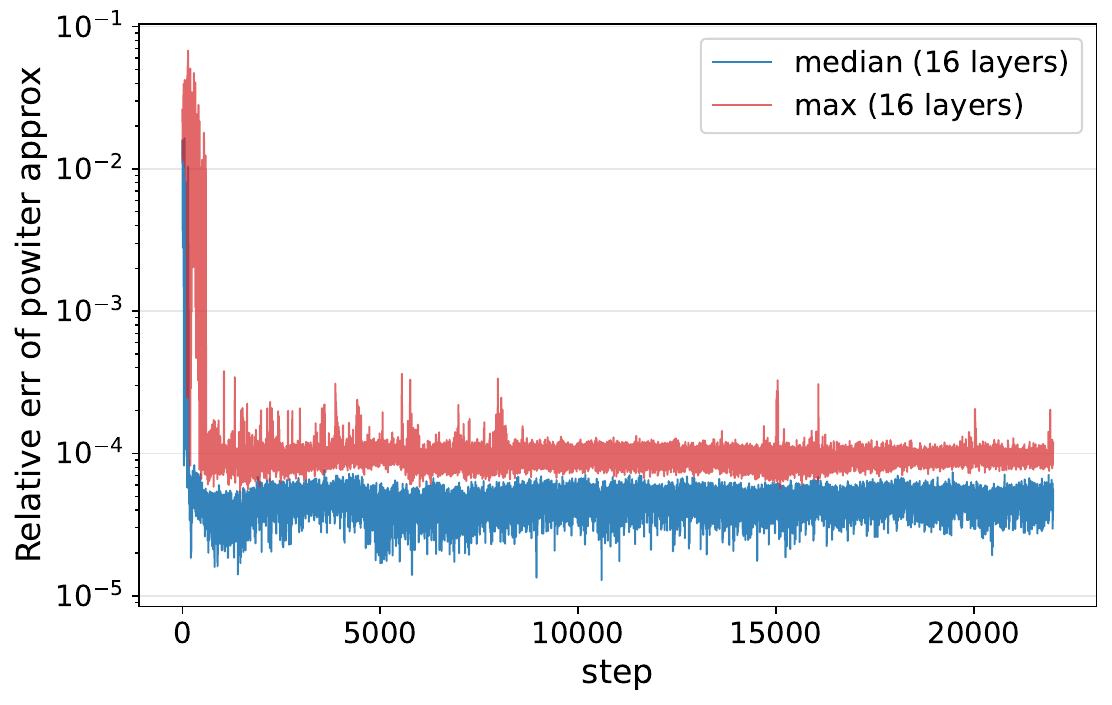}
        \caption{$W_{\rm down}$ (FFN down projection)}
        \label{fig:relerr-w2}
    \end{subfigure}
    \caption{\textbf{Relative error of the streaming power-iteration estimator $s(W)$ on LLaMA2-271M (AdamW, $22{,}000$ steps).}
    Per-step median and per-step maximum of $\mathrm{relerr}(W)$ aggregated over the $16$ transformer blocks.}
    \label{fig:relerr-powiter}
\end{figure}

\paragraph{Initial transient and steady state.}
During the first few hundred steps the weights drift rapidly while the warm-started $(u,v)$ buffers have not yet aligned with the dominant singular subspace, so the estimator exhibits a short transient, during which the per-step maximum across all four matrix types stays below $0.08$.
After roughly $3{,}000$ steps the estimator enters a tight steady state: the per-step maximum stays below $4\!\times\!10^{-3}$ and the per-step median drops below $1.5\!\times\!10^{-4}$ for the remainder of training, i.e., $s(W)$ tracks $\|W\|_2$ to three to four significant digits across all $16$ layers.

\paragraph{Behavior across matrix types.}
The output projection $W_{\rm O}$ is the slowest to converge: its per-step maximum keeps producing intermittent spikes up to $\sim\!10^{-2}$ until step $\approx\!3{,}000$, whereas the corresponding maxima for $W_{\rm gate}, W_{\rm up}, W_{\rm down}$ settle to their steady-state floor by step $\approx\!1{,}000$.
This is consistent with $W_{\rm O}$ having a tighter relative gap between its two leading singular values, which slows the geometric rate of power iteration.
Even so, $W_{\rm O}$ enters the same steady-state band as the other three matrices, confirming that ten power-iteration sweeps with warm-starting suffice even for the hardest case.

\paragraph{Implication for the $[0,1.1]$ fitting interval.}
Both regimes above are comfortably absorbed by the $10\%$ safety margin built into the fitting interval $[\gamma_L,\gamma_U]=[0,1.1]$ (Section~\ref{sec:find-poly}): in the worst case observed, the normalized spectrum $W/s(W)$ can overshoot $1$ by at most $\sim\!0.08 < 0.1$, so the polynomial $g$ is still evaluated inside its design domain throughout training.

\newpage
\section{Overly Aggressive Spectrum Flattening}
\label{app:overflatten}

Our PC polynomial is designed to improve conditioning without collapsing the entire singular-value
spectrum. To test the limiting case, we replace the default PC polynomial with an
\emph{over-flattening} polynomial whose scalar map nearly sends every nonzero normalized singular value
to one, as shown in Figure~\ref{fig:overflatten-map}. Concretely, we use the Polar Express composite
of degree-$5$ Newton--Schulz polynomials \citep{amsel2025polar}, which approximates the matrix-sign map.
On the normalized domain $\sigma\in[0,1]$, the over-flattening map is the composition of $T=8$ odd
degree-$5$ polynomials, $p(\sigma) = \big(p_8 \circ p_7 \circ \cdots \circ p_1\big)(\sigma)$, where
\[
\begin{aligned}
p_1(\sigma) &= 7.2086\,\sigma - 15.5131\,\sigma^3 + 9.0178\,\sigma^5, &
p_2(\sigma) &= 3.9623\,\sigma - 2.5813\,\sigma^3 + 0.4542\,\sigma^5, \\
p_3(\sigma) &= 3.9466\,\sigma - 2.5765\,\sigma^3 + 0.4544\,\sigma^5, &
p_4(\sigma) &= 3.8991\,\sigma - 2.5671\,\sigma^3 + 0.4566\,\sigma^5, \\
p_5(\sigma) &= 3.7186\,\sigma - 2.5308\,\sigma^3 + 0.4653\,\sigma^5, &
p_6(\sigma) &= 3.1390\,\sigma - 2.3073\,\sigma^3 + 0.4733\,\sigma^5, \\
p_7(\sigma) &= 2.1715\,\sigma - 1.5246\,\sigma^3 + 0.3885\,\sigma^5, &
p_8(\sigma) &= 1.8648\,\sigma - 1.2224\,\sigma^3 + 0.3577\,\sigma^5 .
\end{aligned}
\]
All other PC settings are kept identical to the per-optimizer default ($\texttt{PC\_blocks}=\{\texttt{ffn}, W_{\rm O}\}$,
with $\texttt{pc\_level}=4$ under AdamW and $\texttt{pc\_level}=2$ under Muon).

\begin{figure}[htbp]
  \centering
  \includegraphics[width=0.8\textwidth]{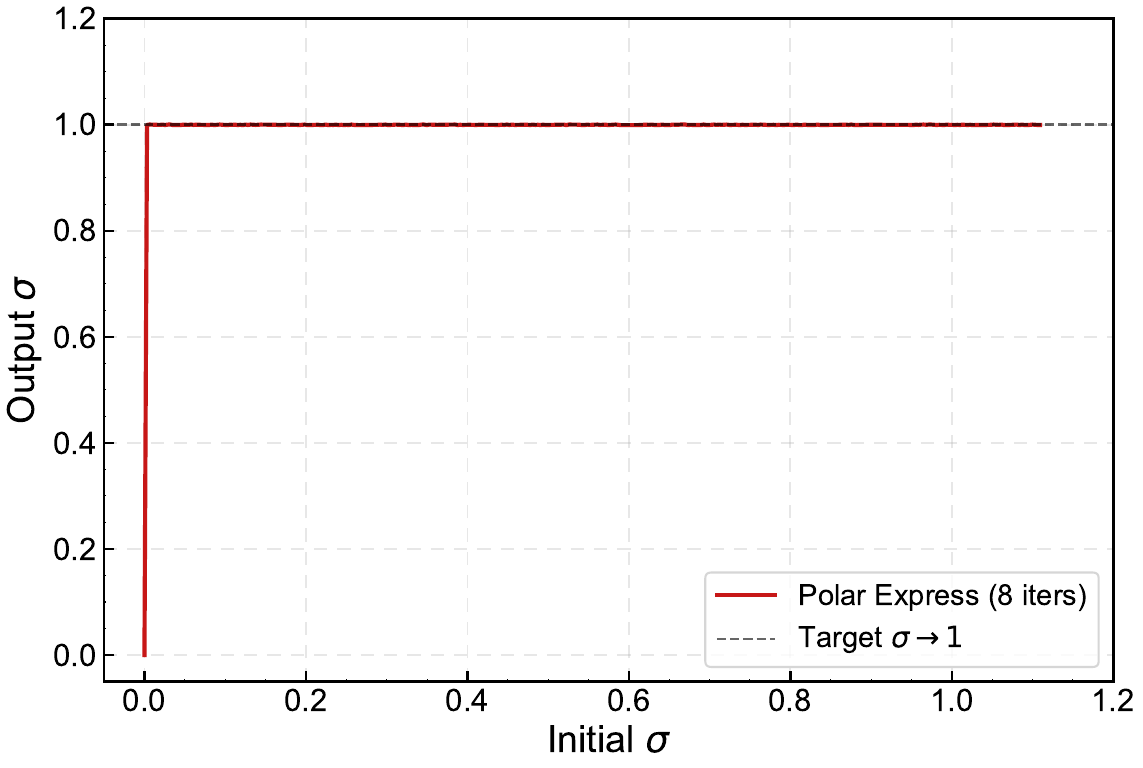}
  \caption{\textbf{Over-flattening scalar map.}
  The Polar Express composite of degree-$5$ Newton--Schulz polynomials \citep{amsel2025polar}
  nearly maps every nonzero normalized singular value to one.}
  \label{fig:overflatten-map}
\end{figure}

Figure~\ref{fig:overflatten-loss} shows that this aggressive flattening is harmful under both
optimizers: in each case the over-flattened polynomial gives higher validation loss than the
transformer baseline, whereas the softer default PC polynomial remains better than the baseline.
This suggests that PC should not aim to perfectly orthogonalize the selected weight matrices at every
training step. Preserving some spectral anisotropy appears important for expressiveness and
optimization. We therefore use moderate piecewise-linear targets, which improve conditioning while
retaining spectral flexibility.

\begin{figure}[htbp]
  \centering
  \begin{minipage}{0.48\textwidth}
    \centering
    \includegraphics[width=\linewidth]{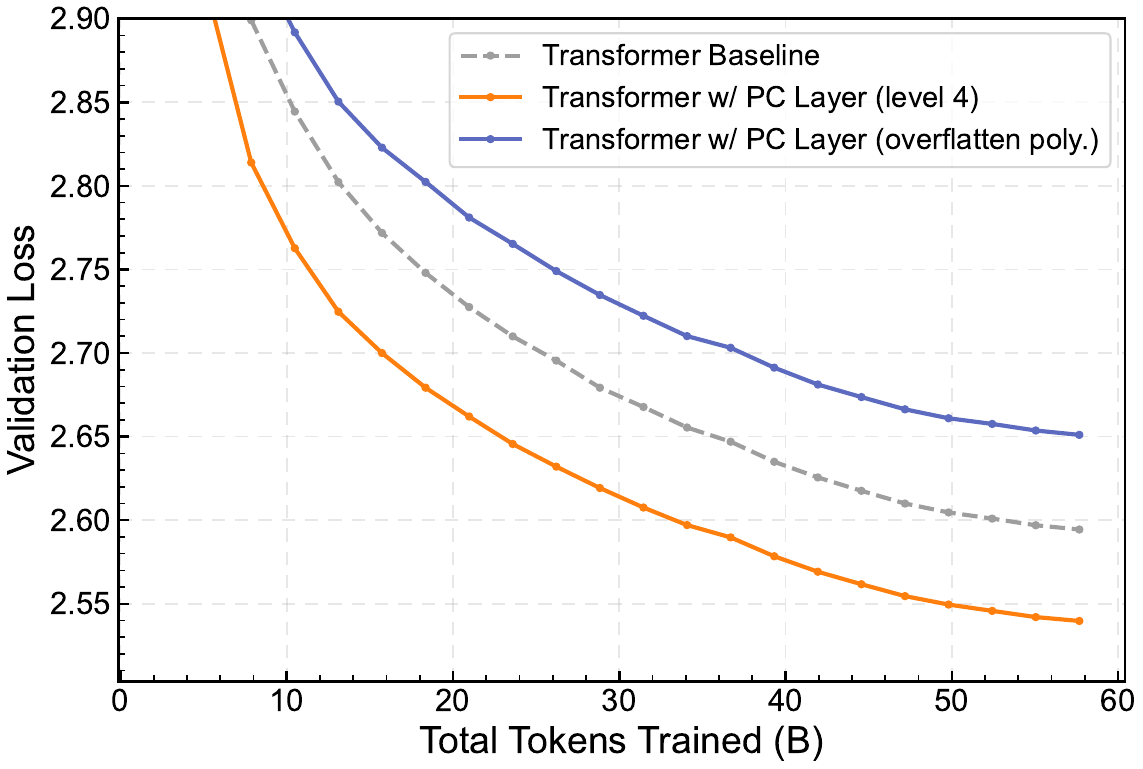}
    \vspace{-0.5em}

    {\small (a) AdamW.}
  \end{minipage}
  \hfill
  \begin{minipage}{0.48\textwidth}
    \centering
    \includegraphics[width=\linewidth]{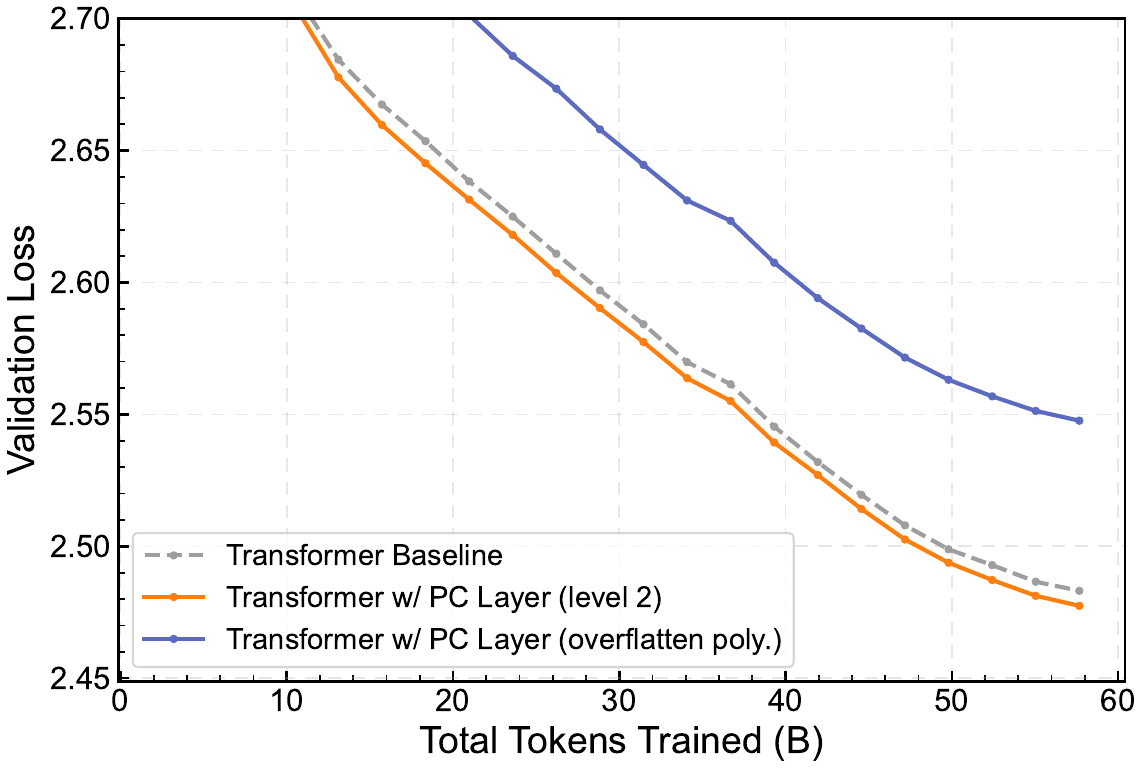}
    \vspace{-0.5em}

    {\small (b) Muon.}
  \end{minipage}

  \caption{\textbf{Overly aggressive spectrum flattening hurts under both optimizers.}
  Validation loss on Llama-271M under (a) AdamW and (b) Muon. In both cases the over-flattened
  polynomial performs worse than the transformer baseline, while the default PC layer
  ($\texttt{pc\_level}=4$ for AdamW, $\texttt{pc\_level}=2$ for Muon) stays below the baseline,
  supporting soft spectrum shaping over near-perfect flattening.}
  \label{fig:overflatten-loss}
\end{figure}

\clearpage

\clearpage
\section{Additional Results for PC Layer with Muon}
\label{app:muon-results}

Notably, since the PC layer and the Muon optimizer operate on different aspects of training (update matrices vs. weight matrices), they are largely orthogonal and thus compatible.
The main-text validation-loss curves and downstream comparison (Figure~\ref{fig:pc-scaling-muon}, Table~\ref{tab:downstream-1b}) already establish that PC improves training under Muon, and the \texttt{pc\_level}/\texttt{PC\_blocks} ablations under Muon are reported alongside the AdamW ablations in Section~\ref{subsec:ab_study}. This section provides the remaining supporting details: we first examine the effects on spectral conditioning (\S\ref{app:muon-spectrum}), and then offer additional discussions on the relationship between PC layer and Muon, including the computational template shared by both methods and their respective impact on weight-spectrum control (\S\ref{subapp:supp_disc}).

\subsection{Spectral Conditioning}
\label{app:muon-spectrum}
We then assess PC's effect on weight spectral conditioning under Muon by tracking the global modified condition number over training and visualizing final-checkpoint singular-value spectra across representative depths and blocks.

\paragraph{Global modified condition number (GMCN).}
Figure~\ref{fig:1b-muon-mcn} shows the evolution of the Global Modified Condition Number (GMCN) $\tilde{\kappa}$ throughout training under Muon.
Enabling PC shifts the curve downward on the blocks to which PC is applied (\texttt{ffn} and $W_{\rm O}$), while the non-preconditioned $W_{\rm Q}$, $W_{\rm K}$, and $W_{\rm V}$ blocks do not deteriorate and are slightly improved. The global aggregate therefore remains lower under PC, indicating improved spectral conditioning of weights. \\

\begin{figure}[htbp]
    \centering
    \begin{subfigure}[t]{0.32\textwidth}
        \centering
        \includegraphics[width=\linewidth]{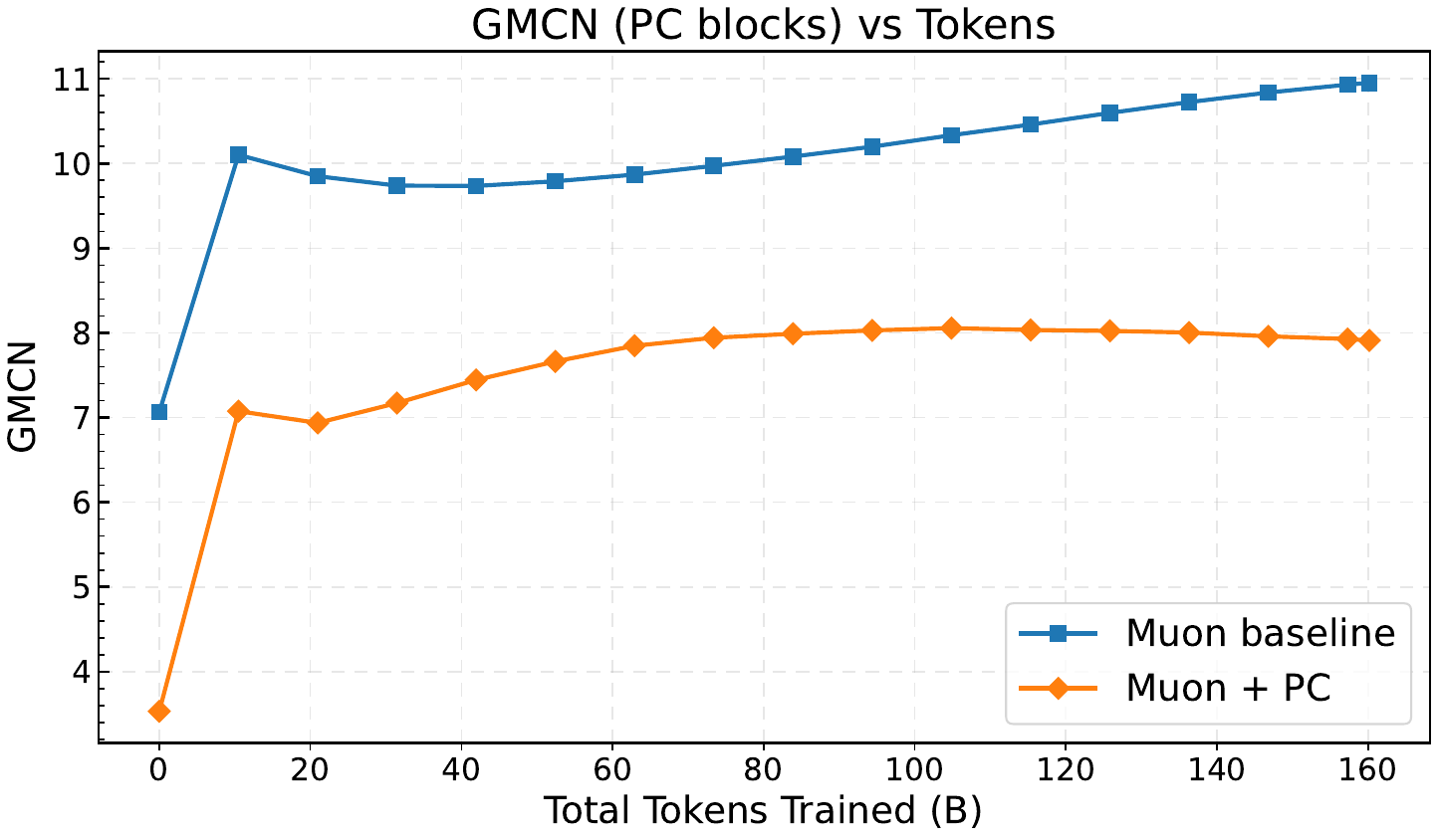}
        \caption{FFN + $W_{\rm O}$}
        \label{fig:1b-muon-mcn-o-ffn}
    \end{subfigure}\hfill
    \begin{subfigure}[t]{0.32\textwidth}
        \centering
        \includegraphics[width=\linewidth]{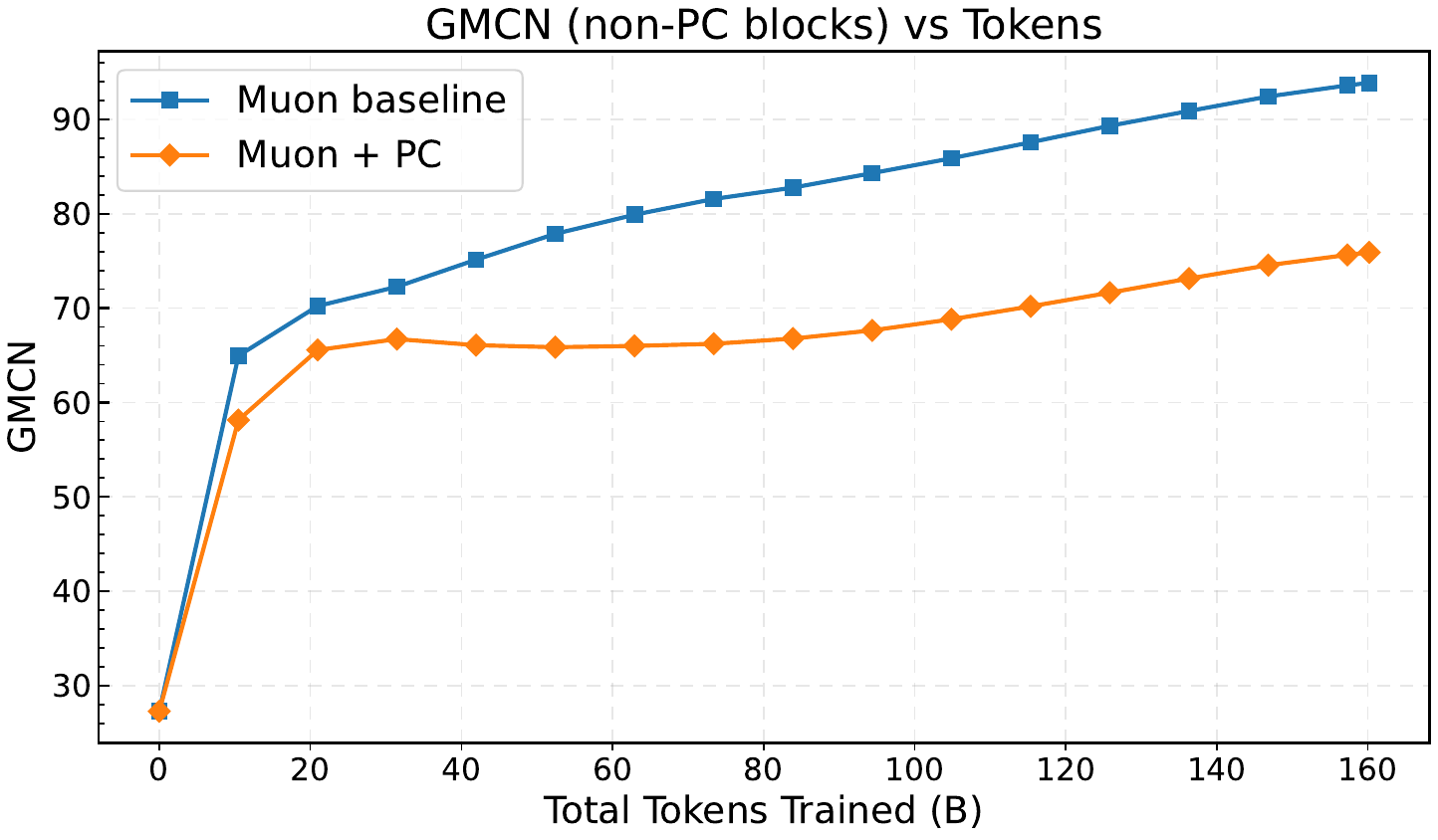}
        \caption{$W_{\rm Q}, W_{\rm K}, W_{\rm V}$}
        \label{fig:1b-muon-mcn-qkv}
    \end{subfigure}\hfill
    \begin{subfigure}[t]{0.32\textwidth}
        \centering
        \includegraphics[width=\linewidth]{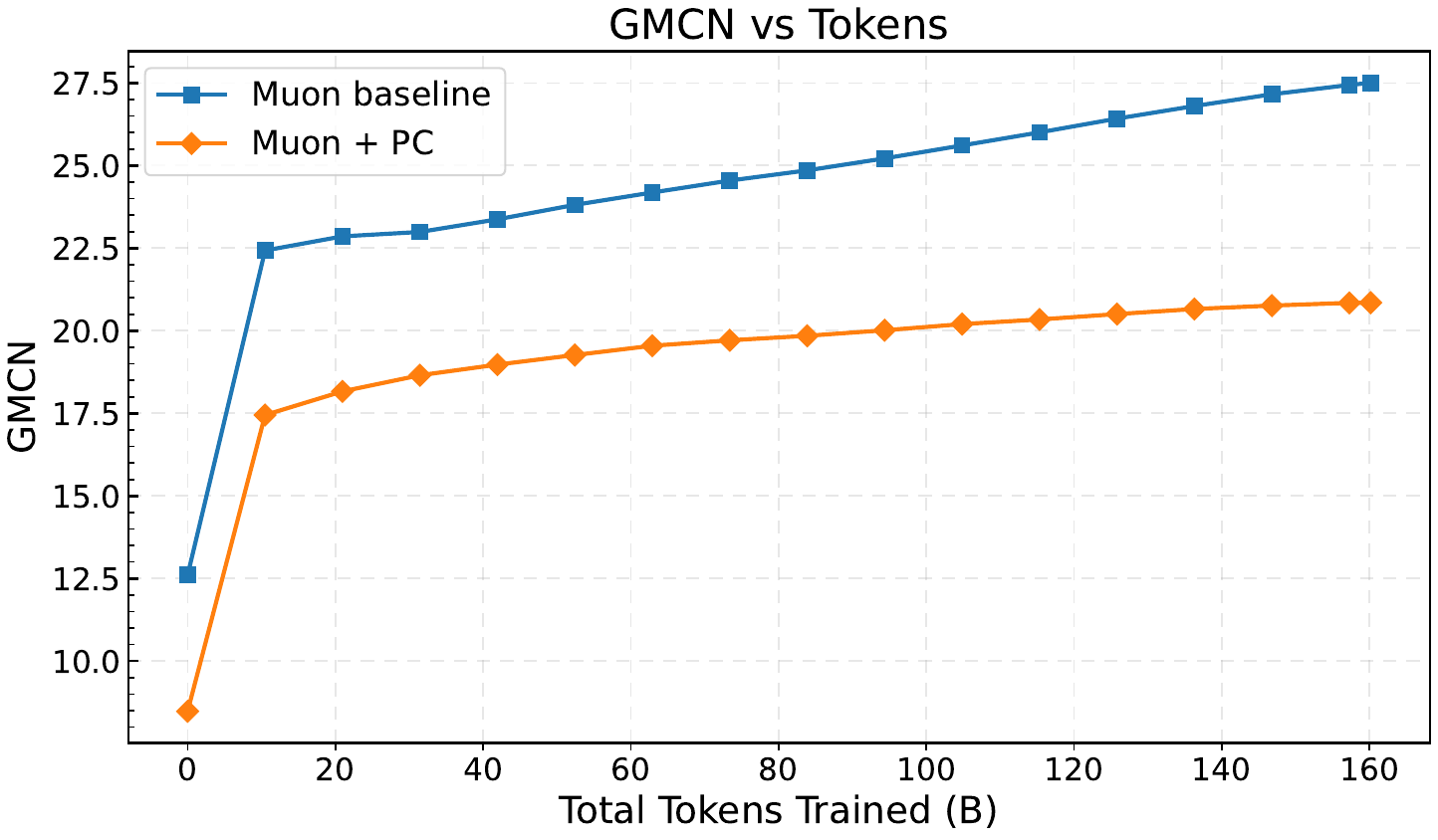}
        \caption{Global}
        \label{fig:1b-muon-mcn-global}
    \end{subfigure}
    \caption{\textbf{Block-wise and aggregate modified condition numbers under Muon.}
    We track $\tilde{\kappa}$ separately for the PC-targeted blocks (\texttt{ffn} and $W_{\rm O}$), the attention-input blocks left outside PC ($W_{\rm Q}, W_{\rm K}, W_{\rm V}$), and the resulting global aggregate. The targeted blocks show the clearest conditioning gain, and the untargeted QKV blocks also exhibit a noticeable improvement relative to the baseline, further contributing to the lower aggregate curve.}
    \label{fig:1b-muon-mcn}
\end{figure}

\paragraph{Singular-value spectrum comparison.}
To complement the scalar GMCN curves, we visualize the final-checkpoint singular-value distributions following the identical protocol as in Section~\ref{sec:pc_improve_spec} (same three representative depths, baseline spectra on $W$ versus PC spectra on $\mathrm{PC}(W)$, with per-matrix max-normalization to $[0,1]$); see Figure~\ref{fig:pc-sv-hist-muon}.
Figure~\ref{fig:pc-sv-hist-muon} corroborates the GMCN trend. Across depths and blocks, PC pulls the lower part of the spectrum upward toward $\sigma_{\max}$, and this is most visible in the \texttt{ffn} blocks. As a result, the relative spectral spread narrows and the conditioning improves.

\begin{figure}[t!]
    \centering
    \begin{subfigure}[t]{0.24\textwidth}
        \centering
        \includegraphics[width=\linewidth]{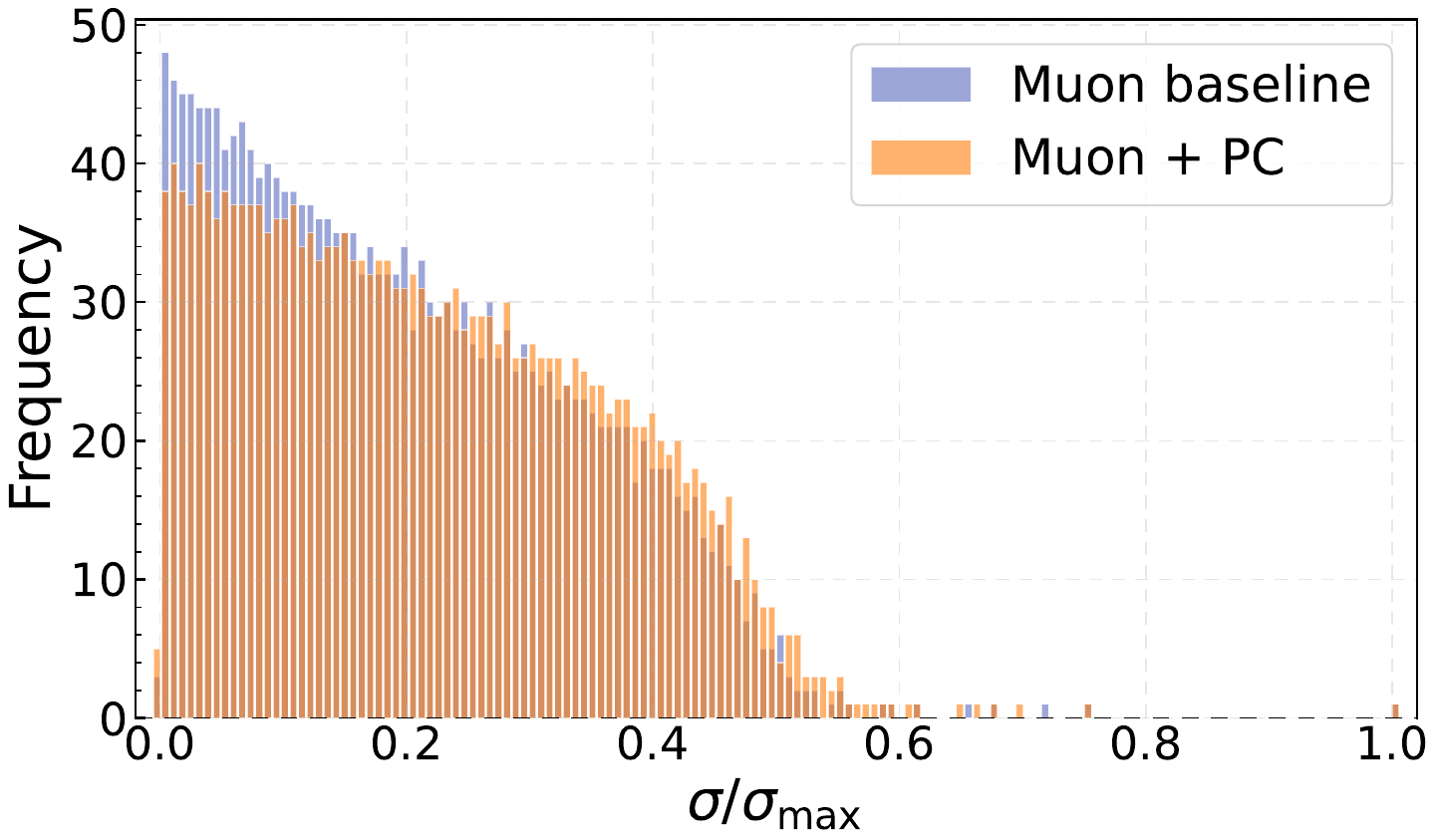}
        \caption{Layer 2: $W_{\rm O}$}
        \label{fig:muon-spec-l2-o}
    \end{subfigure}\hfill
    \begin{subfigure}[t]{0.24\textwidth}
        \centering
        \includegraphics[width=\linewidth]{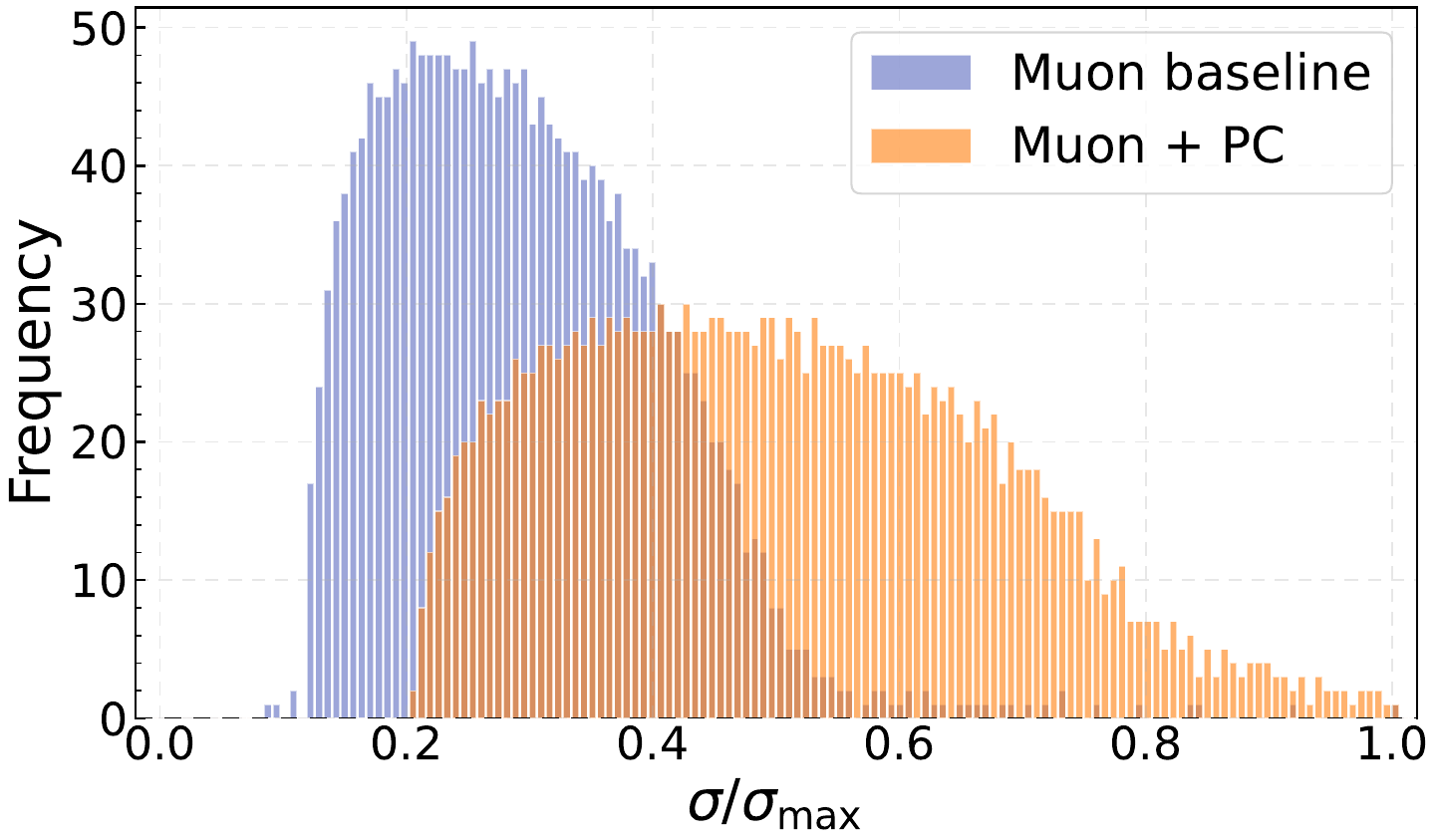}
        \caption{Layer 2: $W_{\rm gate}$}
        \label{fig:muon-spec-l2-w1}
    \end{subfigure}\hfill
    \begin{subfigure}[t]{0.24\textwidth}
        \centering
        \includegraphics[width=\linewidth]{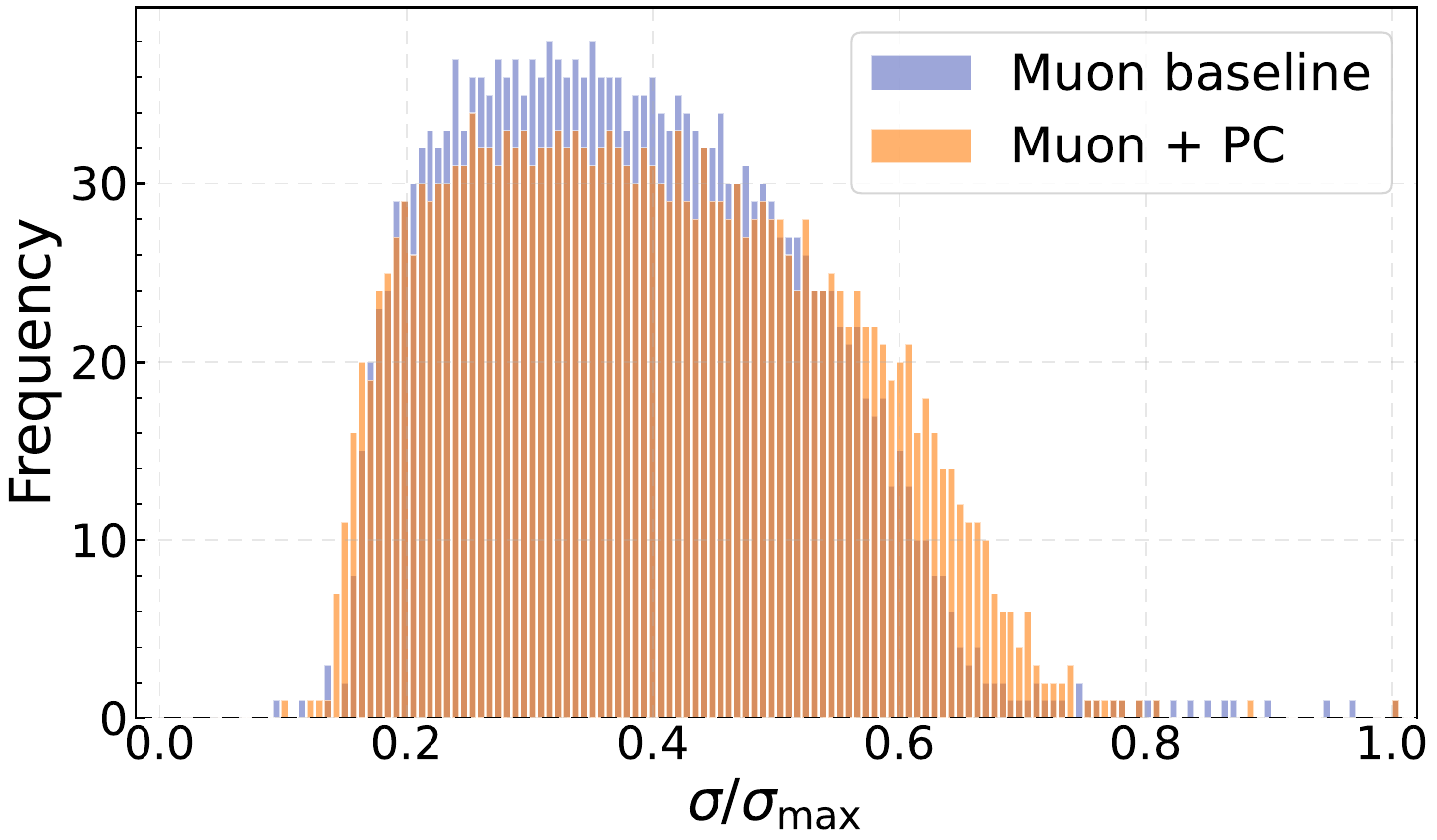}
        \caption{Layer 2: $W_{\rm up}$}
        \label{fig:muon-spec-l2-w3}
    \end{subfigure}\hfill
    \begin{subfigure}[t]{0.24\textwidth}
        \centering
        \includegraphics[width=\linewidth]{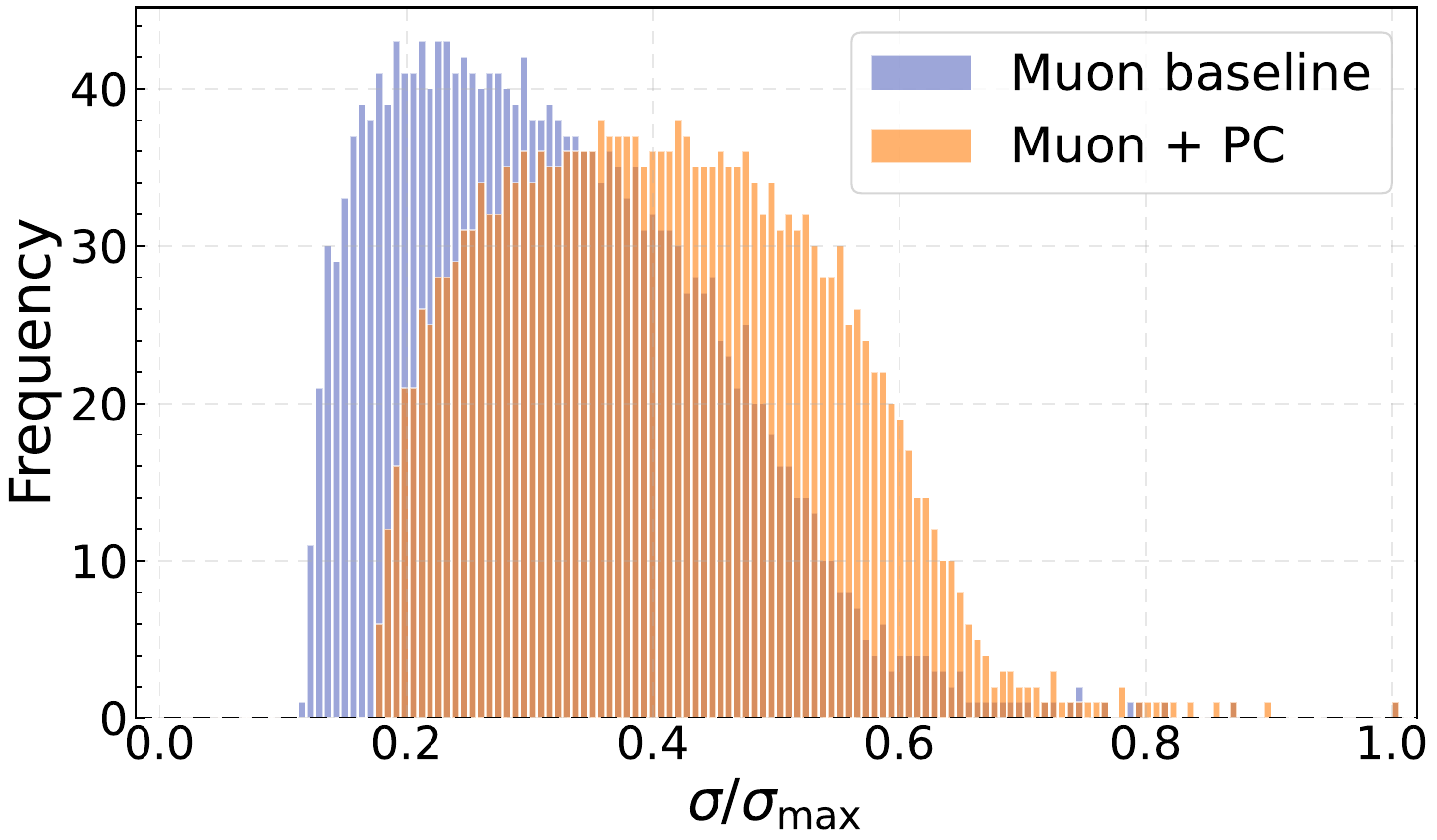}
        \caption{Layer 2: $W_{\rm down}$}
        \label{fig:muon-spec-l2-w2}
    \end{subfigure}

    \vspace{0.6em}

    \begin{subfigure}[t]{0.24\textwidth}
        \centering
        \includegraphics[width=\linewidth]{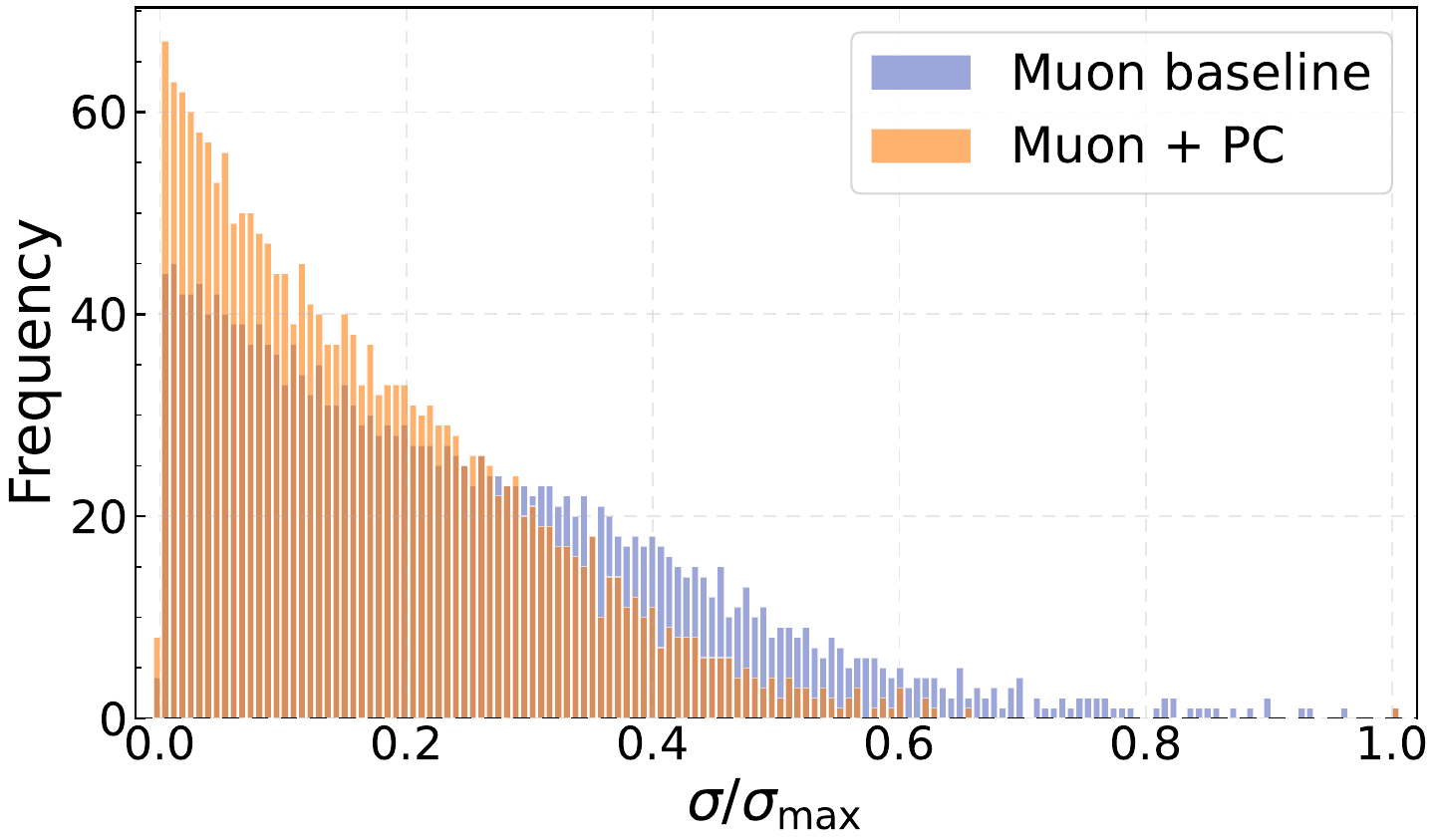}
        \caption{Layer 10: $W_{\rm O}$}
        \label{fig:muon-spec-l10-o}
    \end{subfigure}\hfill
    \begin{subfigure}[t]{0.24\textwidth}
        \centering
        \includegraphics[width=\linewidth]{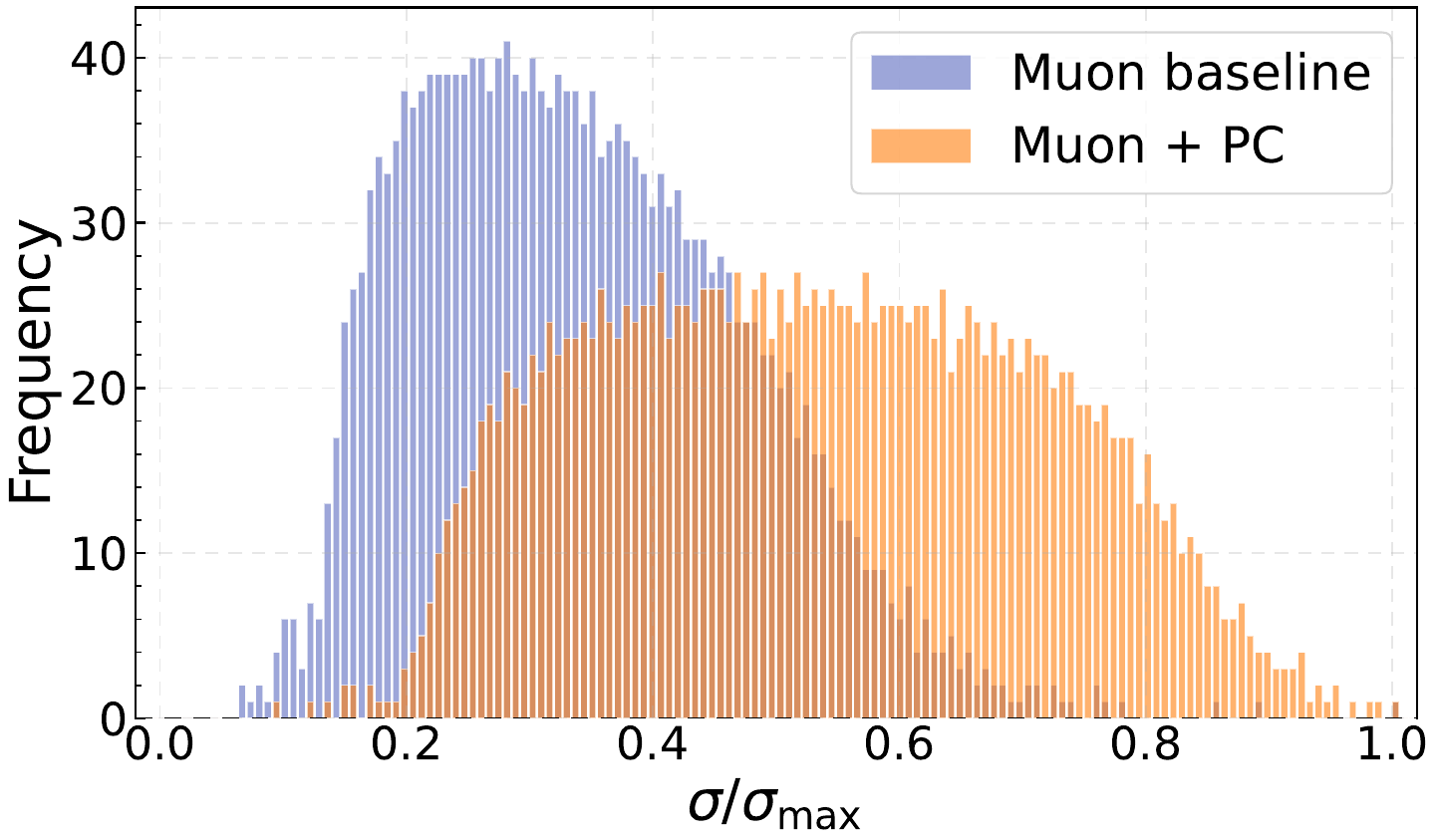}
        \caption{Layer 10: $W_{\rm gate}$}
        \label{fig:muon-spec-l10-w1}
    \end{subfigure}\hfill
    \begin{subfigure}[t]{0.24\textwidth}
        \centering
        \includegraphics[width=\linewidth]{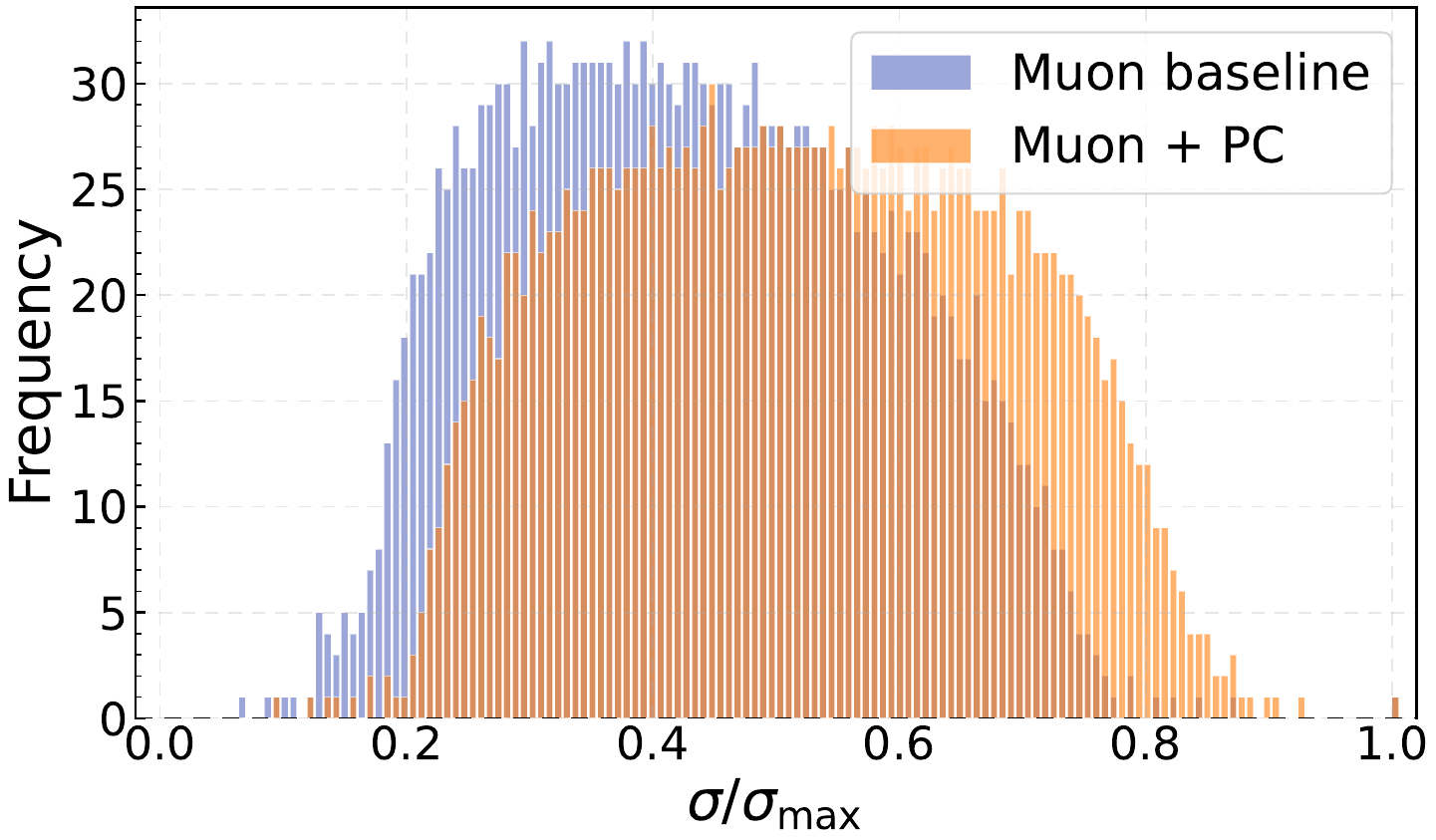}
        \caption{Layer 10: $W_{\rm up}$}
        \label{fig:muon-spec-l10-w3}
    \end{subfigure}\hfill
    \begin{subfigure}[t]{0.24\textwidth}
        \centering
        \includegraphics[width=\linewidth]{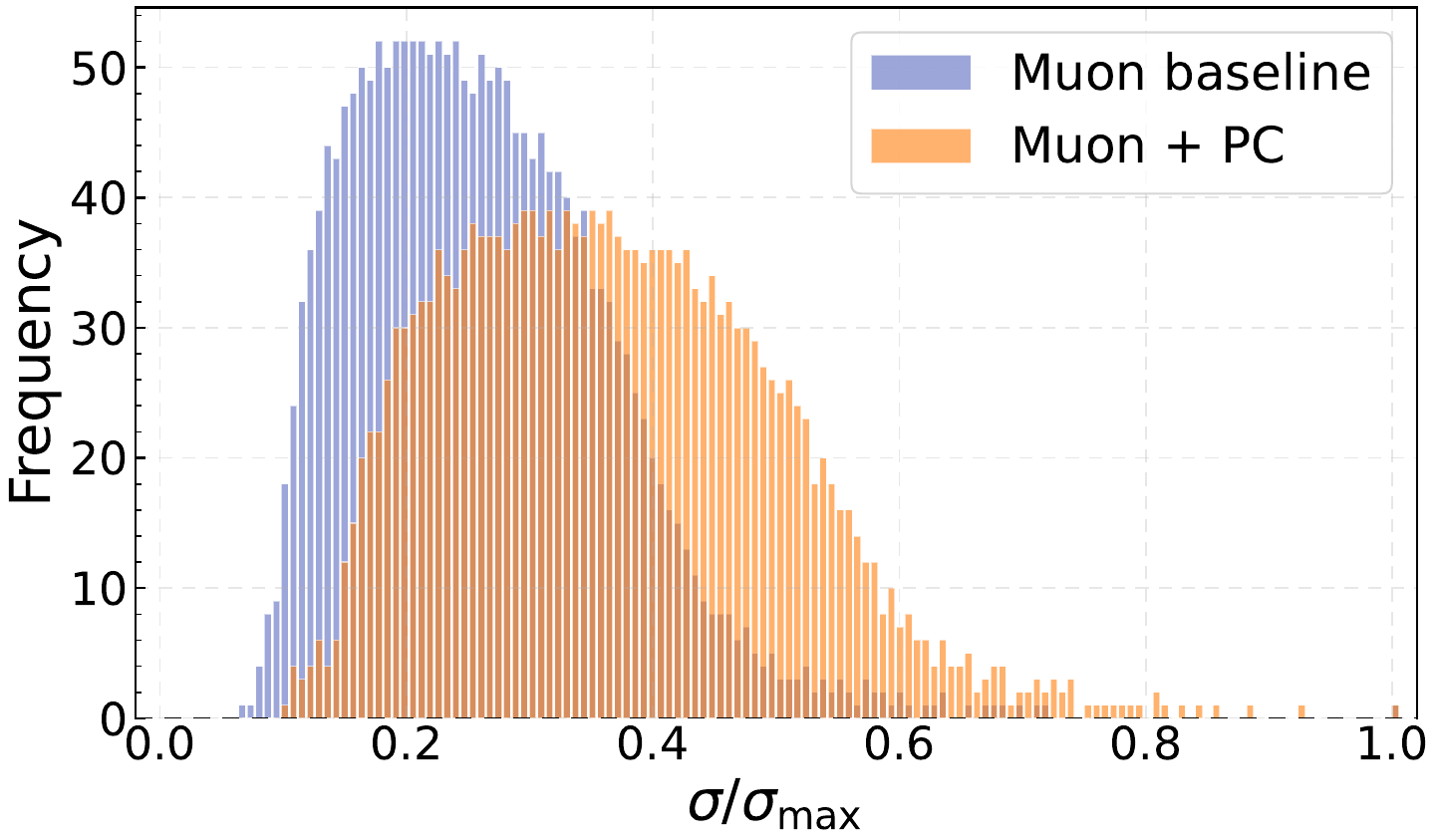}
        \caption{Layer 10: $W_{\rm down}$}
        \label{fig:muon-spec-l10-w2}
    \end{subfigure}

    \vspace{0.6em}

    \begin{subfigure}[t]{0.24\textwidth}
        \centering
        \includegraphics[width=\linewidth]{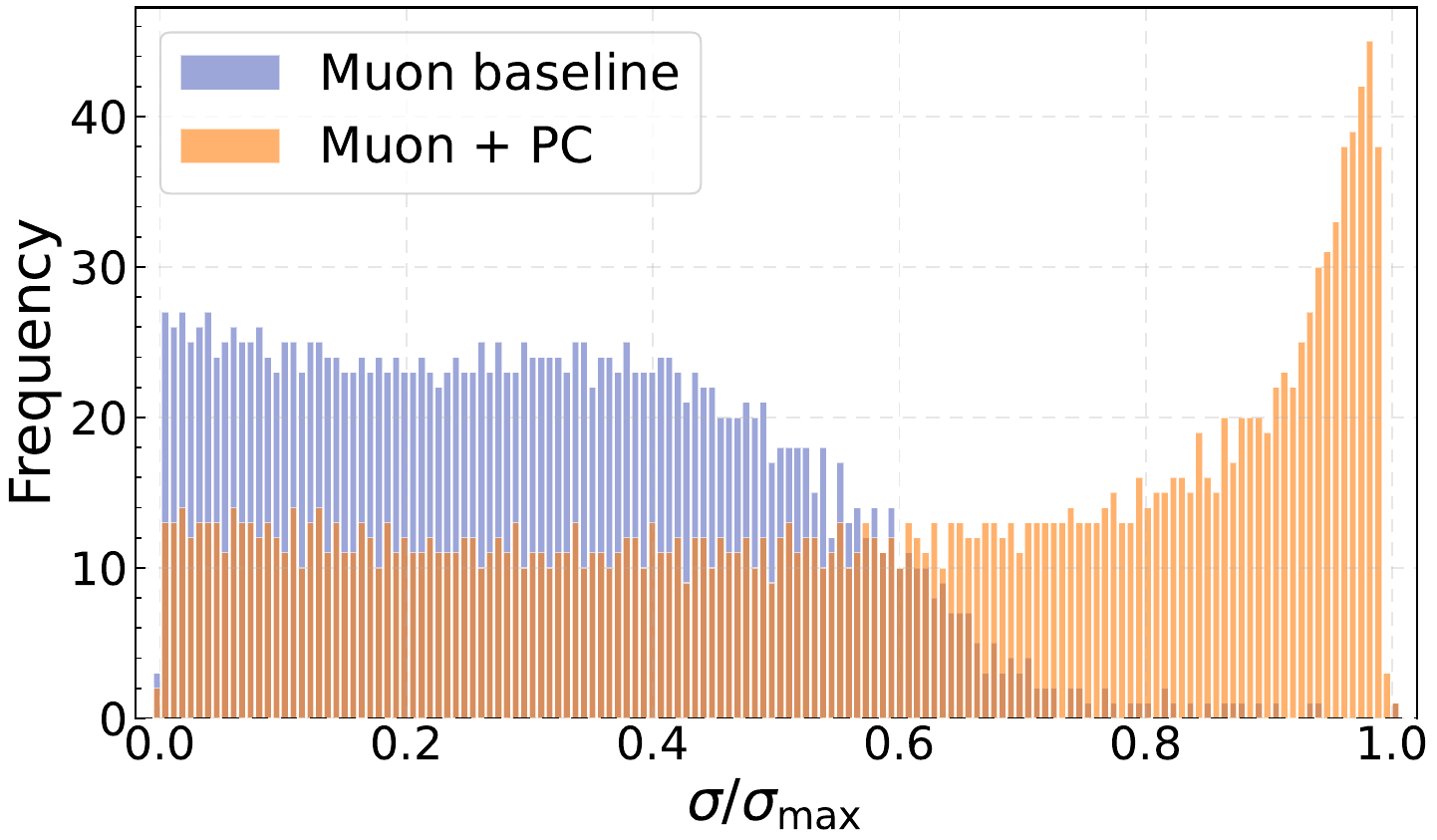}
        \caption{Layer 18: $W_{\rm O}$}
        \label{fig:muon-spec-l18-o}
    \end{subfigure}\hfill
    \begin{subfigure}[t]{0.24\textwidth}
        \centering
        \includegraphics[width=\linewidth]{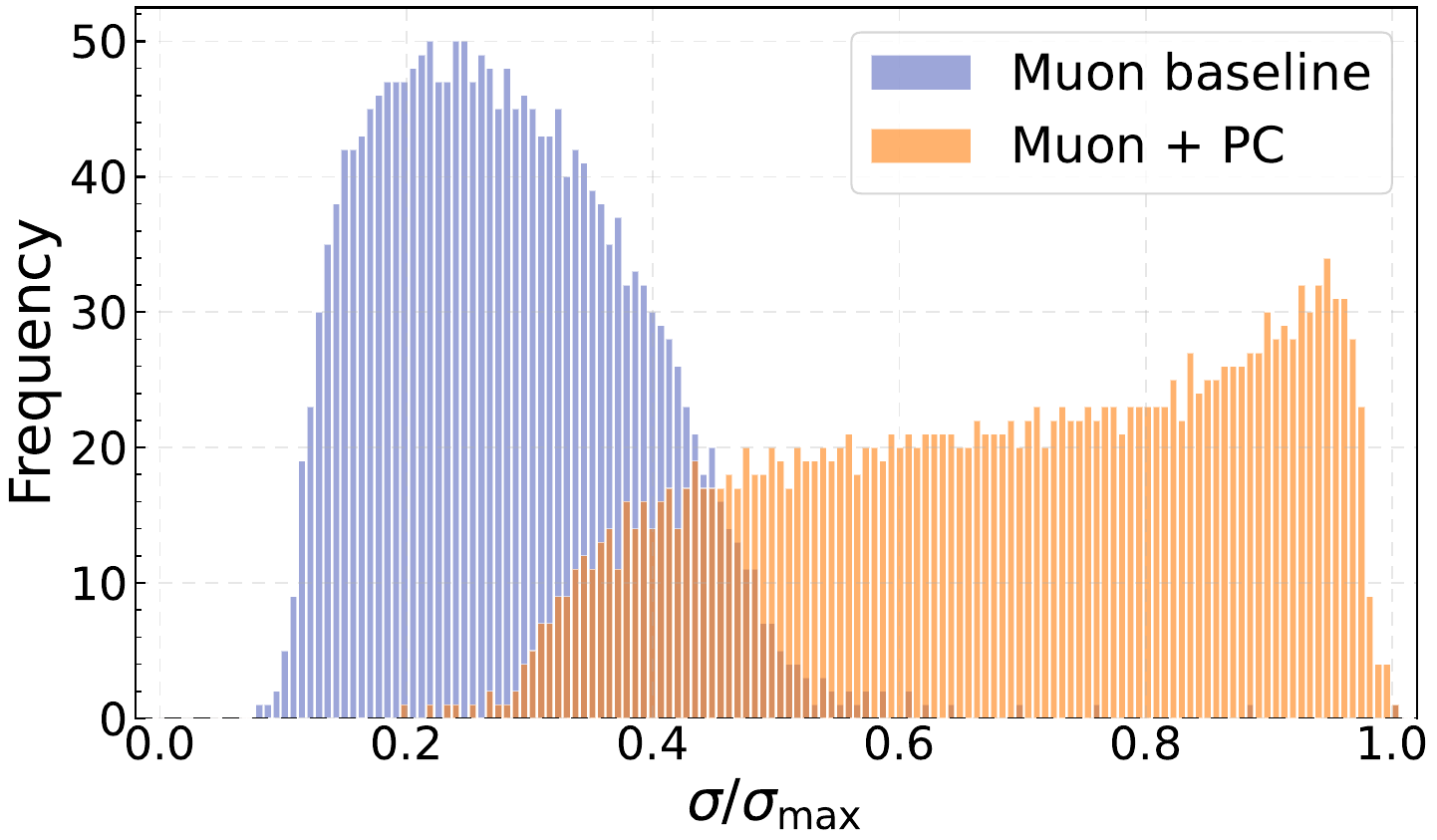}
        \caption{Layer 18: $W_{\rm gate}$}
        \label{fig:muon-spec-l18-w1}
    \end{subfigure}\hfill
    \begin{subfigure}[t]{0.24\textwidth}
        \centering
        \includegraphics[width=\linewidth]{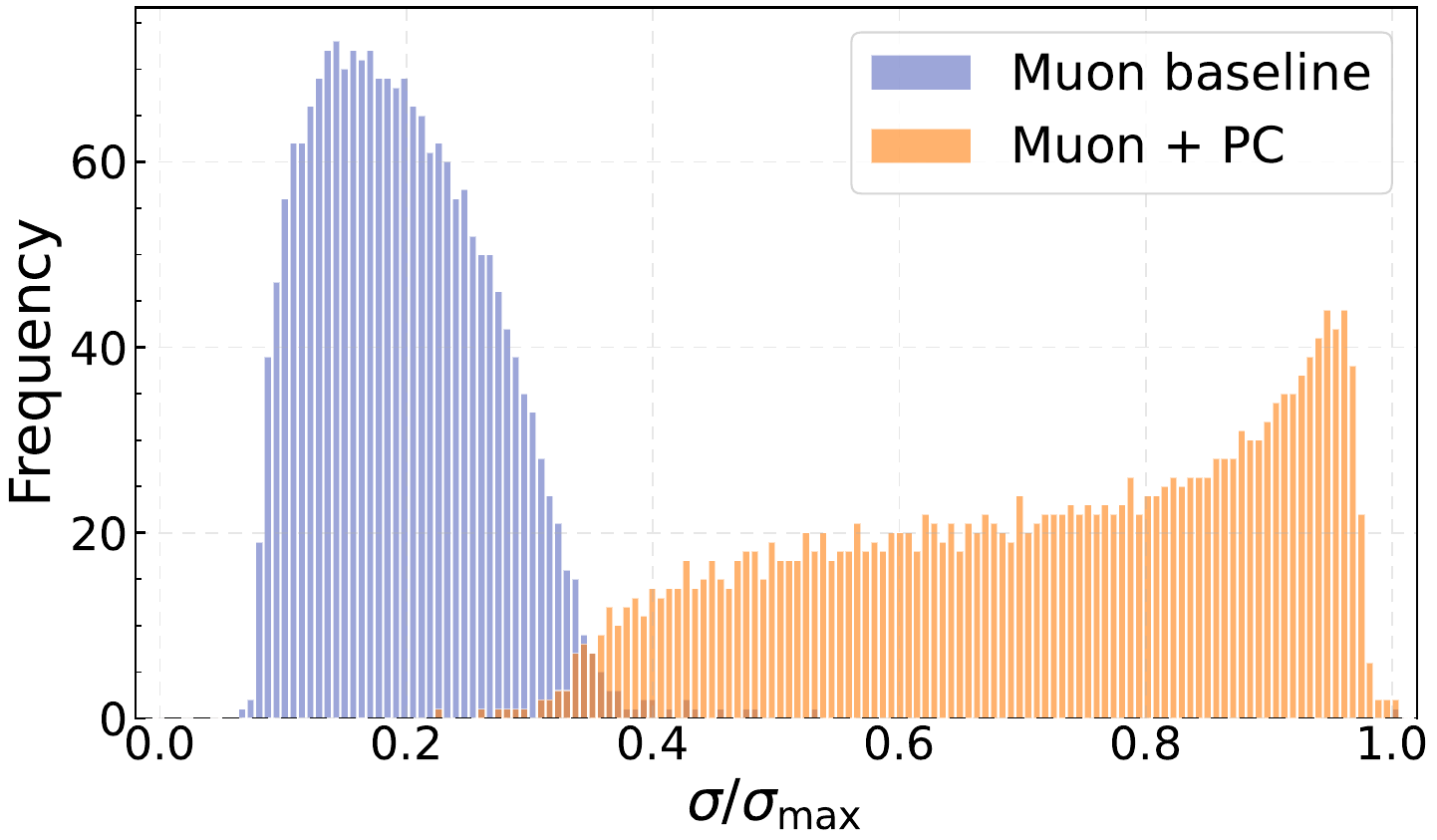}
        \caption{Layer 18: $W_{\rm up}$}
        \label{fig:muon-spec-l18-w3}
    \end{subfigure}\hfill
    \begin{subfigure}[t]{0.24\textwidth}
        \centering
        \includegraphics[width=\linewidth]{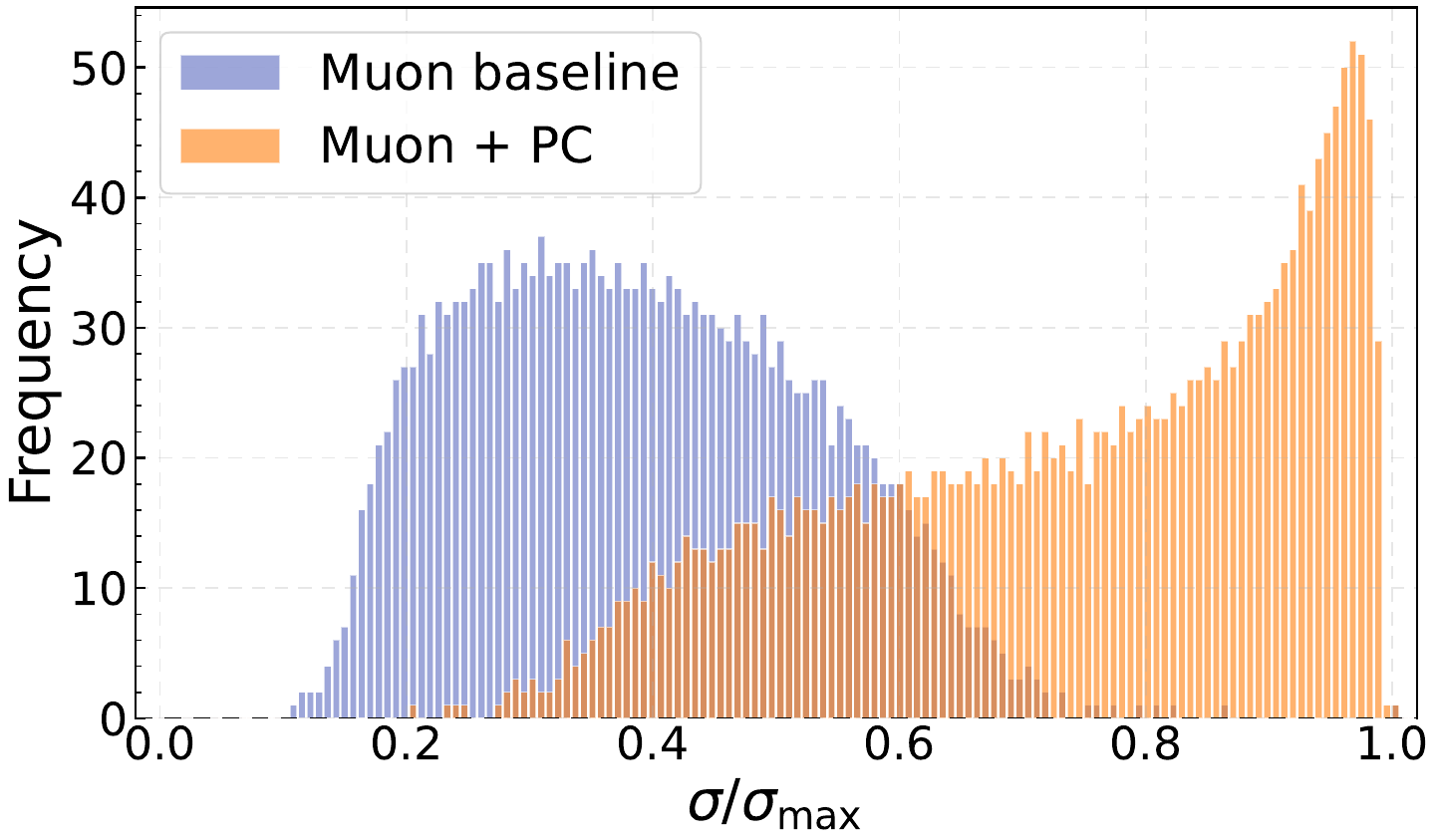}
        \caption{Layer 18: $W_{\rm down}$}
        \label{fig:muon-spec-l18-w2}
    \end{subfigure}

    \caption{\textbf{Singular-value histograms at the final checkpoint (Muon, Llama-1B).}
    Setup as in Figure~\ref{fig:pc-sv-hist-adamw} (three depths, all \texttt{PC\_blocks}, per-matrix max-normalization to $[0,1]$).
    Across depths and blocks, PC shifts the bulk of the spectrum upward toward $\sigma_{\max}$, narrowing the relative spectral spread.}
    \label{fig:pc-sv-hist-muon}
\end{figure}

\subsection{Supplementary Discussions}
\label{subapp:supp_disc}
In this section, we provide further discussions on the similarities between Muon and PC in terms of their computational templates and their approaches to spectral conditioning.

\paragraph{A shared computational template.}
Despite these conceptual differences, both methods share a computational template that makes them GPU-friendly.
Concretely, both can be expressed using matrix-polynomial transforms of the form $\mathcal{T}(X)=q(XX^\top)X$, i.e., a specific polynomial in $XX^\top$ applied to $X$.
This structure reduces to a small number of matrix multiplications and avoids explicit spectral decompositions, while still inducing a controllable map on singular values.

Notably, Muon employs a \emph{25th}-degree polynomial (comprising 5 iterations of degree-5 polynomials) to precondition the matrix, while PC layer uses a \emph{9th}-degree polynomial ($\texttt{pc\_level}=4$) under AdamW and a \emph{5th}-degree polynomial ($\texttt{pc\_level}=2$) when combined with Muon. A detailed FLOPs analysis of the PC layer overhead is provided in Appendix~\ref{subapp:pc_cost}.

\paragraph{Implicit vs.\ explicit control of the \emph{weight} spectrum.}
Although Muon acts on \emph{updates} rather than weights, it can still be viewed as exerting an \emph{implicit} form of weight-spectrum control through the optimizer dynamics.
Recent work suggests that Muon with decoupled weight decay corresponds to an implicit spectral constraint (notably affecting the top singular value) \citep{chen2025muon}, and empirical studies report that Muon-trained weights tend to be more isotropic, with higher effective rank / spectral entropy \citep{wang2025muon,liu2025muon}.
In contrast, PC provides \emph{explicit} weight-spectrum control by directly shaping the singular values of selected weight blocks toward a healthy range throughout training. 

Since Muon already exerts implicit spectral control on the updates, the \emph{additional} spectrum shaping required from PC under Muon might be smaller. As a result, a lower \texttt{pc\_level} (i.e., a lower polynomial degree) already suffices to shape the spectrum without overly interfering with the model's expressiveness under Muon. Higher \texttt{pc\_level} values, on the other hand, may risk compromising the model's ability to represent complex patterns by over-constraining the weight spectrum. This explains why the optimal \texttt{pc\_level} for Muon is smaller than that for AdamW.

 \

\end{document}